\newcommand{\bdisp}{\begin{displaystyle}}
\newcommand{\edisp}{\end{displaystyle}}
\renewcommand{\Pr}{\operatorname*{\mathbb{P}}}
\newcommand{\Var}{\operatorname*{\mathrm{Var}}}
\newcommand{\Exp}{\operatorname*{\mathbb{E}}}
\newcommand{\from}{\leftarrow}
\newcommand{\poly}{\operatorname*{\mathrm{poly}}}
\newcommand{\Bin}{\mathrm{Bin}}
\newcommand{\Bernoulli}{\mathrm{Bernoulli}}
\newcommand{\scary}{disc}
\newcommand{\DKL}{D_\mathrm{KL}}
\renewcommand{\vec}[1]{\mathbf{#1}}
\newcommand{\Grad}{\nabla}
\newcommand{\eps}{\epsilon}
\newcommand{\nmax}{n_{\max}}
\newcounter{nTheorems}
\newtheorem{theorem}[nTheorems]{Theorem}
\newtheorem{corollary}[nTheorems]{Corollary}
\newtheorem{lemma}[nTheorems]{Lemma}
\newtheorem{proposition}[nTheorems]{Proposition}
\newtheorem{fact}[nTheorems]{Fact}
\newtheorem{example}{Example}
\theoremstyle{definition}
\newtheorem{definition}[nTheorems]{Definition}
\title{Uncertainty about Uncertainty: Optimal Adaptive Algorithms for Estimating Mixtures of Unknown Coins\footnote{We thank Tim Kraska and Yeounoh Chung for bringing these problems to our attention in the data analytics setting, and for contributing to the simulation results in this work. We thank an anonymous reviewer for asking about the $\delta$ dependence in lower bounds, which led to the current tight results. This work is partially supported by NSF award IIS-1562657. In addition, Paul Valiant is partially supported by NSF award DMS-1926686, and indirectly supported by NSF award CCF-1900460.}}
\author{
Jasper C.H.~Lee\\ \ \\
Brown University\\
\texttt{jasperchlee@brown.edu}\\
\and
Paul Valiant\\ \ \\
IAS and Purdue University\\
\texttt{pvaliant@gmail.com}
}
\begin{document}

\maketitle

\begin{abstract}
	Given a mixture between two populations of coins, ``positive" coins that each have---unknown and potentially different---bias $\geq\frac{1}{2}+\Delta$ and ``negative" coins with bias $\leq\frac{1}{2}-\Delta$, we consider the task of estimating the fraction $\rho$ of positive coins to within additive error $\eps$.
	We achieve an upper and lower bound of $\Theta(\frac{\rho}{\eps^2\Delta^2}\log\frac{1}{\delta})$ samples for a $1-\delta$ probability of success, where crucially, our lower bound applies to all \emph{fully-adaptive} algorithms.
	Thus, our sample complexity bounds have tight dependence for every relevant problem parameter.
	A crucial component of our lower bound
proof is a decomposition lemma (see Lemmas~\ref{Lem:Reduction} and \ref{Lem:ReductionKL}) showing how to assemble partially-adaptive bounds into a fully-adaptive bound, which may be of independent interest: though we invoke it for the special case of Bernoulli random variables (coins), it applies to general distributions.
We present simulation results to demonstrate the practical efficacy of our approach for realistic problem parameters for crowdsourcing applications, focusing on the ``rare events" regime where $\rho$ is small. 
	The fine-grained adaptive flavor of both our algorithm and lower bound contrasts with much previous work in distributional testing and learning.
\end{abstract}

\thispagestyle{empty}
\newpage
\setcounter{page}{1}
\section{Introduction}

We consider a natural statistical estimation task, motivated by a practical setting, with an intriguing adaptive flavor. We provide a new adaptive algorithm and a matching fully adaptive lower bound, tight up to multiplicative constants.

In our problem setting, there is a universe of coins of two types: positive coins each have a (potentially different) probability of heads that lies in the interval $[\frac{1}{2}+\Delta,1]$, while negative coins lie in the interval $[0,\frac{1}{2}-\Delta]$, where $\Delta\in(0,\frac{1}{2}]$ parameterizes the ``quality" of the coins. Our only access to the coins is by choosing a coin and then flipping it, without access to the true biases of the coins. An algorithm in this setting may employ arbitrary adaptivity---for example, flipping three different coins in sequence and then flipping the first coin 5 more times if and only if the results of the first 3 flips were heads, tails, heads. The challenge is to estimate the \emph{fraction} $\rho$ of coins that are of positive type, to within a given additive error $\eps$, using as few coin flips (samples) as possible. We assume because of the symmetry of the problem (between positive and negative coins) that $\rho\leq \frac{1}{2}$.

This model arose from a collaboration with colleagues in data science and database systems, about harnessing paid crowdsourced workers to estimate the ``quality" of a database. Our model is a 
direct theoretical analog of the following practical problem, where sample complexity linearly translates into the amount of money that must be paid to workers, and thus even multiplicative factors crucially affect the usefulness of an algorithm. Given a set of data and a predicate on the data, the task is to estimate what fraction of the data satisfies the predicate---for example, estimating the proportion of records in a large database that contain erroneous data. After automated tools have labeled whatever portion of the data they are capable of dealing with, the remaining data must be processed via \emph{crowdsourcing}, an emerging setting that potentially offers sophisticated capabilities but at the cost of unreliability. Namely, for each data item, one may ask many human users/workers online whether they think the item satisfies the predicate, with the caveat that the answers returned could be noisy. In the case that the workers have no ability to distinguish the predicate, we cannot hope to succeed; however, if the histograms of detection probabilities for positive versus negative data have a gap between them (the gap is $2\Delta$ in the model above), then the challenge is to estimate $\rho$ as accurately as possible, from a limited budget of queries to workers~\cite{Chung:2017}.

A key feature that makes this estimation problem distinct from many others studied in the literature is the richness of adaptivity available to the algorithm. Achieving a tight lower bound in this setting requires considering and bounding all possible uses of adaptivity available to an algorithm; and achieving an optimal algorithm requires choosing the appropriate adaptive information flow between different parts of the algorithm.
Much of the previous work in the area of statistical estimation is focused on non-adaptive algorithms and lower bounds; however see~\cite{Canonne:survey}, and in particular, Sections 4.1 and 4.2 of that work, for a survey of several distribution testing models that allow for adaptivity.
In our setting there are two distinct kinds of adaptivity that an algorithm can leverage:
1) single-coin adaptivity, deciding how many times a particular coin should be flipped---a per-coin stopping rule---in terms of the results of its previous flips, and
2) cross-coin adaptivity, deciding which coin to flip next in terms of the results of previous flips across \emph{all} coins. 
Our final optimal algorithm (Section~\ref{sec:optimal-algorithm}) leverages both kinds of adaptivity.
In our tight lower bound analysis (Section~\ref{sect:UnknownLower}), we overcome the technical obstacles presented by the richness of adaptivity by giving a reduction (Section~\ref{sect:Reduction}) from fully-adaptive algorithms that leverage both kinds of adaptivity to single-coin adaptive algorithms that process each coin independently, valid for our specific lower bound instance. We discuss the approaches and challenges of our lower bound in more detail in Section~\ref{sect:LowerIntro}.


The main \emph{algorithmic} challenge in this problem is what we call ``uncertainty about uncertainty": we make no assumptions about the quality of the coins beyond the existence of a gap $2\Delta$ between biases of the coins of different types (centered at $\frac{1}{2}$).
If we relaxed the problem, and assumed (perhaps unrealistically) that we know 1) the conditional distribution of biases of positive coins, and 2) the same for negative coins, and 3) an initial estimate of the mixture parameter $\rho$ between the two distributions, then we show that it is easy---using mathematical programming techniques in Section~\ref{sect:QP}---to construct an estimation algorithm with sample complexity that is optimal \emph{by construction} up to a multiplicative constant (see Section~\ref{sect:QPLower}).
On the other hand, our algorithm for the original setting has to return estimates with small bias, and be sample efficient at the same time, regardless of the bias of the coins, be they all deterministic, or all maximally noisy as allowed by the $\Delta$ parameter, or some quality in between.
While intuitively the hardest settings to distinguish information theoretically involve coins with biases as close to each other as possible (and indeed our lower bound relies on mixtures of only $\frac{1}{2}\pm\Delta$ coins), settings with biases near but not equal to $\frac{1}{2}\pm\Delta$ introduce ``uncertainty about uncertainty" challenges. 
The two kinds of adaptivity available to the algorithm allow us to meet these challenges by  trading off, optimally, between 1) investigating a single coin to reduce uncertainty about its bias, and 2) apportioning resources between different coins to reduce uncertainty about the ground truth fraction $\rho$, which is the objective of the problem.

\subsection{Our Approaches and Results}

To motivate the new algorithms of this paper, we start by describing the straightforward analysis of perhaps the most natural approach to the problem, which is non-adaptive, based on subsampling.

\begin{example}\label{ex:non-adaptive-algorithm}
Recall that it takes $\Omega(\frac{1}{\Delta^2})$ samples to distinguish a coin of bias $\frac{1}{2}-\Delta$ from a coin of bias $\frac{1}{2}+\Delta$.
We can therefore imagine an algorithm that chooses a random subset of the coins, and flips each coin $\Omega(\frac{1}{\Delta^2})$ many times.
Asking for $\Theta(\frac{1}{\Delta^2}\log \frac{1}{\eps})$ flips from each coin guarantees that all but $\eps$ fraction of the coins in the subset will be accurately classified. Given an accurate classification of $m$ randomly chosen coins, we use the fraction of these that appear positive as an estimate on the overall mixture parameter $\rho$. Estimating $\rho$ to within error $\epsilon$ requires $m=O(\frac{\rho}{\epsilon^2})$ randomly chosen coins. Overall, taking $\Theta(\frac{1}{\Delta^2}\log \frac{1}{\eps})$ samples from each of $m=\Theta(\frac{\rho}{\epsilon^2})$ coins uses $\Theta(\frac{\rho}{\eps^2\Delta^2}\log\frac{1}{\eps})$ samples.
\end{example}


As we will see, the above straightforward algorithm is potentially wasteful in samples by up to a $\log\frac{1}{\eps}$ factor, since it makes $\Theta(\frac{1}{\Delta^2}\log\frac{1}{\eps})$ flips for every single coin, yet---since $\Omega(\frac{1}{\Delta^2})$ samples suffices to label a coin with constant accuracy---each sample beyond the first $\Theta(\frac{1}{\Delta^2})$ samples from a single coin gives increasing certainty yet diminishing information-per-coin.
If we can save on this $\log\frac{1}{\eps}$ factor without sacrificing impractical constants, then our approach leads to significant practical savings in samples, and thus monetary cost---in regimes, such as crowdsourcing, where gathering data is by far the most expensive part of the estimation process.



\subsubsection{Algorithmic Construction}

We give two algorithmic constructions. Algorithm~\ref{Alg:Main}, which we call the Triangular Walk algorithm, is single-coin adaptive, and is theoretically almost-tight in sample complexity. Second, Algorithm~\ref{Alg:Optimal} has the optimal sample complexity, by combining the Triangular Walk algorithm with a new (and surprisingly) non-adaptive component (Algorithm~\ref{Alg:RefinedSampling}).

The Triangular Walk algorithm (Algorithm~\ref{Alg:Main}) is designed for the specific \emph{practical} parameter regime where $\rho$ is small: in our earlier crowdsourcing example, practitioners typically preprocess data items by using automated techniques and heuristics to classify a majority of the items, before leaving to crowdsourced workers a small number of items that cannot be automatically classified.
These automated filtering techniques usually flag significantly more  ``negative" items than ``positive" items as ``unclassifiable automatically", resulting in a small fraction $\rho$ of positive items among the ones selected for crowdsourced human classification.
The intuition behind our approach, then, is to try to abandon sampling (frequent) negative coins as soon as possible, after $\Theta(\frac{1}{\Delta^2})$ samples, while being willing to investigate (infrequent) positive coins up to depth $\Theta(\frac{1}{\Delta^2}\log\frac{1}{\eps})$. Thus we disproportionately bias our investment of resources towards the rare and valuable regime. Using techniques from random walk theory, we design a linear estimator based on this behavior (Algorithm~\ref{Alg:Single}), whose expectation across many coins yields a robust estimator, Algorithm~\ref{Alg:Main}, as shown in Theorem~\ref{Thm:OneSizeFinal} (restated and proved in Section~\ref{sect:TWalk}).

\begin{theorem}
	\label{Thm:OneSizeFinal}
	Given coins where a $\rho$ fraction of the coins have bias $\geq\frac{1}{2}+\Delta$, and $1-\rho$ fraction have bias $\leq\frac{1}{2}-\Delta$, then running
	Algorithm~\ref{Alg:Main} on $t=\Theta(\frac{\rho}{\eps^2}\log\frac{1}{\delta})$ randomly chosen coins will estimate $\rho$ to within an additive error of $\pm\eps$, with probability at least $1-\delta$, with an expected sample complexity of $O(\frac{\rho}{\eps^2\Delta^2}(1+\rho\log\frac{1}{\eps})\log\frac{1}{\delta})$.
\end{theorem}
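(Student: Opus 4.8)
The plan is to analyze Algorithm~\ref{Alg:Main} as an average of per-coin linear estimators produced by Algorithm~\ref{Alg:Single}, and to control three things in turn: (i) the \emph{bias} of the single-coin estimator — it should have expectation exactly (or nearly) $\1[\text{coin is positive}]$, so that averaging over $t$ randomly chosen coins gives an unbiased (or $\eps$-close) estimate of $\rho$; (ii) the \emph{variance} of the single-coin estimator, so that a Chernoff/Bernstein-type concentration over $t$ coins gives additive error $\pm\eps$ with probability $1-\delta$ once $t=\Theta(\frac{\rho}{\eps^2}\log\frac1\delta)$; and (iii) the \emph{expected sample cost} of running the triangular walk on a single coin, separately for positive and negative coins, to get the stated $O(\frac{\rho}{\eps^2\Delta^2}(1+\rho\log\frac1\eps)\log\frac1\delta)$ total.

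First I would recall the random-walk construction: for a coin of bias $p$, we track the running difference (heads minus tails, or a suitable affine reparametrization centered so that $p=\frac12$ corresponds to a symmetric walk) and stop either when we have a clear enough signal or when we hit a depth cap. The cap is $\Theta(\frac1{\Delta^2})$ in the ``looks negative'' direction and $\Theta(\frac1{\Delta^2}\log\frac1\eps)$ in the ``looks positive'' direction — this asymmetry is exactly the ``abandon negative coins early, chase positive coins deep'' idea from the introduction. The key algebraic fact from random walk / martingale theory is that one can choose the increment weights of a \emph{linear} estimator so that its expectation, as a function of the bias $p$, is an affine function of $p$ (or of a monotone reparametrization of $p$) that equals $0$ throughout $[0,\frac12-\Delta]$ and $1$ throughout $[\frac12+\Delta,1]$ up to an error of at most $\eps$ coming from the truncation at the positive-side depth cap. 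I would establish this by an optional-stopping argument on the martingale formed by the partial sums of the linear estimator, showing the truncation error is exponentially small in the depth, hence $\le\eps$ by the choice of cap; this gives $|\Exp[\hat\rho]-\rho|\le\eps$, or with a cleaner constant, error $\le\eps/2$ so that the concentration step can absorb the other $\eps/2$.

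Next, for concentration: the single-coin estimator is bounded (its magnitude is controlled by the depth caps and the increment weights), and more importantly its second moment should be $O(1)$ for a negative coin and $O(\rho^{-1}\cdot(\dots))$-free — concretely, I expect $\Exp[\hat X_i^2]=O(\rho)$-ish after accounting for the fact that a uniformly random coin is positive only with probability $\rho$, so that $\Var(\hat\rho)=O(\rho/t)$. Setting $t=\Theta(\frac\rho{\eps^2}\log\frac1\delta)$ and applying Bernstein's inequality (using the $O(1)$ bound on $|\hat X_i|$ and the $O(\rho)$ bound on the variance proxy) yields $\Pr[|\hat\rho-\Exp\hat\rho|>\eps/2]\le\delta$. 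For the sample complexity, I would bound the expected walk length on a single coin: a negative coin ($p\le\frac12-\Delta$) is a biased walk against the positive-depth direction, so by standard hitting-time estimates it terminates in expected $O(\frac1{\Delta^2})$ steps; a positive coin can run up to the cap, contributing $O(\frac1{\Delta^2}\log\frac1\eps)$, but this happens only for the $\rho$ fraction of coins, giving expected per-coin cost $O(\frac1{\Delta^2}(1+\rho\log\frac1\eps))$ and total $O(t\cdot\frac1{\Delta^2}(1+\rho\log\frac1\eps))=O(\frac{\rho}{\eps^2\Delta^2}(1+\rho\log\frac1\eps)\log\frac1\delta)$.

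The main obstacle I anticipate is item (i) done \emph{with good constants and simultaneously with bounded variance}: it is easy to build an estimator that is unbiased but has large variance (e.g.\ by using aggressive increment weights), or one with small variance but residual bias on coins whose bias lies strictly between $\frac12\pm\Delta$ and $0/1$ (the ``uncertainty about uncertainty'' coins) — the estimator's expectation must be \emph{flat} on the whole interval $[\frac12+\Delta,1]$, not just correct at the endpoint $\frac12+\Delta$. Getting a single linear estimator whose expectation is provably flat on both plateaus, has truncation error $\le\eps$ from the asymmetric caps, and whose variance is $O(1)$ per coin is the delicate balancing act; I expect this is handled by a carefully chosen reparametrization of $p$ under which the random walk is a martingale with bounded increments and the linear estimator's weights telescope, with the flatness on $[\frac12+\Delta,1]$ following because beyond the positive depth cap the estimator is simply clamped. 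The rest — Bernstein concentration and hitting-time bounds — is routine.
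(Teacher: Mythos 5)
Your outline matches the paper's proof in structure (bias and variance of the single-coin estimator, per-coin expected cost split by coin type, majority-vote reduction to constant $\Delta$), and your sample-complexity accounting is right. However, the two load-bearing steps are not actually carried out, and one of them, as you propose it, would fail.

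First, the unbiasedness of the single-coin estimator. You correctly identify that the expectation must be flat (equal to $1$) on the whole positive interval, not just at its endpoint, and you call this the delicate part --- but you then defer it to an unspecified ``reparametrization under which the weights telescope,'' and the one concrete mechanism you offer (flatness ``because beyond the positive depth cap the estimator is simply clamped'') is not right: clamping the output does nothing to control how the probability of surviving to the cap varies with $p$. The paper's mechanism is the Ballot theorem: conditioned on the walk showing $k$ heads out of $\nmax$ flips, the probability that it never had a majority-tails prefix is exactly $\frac{2k-\nmax}{\nmax}$, \emph{independent of} $p$; outputting $\frac{\nmax}{2k-\nmax}$ upon survival therefore makes the conditional expectation exactly $1$ for every $k\ge\frac{5}{8}\nmax$, and Chernoff bounds confine the remaining mass to $\eps^{O(c)}$. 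Without this exact cancellation (or an equivalent identity), the claim that the bias is at most $\eps/2$ uniformly over $p\in[\frac{3}{4},1]$ is unsupported. Note also that the estimator's expectation is a polynomial in $p$, not an affine function; an affine function cannot equal $0$ on one interval and $1$ on another.

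Second, the concentration step. Algorithm~\ref{Alg:Main} returns a median of $\Theta(\log\frac{1}{\delta})$ group means, not a single sample mean, and this is not cosmetic. Applying Bernstein to the plain average of $t=\Theta(\frac{\rho}{\eps^2}\log\frac{1}{\delta})$ bounded variables with variance $O(\rho)$ gives a tail exponent of order $\frac{t\eps^2}{\rho+\eps}$; when $\rho\ll\eps$ (e.g.\ $\eps^2\le\rho\le\eps$, a regime the theorem must cover) this is only $\frac{\rho}{\eps}\log\frac{1}{\delta}=o(\log\frac{1}{\delta})$, so the failure probability is not $\delta$. This is not just slack in the inequality: the sum of the outputs behaves like (a bounded multiple of) a Binomial with mean $t\rho$, and a deviation of $t\eps\gg t\rho$ occurs with probability roughly $(\rho/\eps)^{\Theta(t\eps)}=\delta^{o(1)}\gg\delta$, so the sample mean genuinely does not meet the guarantee there. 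The paper's route --- Chebyshev within each group of $\Theta(\rho/\eps^2)$ coins for constant failure probability, then a Chernoff bound on the median of $\Theta(\log\frac{1}{\delta})$ groups (with the empty-group fallback covering $\rho=O(\eps^2)$) --- is what delivers the $\log\frac{1}{\delta}$ dependence in all regimes; you should analyze the algorithm as written.
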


The analysis of Algorithm~\ref{Alg:Main} uses only standard concentration inequalities, and thus the big-O notation for the sample complexity does not hide large constants.
As further evidence of the good practical performance of Algorithm~\ref{Alg:Main}, Section~\ref{sec:experiments} shows simulation-based experimental results, run on settings with practical problem parameters for crowdsourcing applications.
These results demonstrate the advantages of our algorithm as compared with the straightforward majority vote algorithm as well as the state-of-the-art algorithm~\cite{Chung:2017} (which does not enjoy any theoretical guarantees).

As for our second, optimal, algorithmic construction (Algorithm~\ref{Alg:Optimal} in Section~\ref{sec:optimal-algorithm}), we combine the adaptive techniques from the Triangular Walk algorithm with a non-adaptive estimation component.
More concretely, in the regimes where Algorithm~\ref{Alg:Main} is not optimal, Algorithm~\ref{Alg:Optimal} uses Algorithm~\ref{Alg:Main} to first give a 2-approximation of $\rho$, before using this information to non-adaptively estimate $\rho$ much more accurately, while keeping the variance of the estimate small, to control the sample complexity.
The theoretical guarantees of Algorithm~\ref{Alg:Optimal} are shown in Theorem~\ref{thm:opt} (restated and proved in Section~\ref{sec:optimal-algorithm}).

\begin{theorem}[Informal]\label{thm:opt}
Given coins where a $\rho$ fraction of the coins have bias $\geq\frac{1}{2}+\Delta$, and $1-\rho$ fraction have bias $\leq\frac{1}{2}-\Delta$, then for large enough constant $c$, running
Algorithm~\ref{Alg:Optimal} on a budget of $B\geq c\frac{\rho}{\Delta^2\eps^2}$ coin flips will estimate $\rho$ to within an additive error of $\pm\eps$, with probability at least $2/3$.
If the algorithm is repeated $\Theta(\log\frac{1}{\delta})$ times, and the median estimate is returned, then the probability of failure is at most $\delta$.

\end{theorem}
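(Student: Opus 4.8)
The plan is to analyze Algorithm~\ref{Alg:Optimal} as two phases run on disjoint coins, maintaining a constant overall failure probability that is then amplified to $\delta$ by the final median step. First I would partition the flip budget as $B_1 = \Theta(B)$ for a coarse phase and $B_2 = \Theta(B)$ for a refined phase, with the constants chosen so that each phase individually fails with probability at most $1/6$; a union bound then gives overall success probability at least $2/3$. Throughout, expected-cost guarantees (such as that of Theorem~\ref{Thm:OneSizeFinal}) are converted to hard budget caps via Markov's inequality, the overflow event being charged to the failure probability.

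\textbf{Coarse phase.} The goal is to output, with probability at least $5/6$, either an estimate already within $\pm\eps$ of $\rho$, or a value $\hat\rho$ with $\rho/2 \le \hat\rho \le 2\rho$. I would invoke Algorithm~\ref{Alg:Main} inside an exponential search over its target accuracy $\eps'$: start from $\eps' = \Theta(1)$ and halve, stopping when the current estimate exceeds $\Theta(\eps')$ (so $\eps' \lesssim \rho$ and we have a constant-factor approximation), or when $\eps'$ reaches $\eps$ (in which case the Triangular Walk has directly returned an $\eps$-accurate estimate), or when the accumulated cost would exceed $B_1$. By Theorem~\ref{Thm:OneSizeFinal} the total cost is a geometric series dominated by its last term, and the key quantitative check is that this stays within $B_1 = \Theta(\rho/(\Delta^2\eps^2))$ in every regime. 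When $\rho = O(\eps)$ the search reaches $\eps' = \eps$ within budget because $1 + \rho\log\frac1\eps = O(1)$ there (using that $x\log\frac1x$ is bounded), so the Triangular Walk alone is already optimal; when $\rho = \Omega(\eps)$, reaching $\eps' = \Theta(\rho)$ costs $O(\frac{1}{\rho\Delta^2})$ up to lower-order factors, which is $O(\rho/(\Delta^2\eps^2))$ since $\rho \ge \eps$. Boosting each call's success probability to survive a union bound over the $O(\log\frac1\eps)$ search steps only multiplies the cost by a doubly-logarithmic factor, absorbed into the constants.

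\textbf{Refined phase.} Given $\hat\rho$ with $\rho/2\le\hat\rho\le2\rho$, I would invoke the guarantee of the non-adaptive component Algorithm~\ref{Alg:RefinedSampling}: it draws $m = \Theta(\hat\rho/\eps^2)$ fresh random coins, flips each $\Theta(1/\Delta^2)$ times, and forms a linear estimator $\hat\rho_{\mathrm{fine}}$ of $\rho$ with $|\Exp[\hat\rho_{\mathrm{fine}}] - \rho| \le \eps/2$ and $\Var[\hat\rho_{\mathrm{fine}}] \le (\eps/4)^2$, so that Chebyshev's inequality yields $|\hat\rho_{\mathrm{fine}} - \rho|\le\eps$ with probability at least $5/6$, while the total flip count is $\Theta(mk) = \Theta(\rho/(\Delta^2\eps^2)) \le B_2$. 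Establishing this is the main obstacle, and it is precisely the ``uncertainty about uncertainty'' difficulty: the per-coin estimator $f(X)$ with $X\sim\Bin(k,P)$ and $k=\Theta(1/\Delta^2)$ must have $\Exp[f(X)\mid P]$ within $O(\eps/\rho)$ of $1$ for \emph{every} positive bias $P\in[\frac12+\Delta,1]$ and within $O(\eps)$ of $0$ for \emph{every} negative bias $P\in[0,\frac12-\Delta]$ --- a bias bound uniform over the whole admissible range, from which $|\Exp[\hat\rho_{\mathrm{fine}}]-\rho|$ follows by splitting into the $\rho$-weighted positive contribution and the $(1-\rho)$-weighted negative contribution --- while simultaneously having conditional variance small enough (on the order of $\rho$ on average, exploiting that $f$ is near $0$ on the $\approx 1-\rho$ fraction of negative coins) that averaging over only $m = \Theta(\rho/\eps^2)$ coins suffices. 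The crucial point is that all of this must be achieved with only $O(1/\Delta^2)$ flips per coin, so the total avoids the extra $\log\frac1\eps$ factor incurred by the naive classify-by-majority scheme of Example~\ref{ex:non-adaptive-algorithm}; I would do this with the polynomial/moment-based estimator constructed in Section~\ref{sec:optimal-algorithm}, and the $2$-approximation $\hat\rho$ is exactly what lets us calibrate $m$ (and the estimator's normalization) without knowing $\rho$.

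\textbf{Amplification.} Finally, running the whole two-phase procedure $N = \Theta(\log\frac1\delta)$ times on fresh coins and returning the median estimate: each run independently lands in $[\rho-\eps,\rho+\eps]$ with probability at least $2/3$, so a Chernoff bound on the number of good runs shows the median is good except with probability at most $\delta$. This last step is routine; the technical content lies entirely in the coarse-phase budget accounting and, above all, in the bias-versus-variance tradeoff of the non-adaptive refined estimator.
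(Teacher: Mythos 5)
Your overall architecture (a coarse phase producing either an $\eps$-accurate answer or a $2$-approximation $\hat\rho$, a refined phase on fresh coins, Markov's inequality to turn expected costs into hard budget caps, and median amplification) matches the paper's, and your exponential-search coarse phase is a workable variant of the paper's Steps 1--2. The gap is in the refined phase, and it is fatal as you have stated it. You describe that phase as fully non-adaptive: each of $m=\Theta(\rho/\eps^2)$ coins is flipped a fixed $k=\Theta(1/\Delta^2)$ times, and a per-coin estimator $f(X)$ with $X\sim\Bin(k,P)$ is claimed to have conditional variance ``on the order of $\rho$'' on negative coins because $f$ is near $0$ there in expectation. Near-zero conditional \emph{expectation} does not give near-zero conditional \emph{second moment}, and in fact no such $f$ exists. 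With only $k=\Theta(1/\Delta^2)$ flips the laws $p^{\pm}=\Bin(k,\tfrac12\pm\Delta)$ satisfy $\chi^2(p^+\|p^-)\le(1+\Theta(\Delta^2))^{k}-1=\Theta(1)$, so Cauchy--Schwarz applied to $\sum_x f(x)\bigl(p^+(x)-p^-(x)\bigr)$ gives, for any $f$ whose conditional means differ by $1-O(\eps)$ across the two biases,
\[\Exp\bigl[f^2\mid p^-\bigr]\;\ge\;\frac{\bigl(\Exp[f\mid p^+]-\Exp[f\mid p^-]\bigr)^2}{\chi^2(p^+\|p^-)}\;=\;\Omega(1).\]
Hence $\Var[f\mid p^-]=\Omega(1)$, the population variance of the per-coin estimate is $\Omega(1-\rho)=\Omega(1)$, and averaging $m=\Theta(\rho/\eps^2)$ of them leaves variance $\Omega(\eps^2/\rho)\gg\eps^2$. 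Equivalently, the paper's own non-adaptive bound in Appendix~\ref{app:non-adaptive} shows that a scheme using $n\le 1/\Delta^2$ flips per coin extracts only $O(\eps^2\Delta^2)$ mutual information per sample about ``$\rho$ vs.\ $\rho+\eps$,'' so your refined budget of $\Theta(\rho/(\eps^2\Delta^2))$ flips yields $O(\rho)=o(1)$ total information in exactly the regime ($\rho=o(1)$) where the refined phase is needed.

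The missing idea is the adaptive filter of Algorithm~\ref{Alg:Optimal-rho-hat}: each coin first undergoes a depth-$d=\Theta(\log\frac{1}{\hat\rho})$ stop-on-majority-tails walk, which costs only $O(1)$ expected flips per negative coin yet ensures that only an $O(\rho)$ fraction of all coins ever reach the expensive estimator. The exactly unbiased, $O(1)$-variance Refined Sampling routine (Algorithm~\ref{Alg:RefinedSampling}, Lemma~\ref{lem:refined-sampling}) is run only on the survivors, with the target function $f_d$ of Definition~\ref{def:f(p)} chosen as the reciprocal of the filter's survival probability so that the selection bias cancels exactly. It is this combination---rare access to an $O(1)$-variance unbiased estimator, rather than a small-variance estimator applied to every coin---that produces $O(\rho)$ average variance with zero bias. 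Your citation of ``the polynomial/moment-based estimator constructed in Section~\ref{sec:optimal-algorithm}'' does not describe that construction: it is single-coin adaptive, and Algorithm~\ref{Alg:RefinedSampling} itself uses a random, heavy-tailed number of flips rather than a fixed $\Theta(1/\Delta^2)$.
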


\subsubsection{Lower Bounds and Discussion}
\label{sect:LowerIntro}

Complementary to our algorithm, we show a matching lower bound of $\Omega(\frac{\rho}{\eps^2\Delta^2}\log\frac{1}{\delta})$ samples for a success probability of $1-\delta$ for the problem. Crucially, our bounds match across choices of all four parameters, $\rho,\eps,\Delta,\delta$.
To show the lower bound, we use the following setup: consider a scenario where all positive coins have bias \emph{exactly} $\frac{1}{2}+\Delta$ and all negative coins have bias \emph{exactly} $\frac{1}{2}-\Delta$.

The overall intuition for our lower bound is that, for each coin, even flipping it enough to learn whether it is a positive or negative coin will tell us little about whether the true fraction of positive coins is $\rho$ versus $\rho+\eps$, and thus the flow of information to our algorithm is at most a slow trickle. To capture this intuition, we aim to decompose the analysis into a sum of coin-by-coin bounds; however, the key challenge is the \emph{cross-coin adaptivity} that is available to the algorithm. 

To demonstrate the challenge of tightly analyzing cross-coin adaptivity, consider the following natural attempt at a lower bound. 

\vspace*{-0.0em}
\begin{enumerate}
    \setlength{\itemsep}{0em}
    \item Consider flipping a fair coin $S$ to choose between a universe with $\rho$ fraction of positive coins, versus $\rho+\eps$ fraction.
    \item The aim is to bound the amount of mutual information that the entire transcript of an adaptive coin-flipping algorithm can have with the coin $S$.
    \item Suppose this mutual information can be bounded by the mutual information of the sub-transcript of the $i^\textrm{th}$ coin with $S$, summed over all $i$.
    \item\label{step:single} Thus consider and bound the amount of mutual information between the sub-transcript of just coin $i$ alone, with $S$; and sum these bounds over all coins at the end.
\end{enumerate}
\vspace*{-0.0em}
While one would intuitively expect the bounds of Step~\ref{step:single} to be small for each single coin, cross-coin adaptivity allows for each single-coin sub-transcript to encode a lot of mutual information via its \emph{length}, which may be adaptively chosen by the algorithm in light of information gathered across all other coins. The amount of mutual information about $S$ in a sub-transcript may be linear in the number of times \emph{other} coins have been flipped, implying that summing up such mutual information across all coins would yield a bound that uselessly grows quadratically with the number of flips, instead of linearly.

\medskip\noindent{\bf Our approach:} 
We show that no fully-adaptive algorithm can distinguish the following two scenarios with probability at least $1-\delta$, using $o(\frac{\rho}{\eps^2\Delta^2}\log\frac{1}{\delta})$ samples: 1) when a $\rho$ fraction of the coins are positive, and 2) when a $\rho+\eps$ fraction of the coins are positive.
This is formalized as the following theorem (Theorem~\ref{thm:UnknownLower}), and proved in Section~\ref{sect:UnknownLower}.

\begin{theorem}
	\label{thm:UnknownLower}
	For $\rho\in [0,\frac{1}{2})$ and $\epsilon\in (0, 1-2\rho]$, the following two situations are impossible to distinguish with at least $1-\delta$ probability using an expected $o(\frac{\rho}{\eps^2\Delta^2}\log\frac{1}{\delta})$ samples: A) $\rho$ fraction of the coins have probability $\frac{1}{2}+\Delta$ of landing heads and $1-\rho$ fraction of the coins have probability $\frac{1}{2}-\Delta$ of landing heads, versus B) $\rho+\epsilon$ fraction of the coins have probability $\frac{1}{2}+\Delta$ of landing heads and $1-(\rho+\epsilon)$ fraction of the coins have probability $\frac{1}{2}-\Delta$ of landing heads.
	This impossibility crucially includes fully-adaptive algorithms.
\end{theorem}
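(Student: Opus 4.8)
The plan is to set this up as a hypothesis‑testing lower bound for the instance in which each coin of the universe is, independently, ``positive'' (bias $\frac12+\Delta$) with probability $\rho$ in scenario~A and with probability $\rho+\eps$ in scenario~B, so that the two scenarios differ in only a single Bernoulli parameter. Fix any (fully adaptive, randomized) algorithm, let $N$ be its random total number of coin flips with $\bar N:=\Exp[N]$, and let $\mathcal T$ be its transcript (which coin was flipped at each step, together with the outcomes); folding the algorithm's internal randomness in, its A‑vs‑B guess is a test of $\mathcal T$, so if it fails with probability at most $\delta$ then $\mathrm{TV}(\mathcal T_A,\mathcal T_B)\ge 1-2\delta$, and Bretagnolle--Huber forces $\DKL(\mathcal T_A\,\|\,\mathcal T_B)\ge\ln\frac1{4\delta}$. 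Hence it suffices to prove
\[
\DKL(\mathcal T_A\,\|\,\mathcal T_B)\;=\;O\!\Big(\tfrac{\eps^2\Delta^2}{\rho}\,\bar N\Big),
\]
since for $\bar N=o(\frac{\rho}{\eps^2\Delta^2}\log\frac1\delta)$ this is $o(\log\frac1\delta)$, a contradiction. (Unbounded stopping times are handled by truncating at finite $N_{\max}$ and letting $N_{\max}\to\infty$, with Fatou supplying the needed inequality.)

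Controlling $\DKL(\mathcal T_A\,\|\,\mathcal T_B)$ in the presence of cross‑coin adaptivity is the crux, and here I would invoke the decomposition lemma (Lemmas~\ref{Lem:Reduction} and~\ref{Lem:ReductionKL}). Its content is that, although the naive plan of bounding the transcript's divergence by the sum over coins of each coin's sub‑transcript divergence fails --- the \emph{length} of a sub‑transcript, adaptively chosen in light of the other coins, can by itself encode $\Omega(1)$ worth of divergence, giving a useless bound of order (number of coins)$\times$(number of flips) --- one can nonetheless reduce the fully adaptive bound to a sum of \emph{single‑coin‑adaptive} bounds: for each coin $i$ one extracts an induced coin‑local, adaptively stopped sampling process, and $\DKL(\mathcal T_A\|\mathcal T_B)\le\sum_i\DKL(\tau^{(i)}_A\|\tau^{(i)}_B)$ for the corresponding coin‑local transcripts $\tau^{(i)}$. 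I expect proving this reduction --- in effect, that cross‑coin adaptivity buys nothing on this instance --- to be the main obstacle, since the induced per‑coin stopping rule is itself correlated with the scenario through the other coins, and one must show this correlation does not inflate the per‑coin divergence.

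Given the reduction, what remains is a clean single‑coin computation, which I would carry out with $\chi^2$ (recalling $\DKL\le\chi^2$). Because a coin‑local stopping rule never inspects which hypothesis is in force, the likelihood ratio of a coin‑local transcript $\tau$ between ``positive'' and ``negative'' is exactly $r(\tau)=\big(\frac{1/2+\Delta}{1/2-\Delta}\big)^{h(\tau)}$, where $h(\tau)$ is the number of heads minus the number of tails in $\tau$; writing $Q_\pm$ for the two conditional transcript laws and $P_A=\rho Q_++(1-\rho)Q_-$, $P_B=(\rho+\eps)Q_++(1-\rho-\eps)Q_-$, this gives the exact identity
\[
\chi^2(P_A\,\|\,P_B)\;=\;\eps^2\,\Exp_{\tau\sim Q_-}\!\left[\frac{(r(\tau)-1)^2}{(\rho+\eps)\,r(\tau)+(1-\rho-\eps)}\right].
\]
Two estimates then close things out. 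First, because $r$ is a likelihood ratio, $\Exp_{Q_-}[r]=1$; using the hypothesis $\eps\le1-2\rho$ (equivalently $1-\rho-\eps\ge\rho$) to bound the integrand by $\frac{r(\tau)}{\rho+\eps}+\frac{1}{1-\rho-\eps}$ yields the ``saturated'' bound $\chi^2(P_A\|P_B)\le\frac{2\eps^2}{\rho}$. Second, bounding the integrand instead by $\frac{(r(\tau)-1)^2}{\rho}$ reduces matters to $\Exp_{Q_-}[(r-1)^2]=\Exp_{Q_-}[r^2]-1$; since $L_s:=r(\text{transcript after }s\text{ flips})$ is a $Q_-$‑martingale and $L_s^2/\mu^s$ is too, with $\mu=\frac{1+12\Delta^2}{1-4\Delta^2}=1+\Theta(\Delta^2)$, optional stopping gives $\Exp_{Q_-}[r^2]-1=(\mu-1)\,\Exp_{Q_-}\!\big[\sum_{s<T}L_s^2\big]$, which when the coin's expected flip count $\bar k$ is small is $O(\Delta^2\bar k)$, so $\chi^2(P_A\|P_B)=O(\frac{\eps^2\Delta^2\bar k}{\rho})$ there.

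Interpolating the two estimates, the per‑coin divergence is in all cases at most a concave function of that coin's flip count with slope $O(\frac{\eps^2\Delta^2}{\rho})$; summing over coins --- concavity making ``one flip per coin'' the worst case --- gives
\[
\sum_i\DKL(\tau^{(i)}_A\,\|\,\tau^{(i)}_B)\;\le\;\sum_i\chi^2(P^{(i)}_A\,\|\,P^{(i)}_B)\;\le\;O\!\Big(\tfrac{\eps^2\Delta^2}{\rho}\Big)\sum_i\bar k_i\;=\;O\!\Big(\tfrac{\eps^2\Delta^2}{\rho}\,\bar N\Big),
\]
which by the decomposition lemma bounds $\DKL(\mathcal T_A\|\mathcal T_B)$ and finishes the proof. (Mild extra care is needed in the boundary regimes where $\rho+\eps$ or $\Delta$ is near its extreme value, and in controlling $\Exp_{Q_-}[\sum_{s<T}L_s^2]$ when the induced per‑coin stopping time is heavy‑tailed --- there one simply falls back on the saturated bound.)
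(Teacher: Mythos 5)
Your skeleton is the same as the paper's: decompose the fully-adaptive transcript divergence into per-coin contributions via Lemmas~\ref{Lem:Reduction}/\ref{Lem:ReductionKL}, prove a per-coin bound of the form $O(\frac{\eps^2\Delta^2}{\rho})$ times that coin's expected flip count, and finish with the high-probability Pinsker/Bretagnolle--Huber inequality. Invoking the decomposition lemma as a black box is fine, and your $\chi^2$ identity, the saturated bound $\chi^2\le 2\eps^2/\rho$, and the optional-stopping identity $\Exp_{Q_-}[r^2]-1=(\mu-1)\Exp_{Q_-}[\sum_{s<T}L_s^2]$ are all correct.

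The gap is in the claim that interpolating your two estimates gives $\chi^2=O(\frac{\eps^2\Delta^2}{\rho}\,\bar k)$ for \emph{every} single-coin adaptive stopping rule; this is exactly the analogue of Corollary~\ref{cor:BoundedLower}, and it is where the paper spends essentially all of Appendix~\ref{app:lower}. Your martingale estimate needs $\Exp_{Q_-}[\sum_{s<T}L_s^2]=O(\Exp_{Q_-}[T])$, which fails as soon as the rule can reach states with $L_s\gg 1$; your saturated bound $2\eps^2/\rho$ only matches the target when $\bar k=\Omega(1/\Delta^2)$. Rules with small expected depth but heavy right tails fall through both. Concretely, take the rule ``flip until heads-minus-tails hits $-1$ or $+H$,'' with $H$ a large multiple of $1/\Delta$ so that $\theta^H=\mathrm{poly}(1/\Delta)$ where $\theta=\frac{1/2+\Delta}{1/2-\Delta}$. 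Under $Q_-$ the expected flip count is $\Theta(1/\Delta)$, but the walk hits $+H$ with probability $\Theta(\Delta\,\theta^{-H})$, so $\Exp_{Q_-}[r^2]-1=\Omega(\Delta\,\theta^{H})$, making the martingale estimate larger than the target $O(\frac{\eps^2\Delta^2}{\rho}\cdot\frac1\Delta)=O(\frac{\eps^2\Delta}{\rho})$ by $\mathrm{poly}(1/\Delta)$; the saturated bound $2\eps^2/\rho$ overshoots by $1/\Delta$. (The true $\chi^2$ for this rule \emph{is} $O(\frac{\eps^2\Delta}{\rho})$, since the rare high-ratio outcome contributes roughly $\eps^2\,Q_+(\text{hit }+H)/\rho=O(\eps^2\Delta/\rho)$, but neither of your estimates sees the cancellation between small stopping mass and large ratio.) What is missing is an argument that whenever a rule places probability mass on high-likelihood-ratio states, either that mass is small enough that its $\approx r/\rho$ contribution is controlled, or the expected sample complexity is correspondingly inflated somewhere earlier in the walk --- this is the content of the paper's tail-boosting Lemmas~\ref{lem:ScaryTailBoostPos}--\ref{lem:SampleBoost} and Propositions~\ref{prop:scary} and~\ref{prop:Wizard}, and ``fall back on the saturated bound'' does not replace it. Without this, your argument only yields a lower bound weaker by at least a factor of $\Delta$.
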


In Section~\ref{sect:Reduction}, we capture rather generally via Lemmas~\ref{Lem:Reduction} and~\ref{Lem:ReductionKL} the above intuitive decomposition of a many-coin adaptive algorithm into its single-coin contributions, but via a careful simulation argument that precludes the kind of information leakage between coins that we described above. More explicitly, instead of decomposing a single transcript into many (possibly correlated) sub-transcripts, we relate an $n$-coin transcript to $n$ \emph{separate} runs of the algorithm (each on freshly drawn random coins), where in the $i^\textrm{th}$ run, coin $i$ is authentically sampled (from either the $\rho$ scenario or the $\rho+\eps$ scenario), while all the remaining coins are simulated by the algorithm. Crucially, since the remaining simulated coins do not depend on the ``real" scenario, no cross-coin adaptivity can leak any information about the real world to coin $i$, beyond the information gained from flipping coin $i$ itself.

Furthermore, Lemmas~\ref{Lem:Reduction} and~\ref{Lem:ReductionKL} apply to a broad variety of problem settings, where the population of random variables can be arbitrary and not necessarily Bernoulli coins.
We believe these lemmas are of independent interest beyond this work, and can be a useful tool for proving lower bounds for other problem settings, for example a Gaussian variant of the current problem, where instead of being input a noisy yes/no answer on the positivity of an item, we instead receive a numerical Gaussian-distributed score with mean, say, $>1$ for positive items and $<0$ for negative items.

Given the decomposition lemmas (Lemmas~\ref{Lem:Reduction} and~\ref{Lem:ReductionKL}), completing the lower bound analysis for the current problem requires upper bounding the squared Hellinger distance between running any single-coin adaptive algorithm on the two coin populations described earlier, with slightly different positive-to-negative mixture ratios.
This forms the bulk (and technical parts) of the proof of Theorem~\ref{thm:UnknownLower}.

\medskip\noindent{\bf Non-adaptive bounds:}
As motivation for the algorithmic results of this paper, it is reasonable to ask, given Theorem~\ref{thm:UnknownLower}'s lower bound of $\Omega(\frac{\rho}{\eps^2\Delta^2}\log\frac{1}{\delta})$ on the number of samples for our problem, is it possible that a \emph{non-adaptive} algorithm can approach this performance, or is the adaptive flavor of Algorithms~\ref{Alg:Main} or~\ref{Alg:Optimal} required?
We briefly describe how the framework of the ``natural attempt" (the numbered list above) in fact yields a lower bound for \emph{non-adaptive} algorithms that is a $\log\frac{1}{\rho}$ factor higher than that of Theorem~\ref{thm:UnknownLower}, namely $\Omega(\frac{\rho}{\eps^2\Delta^2}\log\frac{1}{\rho})$, when $\rho\geq\eps^2$

Given a random variable $S$ that uniformly chooses between scenarios ``$\rho$" and ``$\rho+\eps$" respectively, and a sample of size $n$ from a coin that has bias $\frac{1}{2}+\Delta$ with probability $\rho$ or $\rho+\eps$ respectively, and bias $\frac{1}{2}-\Delta$ otherwise, what is the mutual information between the $n$ observed flips (from a single coin) and the scenario variable $S$? A non-adaptive algorithm must fix the number of queries $n$ independent of the observed outcomes from the coins, where the information about $S$ is the sum received from sampling each coin. Thus the optimal such algorithm chooses $n$ that maximizes the mutual information per sample. Estimates of this mutual information in the relevant cases are not too difficult, as this is the mutual information between a univariate distribution that is the mixture of two binomials, with a fair coin that determines the mixture probabilities. In terms of $\Delta$, and $\rho\geq \eps^2$, some calculation shows that the optimal value of $n$ is $\Theta(\frac{1}{\Delta^2}\log\frac{1}{\rho})$, which yields the above-claimed non-adaptive lower bound on sample complexity of $\Omega(\frac{\rho}{\eps^2\Delta^2}\log\frac{1}{\rho})$.
See Appendix~\ref{app:non-adaptive} for the complete calculations.

For the constant-$\rho$ (and constant probability) regime, this lower bound is in fact tight.
A major component of our final algorithm, Algorithm~\ref{Alg:RefinedSampling}, when run on a single constant quality ($\Delta = \Omega(1)$\,) coin with the parameter $f=f_0(p)$ as defined in Definition~\ref{def:f(p)}, is a non-adaptive unbiased estimator for the indicator function of the positivity of the coin, with small variance and constant sample complexity.
For a low quality coin ($\Delta = o(1)$\,), we can simulate a flip of a constant quality coin by taking the majority result of $\Theta(1/\Delta^2)$ low quality coin flips.
Returning the mean of $O(\frac{1}{\eps^2})$ repetitions of  Algorithm~\ref{Alg:RefinedSampling} on different coins yields an $\eps$-accurate estimate of $\rho$.
The total sample complexity is $O(\frac{1}{\eps^2\Delta^2})$, which matches the non-adaptive lower bound in the constant-$\rho$ regime.

In summary, we have the adaptive and non-adaptive bounds in Table~\ref{table:bounds}.
As shown in Table~\ref{table:bounds}, the non-adaptive bounds match each other and the adaptive bounds only in the regime where $\rho = \Theta(1)$ (and in the trivial $\eps = \Theta(1)$ regime).
In the non-constant $\rho$ regime, the non-adaptive lower bound is asymptotically larger than the adaptive lower bound, demonstrating the need for adaptivity in the design of our final optimal algorithm.

\begin{table*}[t]
\renewcommand{\arraystretch}{2.2}
\centering
\hspace*{-0.4cm}
\begin{tabular}{|r|l|l|}
\hline
     & \multicolumn{1}{c|}{Upper Bound} & \multicolumn{1}{c|}{Lower Bound} \\[0.5em]
     \hline
    Adaptive & $O(\frac{\rho}{\eps^2\Delta^2}\log\frac{1}{\delta})$ \hspace{2cm}(Algorithm~\ref{Alg:Optimal}) & $\Omega(\frac{\rho}{\eps^2\Delta^2}\log\frac{1}{\delta})$ \hspace{2.7cm}(Section~\ref{sect:UnknownLower})\\[0.5em]
    \hline
    Non-adaptive & \makecell{$O(\frac{\rho}{\eps^2\Delta^2}\log\frac{1}{\eps}\log\frac{1}{\delta})$ (Trivial, Example~\ref{ex:non-adaptive-algorithm})\\ $O(\frac{1}{\eps^2\Delta^2}\log\frac{1}{\delta})$ (Algorithm~\ref{Alg:RefinedSampling} for $\rho=\Theta(1)$\,)} & $\Omega(\frac{\rho}{\eps^2\Delta^2}\log\frac{1}{\rho})$ (For $\rho \ge \eps^2$ and constant $\delta$) \\[0.5em]
    \hline
\end{tabular}
\caption{Sample Complexity Upper and Lower Bounds}
\label{table:bounds}
\end{table*}

%

\subsubsection{Practical Considerations}

The keen-eyed reader might notice that the algorithmic results in Theorems~\ref{Thm:OneSizeFinal} and~\ref{thm:opt} both depend on the unknown ground truth $\rho$, so thus these bounds are not immediately invokable by a user.
We present two approaches to address this issue.

The first approach is to note that Algorithm~\ref{Alg:Main} can be interpreted as an \emph{anytime} algorithm: it can produce an estimate at any point in its execution.
As more coins are used in Algorithm~\ref{Alg:Main}, the estimate simply gains accuracy.
Section~\ref{sec:practice} discusses this approach in more detail, and our experiments in Section~\ref{sec:experiments} are also run using the same approach.
Because of its simplicity, we recommend this method in practice.

A complication arising from this approach is the fact the sample complexity bound of Theorem~\ref{Thm:OneSizeFinal} is an \emph{expected} sample complexity bound.
Thus there are potential issues introduced by abruptly stopping the algorithm after a fixed budget of samples, which might inadvertently introduce bias to the estimate. Section~\ref{sec:practice} also shows how to analyze and address this issue.

The second, theoretically more interesting approach is to fix a total budget of allowable coin flips, and have the algorithm ``discover" the optimal achievable accuracy $\eps$ just from interacting with the different coins.
Our presentation and analysis of Algorithm~\ref{Alg:Optimal}, in Section~\ref{sec:optimal-algorithm}, follows this approach.
We point out that Algorithm~\ref{Alg:Main} can also be made to have this theoretical guarantee, as demonstrated by the invocation of Algorithm~\ref{Alg:Main} in Algorithm~\ref{Alg:Optimal}.

\subsubsection{Designing Optimal Estimators when Coin Biases are Known}

By contrast with the above results that analyze the ``uncertainty about uncertainty" regime with unknown populations of coins, we shed light on the algorithmic challenges of that regime by providing a tight analysis in the case where knowledge about the populations of coins can be leveraged by the algorithm.
In particular, we give a bootstrapping approach which takes some initial guess of $\rho$ along with knowledge of the coin population, and produces an optimal-by-construction estimator that can be used improve on the initial estimate.
Explicitly, consider the regime where we know 1) the distribution of coin biases conditioned on being a positive coin, 2) analogously for negative coins and 3) also $\rho$ itself, for bootstrapping purposes even though in practice we would only have a guess.
Suppose further that we are given 4) the constraint that we will invest at most $\nmax$ flips on a single coin, controlling both the sample complexity but also the computational complexity we can afford to computer the optimal estimator.
In Section~\ref{sect:QP}, we use quadratic and linear programming techniques to find a single-coin adaptive algorithm, taking the form of a linear estimator, with the minimum variance possible subject to the constraint that, even if our knowledge of $\rho$ is wrong, the estimator is still unbiased.
This construction yields the following theorem.

\begin{theorem}
\label{thm:QPUpper}
Suppose we are given 1) the distribution of coin biases conditioned on being a positive coin, 2) the analogous distribution for negative coins and 3) the mixture parameter $\rho$ (which, again, is a circular assumption but useful for a bootstrapping approach).
Suppose further that we are given 4) the parameter $\nmax$, which controls the maximum depth of the triangular walk.

Then, following the method described in Section~\ref{sect:QP}, we can find the linear estimator for $\rho$ that minimizes variance, subject to a) the expected output of the estimator on input a random positive coin is 1 and b) the analogous expected output for a random negative coin is 0.

Moreover, if the objective of the linear program in Figure~\ref{Fig:NewLP} is $U$, then the expected sample complexity of the constructed linear estimator is $O(\frac{1}{U\eps^2}\log\frac{1}{\delta})$, which will estimate $\rho$ to within an additive error of $\eps$ with probability at least $1-\delta$.
\end{theorem}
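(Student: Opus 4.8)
The plan is to put single-coin adaptive algorithms into a canonical combinatorial form, realize the ``minimum-variance unbiased linear estimator'' as the optimum of the linear program of Figure~\ref{Fig:NewLP}, and then turn that optimum $U$ into the claimed sample complexity by a standard concentration argument. For the canonical form: on a coin of bias $p$ the only relevant statistic is the pair $(\text{\#heads},\text{\#tails})$, so a single-coin adaptive algorithm investing at most $\nmax$ flips is equivalent to a choice of stopping rule on the ``triangle'' of states $\{(h,j):h+j\le\nmax\}$ together with a real terminal value $g_{h,j}$ assigned to each state, the estimator outputting $g_{h,j}$ when the triangular walk stops at $(h,j)$. Writing $\bar w_{h,j}$ for the overall probability that a random coin (positive with probability $\rho$, with conditional bias distributions $\DistD_+,\DistD_-$) stops at $(h,j)$, the per-coin estimator $Y$ has $\Exp[Y^2]=\sum_{h,j}g_{h,j}^2\,\bar w_{h,j}$ and expected flip count $\bar n=\sum_{h,j}(h+j)\,\bar w_{h,j}$; averaging $Y$ over $t$ freshly drawn coins gives the estimate $\hat\rho$, which is unbiased for $\rho$ for every value of $\rho$ precisely when $\Exp_{p\sim\DistD_+}\!\bigl[\textstyle\sum_{h,j}g_{h,j}\,w_{h,j}(p)\bigr]=1$ and $\Exp_{p\sim\DistD_-}\!\bigl[\textstyle\sum_{h,j}g_{h,j}\,w_{h,j}(p)\bigr]=0$, where $w_{h,j}(p)$ is the bias-$p$ stopping probability.

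Next I would optimize jointly over the stopping rule and the values $g$. For a fixed stopping rule this is a quadratic program: minimize the quadratic $\Exp[Y^2]$ subject to the two linear unbiasedness constraints. To fold in the choice of stopping rule, encode a (possibly randomized) stopping rule by flow variables $r_{h,j}\in[0,1]$ — the probability the walk ever visits $(h,j)$ — subject to the walk's flow-conservation inequalities; then the unbiasedness conditions and the expected flip count $\bar n$ are linear, and the second moment $\Exp[Y^2]$ becomes linear as well once the free additive constant in $g$ is absorbed and the objective is homogenized so that only the ``shapes'' of $g$ and of the flow matter. This is the linear program of Figure~\ref{Fig:NewLP}, and reading off an optimal $(g,r)$ yields the desired estimator with $\Exp[Y^2]\,\bar n=\Theta(1/U)$. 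I expect this reduction to be the main obstacle: a priori the quantity controlling sample complexity, $\Exp[Y^2]\,\bar n$, is a product of two linear functionals rather than a linear objective, and collapsing it to a genuine LP — while ensuring an optimal LP vertex corresponds to an honest stopping rule, and that restricting to such ``linear'' estimators loses nothing — is the technical content of Section~\ref{sect:QP}.

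Finally, for the sample complexity: $\hat\rho$ is unbiased by construction, and on a budget of $N$ flips, which yields $t\approx N/\bar n$ coins in expectation, $\Var(\hat\rho)=\tfrac1t\bigl(\Exp[Y^2]-\rho^2\bigr)\le\Exp[Y^2]\,\bar n/N=O(1/(UN))$. Hence $N=\Theta(1/(U\eps^2))$ forces $\Var(\hat\rho)\le\eps^2/9$, so by Chebyshev $\Pr[\,|\hat\rho-\rho|>\eps\,]\le 1/9$; running $\Theta(\log\tfrac1\delta)$ independent copies and returning the median reduces the failure probability below $\delta$, for a total expected sample complexity $O(\tfrac{1}{U\eps^2}\log\tfrac1\delta)$. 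The only additional care needed is that cutting off the final coin when the budget is exhausted not bias $\hat\rho$, which is handled exactly as in the anytime-execution discussion referenced above.
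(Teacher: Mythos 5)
Your high-level plan is the same as the paper's: canonicalize single-coin adaptive algorithms as triangular-walk stopping rules with terminal output values, minimize variance subject to the two unbiasedness constraints, fold the stopping rule into a linear program, and finish with Chebyshev plus a median over $\Theta(\log\frac{1}{\delta})$ repetitions. The canonical form, the unbiasedness conditions, and the final concentration step all match Section~\ref{sect:QP}.

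However, there is a genuine gap at exactly the step you flag as ``the main obstacle'' and then defer: you assert that ``$\Exp[Y^2]$ becomes linear as well once the free additive constant in $g$ is absorbed and the objective is homogenized,'' but this is not how the reduction works, and as stated it is false --- $\sum_{h,j}g_{h,j}^2\,\bar w_{h,j}$ is genuinely quadratic in $g$, and no affine shift of $g$ linearizes it. The paper's resolution is different: for each \emph{fixed} stopping rule it solves the variance-minimization quadratic program of Figure~\ref{Fig:QP} in closed form via Lagrange multipliers (Lemma~\ref{Lem:Vvalue}), which eliminates the output coefficients entirely and reveals that the \emph{minimized} variance equals the reciprocal of a linear functional $L(\alpha)=\sum_{n,k}\alpha_{n,k}\frac{(h^{+}_{n,k}-h^{-}_{n,k})^2}{\rho h^{+}_{n,k}+(1-\rho)h^{-}_{n,k}}$ of the stopping probabilities. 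Only then does the product $(\text{variance})\times(\text{expected flips per walk}) = n_0/L(\alpha)$ become a ratio of two linear functionals of $\alpha$ alone, which is collapsed to the single LP of Figure~\ref{Fig:NewLP} by the rescaling $\alpha\mapsto\alpha/n_0$, $\beta\mapsto\beta/n_0$ (turning the normalization constraints $\beta_{0,0}=1$ and $\alpha_{\nmax,k}=1$ into the free equalities $\alpha_{\nmax,k}=\beta_{0,0}$). Note in particular that the LP of Figure~\ref{Fig:NewLP} has no $g$ variables at all; the coefficients are recovered afterwards from the closed form of Lemma~\ref{Lem:Vvalue}. Without carrying out the Lagrange-multiplier elimination and the subsequent homogenization, the claim that the optimum $U$ of the LP satisfies $\Exp[Y^2]\,\bar n=\Theta(1/U)$ is unsupported, and this identity is precisely what converts the LP value into the stated sample complexity $O(\frac{1}{U\eps^2}\log\frac{1}{\delta})$.
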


We further show in Section~\ref{sect:QPLower} that this linear estimator construction is in fact optimal in sample complexity, up to constant multiplicative factors, in the regime of constant probability success and subject to the same constraint that each coin can only be flipped at most $\nmax$ times.
The following theorem captures the exact guarantees.

\begin{theorem}
\label{thm:QPOpt}
Suppose we are given the 4 pieces of data as in Theorem~\ref{thm:QPUpper} above.
	
The linear estimator produced from solving the linear program in Figure~\ref{Fig:NewLP}, as described in Theorem~\ref{thm:QPUpper}, has total expected sample complexity that is within a constant factor of any optimal fully-adaptive algorithm with $\geq\frac{2}{3}$ probability of success, subject to the same constraint that no coin is flipped more than $\nmax$ many times.
\end{theorem}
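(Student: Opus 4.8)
\medskip\noindent\textbf{Proof proposal.}
The plan is to combine the decomposition lemmas (Lemmas~\ref{Lem:Reduction} and~\ref{Lem:ReductionKL}) with a per-sample ``information rate'' bound that is governed exactly by the program of Figure~\ref{Fig:NewLP}. By the usual two-point reduction, it suffices to lower bound the expected number of flips used by any fully-adaptive, depth-$\nmax$ algorithm that distinguishes, with probability $\geq\frac23$, a coin population with positive-fraction $\rho$ from one with positive-fraction $\rho+\eps$. Fix such an algorithm and let $T$ be its total expected number of flips. Applying Lemma~\ref{Lem:Reduction} to this instance bounds the squared Hellinger distance $H^2$ between the two scenarios' full transcript laws by $\sum_i H^2(\mathcal{A}_i^{\rho},\mathcal{A}_i^{\rho+\eps})$, where $\mathcal{A}_i$ is the single-coin procedure that runs the algorithm with coin $i$ authentic and all other coins internally simulated---so $\mathcal{A}_i$ never flips its coin more than $\nmax$ times---and $\mathcal{A}_i^{\rho},\mathcal{A}_i^{\rho+\eps}$ denote its transcript laws when coin $i$ is drawn from the two respective mixtures; the same lemma also supplies $\sum_i \Exp[n_i] \leq T$, where $n_i$ counts the flips that $\mathcal{A}_i$ makes.

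The heart of the argument is a single-coin rate bound. For an admissible depth-$\nmax$ single-coin procedure $\mathcal{A}$, let $P^{+},P^{-}$ be its transcript law on a uniformly random positive, respectively negative, coin, and $P=(1-\rho)P^{-}+\rho P^{+}$ the $\rho$-scenario law; the $(\rho+\eps)$-scenario law is then $P+\eps(P^{+}-P^{-})$, so
\begin{equation*}
H^2\big(P,\;P+\eps(P^{+}-P^{-})\big)\;\leq\;\chi^2\;=\;\eps^2\int\frac{(P^{+}-P^{-})^2}{P}.
\end{equation*}
The Cauchy--Schwarz identity $\int (P^{+}-P^{-})^2/P = \max_{g}\,\big(\Exp_{P^{+}}[g]-\Exp_{P^{-}}[g]\big)^2\big/\Exp_P[g^2]$, whose maximizer $g\propto (P^{+}-P^{-})/P$ is a \emph{linear} estimator of the transcript, is precisely the optimization carried out in Figure~\ref{Fig:NewLP} once the signal is normalized via the constraints $\Exp_{P^{+}}[g]=1,\ \Exp_{P^{-}}[g]=0$: the minimum of $\Exp_P[g^2]$ (hence, up to the additive $\rho^2$ by which $\Exp_P[g^2]$ exceeds $\Var_P[g]$ under these constraints) over such $g$ equals $1/\!\int (P^{+}-P^{-})^2/P$. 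As the proof of Theorem~\ref{thm:QPUpper} shows, the LP objective $U$ is exactly the largest value, over admissible depth-$\nmax$ single-coin procedures $\mathcal{A}$, of the ``efficiency'' $\big(\int(P^{+}-P^{-})^2/P\big)\big/\Exp[\text{length of }\mathcal{A}]$. Combining these facts gives, for every $i$,
\begin{equation*}
H^2\big(\mathcal{A}_i^{\rho},\mathcal{A}_i^{\rho+\eps}\big)\;\leq\;\eps^2\int\frac{(P^{+}-P^{-})^2}{P}\;\leq\;U\,\eps^2\,\Exp[n_i].
\end{equation*}

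To conclude, sum this over $i$ and use $\sum_i\Exp[n_i]\leq T$ to obtain $H^2\leq U\eps^2 T$ for the full transcripts. Since the algorithm distinguishes the two scenarios with probability $\geq\frac23$, their transcript laws have total variation distance $\geq\frac13$, so $\frac19\leq \mathrm{TV}^2\leq 2H^2\leq 2U\eps^2 T$, giving $T=\Omega\!\big(\tfrac{1}{U\eps^2}\big)$. As Theorem~\ref{thm:QPUpper} supplies the matching upper bound $O(\tfrac{1}{U\eps^2})$ at constant $\delta$ for the constructed linear estimator, the two sample complexities agree up to a constant factor, which is exactly the assertion of the theorem.

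I expect the main obstacle to be pinning down that $U$, as \emph{defined} by the program of Figure~\ref{Fig:NewLP}, truly equals the \emph{supremum} of the efficiency $\big(\int(P^{+}-P^{-})^2/P\big)/\Exp[\text{length}]$ over the \emph{entire} class of depth-$\leq\nmax$ single-coin adaptive procedures---not merely its value at one procedure---so that the per-coin bound applies uniformly to every $\mathcal{A}_i$. This requires re-examining how the program jointly encodes the stopping rule and the leaf values of the linear estimator, and checking that for each fixed stopping rule the minimum-variance linear estimator is indeed the Cauchy--Schwarz optimizer above. Minor additional care is needed to absorb the $\Exp_P[g^2]$-versus-$\Var_P[g]$ discrepancy (an additive $\rho^2$, which only helps in the direction we need) into constants, and to confirm that the coin-simulation defining $\mathcal{A}_i$ preserves the depth-$\nmax$ property, so that the $\nmax$-constrained optimum $U$ is the right benchmark to compare against.
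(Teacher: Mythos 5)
Your proposal is correct and follows essentially the same route as the paper: decompose the fully-adaptive transcript via Lemma~\ref{Lem:Reduction}, bound the per-coin squared Hellinger distance per expected flip by the optimum of the program in Figure~\ref{Fig:NewLP}, and close with a two-point indistinguishability argument matched against Theorem~\ref{thm:QPUpper}. The only cosmetic difference is that you establish the per-coin rate bound via the chi-squared/Cauchy--Schwarz duality with the minimum-variance linear estimator, whereas the paper computes the squared Hellinger distance directly as $\Theta(\eps^2)$ times the LP objective (Lemma~\ref{lem:HellingerApprox}) and then rescales the LP exactly as you anticipate in your ``main obstacle'' remark.
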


The proof of this theorem---like our main lower bound of Theorem~\ref{thm:UnknownLower}---also relies on Lemma~\ref{Lem:Reduction} to relate fully-adaptive algorithms to single-coin-adaptive algorithms; and constant-factor tightness comes from the fact that the linear programs minimizing the variance of a linear estimator versus maximizing squared Hellinger distance are within a constant factor of each other.


\subsection{Related Work}

A related line of work considers the scenario where all positive coins have identical bias (not necessarily greater than $1/2$), and negative coins also have identical bias (strictly less than the positive coins' bias), with the ultimate goal of \emph{identifying} any single coin that is positive (or ``heavy" in the terminology of these works).
The problem has been studied and solved optimally in the context where the biases and positive-negative proportions are known~\cite{Karp:2014}, and also when none of this information is known~\cite{Malloy:2012,Jamieson:2016}. Such problems may be seen as a special case of bandit problems.

Another related line of work concerns the \emph{learning} of distributions of (e.g.~coin) parameters over a population, which arises in various scientific domains~\cite{lord1965strong,lord1975empirical,millar1986distribution,palmer1990small,colwell1994estimating,bell2000environmental}.
In particular, the works of Lord~\cite{lord1965strong}, and Kong et al.~\cite{Tian:2017,vinayak2019maximum} consider a model similar to ours, with the crucial difference that each coin is sampled a fixed number $t$ many times---instead of allowing adaptive sampling as in the current work---with the objective of learning the distribution of biases of the coins in the universe.

Since an earlier version of this paper was posted on arXiv, more recent work by Brennan et al.~\cite{brennan:2020} considers a generalization of our setting, but because of different motivation and parameterization, both their upper and 
lower bounds are not directly comparable with ours.

Our problem also sits in the context of estimation and learning tasks with \emph{noisy} or \emph{uncalibrated} queries.
The noiseless version of our problem would be when $\Delta = \frac{1}{2}$ and thus $\frac{1}{2}\pm\Delta$ equals either 0 or 1.
That is, all coins are either deterministically heads or deterministically tails, and thus estimating the mixture parameter $\rho$ is equivalent to estimating the parameter of a \emph{single} coin with bias $\rho$, which has a standard analysis.
Prior works have considered noisy versions of well-studied computational problems, such as (approximate) sorting and maximum selection under noisy access to pairwise comparisons~\cite{falahatgar2017maximum,falahatgar2017maxing} and maximum selection under access to uncalibrated numerical scores that are consistent with some global ranking~\cite{Wang:2018}.

Furthermore, our problem can be interpreted as a special case of the ``testing collections of distributions" model introduced by Levi, Ron and Rubinfeld~\cite{Levi:2013,Levi:2014}, modulo the distinction between testing and parameter estimation.
In their model, a collection of $m$ distributions $(D_1,\ldots,D_m)$ (over the same domain) is given to the tester, and the task is to test whether the collection satisfies a particular \emph{property}, where a property in this case is defined as a subset of $m$-tuples of distributions.
In the \emph{query} access model, one is allowed to name an index $i \in \{1,\ldots, m\}$ and get a fresh sample from the distribution $D_i$.
Our problem can be analogously phrased in this model, where the distributions are over the domain $\{0,1\}$, and the property in question is whether the fraction $\rho$ of distributions in the collection having bias $\ge 1/2$ is greater than some threshold $\tau$.

We highlight other distribution testing models that allow for adaptive sampling access.
For example, in testing contexts, conditional sampling oracles have been considered~\cite{CFGM13:2016,CRS14,CRS15,Can15,Falahatgar:2015faster,ACK14}, where a subset of the domain is given as input to the oracle, which in turn outputs a sample from the underlying unknown distribution conditioned on the subset.
Evaluation oracles have also been considered~\cite{RS09,BDKR05,GMV06,CR14}, where the testing algorithm has access to an oracle that evaluates the probability mass function or the cumulative mass function of the underlying distribution.
See the survey by Canonne~\cite{Canonne:survey} for detailed comparisons between the different standard and specialized access models, along with a discussion of recent results.

Adaptive lower bounds of problems related to testing monotonicity of high-dimensional Boolean functions have a somewhat similar setup to ours, where binary decisions adaptively descend a decision tree according to probabilities that depend both on the algorithm and its (unknown) input that it seeks to categorize~\cite{Belovs:2016,Chen:2017}.
Lower bounds in these works rely on showing that the probabilities of reaching any leaf in the decision tree under the two scenarios that they seek to distinguish are either exponentially small or within a constant factor of each other. This proof technique is powerful yet does not work in our setting, as many adaptive algorithms have high-probability outcomes that yield non-negligible insight into which of the two scenarios we are in. By contrast, our proof technique involves showing that, while such ``insightful" outcomes may be realized with high probability, in these cases we must pay a correspondingly high sample complexity cost somewhere \emph{else} in the adaptive tree.

A crucial part of our lower bound proof, Lemma~\ref{Lem:ReductionKL}, involves carefully ``decomposing" fully-adaptive (multi-coin) algorithms into their single-coin components.
Work by Braverman et al.~\cite{Braverman:2016} gives a data processing inequality in the context of communication lower bounds, whose proof uses similar ideas to how we prove Lemma~\ref{Lem:ReductionKL}.

As described at the beginning of the introduction, results of this work have practical applications in \emph{crowdsourcing} algorithms in the context of data science and beyond.
Theoretical studies with similar aims to our own have been undertaken on handling potentially noisy answers from crowdsourced workers due to lack of expertise~\cite{Shah:2017,Shah:2016},  (including this work); in practice it is also crucial to understand how to incentivize workers to answer truthfully~\cite{Shah:2015}.
Our work also addresses directly the practical problem proposed by Chung et al.~\cite{Chung:2017}, to issue queries to potentially unreliable crowdsourced workers in order to estimate the fraction of records containing ``wrong" data within a database; here adaptive queries are a natural capability of the model.

\section{The Triangular Walk Algorithm}
\label{sect:TWalk}

In this section, we present the \emph{Triangular Walk} algorithm (Algorithm~\ref{Alg:Main}) for the problem, in the regime where both $\rho$ and the coin biases are unknown.
This is an important subroutine of our main, optimal algorithm; and the Triangular Walk algorithm itself can be used as an estimator in its own right.
We demonstrate later in Section~\ref{sec:experiments}, with simulation results, that this algorithm offers practical advantages over the straightforward majority vote estimator mentioned in the introduction, as well as the state-of-the-art method used in practice.

The Triangular Walk algorithm leverages only single-coin adaptivity, and makes no use of cross-coin adaptivity.
At the heart of our algorithm is an estimator (Algorithm~\ref{Alg:Single}) that works coin-by-coin, in the regime $\Delta\geq\frac{1}{4}$; subsequently we show how to use this estimator to solve the general problem, with an arbitrary (but known) $\Delta$.

We describe an asymmetric estimator (Algorithm~\ref{Alg:Single}) that, given sampling access to a single coin of bias $p$, returns a real number whose expectation is in $[1\pm\frac{\eps}{2}]$ if $p\geq\frac{3}{4}$, and whose expectation is in $[\pm\frac{\eps}{2}]$ if $p\leq\frac{1}{4}$.
The estimator is asymmetric in the sense that it will quickly ``give up on" coins with $p\leq\frac{1}{4}$, taking only a constant number of samples from them in expectation, while it will more deeply investigate the rare and interesting case of $p\geq\frac{3}{4}$. Below, $c$ will be a constant that emerges from the analysis, where $c\log\frac{1}{\eps}$ coin flips  suffice to yield an empirical fraction of heads within $poly(\eps)$ of the ground truth, $p$.

\begin{algorithm}
\caption{Single-coin estimate}
\label{Alg:Single}
\vspace*{0.3em}
{\bf Given:} a coin of bias $p$, error parameter $\eps$
\vspace*{-.5em}
\begin{enumerate}
    \setlength{\itemsep}{-.3em}
    \item Let $n\leftarrow 0$ \hfill (\emph{representing the total number of coin flips so far})
    \item  Let $k\leftarrow 0$ \hfill (\emph{representing the total number of observed heads so far})
    \item Repeat:
    \vspace*{-.3em}
    \begin{enumerate}
        \setlength{\itemsep}{0em}
        \item Flip the coin, and increment $n\leftarrow n+1$
        \item If heads, increment $k\leftarrow k+1$
        \item If $2k\leq n$, {\bf return} 0 and {\bf halt} \hfill (\emph{majority of flips are tails, evidence that $p$ is small})
        \item If $n=c\log\frac{1}{\eps}$, {\bf return} $\min(4,\frac{n}{2k-n})$ and {\bf halt} \hfill (\emph{enough flips for concentration})
    \end{enumerate}
\end{enumerate}
\vspace*{-0.5em}
\end{algorithm}

Our overall algorithm robustly combines estimates from running Algorithm~\ref{Alg:Single} on many coins via the standard median-of-means technique. To deal with the general case when $\Delta$ might be much smaller than $\frac{1}{4}$, we ``simulate a $\frac{1}{4}$-quality coin" by running Algorithm~\ref{Alg:Single} not on individual flips, but rather on the majority vote of blocks of $\Theta(\frac{1}{\Delta^2})$ flips; this majority vote will convert a coin of bias $\leq\frac{1}{2}-\Delta$ to a simulated coin of bias $\leq\frac{1}{4}$, and symmetrically, convert a coin of bias $\geq\frac{1}{2}+\Delta$ to a simulated coin of bias $\geq\frac{3}{4}$.

\begin{algorithm}
\caption{Triangular walk algorithm}
\label{Alg:Main}
\vspace*{0.3em}
{\bf Given:} $t$ coins, quality parameter $\Delta$, error parameter $\eps$, and failure probability parameter $\delta$
\vspace*{-0.5em}
\begin{enumerate}
\setlength{\itemsep}{0em}
    \item For each coin: simulate a new ``virtual" coin by computing the majority of $\Theta(\frac{1}{\Delta^2})$ flips each time a ``virtual" flip is requested; run  Algorithm~\ref{Alg:Single} on each virtual coin, using, inputting $\eps$ unchanged, and record the returned estimates.
    \item Partition the returned estimates into $\Theta(\log\frac{1}{\delta})$ groups and compute the mean of each group.
    \item {\bf Return} the median of the $\Theta(\log\frac{1}{\delta})$ means, or 0 if any of the groups in step 2 are empty.
\end{enumerate}
\vspace*{-0.5em}
\end{algorithm}

\medskip\noindent{\bf Theorem~\ref{Thm:OneSizeFinal}.}
\emph{Given coins where a $\rho$ fraction of the coins have bias $\geq\frac{1}{2}+\Delta$, and $1-\rho$ fraction have bias $\leq\frac{1}{2}-\Delta$, then running
	Algorithm~\ref{Alg:Main} on $t=\Theta(\frac{\rho}{\eps^2}\log\frac{1}{\delta})$ randomly chosen coins will estimate $\rho$ to within an additive error of $\pm\eps$, with probability at least $1-\delta$, with an expected sample complexity of $O(\frac{\rho}{\eps^2\Delta^2}(1+\rho\log\frac{1}{\eps})\log\frac{1}{\delta})$.}\medskip

The rest of this section concerns the (relatively straightforward) proof of Theorem~\ref{Thm:OneSizeFinal}, via an analysis of Algorithms~\ref{Alg:Single} and~\ref{Alg:Main}; Section~\ref{sect:Framework} instead formulates a more general algorithmic framework that adds some perspective to Algorithm~\ref{Alg:Single}, and whose abstractions will be crucial to the lower bound analysis in Section~\ref{sect:UnknownLower}.

\medskip\noindent{\bf Intuition and analysis of Algorithm~\ref{Alg:Single}:}
Recall that Algorithm~\ref{Alg:Single} is designed to work for coins of constant noise-quality $\Delta$, namely, coins have bias either $\le \frac{1}{4}$ or $\ge \frac{3}{4}$, and nothing in between. Algorithm~\ref{Alg:Single} halts under two conditions: either the majority of observed flips have been tails---Step 3(c)---or our budget of coin flips (for that coin) is exhausted---Step 3(d). The first stopping condition is designed to make it more likely to halt early for negative coins (coins with bias $p\leq\frac{1}{4}$), even though \emph{all} coins may have a significant chance of halting early.
Importantly, the chance of Algorithm~\ref{Alg:Single} halting early depends on the coin's bias $p$, which is a priori unknown.
The output coefficients in Step 3(d) are designed so that the expected output, given any negative coin (of bias $\leq \frac{1}{4}$), is close to 0, and similarly close to 1 given a positive coin (of bias $\geq\frac{3}{4}$).
Furthermore, the output coefficients are all bounded by a constant, which gives a constant bound on the variance of the estimate.

Lemma~\ref{Lem:AlgSingle} captures the guarantees we need from Algorithm~\ref{Alg:Single} in order to analyze the triangular walk algorithm, Algorithm~\ref{Alg:Main}.

\begin{lemma}
\label{Lem:AlgSingle}
	If Algorithm~\ref{Alg:Single} is run with a sufficiently large universal constant $c$, then the following statements hold.
	\begin{enumerate}
		\item Given an arbitrary negative coin (having bias $p \le \frac{1}{4}$), the output of Algorithm~\ref{Alg:Single} has expectation in $[\pm \frac{\eps}{2}]$ and variance upper bounded by $\eps^2$.
		Furthermore, the expected sample complexity in this case is upper bounded by a constant.
		\item Given an arbitrary positive coin (having bias $p \ge \frac{3}{4}$), the output of Algorithm~\ref{Alg:Single} has expectation in $[1\pm\frac{\eps}{2}]$ and variance upper bounded by a constant.
		The expected sample complexity in this case is (trivially) upper bounded by $c \log\frac{1}{\eps}$.
	\end{enumerate}
	The overall expected sample complexity, when the fraction of positive coins is $\rho$, is $O(1+\rho\log\frac{1}{\eps})$.
\end{lemma}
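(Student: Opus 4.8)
The plan is to recognize Algorithm~\ref{Alg:Single} as a biased $\pm1$ random walk and to exploit the classical ballot problem. Write $N := c\log\frac{1}{\eps}$ and track the ``excess heads'' $W_n := 2k - n$ after $n$ flips; each flip moves $W$ by $+1$ with probability $p$ and by $-1$ otherwise, starting from $W_0 = 0$. Since the steps are $\pm1$, the algorithm returns $0$ exactly when $W$ first drops to a value $\le 0$ (equivalently $W_n\in\{-1,0\}$), and otherwise survives until time $N$, at which point $W_N\ge 1$ and it returns $\min(4,\frac{N}{W_N})$. The early-stopping branch contributes $0$, so in all cases $\Exp[\text{output}] = \Exp[\min(4,\tfrac{N}{W_N})\,\1[\text{survive to }N]]$, and likewise for the second moment; moreover every output lies in $[0,4]$.

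First I would compute, via Bertrand's ballot theorem (equivalently the cycle lemma), that for each $k$ with $2k-N\ge1$ the probability that the walk stays strictly positive at all times $1,\dots,N$ and ends with exactly $k$ heads equals $\frac{2k-N}{N}\binom{N}{k}p^k(1-p)^{N-k}$. The coefficient $\frac{N}{2k-N}$ in Step 3(d) is engineered precisely to cancel the ballot factor $\frac{2k-N}{N}$, giving the clean identity $\Exp[\frac{N}{W_N}\,\1[\text{survive}]] = \Pr[\Bin(N,p) > N/2]$---exactly the probability that a majority of $N$ flips come up heads. The truncation $\min(4,\cdot)$ only decreases the output, and only on the event $\{N/2 < k < 5N/8\}$, so it shifts the expectation down by at most $\Pr[\Bin(N,p) < 5N/8]$. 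A Chernoff bound then finishes the expectation bounds: for $c$ a large enough universal constant, when $p\ge\frac34$ both $1-\Pr[\Bin(N,p)>N/2]$ and $\Pr[\Bin(N,p)<5N/8]$ are at most $\eps^{\Omega(c)}\le\frac{\eps}{4}$, so $\Exp[\text{output}]\in[1\pm\frac{\eps}{2}]$; and when $p\le\frac14$, $\Pr[\Bin(N,p)>N/2]\le\eps^{\Omega(c)}\le\frac{\eps}{2}$, so $\Exp[\text{output}]\in[0,\frac{\eps}{2}]$ (nonnegativity being immediate).

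For the variances: the positive-coin case is trivial since the output lies in $[0,4]$, giving variance at most $16$. For a negative coin I would instead bound the second moment using $\text{output}^2 \le 4\cdot\frac{N}{W_N}\,\1[\text{survive}]$ (as $\text{output}\le4$ and $\text{output}\le\frac{N}{W_N}$ on the survival event), whence $\Exp[\text{output}^2] \le 4\Pr[\Bin(N,p)>N/2] \le \eps^{\Omega(c)} \le \eps^2$ for $c$ large. For the sample complexities, the positive case is again immediate: the walk is cut off at $N = c\log\frac1\eps$ flips deterministically. For the negative case, the halting time is at most the first-passage time to level $\le 0$ of a $\pm1$ walk with up-probability $p\le\frac14<\frac12$; since the expected time for such a walk to descend one level is $\frac{1}{1-2p}$, a one-step analysis gives expected halting time $1+\frac{p}{1-2p}\le\frac32$, a constant. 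Averaging over a uniformly random coin then yields overall expected sample complexity $\rho\cdot O(\log\frac1\eps) + (1-\rho)\cdot O(1) = O(1+\rho\log\frac1\eps)$.

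The only non-routine step---and the one I would spend care on---is identifying the right combinatorial identity: that the survival probabilities obey the ballot theorem and that Step 3(d)'s coefficient is exactly what makes them telescope into the ``majority of $N$ flips'' probability. Everything downstream---the Chernoff estimates for the mean and variance, the first-passage computation, and the mixture averaging---is standard, which is precisely why the constants hidden in the big-$O$ are small.
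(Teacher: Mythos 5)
Your proposal is correct and follows essentially the same route as the paper's proof: the Ballot theorem identity that makes the coefficient $\frac{N}{2k-N}$ cancel the survival probability $\frac{2k-N}{N}$, Chernoff bounds to control the truncation at $4$ and the negative-coin tail, boundedness of the output for the variance claims, and an early-halting/first-passage argument for the constant expected sample complexity on negative coins. Your explicit first-passage computation for the negative-coin halting time and the second-moment bound $\text{output}^2\le 4\cdot\frac{N}{W_N}\1[\text{survive}]$ are slightly more precise than the paper's corresponding remarks, but they are refinements of the same argument rather than a different approach.
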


\begin{proof}
Consider running Algorithm~\ref{Alg:Single} on a coin of bias $p$, and let $\nmax=c\log\frac{1}{\eps}$ be the number of coin flips after which the algorithm always halts in Step 3(d). Consider running Algorithm~\ref{Alg:Single} on a sequence of $\nmax$ flips of the coin (even if the algorithm may halt early before exhausting the sequence of flips). If the sequence is majority-tails, then the algorithm must halt early via Step 3(c) at some point, and thus return 0.

For a negative coin, namely with bias $p\leq\frac{1}{4}$, the chance of observing majority-heads after $c\log\frac{1}{\eps}$ coin flips is $\eps^{O(c)}$, and we choose $c$ so that this probability is $O(\eps^2)$, so that (given that estimates returned by Algorithm~\ref{Alg:Single} are always bounded by 4), for negative coins, the expected estimate of Algorithm~\ref{Alg:Single} is in $[0,\frac{\eps}{2}]$ and the variance is at most $\eps^2$.

By contrast, for a positive coin, with bias $p\geq\frac{3}{4}$, the fraction of observed heads after $c\log\frac{1}{\eps}$ flips will concentrate around $p\geq \frac{3}{4}$. The challenge is to choose nonzero output coefficients in Step 3(d) of Algorithm~\ref{Alg:Single} that will average out to 1 in expectation, despite the fact that many of these sequences of coin flips will lead Algorithm~\ref{Alg:Single} to terminate early in Step 3(c) and output 0. Moreover, as mentioned earlier, the proportion of sequences that will halt early depends on $p$ itself, which is a priori unknown.

The key idea, from standard results on random walks, is that, \emph{conditioned on} $k$ out of $\nmax$ flips landing heads, the probability of reaching $\nmax$ flips---without ever halting in Step 3(c) by having a temporary majority of tails---is \emph{independent} of $p$, and is in fact expressed by the formula $\frac{2k-\nmax}{\nmax}$. Conditioned on $k$ out of $\nmax$ flips being heads, whether Algorithm~\ref{Alg:Single} halts early depends only on the permutation of the coin flips, and each such permutation of $k$ heads out of $\nmax$ flips is \emph{equally} likely. We thus apply the following standard random walk result to derive the aforementioned formula---where heads is interpreted as a $+1$ step in a 1-D random walk, tails is interpreted as a $-1$ step, and observing $k$ out of $n$ heads is analogous to reaching position $v=2k-n$ in the random walk.

\begin{fact}[The Ballot Theorem]
	\label{Fact:Ballot}
	Consider a 1-D walk that starts at the origin, and moves one step in either the positive or negative direction at each time.
	The number of paths from the origin that end at $v$ at time $\nmax$, which do not revisit the origin, is a $\frac{|v|}{\nmax}$ fraction of the total number of paths from the origin to $v$ at time $\nmax$.
\end{fact}

Algorithm~\ref{Alg:Single}, in Step 3(d), returns $\min(4,\frac{\nmax}{2k-\nmax})$, which equals $\frac{\nmax}{2k-\nmax}$ when $k\geq\frac{5}{8} \nmax$. In light of Fact~\ref{Fact:Ballot}, in Algorithm~\ref{Alg:Single}, conditioned on $k \ge \frac{5}{8} \nmax$ out of $\nmax$ coin flips being heads, the nonzero coefficient $\frac{\nmax}{2k-\nmax}$ will be output in Step 3(d) with probability $\frac{2k-\nmax}{\nmax}$, and thus the conditional expected output is exactly 1. For $p\geq\frac{3}{4}$, the probability of $k\geq\frac{5}{8} \nmax$ flips being heads is $1-\eps^{O(c)}$ by our choice of $\nmax$.
The expected output of Algorithm~\ref{Alg:Single} will therefore be within $\eps/2$ of 1 for a sufficiently large choice of the constant $c$.

The variance of Algorithm~\ref{Alg:Single} given a positive coin is clearly upper bounded by a constant, simply because the output coefficients are bounded by 4.

We lastly analyze the expected sample complexity of Algorithm~\ref{Alg:Single}, run on negative and positive coins.
For positive coins, we can simply upper bound the sample complexity by $\nmax = \Theta(\log\frac{1}{\eps})$, which is tight if the coin has bias $p = 1$.
For negative coins, even ignoring the halting conditions, the probability of getting $> \frac{n}{2}$ heads after $n$ coin flips decreases exponentially in $n$. Since the algorithm halts if it ever observes majority-tails, proceeding for many flips becomes exponentially unlikely. Thus the expected number of flips of the algorithm is bounded by a constant in the case of a negative coin.
\end{proof}

\medskip\noindent{\bf Analyzing Algorithm~\ref{Alg:Main}:} We conclude by proving Theorem~\ref{Thm:OneSizeFinal}, which analyzes Algorithm~\ref{Alg:Main}.

\begin{proof}[Proof of Theorem~\ref{Thm:OneSizeFinal}]
For this proof, we assume that $\rho = \Omega(\eps^2)$.
Otherwise, the case is handled in Step 3 of Algorithm~\ref{Alg:Main}, which returns the valid estimate of 0.

At a high-level, Algorithm~\ref{Alg:Main} runs Algorithm~\ref{Alg:Single} repeatedly on independently chosen coins.

Observe that in Step 1 of Algorithm~\ref{Alg:Main}, for each coin we simulate a new ``virtual" coin, by using the majority vote of $\Theta(1/\Delta^2)$ coin flips to compute each requested coin flip.
By Chernoff bounds, if each given coin has bias either $p \le \frac{1}{2}-\Delta$ or $p \ge \frac{1}{2}+\Delta$, then the corresponding virtual coin will have bias $p \le \frac{1}{4}$ and $p \ge \frac{3}{4}$ respectively.
Therefore, by Lemma~\ref{Lem:AlgSingle}, the output of Step 1 for each coin is a random variable with expectation in $[\rho \pm \frac{\eps}{2}]$.
As for the variance of the output, we do the following calculation.
Let $X_0$ denote the random variable that is the output of Algorithm~\ref{Alg:Single} when given a random \emph{negative} coin, and similarly $X_1$ for a random positive coin.
The output of Algorithm~\ref{Alg:Single}, which we call $Y$, is thus distributed as $X_1$ with $\rho$ probability and as $X_0$ with $1-\rho$ probability.
The variance of $Y$ is
\begin{align*}
	\Var[Y] &= \rho \Var[X_1] + (1-\rho)\Var[X_0] + \Var_{i \from \Bernoulli(\rho)} [\Exp[X_i]]\\
	&\le O(\rho) + (1-\rho)\eps^2 + \rho (\Exp[X_1])^2 + (1-\rho) (\Exp[X_0])^2\\
	&\le O(\rho) + \eps^2 + O(\rho) + O(\eps^2)\\
	&= O(\rho)
\end{align*}

Steps 2 and 3 of Algorithm~\ref{Alg:Main} are the median-of-means method for estimating the mean of a (real-valued) random variable.
Using $t = \Theta(\frac{\rho}{\eps^2}\log\frac{1}{\delta})$ coins, each of the $\Theta(\log\frac{1}{\delta})$ groups will have $\Theta(\frac{\rho}{\eps^2})$ coins and hence outputs from Algorithm~\ref{Alg:Single}.
By Chebyshev's inequality, with constant probability, the sample mean of each group's estimates will be within $O(\sqrt{\frac{\eps^2}{\rho}})$ standard deviations of the expected output of Algorithm~\ref{Alg:Single}.
The estimation error is therefore equal to $O(\eps)$, with a multiplicative constant that can be made arbitrarily small by adjusting the constant in the choice of the number of coins $t$.
Step 3 computes the median of $\Theta(\log\frac{1}{\delta})$ such sample means, which boosts the success probability from constant to $1-\delta$, via standard uses of Chernoff bounds.

Lastly, the total expected sample complexity is the product of 1) the choice of $t$ in the theorem statement, 2) $\Theta(1/\Delta^2)$ which is the number of coin flips used for each majority vote in Step 1, and 3) the sample complexity of Algorithm~\ref{Alg:Single} as stated in Lemma~\ref{Lem:AlgSingle}, yielding $O(\frac{\rho}{\eps^2\Delta^2}(1+\rho\log\frac{1}{\eps})\log\frac{1}{\delta})$.
\end{proof}

While Theorem~\ref{Thm:OneSizeFinal} gives $\eps$ as input to Algorithm~\ref{Alg:Main} and then asks how many coins are needed to achieve this $\eps$ error, it will be useful as a preliminary step of our optimal Algorithm~\ref{Alg:Optimal} to consider the performance of Algorithm~\ref{Alg:Main} where these two roles for $\eps$ are decoupled. Explicitly, how many coins or samples does it take for Algorithm~\ref{Alg:Main} to achieve error $\eps_1$, when Algorithm~\ref{Alg:Main} is given $\eps_2$ as input? We will use this result in the regime where the failure probability for Algorithm~\ref{Alg:Main} should be a constant, and thus for simplicity we omit $\delta$ from the following statement.

\begin{corollary}
	\label{Cor:OneSizeFinal}
	Given coins where a $\rho$ fraction of the coins have bias $\geq\frac{1}{2}+\Delta$, and $1-\rho$ fraction have bias $\leq\frac{1}{2}-\Delta$, then, for parameters $\eps_1,\eps_2>0$, running
	Algorithm~\ref{Alg:Main} on $t=\Theta(\frac{\rho}{\eps_1^2})$ randomly chosen coins with parameter $\eps = \eps_2$ will estimate $\rho$ to within an additive error of $\pm \eps_1$, with failure probability at most $0.1+O(t\cdot\poly(\eps_2))$, with an expected sample complexity of $O(\frac{\rho}{\eps_1^2\Delta^2}(1+\rho\log\frac{1}{\eps_2}))$.
	Note that the degree of the polynomial term (in $\eps_2$) in the failure probability can be made arbitrarily high, by choosing a large constant $c$ in Step 3(d) of Algorithm~\ref{Alg:Single}.
\end{corollary}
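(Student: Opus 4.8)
The plan is to mimic, almost line for line, the proof of Theorem~\ref{Thm:OneSizeFinal}: run Algorithm~\ref{Alg:Single} with its error parameter set to $\eps_2$ on each of $t=\Theta(\frac{\rho}{\eps_1^2})$ freshly chosen virtual coins (each virtual flip being the majority of $\Theta(\frac{1}{\Delta^2})$ real flips), then average within a constant number of median-of-means groups. As there, I would first dispose of the regime $\rho=O(\eps_1^2)$: in that case Step~3 of Algorithm~\ref{Alg:Main} returns $0$, which is within $\eps_1$ of the truth since $\rho<\eps_1^2<\eps_1$. In the remaining regime every group has $\Theta(\frac{\rho}{\eps_1^2})\geq\Theta(1)$ coins, so no group is empty, and by Lemma~\ref{Lem:AlgSingle} invoked with parameter $\eps_2$ the expected sample cost is $t\cdot\Theta(\frac{1}{\Delta^2})\cdot O(1+\rho\log\frac{1}{\eps_2})=O(\frac{\rho}{\eps_1^2\Delta^2}(1+\rho\log\frac{1}{\eps_2}))$, which is the claimed complexity.

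The one genuinely new ingredient, forced by decoupling the two roles of $\eps$, is that I cannot simply quote the \emph{unconditional} expectation and variance bounds of Lemma~\ref{Lem:AlgSingle} and run the Theorem~\ref{Thm:OneSizeFinal} argument verbatim: when $\rho$ is near $\eps_1^2$, a group consists of only $\Theta(1)$ coins, so a single atypical coin---say a negative coin whose walk stays heads-majority for all $\nmax=c\log\frac{1}{\eps_2}$ steps and hence returns a nonzero value---shifts that group's mean by $\Theta(1)\gg\eps_1$. I would therefore condition on the event $E$ that every one of the $t$ coins is typical, where ``typical'' means: a positive coin accumulates at least $\frac58\nmax$ heads among its $\nmax$ potential flips (so in particular the $\min(4,\cdot)$ truncation in Step~3(d) never binds), and a negative coin accumulates at most $\frac12\nmax$ heads (so it triggers the Step~3(c) stopping rule and outputs exactly $0$). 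By a Chernoff bound and the choice of $c$, each coin is atypical with probability at most $\poly(\eps_2)$ of a degree that grows with $c$, so a union bound gives $\Pr[\neg E]=O(t\cdot\poly(\eps_2))$.

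Conditioned on $E$ the $t$ coins remain independent; every negative coin contributes exactly $0$, and by the Ballot Theorem (Fact~\ref{Fact:Ballot})---exactly as in the analysis of Algorithm~\ref{Alg:Single}---each positive coin contributes expectation exactly $1$ (conditioned on its head count $k\geq\frac58\nmax$, it survives to depth $\nmax$ with probability $\frac{2k-\nmax}{\nmax}$ and then outputs $\frac{\nmax}{2k-\nmax}$). Hence the per-coin output has conditional mean $\rho\pm\poly(\eps_2)$---essentially exactly unbiased---and conditional variance $O(\rho)$, since outputs lie in $[0,4]$ and are nonzero only on the $\rho$-probability event of drawing a positive coin. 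From here the argument is routine and identical to Theorem~\ref{Thm:OneSizeFinal}: Chebyshev's inequality puts each group mean within $\eps_1/2$ of $\rho$ with probability $\geq 0.9$ (the constant hidden in the choice of $t$ both shrinks this error and dominates the $\poly(\eps_2)$ residual bias), and the median over the $\Theta(1)$ groups keeps the conditional failure probability below a small constant; adding back $\Pr[\neg E]$ yields the stated $0.1+O(t\cdot\poly(\eps_2))$.

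I expect the only real work to be the conditioning step of the second and third paragraphs---verifying that conditioning on the high-probability event $E$ simultaneously (i) restores exact unbiasedness of the single-coin estimator, (ii) does not blow up its variance beyond $O(\rho)$, and (iii) leaves the coins independent so that the Chebyshev step goes through unchanged. Everything else is either a verbatim reuse of the Lemma~\ref{Lem:AlgSingle}/Theorem~\ref{Thm:OneSizeFinal} analysis (now with $\eps_2$ in place of $\eps$) or bookkeeping.
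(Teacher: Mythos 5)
Your proposal is correct and follows essentially the same route as the paper's own proof sketch: both reinterpret $\eps_2$ as a per-coin misclassification probability, union-bound the $O(t\cdot\poly(\eps_2))$ event that any coin is atypical, use the Ballot Theorem to get exact conditional unbiasedness (output $0$ for surviving negative coins, conditional expectation exactly $1$ for positive coins), and finish with the $O(\rho)$ variance bound and Chebyshev for the residual $0.1$ failure probability. Your explicit attention to why conditioning on the typicality event $E$ is needed (and that it preserves independence) is a slightly more careful rendering of a step the paper's sketch treats implicitly, but it is not a different argument.
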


\begin{proof}(Sketch)
    The proof is essentially the same as that of Theorem~\ref{Thm:OneSizeFinal}.
    
    The crucial difference is that, instead of interpreting the $\eps$ parameter of Algorithm~\ref{Alg:Main} as an (additive) error parameter for the produced estimate, we interpret $\eps$ (which we parameterize as $\eps_2$ in the corollary statement) as a ``coin misclassification probability".
    We explain this in more detail.
    
    As shown in the proof of Theorem~\ref{Thm:OneSizeFinal}, with $\nmax = \Theta(\log(1/\eps_2))$, the probability that Algorithm~\ref{Alg:Single} produces a non-0 estimate for a negative coin is $\poly(\eps_2)$.
    As for a positive coin, based on the analysis using Fact~\ref{Fact:Ballot}, the probability that $\nmax$ flips of a positive coin resulting in fewer than $k = \frac{5}{8}\nmax$ heads is also $\poly(\eps_2)$.
    Conditioned on such failure not happening, the expected value of Algorithm~\ref{Alg:Single} on a positive coin is \emph{exactly} 1.
    
    Therefore, we can interpret Algorithm~\ref{Alg:Main} as follows.
    Taking a union bound over the probabilities of the aforementioned failure modes, there is at most $O(t \cdot \poly(\eps_2))$ probability that any of the $t$ coins are ``misclassified".
    Conditioned on that not happening, Algorithm~\ref{Alg:Single} is just a (meta-)Bernoulli coin that flips positive with probability $\rho$ and negative with probability $1-\rho$, in expectation. Explicitly, assuming (as happens with probability $1-O(t \cdot \poly(\eps_2))$ that none of the $t$ coins are ``misclassified"), each negative coin will yield an output of exactly 0, and each positive coin will yield an output of expectation exactly 1 and constant variance. Thus, given that a $\rho$ fraction of coins from the underlying distribution are positive, the output will be exactly $\rho$ in expectation (except with $O(t \cdot \poly(\eps_2))$ misclassification probability) and has $O(\rho)$ variance.
    By Chebyshev's inequality, the mean output over $t=\Theta(\frac{\rho}{\eps_1^2})$ coins will be within $\pm\eps_1$, except with probability $0.1$ (choosing the multiplicative constant in the definition of $t$ appropriately), as desired.

    The variance of Algorithm~\ref{Alg:Single} has already been bounded by $O(\rho)$ as in Theorem~\ref{Thm:OneSizeFinal}, and so the performance of Algorithm~\ref{Alg:Main} can be analyzed by a straightforward application of Chebyshev's inequality, yielding the accuracy part of the corollary statement, as well as the 0.1 probability term in the failure probability.
\end{proof}

\subsection{Implementing Algorithm~\ref{Alg:Main}}
\label{sec:practice}

Later in Section~\ref{sec:experiments}, we give experimental results to demonstrate the performance of Algorithm~\ref{Alg:Main} in practice.
Here we address other concerns regarding the practical implementation and use of the algorithm.

The first concern is the fact that $\rho$, the ground truth that we are trying to estimate, appears in the sample complexity bound.
We note that, once we fix an error parameter $\eps$ and the constant $c$ in Algorithm~\ref{Alg:Single}, the overall algorithm of Algorithm~\ref{Alg:Main} is an \emph{anytime} algorithm: it can produce an estimate given any number of samples/coin flips, and the estimate simply gets more accurate (until it is as small as $\eps$) as it gets a larger sample size.
Crucially, the algorithm execution does not depend on the value of $\rho$ itself.
Thus, the fact that $\rho$ appears in the sample complexity has no bearing on the execution on the algorithm.
In Section~\ref{sec:optimal-algorithm}, we present our final algorithm (Algorithm~\ref{Alg:Optimal}) in a form where, given a budget $B$ of coin flips, the algorithm ``discovers" the correct $\eps$ based on the unknown answer $\rho$ and the tight sample complexity formula.
Algorithm~\ref{Alg:Main} can enjoy the same guarantee following its invocation in Algorithm~\ref{Alg:Optimal}.
The key insight is that, instead of fixing an $\eps$ for Algorithm~\ref{Alg:Single}, we use the budget $B$ to derive a cutoff for the maximum number of flips we invest in a single coin.

The second issue is on the practical parameter regime of the noise parameter $\Delta$.
In this work, we study the asymptotics of the sample complexity as $\Delta \to 0$, but in practice, the quality of yes/no questions being asked will have at least constant correlation with the truth.
To run our algorithms, then, we would ignore Step 1 of Algorithm~\ref{Alg:Main}, namely simulating ``virtual" coins from real coins, and use real coin flips directly in Algorithm~\ref{Alg:Single}.

The third practical concern is on the non-zero output coefficients in Algorithm~\ref{Alg:Single}, in Step 3(d).
In the algorithm and its subsequent analysis, we gave coefficients with a simple form of $\min(4,\frac{n}{2k-n})$, which allowed for a straightforward analysis with Chernoff bounds.
However, these output coefficients may not be the best possible, recalling that the objective of these coefficients is to make sure that the expected output of any underlying coin of bias $p \ge \frac{3}{4}$ is as close to 1 as possible.
A simple observation is that the expected output of Algorithm~\ref{Alg:Single} is in fact a smooth polynomial in $p$ with coefficients determined by the output coefficients in the algorithm.
Therefore, in practice, once we fix the constant $c$ (or the entire quantity $c\log\frac{1}{\eps}$) in the algorithm description (Step 3(d) again), we can run a local search/gradient-based method to find output coefficients that make the expected output polynomial as close to 1 as possible, with initial coefficients being the ones given in Algorithm~\ref{Alg:Single}.
These new output coefficients are reusable in practice as long as the noise parameter $\Delta$ in practice is not lower than the $\Delta$ for which the output coefficients were generated.

The fourth and last concern we address here is related to the first concern and the ``anytime algorithm" implementation of Algorithm~\ref{Alg:Main}.
The concentration results were phrased in terms of the number of coins that need to be sampled, namely the number of times Algorithm~\ref{Alg:Single} is called by Algorithm~\ref{Alg:Main}.
Each run of Algorithm~\ref{Alg:Single} then flips each coin an \emph{a priori} unknown number of times to produce estimates.
Since the sample complexity of a single triangular walk (Algorithm~\ref{Alg:Single}) is random, the concentration results only give an expected overall sample complexity for the algorithm.
On the other hand, in practice one may wish to impose a \emph{fixed} budget for sample complexity and simply use the entire budget.
Such an approach introduces the issue that the triangular walk started last will probably not have finished by the time the budget is exhausted.
How then can we aggregate the estimates obtained from the completed triangular walks without introducing bias in subtle ways?

Here we show the surprising result that the most natural algorithm does in fact work as is: that is, we take the average estimate of all the completed walks, ignoring the incomplete walk in progress.
As an example, to demonstrate that this success is unintuitive and nontrivial, if we instead separately run two executions of Algorithm~\ref{Alg:Main} with \emph{separate} budgets, and averaged the estimates of all completed walks across both executions, this average \emph{would} be biased; but if we computed the average of those walks completed under each budget, separately, then---by the main result of this section---each average would be unbiased, and we could average these averages together to yield an unbiased estimator.

To show that this ``budgeted estimator" is unbiased, we view it as the following two-stage estimator:
1) We estimate without bias the distribution over states $(n,k)$ that the triangular walk (Algorithm~\ref{Alg:Single}) terminates at, when given a randomly chosen coin from the universe, namely the numbers $\{\alpha_{n,k}(\rho h^{+}_{n,k} + (1-\rho) h^{-}_{n,k})\}$ (using notation defined in Definition~\ref{Def:Alpha}).
2) We simply take the dot product of this distribution with the corresponding output values $\{v_{n,k}\}$ (as defined in Algorithm~\ref{Alg:Single}).

In order to perform step 1), that is to estimate the distribution of termination over the states, we use the estimator $i_{(n,k)}/\sum_{m,j} i_{(m,j)}$ where $i_{(n,k)}$ is the number of observed walks that terminated at $(n,k)$, ignoring incomplete walks.
The following proposition shows that the estimation in step 1) is unbiased, from which it follows that the entire estimator is indeed also unbiased.

\begin{proposition}\label{prop:budget}
	Given a budget $T>0$, and suppose we repeatedly run an adaptive algorithm $A$ on a single coin until we have flipped the coin $T$ times in total. Given a set of outcomes for the algorithm $A$, indexed by $k\in\{1,\ldots,K\}$, let $p_k$ be a probability distribution over outcomes, and let $t_k$ denote the number of coin flips taken to reach this outcome.
	When an outcome using $t$ coin flips is drawn, if $t$ is less than or equal to the remaining budget, then $t$ is subtracted from the remaining budget; and otherwise the most recent outcome is discarded as ``over budget" and the algorithm terminates.
	Let $i_k$ be the number of times that outcome $k$ is drawn.
	Then $i_k/\sum_j i_j$ is an unbiased estimator of $p_k$.
\end{proposition}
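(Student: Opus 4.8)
The plan is to model the repeated runs as an i.i.d.\ sequence and then exploit a permutation symmetry of the stopping time. Let $O_1,O_2,\ldots$ be i.i.d., with $O_j=k$ having probability $p_k$, and let $S_m=\sum_{j=1}^m t_{O_j}$ (with $S_0=0$) be the total number of coin flips used by the first $m$ runs. Since each run of $A$ flips the coin at least once, $t_{O_j}\ge1$, so $S_{T+1}>T$ and the index $M=\min\{m:S_m>T\}$ of the first ``over budget'' run satisfies $M\le T+1$; I will assume, as holds in our application where each run of Algorithm~\ref{Alg:Single} uses at most $c\log\frac1\eps$ flips, that $T$ is large enough that $M\ge2$ always, so the estimator is well defined. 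With this notation the kept runs are exactly $O_1,\ldots,O_{M-1}$, hence $i_k=\sum_{j=1}^{M-1}\1[O_j=k]$ and $\sum_j i_j=M-1$.

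The crucial structural fact is that the event $\{M=m\}$ is symmetric in its first $m-1$ coordinates. Because the flip counts $t_{O_j}$ are nonnegative, the partial sums are nondecreasing, so ``$S_1\le T,\ldots,S_{m-1}\le T$'' collapses to just ``$S_{m-1}\le T$'', giving
\[
\{M=m\}=\{S_{m-1}\le T\}\cap\{S_{m-1}+t_{O_m}>T\}.
\]
This event depends on $O_1,\ldots,O_{m-1}$ only through their sum $S_{m-1}$, hence is invariant under any permutation of those indices. Since the joint law of $(O_1,O_2,\ldots)$ is invariant under finite permutations of the indices, applying the transposition of indices $1$ and $j$ (for any $j\le m-1$) gives $\Pr[O_j=k,\ M=m]=\Pr[O_1=k,\ M=m]$.

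To finish I would condition on $M$ and sum. Conditioning on $\{M=m\}$ fixes the denominator, $\sum_j i_j=m-1$, so
\begin{align*}
\Exp\!\left[\frac{i_k}{\sum_j i_j}\right]
&=\sum_{m\ge2}\frac{1}{m-1}\,\Exp\bigl[i_k\,\1[M=m]\bigr]
=\sum_{m\ge2}\frac{1}{m-1}\sum_{j=1}^{m-1}\Pr[O_j=k,\ M=m]\\
&=\sum_{m\ge2}\Pr[O_1=k,\ M=m]=\Pr[O_1=k]=p_k,
\end{align*}
where the third equality uses the symmetry identity above (all $m-1$ summands coincide), and the last two use that the events $\{M=m\}$ for $m\ge2$ partition the sample space, which holds because $M\ge2$ almost surely.

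The step I expect to be the conceptual crux is this symmetry argument, together with appreciating what it does \emph{not} say: it is tempting but false that the conditional expectation of $i_k/\sum_j i_j$ given $\{M=m\}$ equals $p_k$ for a fixed $m$, since conditioning on $\{M=m\}$ re-weights outcomes according to how snugly they fit the remaining budget; the correct per-$m$ value is $\Pr[O_1=k\mid M=m]$, and only after averaging over $m$ does one recover $p_k$. The remaining points are routine bookkeeping: the repeated runs are i.i.d., and the over-budget run is simply dropped, so the counts $i_k$ are distributed exactly as in the model above (whether that last run is imagined as never started, or started and then cut short, does not affect the $i_k$). I would, however, state explicitly that the partial sums are monotone---a consequence of the flip counts being nonnegative---since that is exactly what makes $\{M=m\}$ symmetric in its first $m-1$ coordinates.
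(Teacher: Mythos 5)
Your proof is correct, but it takes a genuinely different route from the paper's. The paper conditions on the unordered count vector $\vec{i}$: it writes the expectation as a sum of multinomial probabilities over all feasible $\vec{i}$ (split into the ``budget exactly exhausted'' and ``over budget'' cases), absorbs the ratio $i_k/\sum_j i_j$ into the multinomial coefficient by decrementing $i_k$, factors out one $p_k$, and then recognizes the remaining sum as the total probability of a complete run at the reduced budget $T'=T-t_k$, which is $1$. You instead condition on the stopping index $M$ and observe that $\{M=m\}$ is a symmetric function of the first $m-1$ outcomes (precisely because the partial sums are monotone), so exchangeability gives $\Pr[O_j=k,\ M=m]=\Pr[O_1=k,\ M=m]$, the factor $\frac{1}{m-1}$ cancels, and the sum over $m$ collapses to $\Pr[O_1=k]=p_k$ with no re-indexing of the budget. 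Both arguments ultimately rest on the same symmetry of the completed runs, but yours packages it probabilistically rather than combinatorially, avoids the multinomial bookkeeping, and makes explicit a caveat the paper leaves implicit: every $t_k$ must be at most $T$ (equivalently $M\ge 2$ almost surely) so that at least one run completes and the estimator is well defined. Your remark that the conditional expectation given $\{M=m\}$ is \emph{not} $p_k$ for fixed $m$---only the average over $m$ is---is also a worthwhile observation that the paper's computation obscures.
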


\begin{proof}
	Given the coin budget $T$, the possible sequences of samples can be classified into the following cases.
	Either 1) the sequence ends exactly at time $T$, or 2) the sequence ends with a time interval of length $t_m$ for some $m$, which in turn ends \emph{after} time $T$. For a vector $\vec{i}$, whose $k^\text{th}$ index denotes the number of times outcome $k$ occurs, the dot product with vector $\vec{t}$ counts the total number of coin flips used by this sequence. Thus, if $\vec{i}\cdot\vec{t}=T$, then the probability of $\vec{i}$ occurring equals \[{i_1+\cdots+i_K \choose i_1;\cdots;i_K} p_1^{i_1}\cdots p_K^{i_K}\]
	
	This expression captures all cases where we use \emph{exactly} our budget $T$. In the remaining cases, there is a final (discarded) outcome $m$ that goes ``over budget". In this case, $\vec{i}\cdot\vec{t}\in[T-t_m+1,T-1]$, and the probability of observing $\vec{i}$ and discarding $m$ equals \[{i_1+\cdots+i_K \choose i_1;\cdots;i_K} p_1^{i_1}\cdots p_K^{i_K} \cdot p_m\]

	Therefore, the expectation of $i_k/\sum_j i_j$ can be written as
	\begin{align*}
		&\sum_{\substack{\text{vector }\vec{i}\\\text{s.t. } \vec{i}\cdot\vec{t} = T}} {i_1+\cdots+i_K \choose i_1;\cdots;i_K} p_1^{i_1}\cdots p_K^{i_K} \cdot \frac{i_k}{i_1+\cdots+i_K}\\
		+&\sum_m \sum_{\substack{\text{vector }\vec{i}\\\text{s.t. } \vec{i}\cdot\vec{t} \in [T-t_m+1,T-1]}} {i_1+\cdots+i_K \choose i_1;\cdots;i_K} p_1^{i_1}\cdots p_K^{i_K} \cdot p_m \cdot\frac{i_k}{i_1+\cdots+i_K}
	\end{align*}
	Now observe that
	$$ {i_1+\cdots+i_K \choose i_1;\cdots;i_K} \frac{i_k}{i_1+\cdots+i_K} = {i_1+\cdots + (i_k-1) + \cdots +i_K \choose i_1;\cdots;i_k-1;\cdots;i_K} $$
	meaning that, letting the vector $\vec{i'}$ equal the vector $\vec{i}$ with its $k^\text{th}$ entry decreased by 1, the expectation can be rewritten and simplified as
	$$ p_k\left[\sum_{\substack{\vec{i'}\\\text{s.t. } \vec{i'}\cdot\vec{t} = T'}} {i'_1+\cdots+i'_K \choose i'_1;\cdots;i'_K} p_1^{i'_1}\cdots p_K^{i'_K} + \sum_m \sum_{\substack{\vec{i'}\\\text{s.t. } \vec{i'}\cdot\vec{t} \in [T'-t_m+1,T'-1]}} {i'_1+\cdots+i'_K \choose i'_1;\cdots;i'_K} p_1^{i'_1}\cdots p_K^{i'_K} \cdot p_m\right] $$
	where $T' = T-t_k$.
	The term inside the square brackets sums to 1, as we observed at the beginning of the proof, but substituting $T'$ for $T$.
	Thus the expectation is $p_k$, as desired.
\end{proof}

\section{The Main Algorithm}
\label{sec:optimal-algorithm}

Here we present our main algorithm,  Algorithm~\ref{Alg:Optimal}, analyzed in Theorem~\ref{thm:opt}, which uses $O(\frac{\rho}{\eps^2\Delta^2}\log\frac{1}{\delta})$ samples, matching the fully-adaptive lower bound we prove in Section~\ref{sect:UnknownLower}.

Algorithm~\ref{Alg:Optimal} uses the Triangular Walk estimator as a subroutine and has a hybrid flavor, combining both (single-coin) adaptive and non-adaptive techniques, where the algorithm is increasingly adaptive for smaller values of $\rho$.
Crucially, in the adaptive component of Algorithm~\ref{Alg:Optimal}, we use the Triangular Walk estimator to provide a $2$-approximation to $\rho$, and a variant of the algorithm to ``filter" out most negative coins such that we get a constant ratio of positive vs negative coins, to reduce variance.
The coins ``surviving" the filter are then fed into a new, non-adaptive algorithm (Algorithm~\ref{Alg:RefinedSampling}) that we call ``refined sampling", which like Algorithm~\ref{Alg:Single} flips different coins a \emph{different} number of times, yet the number of flips is chosen \emph{non-adaptively}; the information from different coins is combined in a subtle way.

As a general motivation, consider taking $t$ coins, flipping them $n$ times each, and trying to estimate the fraction of positive coins.
For a slightly different setting that may have cleaner intuition, consider having sample access to many univariate Gaussian distributions of bounded variance, some of which have mean $\leq 0$ and some of which have mean $\geq 1$, where the goal is to estimate the fraction of ``positive" Gaussians with as few samples as possible.
If we take $n$ samples from a given distribution, then testing whether the sample mean is $>\frac{1}{2}$ lets us correctly determine its identity with probability  $1-\exp(-n)$, incentivizing us to choose a large $n$.
However, for a fixed budget on the total number of samples across all distributions, choosing many samples per distribution means we can only sample from a limited number of distributions, introducing sampling errors across distributions (as opposed to within each distribution), and thus introducing a variance into our estimate inversely proportional to the number of coins sampled, and thus $O(n/T)$ for a total budget of $T$.
This is the classic bias-variance tradeoff, where larger $n$ induces a better bias but worse variance.

While in many settings, one might try to find an optimal $n$ that balances these two concerns, the right answer here is instead to combine the two approaches: sample some distributions many times, to get a low-bias signal, and also sample many distributions a few times, to get a low-variance signal; and combine these two signals with care.
Explicitly, the coefficients in Step 3 of Algorithm~\ref{Alg:RefinedSampling} are carefully chosen so that their contributions ``telescope" in expectation between distributions sampled different numbers of times, allowing, essentially, all the high-variance terms to cancel out without worsening the bias.

We first present the non-adaptive component (Algorithm~\ref{Alg:RefinedSampling}) of Algorithm~\ref{Alg:Optimal} for estimating smooth functions $f$ on the underlying coin bias $p$, which has constant expected sample complexity, with \emph{zero bias}, at the cost of $O(1)$ variance instead of $O(\rho)$ variance as in Algorithm~\ref{Alg:Main}.
This will be combined with a single-coin adaptive ``filtering" component such that only an $O(\rho)$ fraction of coins will be used in running Algorithm~\ref{Alg:RefinedSampling}, giving an overall $O(\rho)$ variance in Algorithm~\ref{Alg:Optimal}.

Think of the function $f$ as being analogous to the output coefficient $\frac{n}{2k-n}$ of Algorithm~\ref{Alg:Single}---correcting for a probabilistic filtering mechanism, such that the expected output of $f$ for those coins that survive filtering will be essentially $0$ for negative coins ($p\leq\frac{1}{4}$), 1 for positive coins ($p\geq \frac{3}{4}$), and smoothly transitions between 0 and 1 in between.
See later in Definition~\ref{def:f(p)} for the precise instantiation of $f(p)$ we need.

Let $\Bin(n,p,k)$ denote the probability that a Binomial distribution with $n$ trials and bias $p$ outputs $k$.

\begin{algorithm}
\caption{Refined Sampling}
\label{Alg:RefinedSampling}
\vspace*{0.3em}
{\bf Input:} sample access to a coin of bias $p$;  target function $f:[0,1]\rightarrow\mathbb{R}$
\vspace*{-0.5em}
\begin{enumerate}
\setlength{\itemsep}{0em}
\setlength{\itemindent}{-2.5mm}
	\item Choose a number of coin flips $n$ that is a power of 2, choosing $2^{i}$ with probability $\frac{\sqrt{8}-1}{\sqrt{8}} (2^i)^{-1.5}$, where $\frac{\sqrt{8}-1}{\sqrt{8}}$ is the normalizing constant so that the probabilities sum to 1.
	\item Flip the coin $n$ times, and let $k$ be the number of observed heads.
	\item Return $\frac{n^{1.5}\sqrt{8}}{\sqrt{8}-1}\left(f(\frac{k}{n})-\sum_{i=0}^{n/2} f(\frac{i}{n/2})\cdot {n/2\choose i}{n/2\choose k-i}/{n\choose k}\right)$
\end{enumerate}
\vspace*{-0.5em}
\end{algorithm}

The sum in Step 3 of the algorithm is omitted if the power of 2 chosen for the number of coin flips is $n=1$, in which case $ {n/2\choose i}$ would be undefined.
We now describe the properties of Algorithm~\ref{Alg:RefinedSampling} in Lemma~\ref{lem:refined-sampling}.

\begin{lemma}\label{lem:refined-sampling}
Given a coin of bias $p$, and given a function $f:[0,1]\rightarrow\mathbb{R}$ that is bounded by a universal constant, and has 2nd derivative bounded by a universal constant, then Algorithm~\ref{Alg:RefinedSampling} will return an estimate of $f(p)$ that has bias 0, variance $O(1)$, and uses $O(1)$ samples in expectation.
\end{lemma}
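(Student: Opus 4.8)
The plan is to condition on the power of two $n=2^j$ chosen in Step 1 of Algorithm~\ref{Alg:RefinedSampling}, compute the conditional mean and second moment of the output $X$, and then average over $j$ against the weights $\Pr[n=2^j]=\frac{\sqrt{8}-1}{\sqrt{8}}(2^j)^{-1.5}$. Throughout it is convenient to write $g_n(p)=\Exp_{k\sim\Bin(n,p)}[f(k/n)]$ for the expectation of $f$ evaluated at the empirical head-frequency of $n$ flips of the coin. The sample-complexity claim is then immediate: the expected number of flips is $\sum_{j\ge 0}\frac{\sqrt{8}-1}{\sqrt{8}}(2^j)^{-1.5}\cdot 2^j=\frac{\sqrt{8}-1}{\sqrt{8}}\sum_{j\ge 0}2^{-j/2}=O(1)$ --- this is exactly why the exponent $1.5$, rather than an arbitrary number in $(1,\infty)$, appears in Step 1.

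For the zero-bias claim the crux is a combinatorial identity: for $n\ge 2$ and $k\sim\Bin(n,p)$, the conditional law of the number $i$ of heads among the first $n/2$ flips is exactly the hypergeometric weight ${n/2\choose i}{n/2\choose k-i}/{n\choose k}$. Consequently the subtracted sum in Step 3 (call it $\mathrm{sum}=\mathrm{sum}_k$) has expectation over $k$ equal to $g_{n/2}(p)$, while $f(k/n)$ has expectation $g_n(p)$, so $\Exp[X\mid n=2^j]=\frac{(2^j)^{1.5}\sqrt{8}}{\sqrt{8}-1}\big(g_{2^j}(p)-g_{2^{j-1}}(p)\big)$ for $j\ge 1$, and $\Exp[X\mid n=1]=\frac{\sqrt{8}}{\sqrt{8}-1}g_1(p)$ (with the subtracted sum absent). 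Multiplying by $\Pr[n=2^j]$, all the $2^j$-dependent and constant factors cancel, leaving $\Pr[n=2^j]\,\Exp[X\mid n=2^j]$ equal to $g_{2^j}(p)-g_{2^{j-1}}(p)$ for $j\ge 1$ and $g_1(p)$ for $j=0$; summing over $j$ telescopes to $\lim_{J\to\infty}g_{2^J}(p)$. I would finish by a second-order Taylor expansion of $f$ about $p$, using $\Exp[k/n]=p$, $\Var[k/n]=p(1-p)/n$, and the bounded second derivative, to get $|g_n(p)-f(p)|=O(1/n)$; this both identifies the limit as $f(p)$ and shows the telescoping increments are absolutely summable, which (together with $\Exp[|X|]<\infty$ from the variance bound) legitimizes the interchange of sum and expectation.

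For the variance it suffices to show $\Exp[X^2]=\sum_{j\ge 0}\Pr[n=2^j]\,\Exp[X^2\mid n=2^j]=O(1)$, and the key estimate is that, conditioned on $n=2^j\ge 2$, $\Exp_k\big[(f(k/n)-\mathrm{sum})^2\big]=O(1/n^2)$. To see this I would split the $n$ flips into two halves with head-counts $A,B\sim\Bin(n/2,p)$ (independent), note $k=A+B$ so that $k/n$ is the midpoint of $A/(n/2)$ and $B/(n/2)$, and observe by symmetry that $f(k/n)-\mathrm{sum}=\Exp\big[\,f(\tfrac{A+B}{n})-\tfrac12 f(\tfrac{A}{n/2})-\tfrac12 f(\tfrac{B}{n/2})\;\big|\;A+B=k\,\big]$. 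The bracketed quantity is ``midpoint minus average'', which a Taylor bound with bounded second derivative controls by $O((A-B)^2/n^2)$; Jensen's inequality then gives $\Exp_k[(f(k/n)-\mathrm{sum})^2]\le O(n^{-4})\,\Exp[(A-B)^4]=O(n^{-2})$, using $\Exp[(A-B)^4]=O(n^2)$ since $A-B$ is a centered sum of $n$ independent bounded terms. Multiplying by the Step-3 prefactor squared ($\Theta(n^3)$) and the weight $\Theta(n^{-1.5})$ gives a $j$-th term of order $(2^j)^{-1/2}$, whose sum over $j\ge 1$ is $O(1)$; the $n=1$ term contributes $O(1)$ since $f$ is bounded. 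Hence $\Var[X]\le\Exp[X^2]=O(1)$.

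I expect the main obstacle to be the ``telescoping'' design itself: recognizing that the subtracted sum in Step 3 is precisely the expectation of $f$ at a half-sample empirical mean (the hypergeometric identity), so that the random-$n$ mixture collapses to $\lim_n g_n(p)=f(p)$ with zero bias, and --- dually --- that the same half-sample decomposition makes $f(k/n)-\mathrm{sum}$ a second difference of $f$ of size $O((A-B)^2/n^2)$, which is exactly small enough to survive the $n^{1.5}$ normalization in the variance sum. Making the two polynomial exponents (the $-1.5$ in Step 1 and the $+1.5$ in Step 3) simultaneously tame the sample complexity, the bias, and the variance is the delicate point; the remaining ingredients (a geometric series, Taylor remainders, and the fourth moment of a difference of binomials) are routine.
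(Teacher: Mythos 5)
Your proposal is correct and follows essentially the same route as the paper: the geometric series for the expected sample count, the hypergeometric identity making the Step-3 sum equal in expectation to the first term one level up so that the mixture over $n$ telescopes to $\lim_n \Exp_{k\sim\Bin(n,p)}[f(k/n)]=f(p)$, and the bounded second derivative forcing $f(k/n)-\mathrm{sum}$ to be of size $O(1/n)$ so each level contributes $O(n^{-0.5})$ to the variance. The only (immaterial) deviation is in that last estimate, where you bound the conditional second moment via the half-split, Jensen, and $\Exp[(A-B)^4]=O(n^2)$, whereas the paper bounds $|f(k/n)-\mathrm{sum}|=O(1/n)$ pointwise in $k$ from the hypergeometric variance; both yield the same per-level bound.
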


\begin{proof}

The expected number of coin flips taken by Algorithm~\ref{Alg:RefinedSampling} is the sum of a fixed geometric series, and is thus $O(1)$ as desired.

We bound the variance of the algorithm by showing that, for each depth $n$, the values returned in Step 3 will have magnitude $O(n^{0.5})$. Consider the sum in the second term of the expression of Step 3. The expression ${n/2\choose i}{n/2\choose k-i}/{n\choose k}$ can be interpreted as: given a sequence of $n$ coin tosses of which $k$ were heads, if a random subsequence of length $n/2$ is chosen, what is the probability that $i$ heads are chosen. This distribution has expectation $\frac{k}{2}$, and variance $<n$.
Since $f$ has second derivative bounded by a constant, the difference of $f$ from $f(\frac{k}{n})$ is upper and lower bounded by quadratics centered at $\frac{k}{n}$.
Thus the difference between $f(\frac{k}{n})$ and the expected value of $f(\frac{i}{n/2})$ when $i$ is drawn from the distribution with pmf ${n/2\choose i}{n/2\choose k-i}/{n\choose k}$ is bounded by a constant times the variance of the random variable $\frac{i}{n/2}$, namely $O(\frac{1}{n})$.
Therefore, when multiplied by $\frac{n^{1.5}}{\sqrt{8}-1}$, the output of Step 3 will be bounded by $O(n^{0.5})$ as desired. Since in Step 1, $n$ is chosen with probability $\frac{\sqrt{8}-1}{n^{1.5}}$, the contribution to the variance from a particular $n$ is at most $\frac{\sqrt{8}-1}{n^{1.5}} O(n^{0.5})^2=O(n^{-0.5})$; summing this bound over all $n$ that are powers of 2 yields a constant, $O(1)$, variance, since geometric series converge.

To analyze the expectation of the values  returned in Step 3 of Algorithm~\ref{Alg:RefinedSampling}, we show that it telescopes across the different depths $n$. Namely, consider the expected contribution just of the second (sum) term at level $n$, $-\sum_{k=0}^n\Bin(n,p,k)\sum_{i=0}^{n/2} f\left(\frac{i}{n/2}\right)\cdot {n/2\choose i}{n/2\choose k-i}/{n\choose k}$. The coefficient in this expression of a given $f(\frac{i}{n/2})$ equals $-\sum_{k=0}^n\Bin(n,p,k) {n/2\choose i}{n/2\choose k-i}/{n\choose k}$; from the discussion at the start of the proof, the $k^\textrm{th}$ term of this sum can be reinterpreted as the probability that, in $n$ tosses of a coin of bias $p$, we have $k$ heads total, and $i$ heads among the first $n/2$ tosses; summed over all $k$ this is clearly just the probability that $i$ heads will be observed among $n/2$ tosses, namely $\Bin(\frac{n}{2},p,i)$. Thus the expected value of the sum term of Step 3 at level $n$ is $-\sum_{i=0}^{n/2}\Bin(\frac{n}{2},p,i)f(\frac{i}{n/2})$, which is exactly the negation of the expectation of the first term of Step 3, at level $n/2$. (The multiplier $\frac{n^{1.5}\sqrt{8}}{\sqrt{8}-1}$ in Step 3 is exactly canceled out by the probability of choosing $n$ in Step 1.)

Thus the expected output of the algorithm, considering only contributions up to some depth $n=2^i$, collapses to just the expectation of the first term of Step 3 at the deepest level, $n$.
This expected output is thus $\sum_{k=0}^n f(\frac{k}{n}) \cdot \Bin(n,p,k)$, namely the expected value of $f(\frac{k}{n})$ when $k$ is drawn from a binomial distribution with $n$ trials and bias $p$.
Since the binomial distribution $\Bin(n,p,\cdot)$ has expectation $pn$ and variance $<n$, and since $f$ has 2nd derivative bounded by a constant, we have that this expectation converges to $f(p)$ for large $n$; namely,  $|f(p)-\sum_{k=0}^n f(\frac{k}{n}) \cdot \Bin(n,p,k)| = O(\frac{1}{n})$. Thus, as $n$ goes to infinity, we see that the expected output of Algorithm~\ref{Alg:RefinedSampling} converges to $f(p)$, as claimed.
\end{proof}

We now give a new non-adaptive algorithm, Algorithm~\ref{Alg:Optimal-rho-hat}, in order to motivate the choice of $f(p)$ that we use for Algorithm~\ref{Alg:RefinedSampling} within Algorithm~\ref{Alg:Optimal-rho-hat}.
Algorithm~\ref{Alg:Optimal-rho-hat} will be a major component of our final algorithm, Algorithm~\ref{Alg:Optimal}.
\begin{algorithm}
\caption{Optimal Algorithm given an estimate $\hat{\rho}$}
\label{Alg:Optimal-rho-hat}
\vspace*{0.3em}
{\bf Given:} A total budget $B$ of coin flips, quality parameter $\Delta$, and an estimate $\hat{\rho}$ that is within a factor of 2 of $\rho$
\vspace*{-0.5em}
\begin{enumerate}
\setlength{\itemsep}{0em}
    \item Run the following on $t=\Theta(\Delta^2 B)$ randomly drawn coins.
    For each coin: simulate a new ``virtual" coin by computing the majority of $\Theta(\frac{1}{\Delta^2})$ flips each time a ``virtual" flip is requested, so that each virtual coin will have probability either $p\leq\frac{1}{4}$ or $p\geq \frac{3}{4}$.
    \begin{enumerate}
    \item For each virtual coin, flip it at most $d=\Theta(\log\frac{1}{\hat{\rho}})$ times but stop if at any point the majority of flips are tails.
    \item If the previous step did not stop early, then run Algorithm~\ref{Alg:RefinedSampling} for the function $f_d(p)$ of Definition~\ref{def:f(p)}.
    \end{enumerate}
    \item Return $\frac{1}{t}$ times the sum of all the values output by Algorithm~\ref{Alg:RefinedSampling} in Step 3(b).
\end{enumerate}
\vspace*{-0.5em}
\end{algorithm}

As mentioned above, the choice of $f(p)$ is a correction for the filtering mechanism.
Concretely, in Algorithm~\ref{Alg:Optimal-rho-hat}, Step 2(a) will stop early on negative coins with probability that is increasingly high for smaller $\rho$, significantly reducing the number of coin flips; and in Step 2(b) we exactly compensate for this (a priori) unknown early stopping probability by running the unbiased Algorithm~\ref{Alg:RefinedSampling} on an appropriately chosen function $f_d(p)$ that is exactly the inverse of this early stopping probability, for positive coins, and 0 for negative coins:

\begin{definition}\label{def:f(p)}
Given a depth $d$, let $f_d(p):[0,1]\rightarrow\mathbb{R}$ be defined to equal 0 for $p\leq \frac{1}{4}$; and for $p\geq \frac{3}{4}$, let $f_d(p)$ equal 1 divided by the probability that a sequence of $d$ flips of a coin of bias $p$ never has a majority-tails initial sequence; for $\frac{1}{4}<p<\frac{3}{4}$, let $f_d(p)$ be chosen so as to smoothly connect the regions $p\leq\frac{1}{4}$ and $p\geq\frac{3}{4}$ so that $f_d(p)$ has second derivative bounded by a universal constant (independent of $d$).
\end{definition}

With this choice of $f(p)$, we state and prove Proposition~\ref{prop:opt-rho-hat}, which gives the soundness and sample complexity bounds for Algorithm~\ref{Alg:Optimal-rho-hat}.

\begin{proposition}
\label{prop:opt-rho-hat}
On input 1) a budget $B = O(\frac{\rho}{\eps^2\Delta^2})$ of coin flips, 2) the quality parameter $\Delta$ and 3) a 2-approximation $\hat{\rho}$ of $\rho$, Algorithm~\ref{Alg:Optimal-rho-hat} returns an estimate of $\rho$ that has additive error at most $\eps$ with probability at least $0.99$, using at most $B$ coin flips.
\end{proposition}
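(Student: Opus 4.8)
\medskip\noindent\textbf{Proof proposal.}
The plan is to fix the $t$ randomly drawn coins, use that Algorithm~\ref{Alg:Optimal-rho-hat} processes them independently, and write its output as $\hat\rho_{\mathrm{out}}=\tfrac1t\sum_{j=1}^t Y_j$, where $Y_j$ is the (possibly zero) value contributed by coin $j$ through Step~3(b). First I would record the facts about the filter in Step~2(a) that are needed. A Chernoff bound shows that, with the constant in the $\Theta(1/\Delta^2)$ block size taken large enough, each virtual coin is a genuine $\Bernoulli(p)$ with $p\le\tfrac14$ when its underlying coin is negative and $p\ge\tfrac34$ when positive. Let $q_d(p)$ denote the probability that $d$ flips of a $\Bernoulli(p)$ coin never have a majority-tails initial segment; reading heads/tails as $\pm1$ steps, this is the probability that the associated walk stays strictly positive through its first $d$ steps. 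A standard biased-walk ruin estimate gives $q_d(p)\ge q_\infty(p)=2p-1\ge\tfrac12$ for $p\ge\tfrac34$, so $f_d(p)=1/q_d(p)\in[1,2]$ on $[\tfrac34,1]$; together with Definition~\ref{def:f(p)} this makes $f_d$ bounded by a universal constant with universally bounded second derivative (independent of $d$), so Lemma~\ref{lem:refined-sampling} applies to $f_d$. In the other direction, surviving the filter forces more than $d/2$ of the $d$ flips to land heads, so for $p\le\tfrac14$ a Chernoff bound gives $q_d(p)=e^{-\Omega(d)}$; since $d=\Theta(\log\tfrac1{\hat\rho})$ with $\hat\rho=\Theta(\rho)$, taking the constant in $d$ large enough makes $q_d(p)=O(\rho)$.

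I would then observe that $\hat\rho_{\mathrm{out}}$ is exactly unbiased and that each $Y_j$ has variance $O(\rho)$. Whether coin $j$ survives Step~2(a) depends only on its $d$ filter flips, which are independent of the fresh flips Algorithm~\ref{Alg:RefinedSampling} draws in Step~3(b); by Lemma~\ref{lem:refined-sampling} the Step~3(b) value then has conditional expectation exactly $f_d(p)$ and conditional second moment $\Var[\cdot]+f_d(p)^2=O(1)+f_d(p)^2$, which is $O(1)$ using $f_d\le2$ (and $f_d\equiv0$ on $[0,\tfrac14]$ for negative coins). Hence the contribution of a coin of virtual bias $p$ has expectation $q_d(p)\,f_d(p)$ — equal to $1$ for $p\ge\tfrac34$ and $0$ for $p\le\tfrac14$ by Definition~\ref{def:f(p)} — and second moment $q_d(p)\cdot O(1)$, which is $O(1)$ for positive coins and $O(\rho)$ for negative coins by the $q_d=O(\rho)$ bound. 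Averaging over the $\rho$ versus $1-\rho$ split gives $\Exp[Y_j]=\rho$ exactly and $\Var[Y_j]\le\Exp[Y_j^2]=O(\rho)$, hence $\Var[\hat\rho_{\mathrm{out}}]=O(\rho/t)$. Since $t=\Theta(\Delta^2 B)$ with $B\ge c\rho/(\eps^2\Delta^2)$ for a sufficiently large constant $c$ (the content of the hypothesis), $t=\Omega(\rho/\eps^2)$ with an arbitrarily large hidden constant, so $\Var[\hat\rho_{\mathrm{out}}]\le 0.005\,\eps^2$ and Chebyshev gives $\Pr[\,|\hat\rho_{\mathrm{out}}-\rho|>\eps\,]\le 0.005$.

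Finally I would handle the budget. The real flips spent on a coin equal $\Theta(1/\Delta^2)$ times its number of virtual flips, which is at most $d=\Theta(\log\tfrac1{\hat\rho})$ in Step~2(a) plus an $O(1)$-expectation amount in Step~3(b) when it survives (Lemma~\ref{lem:refined-sampling}); and, exactly as in the analysis of Lemma~\ref{Lem:AlgSingle}, a negative coin's filter terminates after $O(1)$ virtual flips in expectation. Weighting by the $\rho$ versus $1-\rho$ split, the expected real flips per coin is $O\big(\tfrac1{\Delta^2}(1+\rho\log\tfrac1\rho)\big)=O(1/\Delta^2)$ since $\rho\log\tfrac1\rho\le1$ for $\rho\le\tfrac12$, so the expected total over $t=\Theta(\Delta^2 B)$ coins is $O(B)$; shrinking the constant in $t=\Theta(\Delta^2 B)$ makes this at most $B/200$, and Markov gives $\Pr[\text{more than }B\text{ flips}]\le 0.005$. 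Running the algorithm but halting the instant the running flip count would exceed $B$ (discarding the coin in progress) then produces an estimator that always uses at most $B$ flips and that, on the probability-$\ge0.995$ event that the un-truncated run stays within budget, equals $\hat\rho_{\mathrm{out}}$; a union bound with the accuracy bound above shows this estimator has additive error at most $\eps$ with probability at least $0.99$.

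I expect the one genuinely nontrivial ingredient to be the pair of random-walk estimates for $q_d(p)$ from the first step: the lower bound $q_d\ge\tfrac12$ on $[\tfrac34,1]$, which is what keeps $f_d$ — and hence both the per-coin variance and the applicability of Lemma~\ref{lem:refined-sampling} — controlled \emph{uniformly in $d$}, and the $O(\rho)$ upper bound on $[0,\tfrac14]$, which is what prevents the numerous negative coins from dominating the variance. Everything else should be routine bookkeeping with Lemmas~\ref{lem:refined-sampling} and~\ref{Lem:AlgSingle}, Chebyshev, and Markov.
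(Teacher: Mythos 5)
Your proposal is correct and follows essentially the same route as the paper's proof: unbiasedness via the exact cancellation of the Step 2(a) survival probability $q_d(p)=1/f_d(p)$ against the unbiased estimate of $f_d(p)$ from Algorithm~\ref{Alg:RefinedSampling}, an $O(\rho)$ per-coin second moment because only an $O(\rho)$ fraction of coins survive the filter, Chebyshev for accuracy, and Markov for the budget. The only additions are welcome explicitness the paper leaves implicit — the ballot/ruin estimate $q_d(p)\ge 2p-1\ge\tfrac12$ guaranteeing $f_d\le 2$ uniformly in $d$ (needed for Lemma~\ref{lem:refined-sampling} to apply), and the truncation argument converting the expected-budget bound into a worst-case one.
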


\begin{proof}
We first show the expected output of Algorithm~\ref{Alg:Optimal-rho-hat} equals $\rho$. For each positive coin, Step 1 transforms it into a ``virtual" coin of probability $p\geq\frac{3}{4}$; this coin will ``survive" Step 1(a) with probability exactly $1/f_d(p)$, by definition of $f_d(p)$ in Definition~\ref{def:f(p)}. Thus Algorithm~\ref{Alg:RefinedSampling} will return an estimate of $f_d(p)$, with bias 0. Multiplying through by the survival probability $1/f_d(p)$, and by the probability $\rho$ that a positive coin will be drawn, we see that, over $t$ coins, the expected contribution to the estimate from Step 2 of the \emph{positive} coins will be $\rho$. For each negative coin, by definition $f_d(p)=0$, so the expected contribution from these coins, added over all $\leq t$ of them, and scaled by $\frac{1}{t}$ in Step 2, will be $0$.

To bound the variance of the output of Step 2, we note that at most a $2\rho$ fraction of the coins reach Step 1(b): a $\rho$ fraction of the coins are positive; meanwhile, negative coins, where $p\leq \frac{1}{4}$, have $\exp(-d)$ probability of surviving Step 1(a), which can be made $\leq \rho$ since $d=\Theta(\log\frac{1}{\hat{\rho}})$. Thus the output returned in Step 2 is $\frac{1}{t}$ times the sum of $t$ independent trials of a process that, with probability $\leq 2\rho$ outputs a random variable whose expected squared magnitude is bounded by a constant (by Lemma~\ref{lem:refined-sampling}). For $t=\Theta(\frac{\rho}{\eps^2})$, the expected squared magnitude---and hence the variance---of the output of the algorithm is thus bounded by $O(\frac{t\rho}{t^2})=O(\eps^2)$.
Thus by Chebyshev's inequality, Step 2 will return an estimate accurate to within $O(\eps)$, with constant probability. 

Lastly, we need to verify that Step 1 will exceed the coin flip budget only with small constant probability.
It suffices, using a Markov's inequality argument, to bound the expected number of coin flips used in the steps.
We consider the number of (``virtual") flips from Step 1(a), and also Step 1(b), and then multiply by $\Theta(\frac{1}{\Delta^2})$ as described in Step 1. For a negative coin, the expected number of flips until a majority-tails initial sequence is observed in Step 1(a) is constant by standard random walk analysis, leading to an $O(\Delta^2 B)$ term; for positive coins, there are on average $O(\rho \Delta^2 B)$ of them, so we could afford to flip each $O(\frac{1}{\rho})$ times, but Step 1(a) uses only at most $d=\Theta(\log\frac{1}{\hat{\rho}})$ flips. Step 1(b) is run on an expected $\leq 2\rho$ fraction of the coins, as explained above; and by Lemma~\ref{lem:refined-sampling}, Algorithm~\ref{Alg:RefinedSampling} takes $O(1)$ expected samples, for a total bound of $O(\rho \Delta^2 B)$ virtual flips from Step 1(b). (Algorithm~\ref{Alg:RefinedSampling} could thus afford to take up to $O(\frac{1}{\rho})$ samples on average, so, interestingly, there is a lot of slack here.)

Thus in total we use $O(\Delta^2 B) = O(\frac{\rho}{\eps^2})$ virtual flips, each requiring $\Theta(1/\Delta^2)$ real flips, corresponding to expected sample complexity of $O(B) = O(\frac{\rho}{\eps^2\Delta^2})$.
\end{proof}

Having analyzed Algorithm~\ref{Alg:Optimal-rho-hat}, we can now present our final optimal algorithm, stated as Algorithm~\ref{Alg:Optimal}.
The theoretical guarantees are given in Theorem~\ref{thm:opt}, restated and proved below.

We stress again that, in our presentation of Algorithm~\ref{Alg:Optimal}, the error parameter $\eps$ (of Theorem~\ref{thm:opt}) is not known, since it depends on the budget $B$ and the unknown ground truth $\rho$, yet the returned estimate will have this optimal $\eps$ accuracy regardless. 
This is achieved by Algorithm~\ref{Alg:Optimal}'s calls to Algorithm~\ref{Alg:Main} and Algorithm~\ref{Alg:Optimal-rho-hat}, which collectively cover all regimes of how $\rho$ and $\eps$ relate to each other, yielding optimal error guarantees in each case.

\begin{algorithm}[h]
\caption{Optimal Algorithm}
\label{Alg:Optimal}
\vspace*{0.3em}
{\bf Given:} A total budget $B$ of coin flips and quality parameter $\Delta$
\vspace*{-0.5em}
\begin{enumerate}
\setlength{\itemsep}{0em}
    \item Use Algorithm~\ref{Alg:Main} in Section~\ref{sect:TWalk} on $O(\Delta^2 B)$ many coins (a small fraction of $B$), using an ``$\eps$" that is $\Theta(1/(\Delta^2 B))$, and a constant $\delta$. 
    Let $\hat{\rho}$ be the returned estimate of $\rho$.
        \begin{enumerate}
            \item If Algorithm~\ref{Alg:Main} ever tries to use more than $B/4$ coin flips total, then terminate Algorithm~\ref{Alg:Main} and move onto the next step. 
            \item Otherwise, return the estimate produced by Algorithm~\ref{Alg:Main}.
        \end{enumerate}
    \item Use Algorithm~\ref{Alg:Main} on $\Theta(\sqrt{\Delta^2 B})$ freshly drawn coins, using again an ``$\eps$" that is $\Theta(1/(\Delta^2 B))$, and a constant $\delta$.
    The returned estimate $\hat{\rho}$ will be a 2-approximation to $\rho$.
    If in this step, Algorithm~\ref{Alg:Main} tries to use more than $B/4$ coin flips, terminate and fail, which happens only with small constant probability.
    \item Run Algorithm~\ref{Alg:Optimal-rho-hat} on input $B/2$, $\Delta$, and $\hat{\rho}$, and return its answer.
    \item (If a sub-constant failure probability $\delta$ is desired, then repeat the entire algorithm $\Theta(\log\frac{1}{\delta})$ times and return the median of the outputs, ignoring invocations that failed.)
\end{enumerate}
\vspace*{-0.5em}
\end{algorithm}

\medskip\noindent{\bf Theorem~\ref{thm:opt}.}
\emph{Given coins where a $\rho$ fraction of the coins have bias $\geq\frac{1}{2}+\Delta$, and $1-\rho$ fraction have bias $\leq\frac{1}{2}-\Delta$, then running
Algorithm~\ref{Alg:Optimal} on a budget of $B$ coin flips will estimate $\rho$ to within an additive error of $\pm\eps$, with probability at least $2/3$, where $\eps$ is implicitly defined by the relation $B=\Theta(\frac{\rho}{\Delta^2\eps^2})$ based on the unknown ground truth $\rho$.
If the algorithm is repeated $\Theta(\log\frac{1}{\delta})$ times, and the median estimate is returned, then the probability of failure is at most $\delta$.
}

\begin{proof}

Given the fixed total sample complexity budget of $B$ coin flips, and fixing the unknown ground truth $\rho$, the target additive error parameter $\eps$ is defined by the sample complexity equation $B = \Theta(\frac{\rho}{\eps^2\Delta^2})$.
There are two cases, either $\log\frac{1}{\eps} \le c/\rho$ for some sufficiently small universal constant $c$ (in which case we show that, with high probability, Step 1 will output a correct answer and then halt), or the inequality is in the opposite direction (in which case, with high probability, either Step 1 still produces a correct answer and halts, or Steps 2 and 3 will output a correct answer).

In the case where $\log\frac{1}{\eps} \le c/\rho$, we use Corollary~\ref{Cor:OneSizeFinal} with parameters $\eps_1=\eps$ and $\eps_2=\Theta(\frac{1}{\Delta^2 B})=\Theta(\frac{1}{t})$: Algorithm~\ref{Alg:Main} will have error $\pm\eps$, except with failure probability $0.1+O(t\cdot\poly(\eps_2))=0.1+O(t\cdot\poly(\frac{1}{t}))$, where, as noted in Corollary~\ref{Cor:OneSizeFinal}, we may make the polynomial superlinear to make this failure probability $0.1+o(1)$. Further, the expected sample complexity is $O(\frac{\rho}{\eps_1^2\Delta^2})=O(B)$ in the case where $\log\frac{1}{\eps} \le c/\rho$, so by Markov's inequality, for appropriate constants we can ensure that Algorithm~\ref{Alg:Main} uses $\leq B/4$ samples with high constant probability. Thus in this case, the algorithm will correctly terminate in Step 1(b) with high probability.


Next, we analyze the case where $\log\frac{1}{\eps} \ge c/\rho$.
By Corollary~\ref{Cor:OneSizeFinal}, as above, if Step 1(b) is reached then its answer will be  $\eps$-accurate except with some small constant probability.
Otherwise, since Steps 2 and 3 are statistically independent of Step 1, we can just analyze these steps for the case $\log\frac{1}{\eps} \ge c/\rho$, ignoring what happened in Step 1.

We first claim that Step 2 will return a 2-approximation $\hat{\rho}$ of $\rho$ with high constant probability.
As before, we use Corollary~\ref{Cor:OneSizeFinal} with $\eps_2=\eps$; since (from the algorithm and the parameters of the theorem) this step uses $t=\Theta(\sqrt{\Delta^2 B})=\Theta(\frac{\sqrt{\rho}}{\eps})$ coins, solving the equation $t=\Theta(\frac{\rho}{\eps_1^2})$ of the Corollary yields $\eps_1=\Theta(\sqrt{\eps\sqrt{\rho}})=O(\sqrt{\eps})$.
Since we are in the regime where $\log\frac{1}{\eps} \ge c/\rho$, we have that $\eps_1=O(\sqrt{\eps}) \le O(e^{-c/\rho}) \ll \rho/2$ for sufficiently small $\rho$, meaning that we will approximate $\rho$ to within $\pm \rho/2$, giving us a 2-approximation. The failure probability is $0.1+o(1)$ as above. From Corollary~\ref{Cor:OneSizeFinal}, the expected sample complexity, in our case $\log\frac{1}{\eps} \ge c/\rho$ will be $O(\frac{\rho}{\eps_1^2\Delta^2}\rho\log\frac{1}{\eps_2})$; substituting in the definitions of $\eps_1,\eps_2$ yields $O(\frac{\rho^{3/2}}{\eps\Delta^2}\log\frac{1}{\eps})$. Since $\rho=O(1)$ and $\log\frac{1}{\eps}=o(\frac{1}{\eps})$ this expected sample complexity is thus $O(\frac{\rho}{\eps^2\Delta^2})=O(B)$ and Markov's inequality implies the algorithm exceeds its sample bound in Step 2 with an arbitrarily small constant probability.

We conclude by invoking Proposition~\ref{prop:opt-rho-hat} to show that the estimate returned in Step 3 by Algorithm~\ref{Alg:Optimal-rho-hat} is accurate to within additive error $\eps$ except with small constant probability.
\end{proof}

\section{Characterizing Single-Coin Algorithms}
\label{sect:Framework}

As a crucial first step towards the lower bounds of Section~\ref{sect:UnknownLower} that analyze how information from many different coins may interact, in this section we describe a unified framework for characterizing (adaptive) algorithms that flip only a single coin. Section~\ref{sect:Reduction} will then show a general structural result describing how any adaptive multi-coin algorithm may be broken into single-coin subroutines that may then be analyzed in light of the characterization of this section.

The most general form of an adaptive single-coin algorithm is a decision tree, where each node is a coin flip, and has two outgoing edges denoting the outcome of the coin flip, heads or tails; the current node captures the outcome of the entire sequence of coin flips so far, and thus for each node, a generic algorithm specifies a probability of halting, versus continuing from that node. 

Via a (standard) symmetrization argument, instead of considering the state of the algorithm to be an arbitrary sequence of coin flips, we instead aggregate this information into a pair $(n,k)$ representing the number of coin flips, and the number of heads observed so far. In outline, one may prove by induction on the number of coin flips $n$ that any such decision tree may be ``symmetrized" so that its stopping probability at each node $(n'\leq n,k)$ depends only on $n'$ and $k$, while preserving, for any underlying coin bias $p$, the total probability of hitting the set of decision tree nodes that represent observing $k$ total heads out of $n'$ flips. The inductive step relies on the fundamental property that, conditioned on observing exactly $k$ heads out of $n'$ coin flips, the distribution over all such sequences of coin flips is independent of the coin bias $p$, and depends only on the stopping probabilities along each of the ${n \choose k}$ paths in the decision tree.
This is a direct generalization of the analogous observation in the triangular walk algorithm section (Section~\ref{sect:TWalk}), and is analyzed in slightly different form in Equation~\ref{Eq:Recurrence} below.

We thus consider single-coin algorithms as random walks (Algorithm~\ref{Alg:TriangularWalk}) on the structure of the Pascal Triangle, in which the states are represented by pairs $(n,k)$, where $n$ is the total number of flips of the coin so far, and $k \le n$ is the number of ``heads" responses.
At each state $(n,k)$, the algorithm terminates with some probability $\gamma_{n,k}$, else the algorithm may request a further coin flip and continue the walk. The collection of parameters $\gamma_{n,k}$ we call a \emph{stopping rule}, and specifies that algorithm's behavior.

\begin{algorithm}
\caption{Triangular Walk}
\label{Alg:TriangularWalk}
\vspace*{0.3em}
{\bf Input:} a coin of bias $p$
\vspace*{-0.5em}
\begin{enumerate}
\setlength{\itemsep}{0em}
	\item Initialize state $(n,k)$ to $(0,0)$.
	\item Repeat until termination:
		\begin{enumerate}
			\item With probability $\gamma_{n,k}$, terminate and output $(n,k)$.
			\item Otherwise, sample one more coin flip.
			Increment $n$, and increment $k$ by the result of the flip (0 or 1).
		\end{enumerate}
\end{enumerate}
\vspace*{-0.5em}
\end{algorithm}

This formulation of single-coin algorithms, which we call a \emph{triangular walk}, reveals structure that will be useful to the rest of the analysis of this paper. In particular, since the overall objective of running an adaptive coin-flipping algorithm is to recover information about the bias $p$ of the coin (while minimizing expected sample complexity), it is fortuitous (as we will see) that the outcome of such an algorithm depends on $p$ in an unexpectedly transparent way.
This is given in Definition~\ref{Def:Alpha}.

\begin{definition}
\label{Def:Alpha}
Given a stopping rule $\{\gamma_{n,k}\}$, we define coefficients $\{\alpha_{n,k}\}$, $\{\beta_{n,k}\}$, and $\{\eta_{n,k}\}$, so that, for any $p\in[0,1]$, the triangular walk with stopping rule $\{\gamma_{n,k}\}$ on a coin of bias $p$, the coefficients have the semantics: $\alpha_{n,k} p^k (1-p)^{n-k}$ represents the probability that the walk terminates at $(n,k)$, with all such probabilities summing to 1; $\beta_{n,k} p^k (1-p)^{n-k}$ represents the probability that the triangular walk encounters $(n,k)$, whether or not it terminates there, and $\eta_{n,k} p^k (1-p)^{n-k}$ is the probability that the triangular walk encounters $(n,k)$ but does \emph{not} terminate there. Each of these reparameterizations of the stopping rule may be derived from $\{\gamma_{n,k}\}$ using the following relations.
\begin{align}
\label{Eq:Recurrence}
\beta_{0,0} &= 1\\
\beta_{n+1,k+1} &= \beta_{n,k+1}\cdot(1-\gamma_{n,k+1}) + \beta_{n,k}\cdot(1-\gamma_{n,k})\nonumber\\
\alpha_{n,k} &= \beta_{n,k}\cdot\gamma_{n,k}\nonumber\\
\eta_{n,k} &= \beta_{n,k} - \alpha_{n,k} \;(\,= \beta_{n,k}\cdot(1-\gamma_{n,k})\,).\nonumber
\end{align}

\end{definition}

Consider the original setting, where one has a universe of (different) coins; one might repeatedly run a single-coin algorithm on coins drawn from the universe, and somehow combine their outputs into a final answer. There are many conceivable ways of aggregating the outputs of single-coin algorithms into an estimate,
and the lower bounds of Section~\ref{sect:UnknownLower} consider them all. However, a particularly natural and powerful approach is to construct a linear estimator, namely to have the single-coin algorithm output a real number coefficient $v_{n,k}$ at each termination node, with the overall algorithm estimating the expected output of the single-coin algorithm, across the coins in the universe.
Algorithm~\ref{Alg:Main} works this way, using the median-of-means method (instead of taking the sample mean) to estimate the expected output of Algorithm~\ref{Alg:Single}.
Such linear estimators are surprisingly flexible, and are known to be optimal in certain classes of estimation tasks~\cite{VV:2011:power}.

\section{Fully-Adaptive Lower Bounds}
\label{sect:UnknownLower}


We show in this section that Algorithm~\ref{Alg:Optimal} is optimal in all four problem parameters $\rho$, $\eps$, $\Delta$ and $\delta$, even when compared to all fully-adaptive algorithms that are adaptive across different coins.
In particular, we show the following indistinguishability result (Theorem~\ref{thm:UnknownLower}).

\medskip\noindent{\bf Theorem~\ref{thm:UnknownLower}.}
\emph{For $\rho\in [0,\frac{1}{2})$ and $\epsilon\in (0, 1-2\rho]$, the following two situations are impossible to distinguish with at least $1-\delta$ probability using an expected $o(\frac{\rho}{\eps^2\Delta^2}\log\frac{1}{\delta})$ samples: A) $\rho$ fraction of the coins have probability $\frac{1}{2}+\Delta$ of landing heads and $1-\rho$ fraction of the coins have probability $\frac{1}{2}-\Delta$ of landing heads, versus B) $\rho+\epsilon$ fraction of the coins have probability $\frac{1}{2}+\Delta$ of landing heads and $1-(\rho+\epsilon)$ fraction of the coins have probability $\frac{1}{2}-\Delta$ of landing heads.
	This impossibility crucially includes fully-adaptive algorithms.
}\\

With the algorithmic result of Theorem~\ref{thm:opt}, this lower bound is therefore tight to within a constant factor. We note that the restrictions $\rho<\frac{1}{2}$ and $\epsilon\le 1-2\rho$ reflect the symmetry of the problem, where the pair $\rho,\rho+\epsilon$ is exactly as hard to distinguish as the pair $1-\rho-\epsilon, 1-\rho$, yielding analogous results for the symmetric parameter regime.

\begin{example}\label{ex:simple-lower-bounds}
Even in the constant failure probability regime, the $\Omega(\frac{\rho}{\eps^2\Delta^2})$ lower bound requires significant analysis, forming the bulk of the remainder of this paper, but two special cases have direct proofs. When $\Delta=\Theta(1)$ we can prove a $\Omega(\frac{\rho}{\eps^2})$ lower bound without the $\Delta$ dependence:
consider the case where all coins are unbiased and perfect, meaning that the only source of randomness is from the mixture of coins, which is itself a Bernoulli distribution of bias either $\rho$ or $\rho+\eps$.
We quote the standard fact that, in order to estimate a Bernoulli coin flip of bias $\rho$ to up to additive $\eps$, we need $\Omega(\frac{\rho}{\eps^2})$ samples to succeed with constant probability; this can be proven by a standard (squared) Hellinger distance argument.
On the other hand, it is also straightforward to prove a $\frac{1}{\Delta^2}$ lower bound (covering the regime where $\rho$ and $\epsilon$ are constant): consider the easiest regime for $\rho$ and $\epsilon$, where $\rho=0$ and $\eps = 1$; thus coins either all have $\frac{1}{2}+\Delta$ bias or all have $\frac{1}{2}-\Delta$ bias.
To distinguish whether we have access to positive coins or negative coins requires $\Omega(\frac{1}{\Delta^2})$ samples.
\end{example}

In order to show Theorem~\ref{thm:UnknownLower}, we use the Hellinger distance and KL-divergence between probability distributions as proxies for bounding the total variation distance.
\begin{definition}[Hellinger Distance]
	Given two discrete distributions $P$ and $Q$, the \emph{Hellinger distance} $H(P,Q)$ between them is $$ \frac{1}{\sqrt{2}} \sqrt{\sum_i (\sqrt{p_i}-\sqrt{q_i})^2} = \sqrt{1-\sum_i \sqrt{p_iq_i}}$$
\end{definition}
\begin{definition}[KL-divergence]
	Given two discrete distributions $P$ and $Q$, the \emph{KL-divergence} $\DKL(P||Q)$ between them is $$ \sum_i p_i \log \frac{p_i}{q_i} $$
\end{definition}

The following facts capture how the Hellinger distance and KL-divergence can be used to show sample complexity lower bounds.
\begin{fact}[Chapter 2.4, \cite{Tsybakov:2008}]
	\label{fact:statdist}
	For any two distributions $P$ and $Q$ over the same domain, we have
	$$ \ell_1(P,Q) \le \sqrt{2} H(P,Q) $$
	and furthermore, for any event $E$,
	$$ P(E) + Q(\bar{E}) \ge \frac{1}{2}e^{-\DKL(P||Q)} $$
	The second inequality is also known as the high-probability Pinsker inequality.
\end{fact}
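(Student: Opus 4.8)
Both inequalities are classical; I would derive each from scratch, using only Cauchy--Schwarz, Jensen, and the elementary identities $\sum_i\min(p_i,q_i)=1-\ell_1(P,Q)$ and $\sum_i\max(p_i,q_i)=1+\ell_1(P,Q)$ (here $\ell_1(P,Q)=\tfrac12\sum_i|p_i-q_i|$ is the total-variation distance), together with the fact that $|P(E)-Q(E)|\le\ell_1(P,Q)$ for every event $E$.

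For the first inequality, the plan is to factor $p_i-q_i=(\sqrt{p_i}-\sqrt{q_i})(\sqrt{p_i}+\sqrt{q_i})$ and apply Cauchy--Schwarz:
\[\sum_i|p_i-q_i|\;\le\;\Bigl(\sum_i(\sqrt{p_i}-\sqrt{q_i})^2\Bigr)^{1/2}\Bigl(\sum_i(\sqrt{p_i}+\sqrt{q_i})^2\Bigr)^{1/2}.\]
By the definition of Hellinger distance the first factor equals $\sqrt2\,H(P,Q)$, while the second equals $\bigl(2+2\sum_i\sqrt{p_iq_i}\bigr)^{1/2}\le 2$, since $\sum_i\sqrt{p_iq_i}\le1$. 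Hence $\sum_i|p_i-q_i|\le 2\sqrt2\,H(P,Q)$, i.e.\ $\ell_1(P,Q)\le\sqrt2\,H(P,Q)$. This step is entirely routine.

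For the second inequality (the high-probability Pinsker / Bretagnolle--Huber bound) I would proceed in three steps. (i) For any event $E$, $P(E)+Q(\bar E)=1-(Q(E)-P(E))\ge 1-\ell_1(P,Q)$, so it suffices to prove $1-\ell_1(P,Q)\ge\tfrac12 e^{-\DKL(P\|Q)}$. (ii) Bound the Bhattacharyya coefficient $\sum_i\sqrt{p_iq_i}$ from both sides: writing $\sqrt{p_iq_i}=\sqrt{\min(p_i,q_i)}\,\sqrt{\max(p_i,q_i)}$, Cauchy--Schwarz together with the two identities above gives $\sum_i\sqrt{p_iq_i}\le\sqrt{(1-\ell_1)(1+\ell_1)}=\sqrt{1-\ell_1^2}$; on the other hand $\sum_i\sqrt{p_iq_i}=\mathbb{E}_{P}\bigl[e^{\frac12\log(q_i/p_i)}\bigr]\ge e^{\frac12\mathbb{E}_P[\log(q_i/p_i)]}=e^{-\DKL(P\|Q)/2}$ by Jensen applied to the convex function $\exp$. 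Combining the two bounds yields $\ell_1(P,Q)\le\sqrt{1-e^{-\DKL(P\|Q)}}$. (iii) Finally the elementary inequality $1-\sqrt{1-t}\ge t/2$ on $t\in[0,1]$ (immediate from $(1-t/2)^2\ge 1-t$), applied with $t=e^{-\DKL(P\|Q)}$, gives $1-\ell_1(P,Q)\ge\tfrac12 e^{-\DKL(P\|Q)}$, completing the proof; the case $\DKL(P\|Q)=\infty$ is trivial.

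The only place any care is needed is step (ii): rather than reducing to a two-point (Bernoulli) comparison and grinding through calculus, the clean route is to sandwich $\sum_i\sqrt{p_iq_i}$ between the Cauchy--Schwarz upper bound $\sqrt{1-\ell_1^2}$ and the Jensen lower bound $e^{-\DKL(P\|Q)/2}$, which immediately delivers the Bretagnolle--Huber inequality; everything else is bookkeeping.
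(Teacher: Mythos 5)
Your proof is correct. The paper does not prove this Fact at all---it simply cites Tsybakov, Chapter 2.4---and your derivation is precisely the standard argument found there: Cauchy--Schwarz on the factorization $p_i-q_i=(\sqrt{p_i}-\sqrt{q_i})(\sqrt{p_i}+\sqrt{q_i})$ for the Hellinger/TV comparison, and the Le Cam--Bretagnolle--Huber chain (sandwiching the Bhattacharyya coefficient between $\sqrt{1-\ell_1^2}$ via Cauchy--Schwarz on $\min/\max$ and $e^{-\DKL(P\|Q)/2}$ via Jensen) for the high-probability Pinsker inequality. The one point worth making explicit, which you do correctly, is that $\ell_1(P,Q)$ here must denote the total variation distance $\tfrac12\sum_i|p_i-q_i|$ rather than the full $\ell_1$ norm of the difference; otherwise the constant in the first inequality would be $2\sqrt{2}$ rather than $\sqrt{2}$.
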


Recall from the introduction that, the main challenge in proving a general lower bound for our problem lies in analyzing the two kinds of adaptivity that algorithms may employ that were both absent in the special cases of Example~\ref{ex:simple-lower-bounds}. Explicitly, when taking samples from a given coin, we can choose whether to ask for another sample based on A) previous results of this coin, and also B) previous results of all the other coins. This first kind of adaptivity, ``single-coin adaptivity", is crucially used in the algorithms presented in the rest of the paper (e.g.~the ``shape" of the stopping rule for our triangular-walk algorithms); in Proposition~\ref{prop:BoundedLower} we analyze the best possible performance of such triangular stopping rules. The most interesting part of the proof of Theorem~\ref{thm:UnknownLower} consists of showing that the second kind of adaptivity (cross-coin adaptivity) cannot help in the lower bound setting, which we analyze via general Hellinger distance/KL-divergence inequalities (Lemmas~\ref{Lem:Reduction} and~\ref{Lem:ReductionKL}) in Section~\ref{sect:Reduction}.

\subsection{Reduction to Single-Coin Adaptive Algorithms}
\label{sect:Reduction}

In this section, we give two related but distinct reductions to single-coin adaptive algorithms.
The first is a general decomposition (``direct sum") inequality that decomposes the squared Hellinger distance of running a fully-adaptive algorithm on two different coin populations into the sum of squared Hellinger distances of running single-coin adaptive algorithms on the two coin populations.
This inequality will lead to a constant probability sample complexity lower bound.
The second inequality instead decomposes the KL divergence into (a constant times) a sum of squared Hellinger distances, however with an additional slight restriction that the two coin populations being considered must be very close to each other.
The upside to using this second inequality is that, an upper bound on the KL divergence combined with the high probability Pinsker inequality allows us to obtain a \emph{high probability} sample complexity lower bound, which in particular is tight in \emph{all} parameters of the problem, up to a multiplicative constant.

Both of the following inequalities are applicable to populations of variables \emph{beyond} Bernoulli coins.
We believe that the general inequalities are of independent interest to the community, since they would be applicable and useful for proving lower bounds on a variety of scenarios involving, for example, a Gaussian variant of the current problem, where instead of getting yes/no answers on the positivity of an item, one gets a real-valued score which correlates with the positivity of the item.

We phrase both lemmas as upper bounds on distances between distributions of the \emph{transcript} of an algorithm, which when combined with the data processing inequality immediately yields upper bounds on distances between distributions of the algorithm's \emph{output}. See, for example, the very end of the proof of Theorem~\ref{thm:UnknownLower}.

\begin{lemma}
\label{Lem:Reduction}
Consider a problem setting where there is a collection of random variables, and an adaptive algorithm can draw variables from the collection and draw independent samples from the drawn variables.
Now consider an arbitrary algorithm that iteratively samples from random variables drawn from the collection, choosing each subsequent variable to sample in an arbitrary adaptive manner based on the results of previous sample outcomes.
Suppose the algorithm terminates almost surely.
Consider two arbitrary collections of random variables, denoted by distributions $\mathcal{A}$ and $\mathcal{B}$ over the set of possible random variables.
Let $H_\mathrm{full}^2$ be the squared Hellinger distance between the transcript of a single run of the algorithm where 1) the random variables are drawn from $\mathcal{A}$ versus where 2) the random variables are drawn from $\mathcal{B}$.
Furthermore, let $H_i^2$ be the squared Hellinger distance between the two scenarios, but instead of running the algorithm as is, we only use random variable $i$ (as drawn either from $\mathcal{A}$ or $\mathcal{B}$ depending on the scenario) and simulate all other random variables as independent random variables that are themselves drawn from the mixture distribution $\frac{\mathcal{A}}{2}+\frac{\mathcal{B}}{2}$.
Then
$$ H_\mathrm{full}^2 \le \sum_\textnormal{variable $i$} H_i^2 $$	
\end{lemma}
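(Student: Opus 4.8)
The plan is to reduce the inequality to an elementary pointwise bound by expressing both sides as expectations under a single ``all-mixture'' reference experiment. First I would fix notation: a \emph{transcript} $\tau$ is the finite sequence of (variable-index, sample-outcome) pairs produced by the algorithm before it halts (almost surely finite, by assumption), and for each variable $i$ let $\vec o_i(\tau)$ be the subsequence of outcomes drawn from variable $i$, in the order drawn. Write $\mathbb{A},\mathbb{B},\mathbb{M}$ for the laws of the (infinite) sequence of successive samples from a variable drawn from $\mathcal{A}$, from $\mathcal{B}$, and from $M:=\frac{\mathcal{A}}{2}+\frac{\mathcal{B}}{2}$ respectively; note $\mathbb{M}=\frac{\mathbb{A}+\mathbb{B}}{2}$. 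The key first step is the factorization
$$\Pr\nolimits_{\mathcal{A}}[\,\tau\,] \;=\; \Pi(\tau)\,\prod_{i} \mathbb{A}\big(\vec o_i(\tau)\big),$$
and likewise with $\mathcal{A}$ replaced by $\mathcal{B}$ or by $M$, where $\Pr_{\mathcal{A}}[\,\tau\,]$ is the probability the transcript equals $\tau$ when variables are drawn i.i.d.\ from $\mathcal{A}$, and $\Pi(\tau)$ --- the product over steps of the probabilities with which the algorithm makes the choices recorded in $\tau$ (which variable to sample, when to halt) --- is \emph{the same in all three scenarios}, since the algorithm sees only the transcript so far and never the scenario. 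Proving this factorization is the main work: because distinct variables are mutually independent, the posterior-predictive probability of each outcome of variable $i$ depends only on variable $i$'s earlier outcomes, so by the chain rule these telescope to the single marginal likelihood $\mathbb{A}(\vec o_i(\tau))$ --- crucially, no matter how the samples of variable $i$ are interleaved with those of other variables, and no matter that how \emph{many} times variable $i$ is sampled may itself depend on the other variables. That is precisely the information-leakage-via-length phenomenon, and the factorization shows it does no harm.

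Given the factorization, I would pass to the experiment in which \emph{every} variable is drawn from $M$; let $\mathcal{T}_\mathbb{M}$ be its transcript law, so $\Pr_{\mathcal{T}_\mathbb{M}}[\tau]=\Pi(\tau)\prod_i\mathbb{M}(\vec o_i(\tau))$. For each $i$ define
$$g_i(\tau)\;:=\;\sqrt{\frac{\mathbb{A}\big(\vec o_i(\tau)\big)}{\mathbb{M}\big(\vec o_i(\tau)\big)}\cdot\frac{\mathbb{B}\big(\vec o_i(\tau)\big)}{\mathbb{M}\big(\vec o_i(\tau)\big)}}.$$
Using the identity $H^2(P,Q)=1-\mathrm{BC}(P,Q)$ with $\mathrm{BC}(P,Q):=\sum_x\sqrt{P(x)Q(x)}$ (immediate from the definition of Hellinger distance), the factorization yields at once
$$1-H_\mathrm{full}^2 \;=\; \Exp_{\tau\sim\mathcal{T}_\mathbb{M}}\Big[\textstyle\prod_i g_i(\tau)\Big], \qquad 1-H_i^2 \;=\; \Exp_{\tau\sim\mathcal{T}_\mathbb{M}}\big[g_i(\tau)\big],$$
where in the second expression the variables $j\ne i$ contribute factors $\sqrt{\mathbb{M}\cdot\mathbb{M}}=\mathbb{M}$ that are exactly absorbed into the law $\mathcal{T}_\mathbb{M}$. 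The reason for drawing the simulated variables from the \emph{midpoint} mixture $M$ (rather than from, say, $\mathcal{A}$) is now visible: since $\frac{\mathbb{A}}{\mathbb{M}}+\frac{\mathbb{B}}{\mathbb{M}}=2$ pointwise, AM--GM gives $0\le g_i(\tau)\le 1$.

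Finally I would apply the elementary inequality $1-\prod_i x_i\le\sum_i(1-x_i)$, valid for any $x_1,\dots,x_n\in[0,1]$ (rearrange $\prod_i(1-y_i)\ge 1-\sum_i y_i$ with $y_i=1-x_i$). Applying it pointwise with $x_i=g_i(\tau)\in[0,1]$ and taking the expectation over $\tau\sim\mathcal{T}_\mathbb{M}$,
$$H_\mathrm{full}^2 \;=\; 1-\Exp\Big[\textstyle\prod_i g_i\Big] \;\le\; \Exp\Big[\textstyle\sum_i(1-g_i)\Big] \;=\; \sum_i\big(1-\Exp[g_i]\big) \;=\; \sum_i H_i^2,$$
which is the claimed bound. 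The only substantive obstacle is the first paragraph's factorization --- making precise, in the presence of full cross-variable adaptivity, that a transcript's likelihood splits into a scenario-independent ``algorithm factor'' times one marginal-likelihood factor per variable; once that is in hand, the rest is a two-line computation together with one convexity inequality. (Almost-sure termination is used only to keep transcripts finite and the sums convergent; one either assumes it for the $M$-experiment as well, or restricts throughout to the termination event.)
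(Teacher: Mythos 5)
Your proposal is correct and follows essentially the same route as the paper's proof: the scenario-independent factorization of the transcript likelihood into per-variable marginal likelihoods, the choice of the midpoint mixture $\frac{\mathcal{A}}{2}+\frac{\mathcal{B}}{2}$ as reference so that AM--GM puts each ratio $g_i$ in $[0,1]$, and the elementary bound $1-\prod_i x_i\le\sum_i(1-x_i)$ are exactly the paper's steps, merely rephrased as expectations under the mixture transcript law rather than sums over decision-tree leaves. The only cosmetic difference is that you absorb the algorithm's randomness into the factor $\Pi(\tau)$ instead of first reducing to deterministic decision trees as the paper does.
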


\begin{proof}
    It suffices to prove the result for deterministic algorithms, since squared Hellinger distance is linear with respect to mixtures of distributions \emph{with distinct outcomes}, and a randomized algorithm is simply a mixture of deterministic algorithms which also records which of the algorithms the random coins picked.
    Furthermore, the following proof is phrased in terms of the special case where the collection of random variables are Bernoulli coins (which is the setting considered in this paper).
    Barring measure-theoretic formalization issues that we do not discuss, the proof generalizes directly to populations of arbitrary random variables.
    
    A deterministic fully-adaptive algorithm is a decision tree, where each node is labeled by the identity of the coin the algorithm chooses to flip next conditioned on reaching this node, and each edge out of a node is labeled by a heads or tails result for this coin.
	We can view a run of the algorithm as follows: 1) first draw all the random coins from either $\mathcal{A}$ or $\mathcal{B}$ depending on the scenario, and then 2) flip these coins according to this fully-adaptive algorithm---we view choosing the coins from $\mathcal{A}$ or $\mathcal{B}$ as happening at the beginning since all these samples are free and only the coin flips themselves are counted.
	After step 1, fixing the bias of each coin, the probability of ending up at the $i^\text{th}$ leaf of the decision tree is simply the probability (over coin flips) that every edge along the path from the root to that leaf is followed.
	Note that each edge is a probabilistic event depending on only one coin.
	Therefore, this probability can be factored into a product of probabilities, one term for each of the coins.
	For example, suppose the path to leaf $i$ involves coin $j$ returning 5 heads in a row, then getting some particular sequence from flipping some \emph{other} coins, then coin $j$ returning another 2 heads followed by 3 tails.
	Then, if coin $j$ has bias $p_j$, it contributes $p_j^{5+2}(1-p_j)^3$ to the probability product.
	
	We denote by $q^\mathcal{A}_{j,i}$ the \emph{expected} contribution of coin $j$ to the probability product for leaf $i$, over the randomness of $\mathcal{A}$ on the bias of coin $j$.
	In the previous example, $q^\mathcal{A}_{j,i}$ would be equal to $\Exp_{p \from \mathcal{A}}[p^7(1-p)^3]$.
	We similarly define $q^\mathcal{B}_{j,i}$. Explicitly, for leaf $i$ and coin $j$, $q^\mathcal{A}_{j,i}$ is the expectation (over $p$ drawn from $\mathcal{A}$) of $p$ to the exponent of the number of ``heads" edges on the path from the root to node $i$ in the decision tree, times $(1-p)$ to the exponent of the number of ``tails" edges on this path.
	
	Using this notation, the probability of the algorithm reaching leaf $i$, when the coins are sampled from distribution $\mathcal{A}$, would be $\prod_{\text{coin $j$}} q^\mathcal{A}_{j,i}$, since each coin is sampled from $\mathcal{A}$ independently; let $\prod_{\text{coin $j$}} q^\mathcal{B}_{j,i}$ be the respective probability for sampling from $\mathcal{B}$.
	
	Since the total probability of reaching all leaves $i$ must equal 1, this expression yields the immediate corollary, that for any distribution $\mathcal{A}$ over $[0,1]$,
	\begin{equation}
	\label{eq:TotalProb}
		\sum_{\text{leaf $i$}} \prod_\text{coin $j$} q^\mathcal{A}_{j,i} = 1
	\end{equation}
	
	We can now express the squared Hellinger distance with this notation.
	For any two distributions $\vec{a}$ and $\vec{b}$, 1 minus their squared Hellinger distance can be rewritten as $\sum_i \sqrt{a_ib_i}$.
	In our context, the summation is over leaves $i$, and thus the squared Hellinger distance between the two scenarios in question is
	
	\begin{equation}\label{eq:Hellinger-def-sqrt} H_\mathrm{full}^2=1-\sum_{\text{leaf $i$}}\sqrt{\prod_\text{coin $j$} q^\mathcal{A}_{j,i} \prod_\text{coin $j$} q^\mathcal{B}_{j,i} } \end{equation}
	
Since $q^\mathcal{A}_{j,i}$ and $q^\mathcal{B}_{j,i}$ are both non-negative, we simplify the summand as
	
	\begin{equation}
	\hspace*{0.8cm}
	\begin{aligned}
	\label{Eq:AMGM}
	    &\sqrt{\prod_\text{coin $j$} q^\mathcal{A}_{j,i} \prod_\text{coin $j$} q^\mathcal{B}_{j,i} }\\
	    = &\left(\prod_\text{coin $j$} \frac{q^\mathcal{A}_{j,i}+q^\mathcal{B}_{j,i}}{2} \right)\left(\prod_\text{coin $j$} \frac{2\sqrt{q^\mathcal{A}_{j,i}q^\mathcal{B}_{j,i}}}{q^\mathcal{A}_{j,i}+q^\mathcal{B}_{j,i}}\right)\\
	    \ge &\left(\prod_\text{coin $j$} \frac{q^\mathcal{A}_{j,i}+q^\mathcal{B}_{j,i}}{2} \right) \left[1- \sum_\text{coin $j$}\left(1 - \frac{2\sqrt{q^\mathcal{A}_{j,i}q^\mathcal{B}_{j,i}}}{q^\mathcal{A}_{j,i}+q^\mathcal{B}_{j,i}} \right)\right]
	\end{aligned}
	\end{equation}
	where the inequality holds because each $\frac{2\sqrt{q^\mathcal{A}_{j,i}q^\mathcal{B}_{j,i}}}{q^\mathcal{A}_{j,i}+q^\mathcal{B}_{j,i}}$ is less than or equal to 1 by the AM-GM inequality (and at least 0), and therefore we can apply the union bound by treating each term as a probability---namely, for any $x_j\in[0,1]$ we have $\prod_j x_j\geq 1-\sum_j(1-x_j)$.
	
	Observe that our definition of $q$, being an expectation, is thus linear in the distribution in its superscript, and thus $\frac{1}{2}(q^\mathcal{A}_{j,i}+q^\mathcal{B}_{j,i})=q^{\frac{\mathcal{A}}{2}+\frac{\mathcal{B}}{2}}_{j,i}$, and therefore the right hand side of the inequality can be rewritten as
	
	\begin{equation}\label{eq:RHS-simplified} \left(\prod_\text{coin $j$} q^{\frac{\mathcal{A}}{2}+\frac{\mathcal{B}}{2}}_{j,i}\right) \left[1- \sum_\text{coin $j$} \left(1 - \frac{\sqrt{q^\mathcal{A}_{j,i}q^\mathcal{B}_{j,i}}}{q^{\frac{\mathcal{A}}{2}+\frac{\mathcal{B}}{2}}_{j,i}}\right)\right] \end{equation}
	
	Thus the sum of Equation~\ref{eq:RHS-simplified} over all leaves is at most $1-H_\text{full}^2$.  We simplify the summation by changing the summation variable in Equation~\ref{eq:RHS-simplified} from $j$ to $k$, and distributing the initial product so as to form three additive terms (the ``$j\neq k$" in the bounds of the last product below is because the $j=k$ term gets canceled by the denominator from the last term in Equation~\ref{eq:RHS-simplified}):
	\begin{align*}
	&\left(\sum_\text{leaf $i$} \prod_\text{coin $j$} q^{\frac{\mathcal{A}}{2}+\frac{\mathcal{B}}{2}}_{j,i}\right) - \sum_\text{coin $k$} \left(\sum_\text{leaf $i$} \prod_\text{coin $j$} q^{\frac{\mathcal{A}}{2}+\frac{\mathcal{B}}{2}}_{j,i}\right)\\
	& -\sum_\text{coin $k$} \left(\sum_\text{leaf $i$} \left(\prod_\text{coin $j \neq k$} q^{\frac{\mathcal{A}}{2}+\frac{\mathcal{B}}{2}}_{j,i}\right)\sqrt{q^\mathcal{A}_{k,i}q^\mathcal{B}_{k,i}}\right)
	\end{align*}
	
	We know by Equation~\ref{eq:TotalProb} that $\left(\sum_\text{leaf $i$} \prod_\text{coin $j$} q^{\frac{\mathcal{A}}{2}+\frac{\mathcal{B}}{2}}_{j,i}\right) = 1$, and so the sum can be written as
	$$ 1 - \sum_\text{coin $k$} \left(1 - \sum_\text{leaf $i$} \left(\prod_\text{coin $j \neq k$} q^{\frac{\mathcal{A}}{2}+\frac{\mathcal{B}}{2}}_{j,i}\right)\sqrt{q^\mathcal{A}_{k,i}q^\mathcal{B}_{k,i}}\right) $$
	which by definition of $H_k$ is equal to $1 - \sum_\text{coin $k$} H_k^2$: by Equation~\ref{eq:Hellinger-def-sqrt}, 1 minus the squared Hellinger distance between the view of the algorithm when the $k^\textrm{th}$ coin is from $\mathcal{A}$ versus from $\mathcal{B}$, where all remaining coins are drawn from the mixture $\frac{\mathcal{A}}{2}+\frac{\mathcal{B}}{2}$ equals $\sum_\text{leaf $i$} \left(\prod_\text{coin $j \neq k$} q^{\frac{\mathcal{A}}{2}+\frac{\mathcal{B}}{2}}_{j,i}\right)\sqrt{q^\mathcal{A}_{k,i}q^\mathcal{B}_{k,i}}$.
	
	Summarizing, we have shown that $1 - H_\text{full}^2 \ge 1 - \sum_\text{coin $k$} H_k^2$, from which the lemma statement follows.
\end{proof}

We now give the KL-divergence decomposition lemma (Lemma~\ref{Lem:ReductionKL}) which will yield a tight \emph{high probability} sample complexity lower bound, but makes a further assumption than Lemma~\ref{Lem:Reduction} that the two coin populations are close to each other.
As a note, the definition of $H_i^2$ is slightly different in this lemma from the definition in Lemma~\ref{Lem:Reduction}, and is not a typographical mistake.

\begin{lemma}
\label{Lem:ReductionKL}
Consider an arbitrary algorithm that iteratively flips coins from a collection of coins, choosing each subsequent coin to flip in an arbitrary adaptive manner based on the results of previous flips.
Suppose the algorithm terminates almost surely.
Consider two arbitrary mixtures of coins, denoted by distributions $\mathcal{A}$ and $\mathcal{B}$ over the coin bias $[0,1]$.
Let $D_\mathrm{full}$ be the KL-divergence between the transcript of a single run of the algorithm where 1) the coins are drawn from the mixture $\rho \mathcal{A}+ (1-\rho) \mathcal{B}$ versus where 2) the coins are drawn from $(\rho+\eps) \mathcal{A} + (1-\rho-\eps) \mathcal{B}$, where $\rho\in [0,\frac{1}{2})$, $\eps\in (0, 1-2\rho]$ and $\eps < \rho$.
Furthermore, let $H_i^2$ be the squared Hellinger distance between the two scenarios, but instead of running the algorithm as is, we only use coin $i$ (as drawn either from the $\rho$-fraction mixture or the $(\rho+\eps)$-fraction mixture depending on the scenario) and simulate all other coins as independent coins drawn from the $\rho$-fraction mixture.
Then
$$ D_\mathrm{full} = O\left(\sum_\textnormal{coin $i$} H_i^2\right) $$
\end{lemma}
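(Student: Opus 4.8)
The plan is to adapt the leaf-by-leaf argument from the proof of Lemma~\ref{Lem:Reduction}, but now to track the KL contribution of \emph{each coin exactly} rather than bounding an aggregate Hellinger distance. As there, it suffices to treat deterministic algorithms---a randomized algorithm is handled by conditioning on its random tape, which is independent of which of the two coin populations is in force---and as there I would elide measure-theoretic formalities. So fix a decision tree, write $\mathcal{M}_\rho=\rho\mathcal{A}+(1-\rho)\mathcal{B}$ and $\mathcal{M}_{\rho+\eps}=(\rho+\eps)\mathcal{A}+(1-\rho-\eps)\mathcal{B}$, and---reusing the quantities $q^{\mathcal{D}}_{j,\ell}$ from the proof of Lemma~\ref{Lem:Reduction} (the expected contribution of coin $j$ to the probability of reaching leaf $\ell$ when coin $j$'s bias is drawn from $\mathcal{D}$)---set $a_{j,\ell}=q^{\mathcal{M}_\rho}_{j,\ell}$ and $b_{j,\ell}=q^{\mathcal{M}_{\rho+\eps}}_{j,\ell}$, so leaf $\ell$ is reached with probability $P_\ell=\prod_j a_{j,\ell}$ in scenario~1 and $Q_\ell=\prod_j b_{j,\ell}$ in scenario~2. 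The hypotheses---in particular $\eps<\rho$, which forces $\rho>0$---make all four weights $\rho,\,1-\rho,\,\rho+\eps,\,1-\rho-\eps$ strictly positive, so $a_{j,\ell}=0\iff b_{j,\ell}=0$, $P$ and $Q$ have the same support, and $D_\mathrm{full}$ is well-defined; for a reachable leaf set $r_{j,\ell}=b_{j,\ell}/a_{j,\ell}$, which equals $1$ for any coin $j$ not flipped on the path to $\ell$, so $\log(P_\ell/Q_\ell)=-\sum_j\log r_{j,\ell}$ is a finite sum.

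Next I would set up an \emph{exact} additive decomposition of $D_\mathrm{full}$ over coins. Applying the generalization of Equation~\ref{eq:TotalProb} in which coin $j$ is drawn from $\mathcal{M}_{\rho+\eps}$ and every other coin from $\mathcal{M}_\rho$ (the total probability of reaching some leaf is $1$ for every fixed vector of biases, hence also in expectation over any product of bias-distributions) gives $\sum_\ell P_\ell\,r_{j,\ell}=\sum_\ell b_{j,\ell}\prod_{k\neq j}a_{k,\ell}=1$, i.e.\ $\sum_\ell P_\ell(r_{j,\ell}-1)=0$ for each coin $j$. Therefore $D_\mathrm{full}=\sum_\ell P_\ell\log(P_\ell/Q_\ell)=\sum_j\sum_\ell P_\ell(-\log r_{j,\ell})=\sum_j\sum_\ell P_\ell\,(r_{j,\ell}-1-\log r_{j,\ell})$, a sum of nonnegative terms (adding a per-coin zero changes nothing, and $x-1-\log x\ge 0$). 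On the right-hand side, unpacking this lemma's definition of $H_j^2$---coin $j$ authentic, all others from $\mathcal{M}_\rho$, so the two competing leaf probabilities are $P_\ell$ and $r_{j,\ell}P_\ell$---and again using $\sum_\ell P_\ell=\sum_\ell P_\ell r_{j,\ell}=1$ yields $H_j^2=1-\sum_\ell P_\ell\sqrt{r_{j,\ell}}=\tfrac12\sum_\ell P_\ell(\sqrt{r_{j,\ell}}-1)^2$.

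The proof then reduces to one scalar inequality: $x-1-\log x\le C\cdot\tfrac12(\sqrt x-1)^2$ for $x$ in the range of the $r_{j,\ell}$, with $C$ a universal constant. This is exactly where the closeness hypotheses are spent. The ratio $r_{j,\ell}$ is a monotone ``reweighting'' interpolating between $1-\frac{\eps}{1-\rho}$ (when $q^{\mathcal{B}}_{j,\ell}$ dominates) and $1+\frac{\eps}{\rho}$ (when $q^{\mathcal{A}}_{j,\ell}$ dominates); since $\eps<\rho$ gives $1+\frac{\eps}{\rho}<2$, and since $\rho<\tfrac12$ with $\eps\le\min(\rho,1-2\rho)$ gives $\eps<\tfrac{1-\rho}{2}$ (check $\rho\le\tfrac13$ and $\rho>\tfrac13$ separately), hence $1-\frac{\eps}{1-\rho}>\tfrac12$, we get $r_{j,\ell}\in(\tfrac12,2)$. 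On that interval $x-1-\log x$ and $\tfrac12(\sqrt x-1)^2$ are nonnegative and smooth with a common double zero at $x=1$ (where their ratio tends to $4$), so their ratio is bounded by a universal $C$. Plugging in, $D_\mathrm{full}=\sum_j\sum_\ell P_\ell\,(r_{j,\ell}-1-\log r_{j,\ell})\le C\sum_j\tfrac12\sum_\ell P_\ell(\sqrt{r_{j,\ell}}-1)^2=C\sum_j H_j^2$, which is $O\bigl(\sum_j H_j^2\bigr)$.

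The main obstacle I anticipate, beyond the routine bookkeeping for interchanging the sums over leaves and over coins when the tree depth or the coin collection is infinite (dispatched as in Lemma~\ref{Lem:Reduction} by eliding measure-theoretic technicalities, or by first reducing to algorithms of finite expected sample complexity), is pinning down that the three hypotheses $\rho<\tfrac12$, $\eps<\rho$, $\eps\le 1-2\rho$ are \emph{precisely} what keeps every per-coin likelihood ratio $r_{j,\ell}$ inside a fixed interval bounded away from $0$ and $\infty$: without this the constant in the scalar inequality blows up as $x\to 0^{+}$ (there $x-1-\log x\to\infty$ while $\tfrac12(\sqrt x-1)^2\to\tfrac12$), and a single coin could then contribute an infinite KL term while contributing only a bounded $H_j^2$ term, breaking the bound. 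Everything else is the exact additivity identity above and elementary calculus.
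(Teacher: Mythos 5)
Your proof is correct, and its skeleton is the same as the paper's: reduce to deterministic decision trees, factor each leaf probability into per-coin contributions $q_{j,\ell}$, observe that $\sum_\ell P_\ell\, r_{j,\ell}=1$ (the paper's Equation~\ref{eq:TotalProbKL}) so that the linear terms vanish, and then match the resulting per-coin quadratic quantity against $H_j^2$. Where you diverge is only in the final comparison: the paper Taylor-expands both $\log(1+x)$ and $(1-\sqrt{1+x})^2$ to extract matching $\Theta(\eps^2)\,(q^{\mathcal{A}}_{k,\ell}-q^{\mathcal{B}}_{k,\ell})^2/(q^\rho_{k,\ell})^2$ forms, whereas you keep the two exact identities $D_\mathrm{full}=\sum_j\sum_\ell P_\ell\,(r_{j,\ell}-1-\log r_{j,\ell})$ and $H_j^2=\tfrac12\sum_\ell P_\ell(\sqrt{r_{j,\ell}}-1)^2$ and compare them via the single scalar inequality $x-1-\log x\le C(\sqrt x-1)^2$ on $[\tfrac12,2]$. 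This buys a cleaner accounting of exactly where the hypotheses $\eps<\rho$, $\rho<\tfrac12$, $\eps\le 1-2\rho$ are used (they pin $r_{j,\ell}$ into a fixed interval away from $0$ and $\infty$), and it avoids the slightly informal two-sided $\Theta(\cdot)$ manipulations in the paper's expansion; the paper's version, in exchange, exhibits the common quadratic form explicitly, which is reused verbatim when $H_k^2$ is bounded in Corollary~\ref{cor:BoundedLower}. Your remark about justifying the interchange of sums for infinite trees (all terms nonnegative after adding the per-coin zero, or restricting to finite expected sample complexity) addresses a point the paper silently elides, so no gap there either.
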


The proof of Lemma~\ref{Lem:ReductionKL} is similar to that of Lemma~\ref{Lem:Reduction} by viewing algorithms as decision trees, with the crucial difference that, rather than using the AM-GM inequality, Lemma~\ref{Lem:ReductionKL} instead bounds the KL-divergence via a quadratic bound $\log(1+x)\geq x-x^2$, valid for $x\in[-\frac{1}{2},1]$.

\begin{proof}
(For the following proof, the set-up up to and including Equation~\ref{eq:TotalProbKL} is essentially the same as that in the proof of Lemma~\ref{Lem:Reduction}, analogously, up to and including Equation~\ref{eq:TotalProb}.
For completeness, we include the context for the specific notation we use in this proof.)

   It suffices to prove the result for deterministic algorithms, since both squared Hellinger distance and KL-divergence are linear with respect to mixtures of distributions \emph{with distinct outcomes}, and a randomized algorithm is simply a mixture of deterministic algorithms which also records which of the algorithms the random coins picked.
    
    A deterministic fully-adaptive algorithm is a decision tree, where each node is labeled by the identity of the coin the algorithm chooses to flip next conditioned on reaching this node, and each edge out of a node is labeled by a heads or tails result for this coin.
	We can view a run of the algorithm as follows: 1) first draw all the random coins from either $\rho \mathcal{A}+ (1-\rho) \mathcal{B}$ or $(\rho+\eps) \mathcal{A}+ (1-\rho-\eps) \mathcal{B}$ depending on the scenario, and then 2) flip these coins according to this fully-adaptive algorithm.
	After step 1, fixing the bias of each coin, the probability of ending up at the $i^\text{th}$ leaf of the decision tree is simply the probability (over coin flips) that every edge along the path from the root to that leaf is followed.
	Note that each edge is a probabilistic event depending on only one coin.
	Therefore, this probability can be factored into a product of probabilities, one term for each of the coins.
	For example, suppose the path to leaf $i$ involves coin $j$ returning 5 heads in a row, then getting some particular sequence from flipping some \emph{other} coins, then coin $j$ returning another 2 heads followed by 3 tails.
	Then, if coin $j$ has bias $p_j$, it contributes $p_j^{5+2}(1-p_j)^3$ to the probability product.
	
	We denote by $q^\mathcal{A}_{j,i}$ the \emph{expected} contribution of coin $j$ to the probability product for leaf $i$, over the randomness of $\mathcal{A}$ on the bias of coin $j$.
	In the previous example, $q^\mathcal{A}_{j,i}$ would be equal to $\Exp_{p \from \mathcal{A}}[p^7(1-p)^3]$.
	We similarly define $q^\mathcal{B}_{j,i}$. Explicitly, for leaf $i$ and coin $j$, $q^\mathcal{A}_{j,i}$ is the expectation (over $p$ drawn from $\mathcal{A}$) of $p$ to the exponent of the number of ``heads" edges on the path from the root to node $i$ in the decision tree, times $(1-p)$ to the exponent of the number of ``tails" edges on this path.
	
	To simplify notation, we also denote by $q^\mathcal{\rho}_{j,i}$ as the above probability product for coin $j$ and leaf $i$ when coin $j$ is drawn from the $\rho$-mixture, namely $\rho \mathcal{A}+ (1-\rho) \mathcal{B}$.
	Note that, by definition, $q^\mathcal{\rho}_{j,i} = \rho q^\mathcal{A}_{j,i} + (1-\rho)q^\mathcal{B}_{j,i}$.
	We use analogous notation for the $(\rho+\eps)$-mixture.
	
	Using this notation, the probability of the algorithm reaching leaf $i$, when the coins are sampled from distribution $\mathcal{A}$, would be $\prod_{\text{coin $j$}} q^\mathcal{A}_{j,i}$, since each coin is sampled from $\mathcal{A}$ independently, with $\prod_{\text{coin $j$}} q^\mathcal{B}_{j,i}$ being the respective probability for sampling from $\mathcal{B}$.
	The observation holds similarly for coin distribution that are the $\rho$-mixture or $(\rho+\eps)$-mixture of $\mathcal{A}$ and $\mathcal{B}$.
	
	Since the total probability of reaching all leaves $i$ must equal 1, this expression yields the immediate corollary, that for any collection of distributions $\mathcal{C}_j$ over $[0,1]$ indexed by coin $j$ (imagine $\mathcal{C}_j$ each being one of $\mathcal{A}$, $\mathcal{B}$, the $\rho$-mixture of the two or the $(\rho+\eps)$-mixture of the two)
	\begin{equation}
	\label{eq:TotalProbKL}
		\sum_{\text{leaf $i$}} \prod_\text{coin $j$} q^{\mathcal{C}_j}_{j,i} = 1
	\end{equation}
	
	We can now express the KL-divergence with the above notation.
	\begin{align*}
	    D_\mathrm{full} &= - \sum_{\text{leaf $i$}} \left(\prod_{\text{coin $j$}} q^{\rho}_{j,i} \right) \log \left(\frac{\prod_{\text{coin $k$}} q^{\rho+\eps}_{k,i}}{\prod_{\text{coin $k$}} q^{\rho}_{k,i}}\right)\\
	    &= - \sum_{\text{leaf $i$}} \left(\prod_{\text{coin $j$}} q^{\rho}_{j,i} \right) \sum_{\text{coin $k$}} \log \left(1+ \eps \frac{ q^\mathcal{A}_{k,i} - q^\mathcal{B}_{k,i}}{\rho q^\mathcal{A}_{k,i} + (1-\rho)q^\mathcal{B}_{k,i}}\right)
	\end{align*}
	where the second line follows from the definition of $q^\rho_{k,i} = \rho q^\mathcal{A}_{k,i} + (1-\rho)q^\mathcal{B}_{k,i}$.
	Further observe that the multiplier to $\eps$ in the second line is upper bounded by $1/\rho$ in magnitude.
	Since Taylor's theorem gives that $\log(1+x)=x-\Theta(x^2)$ for $x\leq 1$, we have when $\eps/\rho \leq 1$, that
	\begin{align*}
	    D_\mathrm{full} &= - \sum_{\text{leaf $i$}} \left(\prod_{\text{coin $j$}} q^{\rho}_{j,i} \right) \sum_{\text{coin $k$}} \left(\eps \frac{ q^\mathcal{A}_{k,i} - q^\mathcal{B}_{k,i}}{q^\mathcal{\rho}_{k,i}} - \Theta(\eps^2) \frac{(q^\mathcal{A}_{k,i} - q^\mathcal{B}_{k,i})^2}{(q^\mathcal{\rho}_{k,i})^2} \right)
	\end{align*}
    We can further simplify the expression by observing that for any fixed coin $k$,
    \begin{align*}
        \sum_{\text{leaf $i$}} \left(\prod_{\text{coin $j$}} q^{\rho}_{j,i} \right) \frac{q^\mathcal{A}_{k,i}}{q^\rho_{k,i}} &= \sum_{\text{leaf $i$}} \left(\prod_{\text{coin $j \neq k$}} q^{\rho}_{j,i} \right) q^\mathcal{A}_{k,i} = 1
    \end{align*}
    where the second equality is by Equation~\ref{eq:TotalProbKL}.
    This observation holds also when we replace the mixture $\mathcal{A}$ with the mixture $\mathcal{B}$.
    Therefore, we have $$ \sum_{\text{leaf $i$}} \left(\prod_{\text{coin $j$}} q^{\rho}_{j,i} \right) \sum_{\text{coin $k$}} \eps \frac{ q^\mathcal{A}_{k,i} - q^\mathcal{B}_{k,i}}{q^\mathcal{\rho}_{k,i}} = 0 $$
    meaning that
    $$ D_\mathrm{full} = \Theta(\eps^2) \sum_{\text{leaf $i$}} \left(\prod_{\text{coin $j$}} q^{\rho}_{j,i} \right) \sum_{\text{coin $k$}}  \frac{(q^\mathcal{A}_{k,i} - q^\mathcal{B}_{k,i})^2}{(q^\mathcal{\rho}_{k,i})^2} = 
    \sum_{\text{coin $k$}} \Theta(\eps^2) \sum_{\text{leaf $i$}} \left(\prod_{\text{coin $j$}} q^{\rho}_{j,i} \right)  \frac{(q^\mathcal{A}_{k,i} - q^\mathcal{B}_{k,i})^2}{(q^\mathcal{\rho}_{k,i})^2}$$
    
    It remains to show that the right hand side is bounded by $\sum_\text{coin $k$} H_k^2$, where, as in the lemma statement, $H_k^2$ is the squared Hellinger distance between a single run of the algorithm when coin $k$ is drawn either from the $\rho$-mixture of $\mathcal{A}$ and $\mathcal{B}$ or the $(\rho+\eps)$-mixture, and all other coins are simulated and simply drawn from the $\rho$-mixture.
    To see this, we write out what $H_k^2$ is, using the definition of squared Hellinger distance:
    \begin{align*}
        H_k^2 &= \sum_{\text{leaf $i$}} \left(\sqrt{q^\rho_{k,i}\prod_{\text{coin $j \ne k$}} q^\rho_{j,i}} - \sqrt{q^{\rho+\eps}_{k,i}\prod_{\text{coin $j \ne k$}} q^\rho_{j,i}}\right)^2\\
        &= \sum_{\text{leaf $i$}} \left(\prod_{\text{coin $j \ne k$}} q^\rho_{j,i}\right) \left(\sqrt{q^\rho_{k,i}} - \sqrt{q^{\rho+\eps}_{k,i}}\right)^2\\
        &= \sum_{\text{leaf $i$}} \left(\prod_{\text{coin $j \ne k$}} q^\rho_{j,i}\right) q^\rho_{k,i} \left(1 - \sqrt{\frac{q^{\rho+\eps}_{k,i}}{q^\rho_{k,i}}}\right)^2\\
        &= \sum_{\text{leaf $i$}} \left(\prod_{\text{coin $j \ne k$}} q^\rho_{j,i}\right) q^\rho_{k,i} \left(1 - \sqrt{1 + \eps\frac{q^\mathcal{A}_{k,i}-q^\mathcal{B}_{k,i}}{\rho q^\mathcal{A}_{k,i}+(1-\rho)q^\mathcal{B}_{k,i}}}\right)^2
    \end{align*}
    where the last line is again by definition that $q^\rho_{k,i} = \rho q^\mathcal{A}_{k,i}+(1-\rho)q^\mathcal{B}_{k,i}$.
    By reasoning we used above, as long as $\eps < \rho$, we have this expression being equal to
    \begin{align*}
        H_k^2 &= \sum_{\text{leaf $i$}} \left(\prod_{\text{coin $j \ne k$}} q^\rho_{j,i}\right) q^\rho_{k,i} \left( \Theta(\eps)\frac{q^\mathcal{A}_{k,i}-q^\mathcal{B}_{k,i}}{\rho q^\mathcal{A}_{k,i}+(1-\rho)q^\mathcal{B}_{k,i}}\right)^2\\
        &= \Theta(\eps^2) \sum_{\text{leaf $i$}} \left(\prod_{\text{coin $j \ne k$}} q^\rho_{j,i}\right) q^\rho_{k,i} \frac{(q^\mathcal{A}_{k,i}-q^\mathcal{B}_{k,i})^2}{(q^\rho_{k,i})^2}\\
        &= \Theta(\eps^2) \sum_{\text{leaf $i$}} \left(\prod_{\text{coin $j$}} q^\rho_{j,i}\right) \frac{(q^\mathcal{A}_{k,i}-q^\mathcal{B}_{k,i})^2}{(q^\rho_{k,i})^2}
    \end{align*}
    which is exactly the term in the sum over coin $k$ for $D_\mathrm{full}$, showing the lemma.
\end{proof}

For the lower bound proof at hand, we show Corollary~\ref{cor:BoundedLower} in the next subsection, which upper bounds the squared Hellinger distance for single-coin adaptive algorithms by a quantity that is proportional to the expected number of samples taken by the algorithm.

\begin{corollary}
	\label{cor:BoundedLower}
    Consider an arbitrary single-coin adaptive algorithm.	
	Let $H^2$ be the squared Hellinger distance between a single run of the algorithm where 1) a coin with bias $\frac{1}{2} + \Delta$ is used with probability $\rho$ and a coin with bias $\frac{1}{2} - \Delta$ is used otherwise, versus a run of the algorithm where 2) a coin with bias $\frac{1}{2} + \Delta$ is used with probability $\rho+\eps$ and a coin with bias $\frac{1}{2} - \Delta$ is used otherwise.
	Furthermore, let $\Exp_{\rho}[n]$ and $\Exp_{\rho+\frac{\eps}{2}}[n]$ be the expected number of coin flips during a run of the algorithm where we use a $\frac{1}{2}+\Delta$ coin with probability $\rho$ and $\rho+\frac{\eps}{2}$ respectively, and a $\frac{1}{2}-\Delta$ coin otherwise.
	If all of $\rho$, $\eps$, $\Delta$ and $\eps/\rho$ are smaller than some universal absolute constant, then
	$$ \max\left[\frac{H^2}{\Exp_{\rho}[n]}, \frac{H^2}{\Exp_{\rho+\frac{\eps}{2}}[n]}\right] = O\left(\frac{\eps^2\Delta^2}{\rho}\right) $$
\end{corollary}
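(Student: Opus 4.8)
The plan is to reduce $H^2$ to a purely single-coin statement, paying a factor $1/\rho$ along the way. Represent the single-coin algorithm as a triangular walk with stopping rule $\{\gamma_{n,k}\}$ and coefficients $\{\alpha_{n,k}\}$ as in Definition~\ref{Def:Alpha}; by the symmetrization of Section~\ref{sect:Framework} it suffices to work with the distribution over terminal states $(n,k)$. On a single coin of bias $\tfrac12\pm\Delta$, the probability of terminating at $(n,k)$ is $a_{n,k}:=\alpha_{n,k}(\tfrac12+\Delta)^k(\tfrac12-\Delta)^{n-k}$, respectively $b_{n,k}:=\alpha_{n,k}(\tfrac12-\Delta)^k(\tfrac12+\Delta)^{n-k}$, with $\sum_{n,k}a_{n,k}=\sum_{n,k}b_{n,k}=1$. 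Thus the terminal distribution in scenario~1 is $P_1(n,k)=\rho\,a_{n,k}+(1-\rho)\,b_{n,k}$ and in scenario~2 it is $P_2(n,k)=P_1(n,k)+\eps\,(a_{n,k}-b_{n,k})$.

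First I would pass to the $\chi^2$-divergence. Writing $1-H^2=\sum_{n,k}P_1\sqrt{1+\eps(a_{n,k}-b_{n,k})/P_1}$, applying $\sqrt{1+x}\ge 1+\tfrac x2-\tfrac{x^2}{2}$ (here $x=\eps(a_{n,k}-b_{n,k})/P_1\ge-\eps/(1-\rho)$ exceeds $-1$ in our regime), and using $\sum_{n,k}P_1=1$ together with $\sum_{n,k}(a_{n,k}-b_{n,k})=0$, gives
\[ H^2\ \le\ \frac{\eps^2}{2}\sum_{n,k}\frac{(a_{n,k}-b_{n,k})^2}{\rho\,a_{n,k}+(1-\rho)\,b_{n,k}}. \]
Since $\rho<\tfrac12$ we have $(1-\rho)\,b_{n,k}\ge\rho\,b_{n,k}$, hence $\rho\,a_{n,k}+(1-\rho)\,b_{n,k}\ge\rho(a_{n,k}+b_{n,k})\ge\tfrac\rho2(\sqrt{a_{n,k}}+\sqrt{b_{n,k}})^2$; combined with $(a_{n,k}-b_{n,k})^2=(\sqrt{a_{n,k}}-\sqrt{b_{n,k}})^2(\sqrt{a_{n,k}}+\sqrt{b_{n,k}})^2$, each summand is at most $\tfrac2\rho(\sqrt{a_{n,k}}-\sqrt{b_{n,k}})^2$. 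Using $\sum_{n,k}(\sqrt{a_{n,k}}-\sqrt{b_{n,k}})^2=2H^2_{\pm\Delta}$ we obtain
\[ H^2\ \le\ \frac{2\eps^2}{\rho}\,H^2_{\pm\Delta}, \]
where $H^2_{\pm\Delta}$ is the squared Hellinger distance between running the \emph{same} single-coin algorithm on a $\tfrac12+\Delta$ coin versus on a $\tfrac12-\Delta$ coin.

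It then remains to bound $H^2_{\pm\Delta}$, which is the role of Proposition~\ref{prop:BoundedLower}: for any almost-surely-terminating single-coin algorithm, $H^2_{\pm\Delta}=O\!\big(\Delta^2\cdot\min(\Exp_{+\Delta}[n],\Exp_{-\Delta}[n])\big)$, where $\Exp_{\pm\Delta}[n]$ is the expected number of flips on a $\tfrac12\pm\Delta$ coin. Granting this, since $\Exp_\rho[n]=\rho\,\Exp_{+\Delta}[n]+(1-\rho)\,\Exp_{-\Delta}[n]\ge(1-\rho)\,\Exp_{-\Delta}[n]\ge\tfrac12\min(\Exp_{+\Delta}[n],\Exp_{-\Delta}[n])$, and likewise $\Exp_{\rho+\eps/2}[n]\ge\tfrac12\min(\Exp_{+\Delta}[n],\Exp_{-\Delta}[n])$ (valid because $\rho+\tfrac\eps2\le\tfrac12$ since $\eps\le1-2\rho$), we conclude $H^2=O\!\big(\tfrac{\eps^2\Delta^2}{\rho}\big)\Exp_\rho[n]$ and $H^2=O\!\big(\tfrac{\eps^2\Delta^2}{\rho}\big)\Exp_{\rho+\eps/2}[n]$, which is exactly the claimed bound on the maximum of the two ratios. (The smallness hypotheses on $\rho,\eps,\Delta,\eps/\rho$ serve to keep the linearizations above, and those inside Proposition~\ref{prop:BoundedLower}, within their range of validity.)

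The main obstacle is Proposition~\ref{prop:BoundedLower}, whose underlying intuition is that each flip carries only $\Theta(\Delta^2)$ units of Hellinger-distinguishability, so $n$ flips carry $O(n\Delta^2)$; the difficulty is making this precise when the stopping time is adaptive. I would write $1-H^2_{\pm\Delta}=\Exp_{-\Delta}[\sigma^{2K-N}]$ with $\sigma=\sqrt{(1+2\Delta)/(1-2\Delta)}$ and note that $\sigma^{2K_t-t}$ is a supermartingale under the $\tfrac12-\Delta$ coin with $\Exp_{-\Delta}[\sigma^{2K_{t+1}-(t+1)}\mid\mathcal{F}_t]=(1-\mu)\,\sigma^{2K_t-t}$, where $\mu=1-\sigma^{-1}(1+2\Delta)=\Theta(\Delta^2)$; optional stopping then yields $H^2_{\pm\Delta}=\mu\cdot\Exp_{-\Delta}\!\big[\sum_{t=0}^{N-1}\sigma^{2K_t-t}\big]$. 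The crux—and the step I expect to require the most care—is showing $\Exp_{-\Delta}\!\big[\sum_{t<N}\sigma^{2K_t-t}\big]=O(\Exp_{-\Delta}[N])$: the per-flip weight $\sigma^{2K_t-t}$ is unbounded, so one must exploit that the negative drift of $2K_t-t$ under the $\tfrac12-\Delta$ coin, together with almost-sure termination, prevents the walk from accumulating large weight without incurring a commensurate number of further flips—most cleanly through a state-by-state comparison at each $(t,k)$ of the non-termination probability under the $\tfrac12-\Delta$ coin against that state's contribution to the sum—and then invoking the heads/tails symmetry of the instance to obtain the same estimate with $\Exp_{+\Delta}[n]$, hence with the minimum of the two.
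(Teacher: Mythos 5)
Your route is genuinely different from the paper's, and the skeleton is sound. The paper keeps the mixture structure throughout: its Proposition~\ref{prop:BoundedLower} bounds the mixture-vs-mixture Hellinger distance directly, via Lemma~\ref{lem:HellingerForm} and a three-region analysis of the triangle (central region, high-discrepancy region, last row with ``advice''), which is where all the technical work lives. You instead peel off the mixture at the start: the pointwise bound $\rho a_{n,k}+(1-\rho)b_{n,k}\ge\frac{\rho}{2}(\sqrt{a_{n,k}}+\sqrt{b_{n,k}})^2$ gives $H^2\le\frac{2\eps^2}{\rho}H^2_{\pm\Delta}$, reducing everything to the pure $\frac12+\Delta$ vs.\ $\frac12-\Delta$ distinguishing problem. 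This step is correct (it is lossy by a factor of $\rho$ in the central region, but only where there is slack), and it is a real simplification. One bookkeeping point: you cannot cite Proposition~\ref{prop:BoundedLower} for the claim $H^2_{\pm\Delta}=O(\Delta^2\min(\Exp_{+\Delta}[n],\Exp_{-\Delta}[n]))$ --- the paper's proposition is a statement about the two \emph{mixtures}, not about the two pure coins --- so this single-coin bound is a new lemma you must supply yourself.

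That single-coin bound is exactly where your proposal has a gap: your supermartingale identity $H^2_{\pm\Delta}=\mu\Exp_{-\Delta}\bigl[\sum_{t<N}\sigma^{2K_t-t}\bigr]$ is correct, but the crux inequality $\Exp_{-\Delta}\bigl[\sum_{t<N}\sigma^{2K_t-t}\bigr]=O(\Exp_{-\Delta}[N])$ is only gestured at; the difficulty that a stopping rule might kill low-weight paths and skew the surviving mass toward large $\sigma^{2K_t-t}$ is precisely the difficulty the paper spends Appendix~\ref{app:lower} on, so ``a state-by-state comparison'' is not a proof. Fortunately the lemma you need admits a one-line proof that bypasses this entirely: switch to KL divergence. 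The log-likelihood ratio of the transcript is a sum of i.i.d.\ per-flip terms over an almost-surely finite stopping time, so Wald's identity gives $\DKL(P_{+\Delta}\Vert P_{-\Delta})=\Exp_{+\Delta}[N]\cdot\DKL\bigl(\mathrm{Bern}(\tfrac12+\Delta)\,\Vert\,\mathrm{Bern}(\tfrac12-\Delta)\bigr)=O(\Delta^2)\Exp_{+\Delta}[N]$ for \emph{any} adaptive stopping rule, and symmetrically with the roles of $\pm\Delta$ swapped. Combining with $1-H^2=\Exp_P[e^{\frac12\log(Q/P)}]\ge e^{-\frac12\DKL(P\Vert Q)}$, i.e.\ $H^2\le\frac12\DKL(P\Vert Q)$ (and its mirror image), yields $H^2_{\pm\Delta}\le O(\Delta^2)\min(\Exp_{+\Delta}[N],\Exp_{-\Delta}[N])$, which also retroactively proves your crux inequality. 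With that lemma in hand, your concluding comparison $\Exp_{\rho}[n],\Exp_{\rho+\eps/2}[n]\ge\frac12\min(\Exp_{+\Delta}[n],\Exp_{-\Delta}[n])$ is fine, and the corollary follows. The trade-off versus the paper: your argument is far shorter and does not need the powers-of-2 restriction, the advice model, or the region decomposition, but it proves only the corollary in the form stated, whereas the paper's machinery establishes the stronger Proposition~\ref{prop:BoundedLower} in the advice model, which it reuses elsewhere.
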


Using Corollary~\ref{cor:BoundedLower} and Lemma~\ref{Lem:ReductionKL}, we now complete the proof of the main high probability indistinguishability result (Theorem~\ref{thm:UnknownLower}) for fully-adaptive algorithms.
We note again that Lemma~\ref{Lem:Reduction}, which is applicable to more general coin populations with fewer restrictions than Lemma~\ref{Lem:ReductionKL}, can be used to derive a constant probability sample complexity lower bound with essentially the same proof as follows, with the exception that we would use the Hellinger distance inequality in Fact~\ref{fact:statdist} instead of the high-probability Pinsker inequality.

\begin{proof}[Proof of Theorem~\ref{thm:UnknownLower}]

Letting $\mathcal{A}$ of be a population of coins that all have $\frac{1}{2}+\Delta$ probability, with $\mathcal{B}$ a population of coins that all have $\frac{1}{2}-\Delta$ probability, our goal is to show the indistinguishability of $\rho\mathcal{A}+(1-\rho)\mathcal{B}$ from $(\rho+\eps)\mathcal{A}+(1-\rho-\eps)\mathcal{B}$. We apply Lemma~\ref{Lem:ReductionKL} and use the lemma's conclusion, that  $D_\mathrm{full} = O\left(\sum_\text{coin $i$} H_i^2\right)$.

Next, for each $i$, the quantity $H_i^2$ of Lemma~\ref{Lem:ReductionKL} describes the squared Hellinger distance between an induced \emph{single-coin} algorithm run on a single coin from scenario $A$ versus $B$ respectively (with the remaining coins being simulated, from scenario $A$ with a $\rho$-fraction mixture). We thus bound $H_i^2$ from Corollary~\ref{cor:BoundedLower}. As in the corollary, let $\Exp_{i,\rho}[n]$ denote the expected number of samples from coin $i$ when running the induced algorithm (for coin $i$) on a mixture that uses a $\frac{1}{2}+\Delta$ coin with probability $\rho$ and a $\frac{1}{2}-\Delta$ coin otherwise. Thus Corollary~\ref{cor:BoundedLower} yields that $H_i^2 = O(\frac{\eps^2\Delta^2}{\rho})\cdot\Exp_{i,\rho}[n]$. Summing, combined with the result from Lemma~\ref{Lem:ReductionKL} above, yields \[D_\mathrm{full} \le  O\left(\frac{\eps^2\Delta^2}{\rho}\right)\cdot \sum_\textnormal{coin $i$}{\Exp}_{i,\rho}[n]  \]

Crucially, the sum (over choice of coin $i$) of the expected number of flips ${\Exp}_{i,\rho}[n]$ (when running the algorithm induced for coin $i$) can be viewed in a different way: the $i^\textrm{th}$ term is exactly the expected number of times that coin $i$ is flipped when running the overall algorithm where every coin is drawn from the $\rho$-fraction mixture in scenario $A$. Namely, this sum counts the total expected number of coin flips (across all coins $i$), for the algorithm run in the setting where all coins are drawn from the $\rho$-fraction mixture.
Thus, for an algorithm that uses $o(\frac{\rho}{\eps^2\Delta^2}\log\frac{1}{\delta})$ flips in expectation, we conclude that \[D_\mathrm{full} \le  O\left(\frac{\eps^2\Delta^2}{\rho}\right)\cdot o\left(\frac{\rho}{\eps^2\Delta^2}\log\frac{1}{\delta}\right)=o\left(\log\frac{1}{\delta}\right) \]

We conclude by using the high-probability Pinsker inequality. In the notation of the inequality, given an algorithm that attempts to classify whether it is in scenario $A$ or $B$, let $P,Q$ respectively be the distributions of its output in scenarios $A,B$ respectively; let $E$ be the event that the algorithm outputs ``scenario $B$". Then the probability that the algorithm is wrong is $P(E)+Q(\bar{E})$. By the high-probability Pinsker inequality this failure probability is at least $\frac{1}{2}e^{-\DKL(P||Q)}\ge \frac{1}{2}e^{-D_\mathrm{full}} \ge \frac{1}{2} e^{-o(\log\frac{1}{\delta})} =\frac{1}{2} \delta^{o(1)} \gg \delta$ as desired, where the first inequality is the data processing inequality for KL-divergence.
\end{proof}

\subsection{Upper Bounding the Squared Hellinger Distance for Single-Coin Adaptive Algorithms}
\label{sect:SingleCoinLower}

In this section, we prove Corollary~\ref{cor:BoundedLower}, though significant technical details are deferred to the appendix. Explicitly, we analyze a simplified scenario in Proposition~\ref{prop:BoundedLower}, after discussing why each of the simplifying assumptions does not give up generality, and cannot affect the key ``squared Hellinger distance per sample" quantity by more than a constant factor.

\begin{enumerate}
    \item Consider a single-coin algorithm. We restrict our attention to algorithms that only stop once they have seen a number of coin flips that is exactly a power of 2. Any stopping rule $S$ that potentially stops in between powers of 2 could be converted into an almost-equivalent rule $S'$ by collecting coin flips up to the next power of 2 and discarding them as necessary so as to simulate $S$: this will sacrifice at most a factor of 2 in sample complexity, and can only increase our Hellinger distance (since discarding data is a form of ``data processing" and thus we may apply the data processing inequality). Thus the new $S'$ will have ``squared Hellinger distance per sample" at least half that of $S$.
    \item By standard symmetrization arguments, a single-coin algorithm can always be implemented such that decisions only depend on the \emph{number} of flips for a coin as well as the \emph{number of observed ``heads"}, as opposed to the explicit \emph{sequence} of heads/tails observations. Thus we restrict our attention to stopping rules in the sense of Algorithm~\ref{Alg:TriangularWalk}, specified in full generality by a triangle of stopping coefficients $\{\gamma_{n,k}\}$.

    \item There is in some sense a ``phase change" once an algorithm has received $\Omega(\frac{1}{\Delta^2})$ samples from a single coin: after this point, the algorithm might have good information about whether the coin is of type $\frac{1}{2}+\Delta$ versus type $\frac{1}{2}-\Delta$, and might productively make subtle adaptive decisions after this point. We restrict our analysis to the regime where \emph{no coin is flipped more than $10^{-8}/\Delta^2$ times}: formally, we show an impossibility result in the following stronger setting, where we assume that whenever a single coin is flipped $10^{-8}/\Delta^2$ times, then the coin's true bias (either $\frac{1}{2}+\Delta$ or $\frac{1}{2}-\Delta$) is \emph{immediately} revealed to the algorithm.
Thus any coin flips beyond $10^{-8}/\Delta^2$ that an algorithm desires can instead be simulated at no cost.

Formally, an impossibility result in this setting with ``advice" (Proposition~\ref{prop:BoundedLower}) implies the analogous result in the original setting (Corollary~\ref{cor:BoundedLower}) by the data processing inequality for Hellinger distance (since Hellinger distance is an $f$-divergence): simulating additional coin flips in terms of ``advice" is itself ``data processing", and thus can only decrease the Hellinger distance. Thus the setting without advice has smaller-or-equal Hellinger distance, and uses greater-or-equal number of samples, and hence the bound on their ratio in Proposition~\ref{prop:BoundedLower} implies the corresponding bound in Corollary~\ref{cor:BoundedLower}.

\end{enumerate}

\begin{proposition}
	\label{prop:BoundedLower}
	Consider an arbitrary stopping rule $\{\gamma_{n,k}\}$ that 1) is non-zero only for $n$ that are powers of 2, and 2) $\gamma_{10^{-8}/\Delta^2,k} = 1$ for all $k$, that is the random walk always stops if it reaches $10^{-8}/\Delta^2$ coin flips.
	Suppose that given a coin, after a random walk on the Pascal triangle according to the stopping rule, the position $(n,k)$ that the walk ended at is always revealed, and furthermore, if $n = 10^{-8}/\Delta^2$, then the bias of the coin is also revealed.
	Let $H^2$ be the squared Hellinger distance between a single run of the above process where 1) a coin with bias $\frac{1}{2} + \Delta$ is used with probability $\rho$ and a coin with bias $\frac{1}{2} - \Delta$ is used otherwise versus 2) a coin with bias $\frac{1}{2} + \Delta$ is used with probability $\rho+\eps$ and a coin with bias $\frac{1}{2} - \Delta$ is used otherwise.
	Furthermore, let $\Exp_{\rho}[n]$ and $\Exp_{\rho+\frac{\eps}{2}}[n]$ be the expected number of coin flips during a run of the algorithm where we use a $\frac{1}{2}+\Delta$ coin with probability $\rho$ and $\rho+\frac{\eps}{2}$ respectively, and a $\frac{1}{2}-\Delta$ coin otherwise.
	If all of $\rho$, $\eps$, $\Delta$ and $\eps/\rho$ are smaller than some universal absolute constant, then
	$$ \max\left[\frac{H^2}{\Exp_{\rho}[n]}, \frac{H^2}{\Exp_{\rho+\frac{\eps}{2}}[n]}\right] = O\left(\frac{\eps^2\Delta^2}{\rho}\right) $$
\end{proposition}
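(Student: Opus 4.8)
The plan is to compute $H^2$ in closed form from the stopping rule and the two coin biases $p_\pm=\tfrac12\pm\Delta$, and then bound it termwise against the expected number of flips. Write $A_{n,k}=p_+^k(1-p_+)^{n-k}$, $B_{n,k}=p_-^k(1-p_-)^{n-k}$, let $\{\alpha_{n,k}\}$ be the termination coefficients of Definition~\ref{Def:Alpha}, put $N=10^{-8}/\Delta^2$, and let $r_+=\sum_k\alpha_{N,k}A_{N,k}$, $r_-=\sum_k\alpha_{N,k}B_{N,k}$ be the probabilities that a pure $p_+$- (resp.\ $p_-$-) coin walk reaches depth $N$. The observed outcome is the termination state $(n,k)$, together with the revealed bias whenever $n=N$. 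Applying $H^2=1-\sum\sqrt{P_1P_2}$, the identity $\tfrac{a+b}2-\sqrt{ab}=\tfrac12(\sqrt a-\sqrt b)^2$, and the fact that each scenario's outcome probabilities sum to $1$, one gets
\[
H^2=\sum_{n<N,k}\alpha_{n,k}\tfrac12\bigl(\sqrt{m^\rho_{n,k}}-\sqrt{m^{\rho+\eps}_{n,k}}\bigr)^2+r_+\cdot\tfrac12\bigl(\sqrt\rho-\sqrt{\rho+\eps}\bigr)^2+r_-\cdot\tfrac12\bigl(\sqrt{1-\rho}-\sqrt{1-\rho-\eps}\bigr)^2,
\]
where $m^x_{n,k}=xA_{n,k}+(1-x)B_{n,k}$. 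Since $\eps<\rho$, these three terms are of order $\eps^2\sum_{n<N,k}\alpha_{n,k}(A_{n,k}-B_{n,k})^2/m^\rho_{n,k}$, $\tfrac{\eps^2}{\rho}r_+$, and $\eps^2 r_-$ respectively. On the denominator side I will use only the robust bounds $\Exp_\rho[n]\ge(1-\rho)\Exp_{p_-}[n]\ge\tfrac12\sum_{j<N}\Pr_{p_-}[\text{walk alive at depth }j]$ and $\Exp_\rho[n]\ge\rho\,\Exp_{p_+}[n]$, together with $\Exp_{p_\pm}[n]\ge N r_\pm$; the same bounds hold for $\Exp_{\rho+\eps/2}[n]$, so it suffices to treat $\Exp_\rho[n]$.

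\textbf{The key tool, and the easy terms.}
The workhorse is the likelihood ratio $L_{n,k}=A_{n,k}/B_{n,k}=\bigl(\tfrac{1+2\Delta}{1-2\Delta}\bigr)^{2k-n}$: a one-line computation shows its one-step multiplier has mean exactly $1$ under the $p_-$-coin walk (so $L$ is a martingale) and relative variance $\tfrac{16\Delta^2}{1-4\Delta^2}$, whence $\Exp_{p_-}[(L_n-1)^2]=(1+\tfrac{16\Delta^2}{1-4\Delta^2})^n-1$. The decisive point is that since $N\Delta^2=10^{-8}$ is a \emph{tiny absolute constant}, this quantity is $O(\Delta^2 n)$ and stays below $O(10^{-8})$ for every depth $n\le N$; equivalently the $p_+$- and $p_-$-walks truncated at depth $N$ have $\chi^2$-divergence $O(10^{-8})$ and hence total variation $O(10^{-4})$ --- informally, in this shallow regime the walk simply cannot tell a $\tfrac12+\Delta$ coin from a $\tfrac12-\Delta$ coin. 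This immediately dispatches two of the three terms. First, using $m^\rho_{n,k}\ge(1-\rho)B_{n,k}\ge\tfrac12 B_{n,k}$ and $(m^\rho-m^{\rho+\eps})^2=\eps^2(A-B)^2$, the first term is $\le\eps^2\,\Exp_{p_-}\!\bigl[(L_{n_{\rm term}}-1)^2\mathbf 1_{n_{\rm term}<N}\bigr]$; bounding this via optional stopping on the submartingale $(L_n-1)^2$, whose compensator increments are $\Theta(\Delta^2)L_j^2$, together with the change of measure $\Exp_{p_-}[L_j\,g]=\Exp_{p_+}[g]$ for $\mathcal F_j$-measurable $g$ applied to $L_j^2=2L_j-1+(L_j-1)^2$, yields a bound of the form $O(\Delta^2)\bigl(\Exp_{p_+}[n]+\Exp_{p_-}[n]\bigr)\eps^2$, which is $O(\tfrac{\eps^2\Delta^2}{\rho})\Exp_\rho[n]$ since $\Exp_\rho[n]\ge\max(\rho\Exp_{p_+}[n],\tfrac12\Exp_{p_-}[n])$. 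Second, $\eps^2 r_-\le\eps^2\Exp_{p_-}[n]/N=10^{8}\Delta^2\eps^2\Exp_{p_-}[n]=O(\tfrac{\eps^2\Delta^2}{\rho})\Exp_\rho[n]$, again because $N=10^{-8}/\Delta^2$.

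\textbf{The delicate term, and the main obstacle.}
The genuinely subtle term is $\tfrac{\eps^2}{\rho}r_+$: the factor $1/\rho$ is dangerous because $\Exp_\rho[n]$ controls $\Exp_{p_+}[n]$ only up to a factor $\rho$, so the naive bound $r_+\le\Exp_{p_+}[n]/N$ loses exactly one power of $\rho$. The resolution must exploit that, on the shallow tree, the rare $p_+$-coins behave almost identically to the abundant $p_-$-coins, so the $p_-$-coins (carrying $\approx 1-\rho$ of the weight) spend essentially as many flips as the $p_+$-coins; concretely, for every depth $j\le N$,
\[
\Pr_{p_+}[\text{alive at }j]-\Pr_{p_-}[\text{alive at }j]\le\sqrt{\Exp_{p_-}[(L_j-1)^2]}\cdot\sqrt{\Pr_{p_-}[\text{alive at }j]}=O(\Delta\sqrt{j})\cdot\sqrt{\Pr_{p_-}[\text{alive at }j]},
\]
and summing over $j<N$ (Cauchy--Schwarz, $\sum_{j<N}\Delta^2 j=O(\Delta^2N^2)=O(10^{-8}N)$) gives $\Exp_{p_+}[n]\le\Exp_{p_-}[n]+O(\sqrt N)\sqrt{\Exp_{p_-}[n]}$, hence $r_+\le\Exp_{p_+}[n]/N=O(\Delta^2)\Exp_{p_-}[n]$ whenever $\Exp_{p_-}[n]\gtrsim N$, which makes $\tfrac{\eps^2}{\rho}r_+=O(\tfrac{\eps^2\Delta^2}{\rho})\Exp_\rho[n]$ in that regime. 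The remaining regime --- where the stopping rule terminates the $p_-$-coin so quickly that $\Exp_{p_-}[n]\ll N$ --- needs a separate argument: at shallow depths $j\lesssim1/\Delta$ one has the \emph{multiplicative} comparison $\Pr_{p_+}[\text{alive at }j]\le2\Pr_{p_-}[\text{alive at }j]$ (the per-flip likelihood ratio $\tfrac{1+2\Delta}{1-2\Delta}$ is within a constant of $1$, so $L_{n,k}\le2$ for $n\lesssim1/\Delta$), while at deeper levels each level is either ``good'' (the same factor-$2$ comparison holds) or forces $\Pr_{p_+}[\text{alive at }j]\le O(\Delta^2 j)\le O(10^{-8})$. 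Combining these cases to show $r_+\le O(\Delta^2)\Exp_{p_-}[n]$ uniformly over all stopping rules --- i.e., that no rule can push a positive coin deep without paying for it in the abundant negative coins --- is the main technical obstacle, and is precisely what the $10^{-8}$ slack in the definition of $N$ buys room for; the universal constants in the final $O(\cdot)$ come out correspondingly large (powers of $10^{8}$) but harmless. Having bounded all three terms of $H^2$ by $O(\tfrac{\eps^2\Delta^2}{\rho})\Exp_\rho[n]$, and similarly $O(\tfrac{\eps^2\Delta^2}{\rho})\Exp_{\rho+\eps/2}[n]$, the stated bound on the maximum of the two ratios follows.
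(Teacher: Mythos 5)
Your decomposition of $H^2$ into an internal-node term, an $r_+$ term and an $r_-$ term is correct and matches the quantity the paper bounds via Lemma~\ref{lem:HellingerForm}; your dispatch of the $r_-$ term, and the observation that $N\Delta^2=10^{-8}$ keeps the pure $p_+$- and $p_-$-walks nearly indistinguishable, are both sound. However, the argument has a genuine gap at precisely the step you yourself flag as ``the main technical obstacle,'' and the same gap also sits inside your treatment of the first term, so neither of the two hard pieces is actually established.

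For the first term, optional stopping plus the change of measure gives the identity $\Exp_{p_-}[(L_\tau-1)^2]=\Theta(\Delta^2)\sum_{j}\Exp_{p_+}[L_j\mathbf{1}_{\tau>j}]$; equivalently, your decomposition $L_j^2=2L_j-1+(L_j-1)^2$ leaves the residual $\sum_j\Exp_{p_-}[(L_j-1)^2\mathbf{1}_{\tau>j}]$. To conclude you must replace $\Exp_{p_+}[L_j\mathbf{1}_{\tau>j}]$ by $O(\Pr_{p_+}[\tau>j]+\Pr_{p_-}[\tau>j])$, and this fails whenever the stopping rule concentrates the surviving mass where $L_j\gg1$; Cauchy--Schwarz with the moment bound $\Exp_{p_-}[L_j^4]=1+O(10^{-8})$ only yields $\Exp_{p_-}[(L_\tau-1)^2]=O(\Delta^2\sqrt{N\Exp[\tau]})$, which beats the target $O(\Delta^2\Exp[\tau])$ only when $\Exp[\tau]=\Omega(N)$. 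The same issue defeats the $r_+$ term: the inequality $r_+\le O(\Delta^2)\Exp_{p_-}[n]$ you aim for cannot be deduced from closeness of the two path measures (that only gives $r_+\le r_-+O(10^{-4})\sqrt{r_-}$, which is vacuous when $r_-$ is tiny while $r_+/r_-$ is large, as happens for rules that keep alive only far-right paths), and your dichotomy ``each level is either good or forces $\Pr_{p_+}[\text{alive at }j]\le O(\Delta^2 j)$'' is asserted without proof. What is actually needed---and what the paper supplies in Lemmas~\ref{lem:ScaryTailBoostPos}, \ref{lem:ScaryTailBoostNeg}, \ref{lem:ProbBoost} and \ref{lem:SampleBoost}---is a multi-scale induction over the power-of-2 checkpoints: if mass $c$ reaches a position $T$ standard deviations to the right at row $2^{J+1}$, then at some earlier checkpoint $2^{j}$ an $e^{\Omega(T^2/2^J)}$-times larger mass must still be alive, forcing $\Exp[n]\ge 2^{j-1}\,c\,e^{\Omega(T^2/2^J)}$. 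This \emph{exponential} trade-off between right-tail mass and expected sample complexity is the missing ingredient in both of your hard cases; without it the proposition is not proved.
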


It remains to prove Proposition~\ref{prop:BoundedLower}.
For the rest of the section, we shall use the notation $h^+_{n,k} = (\frac{1}{2}+\Delta)^k(\frac{1}{2}-\Delta)^{n-k}$ and $h^-_{n,k} = (\frac{1}{2}-\Delta)^k(\frac{1}{2}+\Delta)^{n-k}$ for convenience.
The proofs for upper bounding $\frac{H^2}{\Exp_{\rho}[n]}$ and $\frac{H^2}{\Exp_{\rho+\frac{\eps}{2}}[n]}$ are essentially the same, and here we give the high-level outline of the proof for bounding the latter, with calculations and details in Appendix~\ref{app:lower}.

The first step in the proof is the following lemma that writes out the squared Hellinger distance induced by a given stopping rule $\{\gamma_{n,k}\}$, whose proof can be found in the appendix. The expression in Lemma~\ref{lem:HellingerForm} avoids square roots and in other ways simplifies aspects of the squared Hellinger distance by estimating terms to within a constant factor, which is folded into a multiplicative ``big-$\Theta$" term at the start of the expression. The two lines in the expression below capture the different forms of the Hellinger distance for stopping \emph{before} the last row versus \emph{at} the last row---recall that we prove impossibility under the stronger model where, upon reaching the last row the algorithm receives the true bias of the coin (as ``advice"). Thus the squared Hellinger distance coefficients from elements of the last row are typically much larger than for other rows, capturing the cases when this advice is valuable. Recall from Definition~\ref{Def:Alpha} that $\{\alpha_{n,k}\}$ is defined from the stopping rule $\{\gamma_{n,k}\}$, so that when multiplied by $h^+_{n,k}$ or $h^-_{n,k}$ respectively, it equals the probability of encountering $(n,k)$ without necessarily stopping there, in the cases of positive and negative bias respectively.
\begin{lemma}
\label{lem:HellingerForm}
Consider the two probability distributions in Proposition~\ref{prop:BoundedLower} over locations $(n,k)$ in the Pascal triangle of depth $10^{-8}/\Delta^2$ and bias $p \in \{\frac{1}{2}\pm\Delta\}$, generated by the given stopping rule $\{\gamma_{n,k}\}$ in the two cases 1) a coin with bias $\frac{1}{2} + \Delta$ is used with probability $\rho$ and a coin with bias $\frac{1}{2} - \Delta$ is used otherwise versus 2) a coin with bias $\frac{1}{2} + \Delta$ is used with probability $\rho+\eps$ and a coin with bias $\frac{1}{2} - \Delta$ is used otherwise.
If $\eps/\rho$ is smaller than some universal constant, then the squared Hellinger distance between these two distributions can be written as

\begin{align*}
	\Theta(\eps^2) \Biggl[&\sum_{n < \frac{10^{-8}}{\Delta^2}, k \in [0..n]} \alpha_{n,k} ((\rho+\frac{\eps}{2}) h^+_{n,k} + (1-\rho-\frac{\eps}{2}) h^-_{n,k}) \frac{(h^+_{n,k}-h^-_{n,k})^2}{(\rho h^+_{n,k} + (1-\rho) h^-_{n,k})^2}\\
	+& \sum_{n = \frac{10^{-8}}{\Delta^2}, k \in [0..n]} \alpha_{n,k} ((\rho+\frac{\eps}{2}) h^+_{n,k} + (1-\rho-\frac{\eps}{2}) h^-_{n,k}) \frac{\frac{h^+_{n,k}}{\rho} + h^-_{n,k}}{\rho h^+_{n,k} + (1-\rho) h^-_{n,k}} \Biggr]
\end{align*}
\end{lemma}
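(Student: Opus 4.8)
The plan is to unwind the definition of squared Hellinger distance directly on the (truncated, advice-augmented) outcome space and estimate every term to within a constant factor, the constants being absorbed into the leading $\Theta(\eps^2)$. Write $L := 10^{-8}/\Delta^2$. The revealed process has two kinds of outcomes: a location $(n,k)$ with $n<L$, and a pair $(L,k,\pm)$ consisting of a last-row location together with the revealed bias sign. Let $P_1$ and $P_2$ denote the distributions over outcomes in scenario~1 (a $\tfrac12+\Delta$ coin with probability $\rho$) and scenario~2 (probability $\rho+\eps$) respectively; abbreviate $a_{n,k} := \rho h^+_{n,k}+(1-\rho)h^-_{n,k}$ and $m_{n,k} := (\rho+\tfrac\eps2)h^+_{n,k}+(1-\rho-\tfrac\eps2)h^-_{n,k}$. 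By Definition~\ref{Def:Alpha}, for $n<L$ we have $P_1(n,k)=\alpha_{n,k}a_{n,k}$ and $P_2(n,k)=\alpha_{n,k}(a_{n,k}+\eps(h^+_{n,k}-h^-_{n,k}))$, while $P_b(L,k,+)=\alpha_{L,k}\rho_b h^+_{L,k}$ and $P_b(L,k,-)=\alpha_{L,k}(1-\rho_b)h^-_{L,k}$ with $\rho_1=\rho,\rho_2=\rho+\eps$; the normalization $\sum_{n,k}\alpha_{n,k}p^k(1-p)^{n-k}=1$ from Definition~\ref{Def:Alpha} confirms $P_1,P_2$ are distributions. I would then write $H^2=\tfrac12\sum_\omega(\sqrt{P_1(\omega)}-\sqrt{P_2(\omega)})^2$ and use the identities $(\sqrt x-\sqrt y)^2=(x-y)^2/(\sqrt x+\sqrt y)^2$ and $(\sqrt x+\sqrt y)^2=\Theta(x+y)$ (valid for all $x,y\ge 0$). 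This collapses the whole computation to: $(P_1-P_2)^2$, which is exactly $\eps^2$ times a clean $\alpha^2h^2$ expression in each case, divided by $P_1+P_2$ estimated to within constants.

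\textbf{The rows $n<L$.} Here $P_1+P_2=2\alpha_{n,k}m_{n,k}$ exactly and $(P_1-P_2)^2=\eps^2\alpha_{n,k}^2(h^+_{n,k}-h^-_{n,k})^2$, so the contribution of $(n,k)$ to $H^2$ is $\Theta(\eps^2)\,\alpha_{n,k}(h^+_{n,k}-h^-_{n,k})^2/m_{n,k}$. To put this in the form of the first line of the lemma I replace $1/m_{n,k}$ by $m_{n,k}/a_{n,k}^2$, which costs only a constant provided $m_{n,k}=\Theta(a_{n,k})$ --- and this is precisely where the hypothesis $\eps/\rho=O(1)$ is used: from $a_{n,k}\ge\rho h^+_{n,k}$ and $a_{n,k}\ge(1-\rho)h^-_{n,k}\ge\tfrac12 h^-_{n,k}$ one gets $|h^+_{n,k}-h^-_{n,k}|\le\max(h^+_{n,k},h^-_{n,k})\le a_{n,k}/\rho$, hence $m_{n,k}=a_{n,k}+\tfrac\eps2(h^+_{n,k}-h^-_{n,k})\in(1\pm\tfrac{\eps}{2\rho})a_{n,k}=\Theta(a_{n,k})$. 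Summands with $\alpha_{n,k}=0$ vanish on both sides, and the hypothesis $\Delta<\tfrac12$ keeps $h^\pm_{n,k}>0$ so nothing is divided by zero.

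\textbf{The last row, and assembly.} For $(L,k,+)$ we have $P_1+P_2=(2\rho+\eps)\alpha_{L,k}h^+_{L,k}=\Theta(\rho\,\alpha_{L,k}h^+_{L,k})$ and $(P_1-P_2)^2=\eps^2\alpha_{L,k}^2(h^+_{L,k})^2$, giving a contribution $\Theta(\eps^2)\,\alpha_{L,k}h^+_{L,k}/\rho$; for $(L,k,-)$, since $1-\rho,1-\rho-\eps=\Theta(1)$ (as $\eps=O(\rho)$ and $\rho$ is small), the contribution is $\Theta(\eps^2)\,\alpha_{L,k}h^-_{L,k}$. Summing the two last-row outcomes at index $k$ yields $\Theta(\eps^2)\,\alpha_{L,k}(h^+_{L,k}/\rho+h^-_{L,k})$, and multiplying and dividing by $m_{L,k}/a_{L,k}=\Theta(1)$ (same estimate as before) rewrites it as the summand of the second line. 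Adding up all contributions --- all nonnegative, all with uniform hidden constants --- gives $H^2=\Theta(\eps^2)$ times the bracketed sum in the statement.

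\textbf{Main obstacle.} The argument is conceptually routine once the outcome space is set up correctly; the single load-bearing step is the constant-factor control of the denominators $(\sqrt{P_1}+\sqrt{P_2})^2$, i.e.\ the interchangeability of $m_{n,k}$ with $a_{n,k}$ and of $\rho+\eps$ with $\rho$, which rests entirely on $\eps/\rho$ being small. Without it one could not absorb the $\sqrt{\cdot}$ nonlinearity into a $\Theta$, and in fact the positive-coin last-row outcomes would contribute Hellinger mass not governed by $\eps^2$. The remainder --- the case bookkeeping across the two kinds of rows, and verifying the various ``$\Theta(1)$'' claims quantitatively --- is the routine content deferred to the appendix.
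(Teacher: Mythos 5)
Your proposal is correct and follows the same overall route as the paper's proof: decompose the outcome space into the interior locations $(n,k)$ with $n<L$ and the advice-augmented last-row outcomes $(L,k,\pm)$, compute the per-outcome squared Hellinger contribution exactly, estimate it to within universal constants, and use $\eps/\rho=O(1)$ to interchange the $\rho$-mixture denominator with the $(\rho+\frac{\eps}{2})$-mixture appearing in the statement. The one place you genuinely diverge is the per-term estimate: the paper factors out $P_1$ and Taylor-expands $\sqrt{1+x}$ with $x=\eps(h^+_{n,k}-h^-_{n,k})/a_{n,k}$ (whose magnitude is at most $\eps/\rho$), whereas you use the exact identity $(\sqrt{x}-\sqrt{y})^2=(x-y)^2/(\sqrt{x}+\sqrt{y})^2$ together with $(\sqrt{x}+\sqrt{y})^2=\Theta(x+y)$, which holds for all nonnegative $x,y$ with no smallness assumption. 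Your version is marginally cleaner---it avoids carrying a $\Theta(x^2)$ remainder inside a square---and it isolates the use of the $\eps/\rho$ hypothesis to exactly the step where $m_{n,k}$ is traded for $a_{n,k}$ (and $2\rho+\eps$ for $\rho$); the paper needs the same hypothesis in the same place for the same reason. Your supporting estimates, in particular $|h^+_{n,k}-h^-_{n,k}|\le\max(h^+_{n,k},h^-_{n,k})\le a_{n,k}/\rho$ using $\rho\le\frac{1}{2}$, and the $\Theta(1)$ bounds on the last-row normalizations, are all correct, so the argument goes through.
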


Intuitively, Lemma~\ref{lem:HellingerForm} breaks up the squared Hellinger distance into its contributions from each location $(n,k)$ in the triangle, with the  coefficient $\alpha_{n,k}$ depending on the stopping rule (proportional to the algorithm's probability of stopping at location $(n,k)$\,), and the remaining portion of the expression depending only on $n,k,\Delta,\rho$, with the $\eps$ dependence already factored out in the initial $\Theta(\eps^2)$ term.

The rest of the analysis uses the above tools to upper bound the squared Hellinger distance per sample.
For concrete details and calculations, please refer to Appendix~\ref{app:lower}.
The high level idea of the analysis is to split the expression of Lemma~\ref{lem:HellingerForm} for the total squared Hellinger distance per sample into three components, with the contribution from each location $(n,k)$ assigned to either 1) the last row $n = \frac{10^{-8}}{\Delta^2}$, 2) a ``high discrepancy region" where $h^+_{n,k}/h^-_{n,k} \ge 1/\rho^{0.1}$ which is towards the right of the triangle, potentially contributing large amounts to the squared Hellinger distance and 3) a ``central" region that is the rest of the triangle.
The last row, because of the nature of ``advice", clearly needs its own analysis.
As for the rest of the triangle, we divide it into the ``central" and ``high discrepancy" regions, and bound their contributions to the squared Hellinger distance per sample using different strategies.
For the central region, the key insight is that the squared Hellinger distance term is bounded by a well-behaved quadratic function in that region.
On the other hand, for the high discrepancy region, the key observation is that the region is defined such that it is a large number of standard deviations away from where a non-stopping random walk on the Pascal triangle should concentrate, and thus it is very unlikely for the algorithm to enter that region.
We take additional care to show that, for any stopping rule used by any algorithm, it \emph{cannot} sufficiently skew the distribution of where the walk ends up---for example, while the distribution might skew to the right if the algorithm stops whenever it enters the ``left" side of the triangle, we show that this cannot significantly save on expected sample complexity nor substantially increase the squared Hellinger distance per sample.
The analysis for the high discrepancy region makes crucial use of our simplifying assumption that the stopping rule \emph{only} stops at powers of 2 coin flips, letting us analyze large sequences of coin flips at a time, where we may take advantage of the tight concentration of the Binomial distribution over sufficiently many coin flips to bound the effect of any skewing-towards-the-right that can be introduced by the stopping rule.

Propositions~\ref{prop:Wizard},~\ref{prop:scary} and~\ref{prop:notscary} in Appendix~\ref{app:lower} assert that for each of the respective regions, their contribution to the squared Hellinger distance, divided by the expected sample complexity, is at most $O(\eps^2\Delta^2/\rho)$.
Summing up the three terms is an upper bound on the \emph{total} squared Hellinger distance per expected sample of $O(\eps^2\Delta^2/\rho)$, completing the proof of Proposition~\ref{prop:BoundedLower}.


\section{Experimental Results}
\label{sec:experiments}

We give simulation results to demonstrate the practical efficacy of our proposed algorithm.
In our experimental setups, we compare the convergence rates of 1) our algorithm (``T-WALK (15)" on the plots), 2) the natural majority vote method mentioned in the Introduction (``VOTING" on the plots) and 3) the ``SWITCH" method proposed in previous work by Chung et al.~\cite{Chung:2017} which has been observed to perform well in practice, but does not have a theoretical analysis.
For our algorithm, we choose the maximum number of flips for a single coin to be 15 ($=c\log\frac{1}{\eps}$) in Algorithm~\ref{Alg:Single}.
We also make the assumption that the noise parameter satisfies $\Delta \ge 0.3$, meaning that we can use Algorithm~\ref{Alg:Single} directly instead of using Algorithm~\ref{Alg:Main} to simulate virtual coins before feeding them into Algorithm~\ref{Alg:Single}.
To further improve the practical performance of Algorithm~\ref{Alg:Single}, we ran a local search method to improve on the non-zero output coefficients in Step 3(d) of Algorithm~\ref{Alg:Single}, using the assumption that $\Delta \ge 0.3$.
Concretely, recall that we output a non-zero coefficient when the maximum number of coin flips (15) has occurred and the majority of coin flips has been heads. Thus for $k \in \{8,\ldots,15\}$, we output 8: 0, 9: 6.913, 10: 5.032, 11: 2.101, 12: 0.636, 13: 1.965, 14: 1.016, 15: 1.009.
We note again that these coefficients are \emph{reusable} in practice, as long as the $\Delta \ge 0.3$ assumption can be made.

Figure~\ref{table:experiments} presents the experimental results, for ``coin quality" $\Delta=0.3$ or $0.4$, and ground truth fraction of positive coins $\rho$ taking representative values $0.005, 0.01, 0.03,$ or $0.1$. For each plot, the $x$-axis corresponds to the number of coin flips, with all algorithms eventually converging to the ground truth for enough coin flips. Standard deviation bars are computed over 10 runs of each different setting.
The estimates, given a strict budget of coin flips as given by the $x$-coordinate, are computed according to Section~\ref{sec:practice}.

In all cases, our algorithm (plotted in yellow) performs close to the ground truth (horizontal black line), while the alternative algorithms take longer to converge, or have high variance, as depicted by the error bars. In particular, as discussed in the introduction, our adaptive methods have the most potential for improvement in the more challenging and more practical regime where $\rho$ is small (top few plots), and where $\Delta$ is smaller (left column).

\begin{figure}[t]
    \centering
    \begin{tabular}{cc}
    \includegraphics[width = 6cm, keepaspectratio]{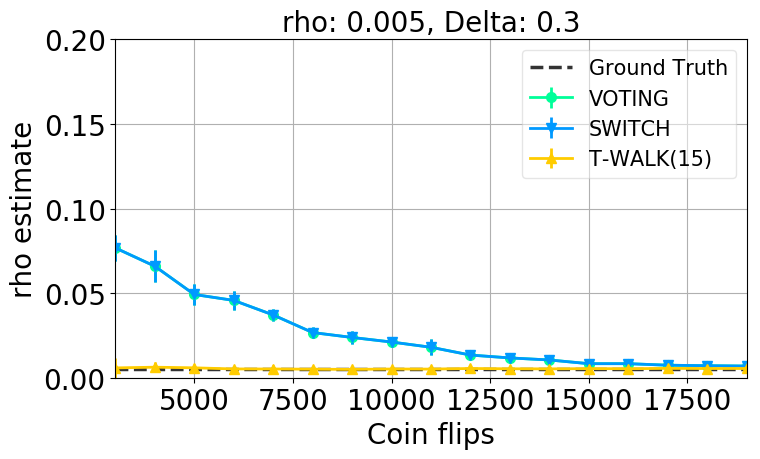} & \includegraphics[width = 6cm, keepaspectratio]{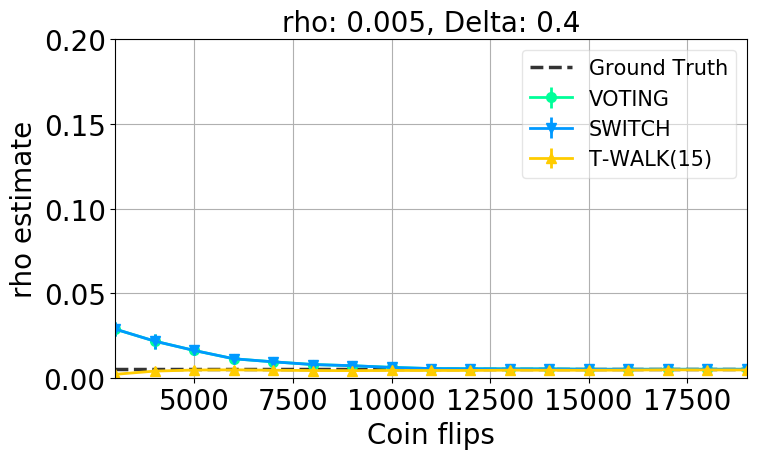}\\
    \includegraphics[width = 6cm, keepaspectratio]{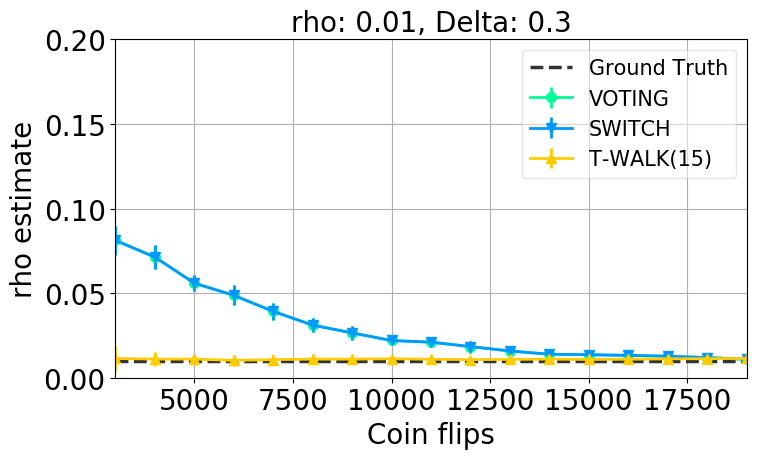} & \includegraphics[width = 6cm, keepaspectratio]{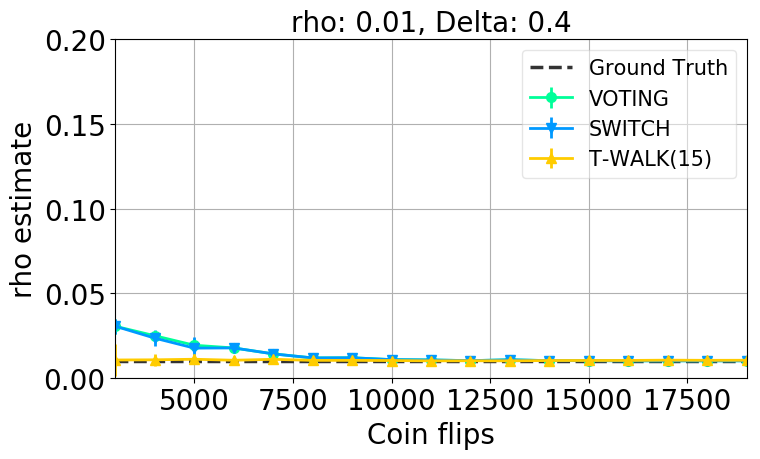}\\
    \includegraphics[width = 6cm, keepaspectratio]{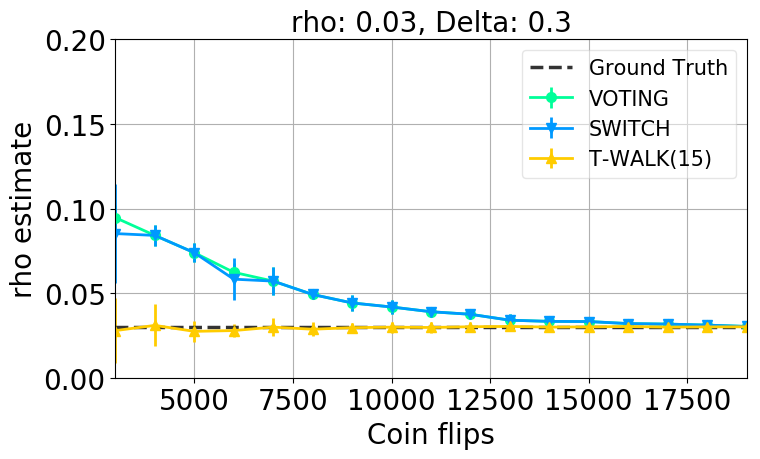} & \includegraphics[width = 6cm, keepaspectratio]{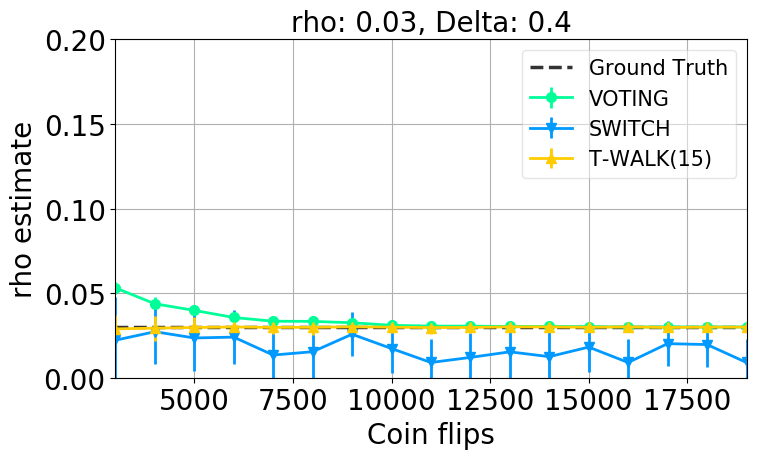}\\
    \includegraphics[width = 6cm, keepaspectratio]{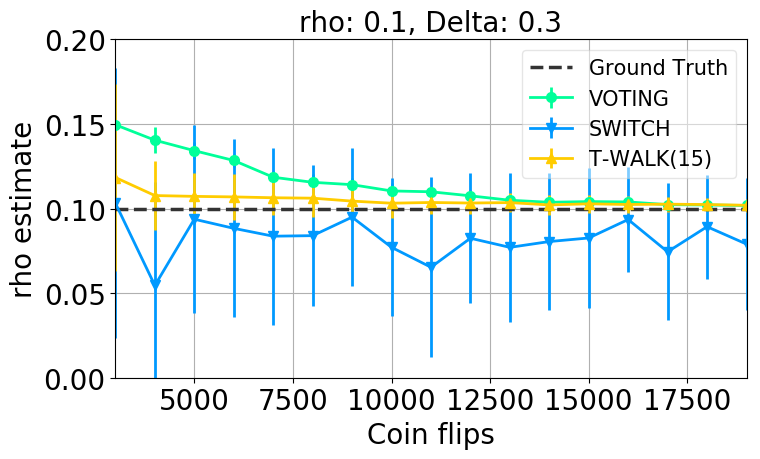} & \includegraphics[width = 6cm, keepaspectratio]{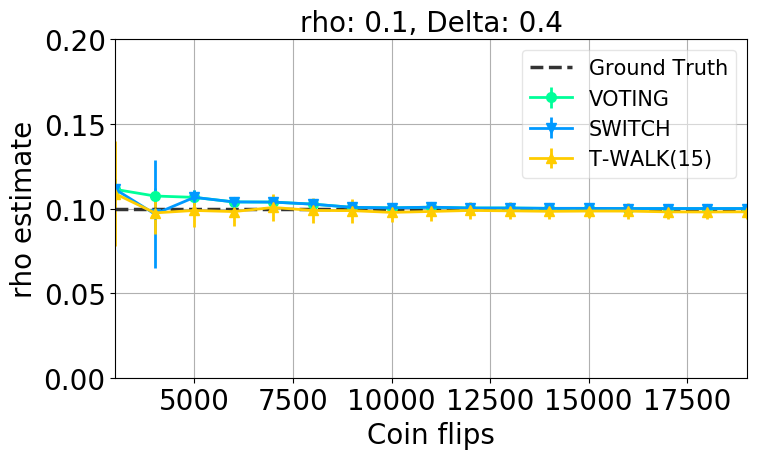}\\
    \end{tabular}
    \caption{Experimental Results}
    \label{table:experiments}
\end{figure}

\section{Algorithm for Known Conditional Distributions of Coins}

We now present our algorithms for the scenario where we know 1) the conditional distribution $h^+$ of the biases of positive coins, 2) analogously the distribution $h^-$ for the negative coins, as well as, for now circularly 3) the mixture parameter $\rho$ itself.
In practice, of course, we would only have an estimate $\rho$ for the mixture parameter itself, with the goal being to refine the estimate.
Assuming for the sake of analysis that our knowledge of the two conditional distributions as well as the mixture parameter are perfect (even if in practice they are only guesses), we derive a simple method based on linear and quadratic programming tools for computing the triangular walk linear estimator (an instantiation of Algorithm~\ref{Alg:TriangularWalk} in Section~\ref{sect:Framework}) with the \emph{minimum variance} subject to the constraints that 1) the estimator has expected output exactly 0 when given a randomly chosen negative coin, and 2) expected output exactly 1 for a randomly chosen positive coin.
That is, we enforce that the estimator is unbiased no matter what the true mixture parameter is, but we optimize its variance given our (assumed to be perfect) knowledge of the mixture parameter.

This method is practically relevant as a bootstrapping approach.
If our estimates of the conditional distributions and mixture parameter are indeed close to the ground truth, then it is easy to show bounds on the decrease in the estimator's performance as our estimates deviate from the truth.
As such, we focus on the analysis of the method when our knowledge of the parameters are assumed to be perfect.
The sample complexity of our algorithm is given in Theorem~\ref{thm:QPUpper}.

To complement the above upper bound result, we show that the linear estimator constructed from perfect knowledge of the relevant parameters is essentially an optimal estimator (Theorem~\ref{thm:QPOpt}) up to constant factors in sample complexity, under those exact same parameters.
This gives strong evidence for the unique algorithmic challenges presented by the ``uncertainty about uncertainty" regime of our problem, as discussed at the beginning of the paper.

\subsection{A Quadratic+Linear Programming Approach}
\label{sect:QP}

In this section we shall use extensively the notations $\alpha_{n,k}, \beta_{n,k}$ and $\gamma_{n,k}$ defined in Definition~\ref{Def:Alpha}.

We now give an overview on the steps required to derive the minimum variance unbiased estimator (in the form of Algorithm~\ref{Alg:TriangularWalk}), as described at the beginning of the section.
First, we assume that we are given a fixed stopping rule, and derive output coefficients for the corresponding linear estimator that has minimum variance.
We formulate a quadratic program (Figure~\ref{Fig:QP}) with the output coefficients $\{v_{n,k}\}$ as the variables, fixing $\alpha_{n,k}$ as constants.
The quadratic program can be solved analytically, which allows us to derive for $\{v_{n,k}\}$ closed form expressions that makes an unbiased estimator with minimum variance assuming the given stopping rule, as well as perfect knowledge of the conditional distribution of biases and the mixture parameter.
Furthermore, the objective value (a function in $\alpha_{n,k}$) of the quadratic program turns out (Lemma~\ref{Lem:Vvalue}) to be the reciprocal of a \emph{linear} function in terms of $\alpha_{n,k}$.
With this representation of the objective, then, we can use the structural observations in Section~\ref{sect:Framework} to formulate a \emph{linear} program that solves for the optimum stopping rule given the conditional distributions of biases (conditional on a positive coin, or a negative coin) and mixture parameter.
In practice, the linear program is first solved to give the stopping rule, then the output coefficients can be calculated from the first step in the analysis.

Having the above overview in mind, we describe the details of the derivation.
To simplify notation, let $h^{-}_{n,k}$ be shorthand for $\Exp_{p \from h^-}\left(p^k(1-p)^{n-k}\right)$ (a generalization of the notation from Section~\ref{sect:UnknownLower}), and similarly for $h^{+}_{n,k}$.
Thus $\alpha_{n,k}h^{-}_{n,k}$ is the probability that if we randomly choose a negative coin, executing the triangular walk with that coin will stop at state $(n,k)$.
Similarly, $\alpha_{n,k} h^+_{n,k}$ is the analogous probability using a randomly chosen positive coin instead.

The quadratic program mentioned above is given in Figure~\ref{Fig:QP}.
We use variables $\{\tilde{v}_{n,k}\}$, constraining them such that the expected output over a randomly chosen positive coin (from distribution $h^+$) has value 1 greater than that over a randomly chosen negative coin (from distribution $h^-$).
Under this constraint, we minimize the second moment of the output when items are drawn from the mixture $\rho h^+ + (1-\rho) h^-$.
Any optimal solution to this optimization will choose the variables $\{\tilde{v}_{n,k}\}$ such that the expected output of an item drawn from the universe is 0, implying that $\sum_{n,k} \alpha_{n,k}h^{+}_{n,k} \tilde{v}_{n,k} = 1-\rho$ and $\sum_{n,k} \alpha_{n,k}h^{-}_{n,k} \tilde{v}_{n,k} = -\rho$
Therefore, we can compute $\{v_{n,k}\}$ using $\{\tilde{v}_{n,k}\}$ by setting $v_{n,k} = \tilde{v}_{n,k} + \rho$.
As a consequence, $\sum_{n,k} \alpha_{n,k}h^{+}_{n,k} v_{n,k} = 1$ and $\sum_{n,k} \alpha_{n,k}h^{-}_{n,k} v_{n,k} = 0$, satisfying the unbiasedness requirement as desired.

\begin{figure}
\centering
\fbox{
\begin{tabular}{lll}
\vspace*{.4em}
	minimize & $\sum_{n,k} \alpha_{n,k}\left(\rho h^{+}_{n,k} + (1-\rho)h^{-}_{n,k}\right)\tilde{v}_{n,k}^2$\\
	subject to & $\sum_{n,k} \alpha_{n,k}h^{+}_{n,k}\tilde{v}_{n,k} = 1+\sum_{n,k} \alpha_{n,k}h^{-}_{n,k}\tilde{v}_{n,k} $\\
\end{tabular}}
	\label{Fig:QP}
	\caption{A QP formulation for computing the output coefficients in terms of the stopping rule}
\end{figure}

The quadratic program in Figure~\ref{Fig:QP} can be solved analytically using Langrange multipliers.
We give the results as Lemma~\ref{Lem:Vvalue}, and defer the calculations to the appendix.

\begin{lemma}
	\label{Lem:Vvalue}
	For the quadratic program in Figure~\ref{Fig:QP}, the optimal assignments to $\{\tilde{v}_{n,k}\}$ are
	$$ \tilde{v}_{n,k} = \frac{\frac{h^{+}_{n,k}-h^{-}_{n,k}}{\rho h^{+}_{n,k} + (1-\rho)h^{-}_{n,k}}}{\sum_{m,j} \alpha_{m,j}\frac{(h^{+}_{m,j} - h^{-}_{m,j})^2}{\rho h^{+}_{m,j} + (1-\rho)h^{-}_{m,j}}}$$
	(and we choose $v_{n,k} = \tilde{v}_{n,k} + \rho$), giving an objective value of
	$$ \frac{1}{\sum_{n,k} \alpha_{n,k}\frac{(h^{+}_{n,k} - h^{-}_{n,k})^2}{\rho h^{+}_{n,k} + (1-\rho)h^{-}_{n,k}}}$$
\end{lemma}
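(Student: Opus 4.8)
The plan is to solve the quadratic program of Figure~\ref{Fig:QP} directly by Lagrange multipliers, exploiting that its objective is a separable positive-semidefinite quadratic and its single constraint is affine, so that the unique stationary point of the Lagrangian is the global minimizer.

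First I would set up shorthand. Rewrite the constraint of Figure~\ref{Fig:QP} in the equivalent form $\sum_{n,k}\alpha_{n,k}(h^+_{n,k}-h^-_{n,k})\tilde v_{n,k}=1$, and abbreviate $w_{n,k}=\alpha_{n,k}\bigl(\rho h^+_{n,k}+(1-\rho)h^-_{n,k}\bigr)$ for the (nonnegative) objective coefficients and $d_{n,k}=\alpha_{n,k}(h^+_{n,k}-h^-_{n,k})$ for the constraint coefficients. Coordinates with $\alpha_{n,k}=0$ (equivalently $w_{n,k}=d_{n,k}=0$) contribute nothing and may be discarded, so I may assume $w_{n,k}>0$ on the remaining support, on which the objective $\sum w_{n,k}\tilde v_{n,k}^2$ is strictly convex. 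The Lagrangian is $L=\sum_{n,k}w_{n,k}\tilde v_{n,k}^2-\lambda\bigl(\sum_{n,k}d_{n,k}\tilde v_{n,k}-1\bigr)$; the first-order conditions $\partial L/\partial\tilde v_{n,k}=0$ give $\tilde v_{n,k}=\lambda d_{n,k}/(2w_{n,k})$, and substituting into the constraint yields $\lambda=2\big/\sum_{m,j}d_{m,j}^2/w_{m,j}$, hence $\tilde v_{n,k}=(d_{n,k}/w_{n,k})\big/\sum_{m,j}d_{m,j}^2/w_{m,j}$. Cancelling the common factor $\alpha_{n,k}$ in $d_{n,k}/w_{n,k}$ and simplifying $d_{m,j}^2/w_{m,j}=\alpha_{m,j}(h^+_{m,j}-h^-_{m,j})^2\big/\bigl(\rho h^+_{m,j}+(1-\rho)h^-_{m,j}\bigr)$ recovers exactly the claimed closed form for $\tilde v_{n,k}$; plugging back in gives the objective value $\sum_{n,k}w_{n,k}\tilde v_{n,k}^2=\bigl(\sum_{n,k}d_{n,k}^2/w_{n,k}\bigr)^{-1}$, which is the stated expression. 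Global optimality is immediate: the objective is strictly convex on the support and the feasible set is a (non-empty, non-degenerate) affine subspace, since $h^+\neq h^-$ forces some $d_{n,k}\neq 0$; so the stationary point of $L$ is the unique global minimum.

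Finally I would record the bookkeeping justifying the reparameterization $v_{n,k}=\tilde v_{n,k}+\rho$ in the statement. Using the identities $\sum_{n,k}\alpha_{n,k}h^+_{n,k}=\sum_{n,k}\alpha_{n,k}h^-_{n,k}=1$ (each is the total termination probability of the triangular walk on a positive, resp.\ negative, coin, per Definition~\ref{Def:Alpha}), the optimal $\tilde v$ satisfies $\sum_{n,k}w_{n,k}\tilde v_{n,k}=\tfrac{\lambda}{2}\sum_{n,k}d_{n,k}=\tfrac{\lambda}{2}(1-1)=0$, i.e.\ the expected output on a coin from the mixture $\rho h^++(1-\rho)h^-$ is $0$; combined with $\sum_{n,k}\alpha_{n,k}(h^+_{n,k}-h^-_{n,k})\tilde v_{n,k}=1$ this forces $\sum_{n,k}\alpha_{n,k}h^+_{n,k}\tilde v_{n,k}=1-\rho$ and $\sum_{n,k}\alpha_{n,k}h^-_{n,k}\tilde v_{n,k}=-\rho$, so $v=\tilde v+\rho$ gives $\sum_{n,k}\alpha_{n,k}h^+_{n,k}v_{n,k}=1$ and $\sum_{n,k}\alpha_{n,k}h^-_{n,k}v_{n,k}=0$, the unbiasedness requirement.

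There is no deep obstacle here: the computation is routine Lagrange-multiplier algebra. The only points needing care are (a) degenerate coordinates where $\alpha_{n,k}=0$, for which the formula would read $0/0$ and which must be excised by restricting to the support, and (b) arguing that the Lagrangian stationary point is a true global minimum rather than merely a critical point, which convexity on the support handles. The remaining simplifications (cancelling $\alpha_{n,k}$, rewriting $d^2/w$, substituting back into the objective) are mechanical.
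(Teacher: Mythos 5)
Your proposal is correct and follows essentially the same route as the paper: set up the Lagrangian for the single affine constraint, solve the first-order conditions for $\tilde v_{n,k}$ in terms of $\lambda$, substitute into the constraint to determine $\lambda$, and plug back in to get the closed form and the objective value. Your additional remarks on excising coordinates with $\alpha_{n,k}=0$, on global optimality via strict convexity on the support, and on verifying the shift $v_{n,k}=\tilde v_{n,k}+\rho$ using $\sum_{n,k}\alpha_{n,k}h^{\pm}_{n,k}=1$ are correct refinements of points the paper leaves implicit or handles in the surrounding discussion.
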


As mentioned at the beginning of the section, the optimal objective value of the quadratic program, namely the minimum variance achievable given a stopping rule, is the reciprocal of a \emph{linear} function in $\{\alpha_{n,k}\}$.
Note that the total sample complexity of the linear estimator, if we use the median-of-means method to estimate its expectation, is proportional to product of the variance of the linear estimator and the expected sample complexity of one run of the random walk.
Therefore, if we fix the expected sample complexity of one run to be $n_0$, we can in fact optimize the total sample complexity by minimizing the variance over all possible stopping rules with the expected sample complexity of $n_0$.
Observe that the reciprocal of the variance, divided by $n_0$, is simply the reciprocal of the total sample complexity of the stopping rule, that we would therefore like to \emph{maximize}.
Moreover, such function is a linear function in $\{\alpha_{n,k}\}$.
Thus, we can write the optimization problem as the linear program in Figure~\ref{Fig:LP}, by taking the objective to maximize the reciprocal of the quadratic program solution, divided by $n_0$.
The program includes (slightly adapted versions of) the recurrence relations introduced in Equation~\ref{Eq:Recurrence} as constraints.
Moreover, in order to control the sample complexity of the algorithm, the program also contains constraints enforcing that 1) the expected number of responses solicited for a random item is bounded by $n_0$ and 2) the maximum depth of the triangle is bounded by some parameter $\nmax$.
In addition to the interpretation as the maximum amount of resources we would ever invest on a single coin/item, the maximum depth constraint can also be interpreted as a computational constraint on how much time we can spend on computing the description of the linear estimator.

\begin{figure}
\centering
\fbox{\begin{tabular}{lll}
\vspace*{.4em}
	maximize & $\frac{1}{n_0}\sum_{n,k} \frac{(h^{+}_{n,k} - h^{-}_{n,k})^2}{\rho h^{+}_{n,k} + (1-\rho)h^{-}_{n,k}} \, \alpha_{n,k}$\\
	subject to & $\beta_{0,0} = 1$\\
	& $\beta_{n+1,k+1} = \beta_{n,k+1} - \alpha_{n,k+1} + \beta_{n,k} - \alpha_{n,k}$\\
	& $\alpha_{n,k} \le \beta_{n,k}$\\
	& $\alpha_{\nmax,k} = 1$ for all $k$ (Max depth constraint)\\
	& $\sum_{n,k} n \cdot \alpha_{n,k}(\rho h^{+}_{n,k} + (1-\rho) h^{-}_{n,k}) \le n_0$ (Bounding expected sample complexity)\\
	where & $\alpha_{n,k}, \beta_{n,k} \ge 0$\\
\end{tabular}}
	\caption{An LP formulation for finding the best stopping rule given an expected sample complexity}
	\label{Fig:LP}
\end{figure}

Since, ultimately, we wish to optimize over all possible values in $n_0$, such a linear program formulation (in Figure~\ref{Fig:LP}) cannot be used directly.
However, consider the following rewriting of the program.
We can always divide the $\{\alpha_{n,k}, \beta_{n,k}\}$ variables by $n_0$ and not change the meaning of the program, if we rescale the constraints and objective correspondingly.
This modification has the following effects: it 1) changes the $n_0$ in the objective and the fifth constraint into $1$, 2) preserves the second and third constraints as well as the non-negativity constraints and 3) changes the first and fourth constraints into ``variable = $1/n_0$".
The first and fourth constraints are now the only components in the new program that depend on $n_0$, and since we ultimately wish to optimize over all possible $n_0$, we can replace these constraints with the weaker constraint that they all equal to each other without specifying what they are equal to.
This results in the linear program in Figure~\ref{Fig:NewLP}, which by the above reasoning is equivalent to optimizing the total sample complexity for a stopping rule.

\begin{figure}
\centering
\fbox{\begin{tabular}{lll}
\vspace*{.4em}
	maximize & $\sum_{n,k} \frac{(h^{+}_{n,k} - h^{-}_{n,k})^2}{\rho h^{+}_{n,k} + (1-\rho)h^{-}_{n,k}} \, \alpha_{n,k}$\\
	subject to & $\beta_{n+1,k+1} = \beta_{n,k+1} - \alpha_{n,k+1} + \beta_{n,k} - \alpha_{n,k}$\\
	& $\alpha_{n,k} \le \beta_{n,k}$\\
	& $\alpha_{\nmax,k} = \beta_{0,0}$ for all $k$ (Max depth constraint)\\
	& $\sum_{n,k} n \cdot \alpha_{n,k}(\rho h^{+}_{n,k} + (1-\rho) h^{-}_{n,k}) \le 1$\\
	where & $\alpha_{n,k}, \beta_{n,k} \ge 0$\\
\end{tabular}}
	\caption{An LP formulation for finding the best stopping rule independent of the expected sample complexity for a single coin}
	\label{Fig:NewLP}
\end{figure}

To obtain the optimal stopping rule $\{\gamma_{n,k}\}$, we solve the linear program in Figure~\ref{Fig:NewLP}, rescale every variable such that $\beta_{0,0} = 1$, and calculate $\gamma_{n,k} = \alpha_{n,k}/\beta_{n,k}$.
If the solution to the linear program (in Figure~\ref{Fig:NewLP}) is $1/S$, then the expected sample complexity is $O(\frac{S}{\eps^2}\log\frac{1}{\delta})$ to estimate $\rho$ to within an additive $\eps$ with probability at least $1-\delta$.
This can be achieved by taking the median-of-means of $O(\log \frac{1}{\delta})$ groups of samples of size $O(S/\eps^2)$, each of which has a constant probability concentration to within additive $\eps$ by Chebyshev's inequality.
Summarizing the above gives the following theorem.

\medskip\noindent{\bf Theorem~\ref{thm:QPUpper}.}
\emph{Suppose we are given 1) the distribution of coin biases conditioned on being a positive coin, 2) the analogous distribution for negative coins and 3) the mixture parameter $\rho$ (which, again, is a circular assumption but useful for a bootstrapping approach).
Suppose further that we are given 4) the parameter $\nmax$, which controls the maximum depth of the triangular walk.}

\emph{Then, following the method described earlier in this section, we can find the linear estimator for $\rho$ that minimizes variance, subject to a) the expected output of the estimator on input a random positive coin is 1 and b) the analogous expected output for a random negative coin is 0.}

\emph{Moreover, if the objective of the linear program in Figure~\ref{Fig:NewLP} is $1/S$, then the expected sample complexity of the constructed linear estimator is $O(\frac{S}{\eps^2}\log\frac{1}{\delta})$, which will estimate $\rho$ to within an additive error of $\eps$ with probability at least $1-\delta$.}

\subsection{Optimality of such linear estimators}
\label{sect:QPLower}

In this section, we show that in fact, the linear estimators produced by the linear program in Figure~\ref{Fig:LP} are optimal compared with any single-coin adaptive but possibly non-linear estimators, subject to the same maximum depth constraints.

Our approach for lower bounding the sample complexity is to fix the distributions $h^+$ and $h^-$ of positive and negative coin biases respectively, and show that with a small number of samples, it is impossible to distinguish the case between A) a $\rho$ and $(1-\rho)$ mixture of positive and negative coins and B) a $(\rho+\eps)$ and $(1-\rho-\eps)$ mixture.
To show indistinguishability, we again use the notion of Hellinger distance.
Since each stopping rule induces different distribution on the Pascal triangle, under randomly chosen coins from each of the A and B scenarios, we will upper bound the (squared) Hellinger distance between the scenarios.

Lemma~\ref{lem:HellingerApprox} shows that the squared Hellinger distance is in fact a linear function in $\{\alpha_{n,k}\}$ and furthermore, in the regime where $\eps \ll \rho$, is within a constant factor of the objective in the linear program in Figure~\ref{Fig:NewLP}.
The coincidence will allow us to show matching lower bounds.

\begin{lemma}
\label{lem:HellingerApprox}
Consider an arbitrary stopping rule $\{\gamma_{n,k}\}$ giving coefficients $\{\alpha_{n,k}\}$.
If $\eps/\rho$ is smaller than some universal constant, then the squared Hellinger distance between 1) a  coin randomly chosen as in case A (described in the paragraphs above) inducing a distribution on the Pascal triangle given the stopping rule and 2) a coin randomly chosen as in case B instead, is
$$ \Theta(\eps^2)\sum_{n,k} \frac{(h^{+}_{n,k} - h^{-}_{n,k})^2}{\rho h^{+}_{n,k} + (1-\rho)h^{-}_{n,k}} \, \alpha_{n,k} $$
\end{lemma}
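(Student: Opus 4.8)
The plan is a direct computation, in the same spirit as (but simpler than) the manipulations in the proofs of Lemma~\ref{lem:HellingerForm} and Lemma~\ref{Lem:ReductionKL}. First I would write down the two distributions over termination states $(n,k)$ of the Pascal triangle. By Definition~\ref{Def:Alpha}, a coin of bias $p$ terminates at $(n,k)$ with probability $\alpha_{n,k}p^k(1-p)^{n-k}$, so averaging over the conditional bias distributions, the probability of terminating at $(n,k)$ is $\alpha_{n,k}h^+_{n,k}$ for a positive coin and $\alpha_{n,k}h^-_{n,k}$ for a negative coin. Hence in case A the termination probability at $(n,k)$ is $P_A(n,k) = \alpha_{n,k}(\rho h^+_{n,k} + (1-\rho)h^-_{n,k})$, and in case B it is $P_B(n,k) = \alpha_{n,k}((\rho+\eps)h^+_{n,k} + (1-\rho-\eps)h^-_{n,k})$; both sum to $1$ since $\sum_{n,k}\alpha_{n,k}h^\pm_{n,k}=1$. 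The one identity to record is $P_B(n,k) = P_A(n,k) + \eps\,\alpha_{n,k}(h^+_{n,k}-h^-_{n,k})$, i.e.\ the two distributions differ by an $\eps$-scaled perturbation supported on the same set of states.

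Next I would expand the squared Hellinger distance in the form $H^2 = \tfrac12\sum_{n,k}\big(\sqrt{P_A(n,k)}-\sqrt{P_B(n,k)}\big)^2$. Setting $x_{n,k} := (P_B(n,k)-P_A(n,k))/P_A(n,k) = \eps(h^+_{n,k}-h^-_{n,k})/(\rho h^+_{n,k}+(1-\rho)h^-_{n,k})$ (interpreting $0/0$ as $0$, which is harmless since such a state has zero mass under both A and B), each summand equals $P_A(n,k)\,(\sqrt{1+x_{n,k}}-1)^2$. The crucial step is to control $x_{n,k}$: since $\rho\le\tfrac12$ by the symmetry of the problem (as in Section~\ref{sect:UnknownLower}), we have $\rho h^+_{n,k}+(1-\rho)h^-_{n,k}\ge\rho(h^+_{n,k}+h^-_{n,k})\ge\rho\,|h^+_{n,k}-h^-_{n,k}|$, whence $|x_{n,k}|\le\eps/\rho$, which is below a fixed small constant by hypothesis. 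On the range $|x|\le\tfrac12$ the elementary identity $\sqrt{1+x}-1 = x/(1+\sqrt{1+x})$ with $1+\sqrt{1+x}=\Theta(1)$ gives $(\sqrt{1+x}-1)^2 = \Theta(x^2)$ with $x$-independent constants, so each summand is $\Theta(P_A(n,k)\,x_{n,k}^2) = \Theta\big(\eps^2\,\alpha_{n,k}(h^+_{n,k}-h^-_{n,k})^2/(\rho h^+_{n,k}+(1-\rho)h^-_{n,k})\big)$ after substituting $P_A(n,k)=\alpha_{n,k}(\rho h^+_{n,k}+(1-\rho)h^-_{n,k})$. Summing over $(n,k)$ and pulling out the common $\Theta(\eps^2)$ factor yields exactly the claimed expression.

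The only place any care is needed is the uniform two-sided estimate $(\sqrt{1+x}-1)^2=\Theta(x^2)$, which is precisely where the hypothesis that $\eps/\rho$ is at most a universal constant is used — it keeps every $x_{n,k}$ bounded in magnitude and away from $-1$, so that the hidden constants in the final $\Theta$ do not depend on $n,k$ or the stopping rule; everything else is bookkeeping. I would also remark that this lemma is designed to be combined, in the next subsection, with the decomposition of fully-adaptive algorithms into single-coin ones (Lemma~\ref{Lem:Reduction}) and with the fact that the resulting Hellinger-distance bound is, up to a constant, the reciprocal of the total-sample-complexity quantity maximized by the linear program of Figure~\ref{Fig:NewLP}; that coincidence is what makes the upper and lower bounds of Theorems~\ref{thm:QPUpper} and~\ref{thm:QPOpt} match to within a constant factor.
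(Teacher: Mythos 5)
Your proposal is correct and follows essentially the same route as the paper's proof in Appendix~\ref{app:calc}: write the two termination distributions as $\alpha_{n,k}(\rho h^+_{n,k}+(1-\rho)h^-_{n,k})$ and $\alpha_{n,k}((\rho+\eps)h^+_{n,k}+(1-\rho-\eps)h^-_{n,k})$, factor each Hellinger summand as $P_A(n,k)\bigl(\sqrt{1+x_{n,k}}-1\bigr)^2$ with $|x_{n,k}|\le\eps/\rho$, and use a two-sided $\Theta(x^2)$ estimate for the square root. The only cosmetic difference is that you obtain that estimate via the identity $\sqrt{1+x}-1=x/(1+\sqrt{1+x})$ rather than the paper's Taylor expansion, which changes nothing of substance.
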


We defer the proof and calculations to Appendix~\ref{app:calc}, but it is completely analogous to that of Lemma~\ref{lem:HellingerForm}.

With Lemma~\ref{lem:HellingerApprox}, we now prove Theorem~\ref{thm:QPOpt}.

\medskip\noindent{\bf Theorem~\ref{thm:QPOpt}.}
	\emph{As in Theorem~\ref{thm:QPUpper}, suppose we are given 1) the distribution of coin biases conditioned on being a positive coin, 2) the analogous distribution for negative coins, 3) the mixture parameter $\rho$, as well as the parameter $\nmax$, which controls the maximum depth of the triangular walk.}
	
	\emph{The linear estimator produced from solving the linear program in Figure~\ref{Fig:NewLP}, as described in Theorem~\ref{thm:QPUpper}, has total expected sample complexity that is within a constant factor of any optimal single-coin adaptive algorithm with $\geq\frac{2}{3}$ probability of success, subject to the same maximum depth constraint.}
	
	\emph{Combining with a corollary of Lemma~\ref{Lem:Reduction}, restricted to fully-adaptive algorithms that invests at most $\nmax$ flips on any single coin, this shows that our linear estimator in fact has sample complexity within a constant factor of any fully-adaptive algorithm satisfying the maximum depth constraint for every single coin.}

\begin{proof}
Given an arbitrary stopping rule, if it induces a squared Hellinger distance of $H^2$ between the two cases with a single random walk, then we can lower bound the number of random walks needed in the single-coin adaptive algorithm in order to solve the distinguishing task with constant probability of success, by $\Theta(1/H^2)$, using the subaddivity of squared Hellinger distance, and that the total Hellinger distance needs to be at least constant to solve the distinguishing task.
Thus, if $n_0$ is the expected number of coin flips for a random walk, the overall expected sample complexity is lower bounded by $\Omega(n_0/H^2)$.
Since we need to find a lower bound that applies to \emph{all} single-coin adaptive algorithms, we need to find the smallest $n_0/H^2$ over all the possible stopping rules (subject to the same max-depth constraint), or equivalently, maximize $H^2/n_0$ (which can alternatively be interpreted as the squared Hellinger distance per expected sample).
Lemma~\ref{lem:HellingerApprox} tells us that we can replace $H^2$ with the expression in the lemma and lose no more than multiplicative constants.
Thus, if we fix $n_0$, finding the best lower bound up to multiplicative constants is equivalent to solving the optimization problem that is exactly the one in Figure~\ref{Fig:LP}, except for an extra factor of $\Theta(\eps^2)$ in the objective.
We again wish to maximize the $H^2/n_0$ over all possible choices of $n_0$ as well, and therefore, following the same reasoning as before, we arrive at the linear program that is essentially the one in Figure~\ref{Fig:NewLP}, again except for the factor of $\Theta(\eps^2)$ in the objective.
This linear program has no $n_0$ dependency, and has objective that is $\Theta(\eps^2)$ times that of the one in Figure~\ref{Fig:NewLP}, which is the reciprocal of the (expected) total sample complexity of the optimal linear estimator produced as described in Section~\ref{sect:QP}.
Summarizing, if the solution to the linear program in Figure~\ref{Fig:NewLP} is $1/S$, then the maximum $H^2/n_0$ over all possible stopping rules would be within a constant factor of $\eps^2/S$, giving a lower bound of $\Omega(S/\eps^2)$ on the expected sample complexity (under case A) for a constant probability of success in the task of distinguishing between case A and case B.
This lower bound matches the upper bound of $O(S/\eps^2)$ on the total sample complexity of the linear estimator we produce according to Theorem~\ref{thm:QPUpper}.

As given in the theorem statement, combining this result with a corollary of Lemma~\ref{Lem:Reduction} shows that our linear estimator is in fact competitive to within a constant factor in sample complexity with \emph{fully-adaptive} algorithms that invest at most $\nmax$ flips on any single coin.
\end{proof}

\bibliographystyle{plain}
\bibliography{triangle}

\newpage
\appendix

\section{Non-Adaptive Lower Bound}
\label{app:non-adaptive}

Here we give the remaining calculations for the non-adaptive lower bound of $O(\frac{\rho}{\eps^2\Delta^2}\log\frac{1}{\rho})$.

Recall that, to show a non-adaptive lower-bound, consider a random variable $S$ that uniformly chooses between scenarios ``$\rho$" and ``$\rho+\eps$" respectively, where coins will have bias $\frac{1}{2}+\Delta$ with probability $\rho$ or $\rho+\eps$ respectively depending on the outcome of $S$, and bias $\frac{1}{2}-\Delta$ otherwise.
We will show that the mutual information between $n$ flips of a single coin and the scenario variable $S$ is at most $O(n\frac{\eps^2\Delta^2}{\rho\log(1/\rho)})$, and thus that, even when combining information from several coins, at least $\Omega(\frac{\rho}{\eps^2\Delta^2}\log\frac{1}{\rho})$ samples are needed to distinguish the two scenarios with constant probability.

Let $\Bin(n,p,k)$ denote the probability that a Binomial distribution with $n$ trials and bias $p$ has value $k$.

The mutual information is exactly represented as \begin{align*}&\textstyle{\frac{1}{2}\sum_{k=0}^n (\rho\Bin(n,\frac{1}{2}+\Delta,k)+(1-\rho)\Bin(n,\frac{1}{2}-\Delta,k))\log\frac{(\rho\Bin(n,\frac{1}{2}+\Delta,k)+(1-\rho)\Bin(n,\frac{1}{2}-\Delta,k))}{((\rho+\frac{\eps}{2})\Bin(n,\frac{1}{2}+\Delta,k)+(1-\rho-\frac{\eps}{2})\Bin(n,\frac{1}{2}-\Delta,k))}}\\
&\textstyle{}+((\rho+\eps)\Bin(n,\frac{1}{2}+\Delta,k)+(1-\rho-\eps)\Bin(n,\frac{1}{2}-\Delta,k))\log\frac{((\rho+\eps)\Bin(n,\frac{1}{2}+\Delta,k)+(1-\rho-\eps)\Bin(n,\frac{1}{2}-\Delta,k))}{((\rho+\frac{\eps}{2})\Bin(n,\frac{1}{2}+\Delta,k)+(1-\rho-\frac{\eps}{2})\Bin(n,\frac{1}{2}-\Delta,k))}\end{align*}

Claim is that, for $x,y\geq 0$, we have $x\log\frac{x}{(x+y)/2}+y\log\frac{y}{(x+y)/2}\leq \frac{(x-y)^2}{x+y}$.

Letting $x$ be $\rho\Bin(n,\frac{1}{2}+\Delta,k)+(1-\rho)\Bin(n,\frac{1}{2}-\Delta,k)$ and $y$ be the $\rho+\eps$ mixture analogue, the mutual information is less than or equal to:
\begin{align*}
&\frac{\eps^2}{4} \sum_{k=0}^n \frac{(\Bin(n,\frac{1}{2}+\Delta,k)-\Bin(n,\frac{1}{2}-\Delta,k))^2}{((\rho+\frac{\eps}{2})\Bin(n,\frac{1}{2}+\Delta,k)+(1-\rho-\frac{\eps}{2})\Bin(n,\frac{1}{2}-\Delta,k))}\\
\le\;&\eps^2 \sum_{k=0}^n \frac{(\Bin(n,\frac{1}{2}+\Delta,k)-\Bin(n,\frac{1}{2}-\Delta,k))^2}{(\rho\Bin(n,\frac{1}{2}+\Delta,k)+\Bin(n,\frac{1}{2}-\Delta,k))}
\end{align*}
Since $(x-y)^2\leq 2(x^2+y^2)$, and $\frac{1}{x+y}\leq \min\{\frac{1}{x},\frac{1}{y}\}$, we also have
\begin{align*}
    & \sum_{k=0}^n \frac{(\Bin(n,\frac{1}{2}+\Delta,k)-\Bin(n,\frac{1}{2}-\Delta,k))^2}{(\rho\Bin(n,\frac{1}{2}+\Delta,k)+\Bin(n,\frac{1}{2}-\Delta,k))}\\
    \leq \;& 2\sum_{k=0}^n \frac{\Bin(n,\frac{1}{2}+\Delta,k)^2+\Bin(n,\frac{1}{2}-\Delta,k)^2}{(\rho\Bin(n,\frac{1}{2}+\Delta,k)+\Bin(n,\frac{1}{2}-\Delta,k))}\\
    \leq \;& 2\min\left\{\sum_{k=0}^n \frac{\Bin(n,\frac{1}{2}+\Delta,k)^2}{\rho\Bin(n,\frac{1}{2}+\Delta,k))},\sum_{k=0}^n \frac{\Bin(n,\frac{1}{2}+\Delta,k)^2}{\Bin(n,\frac{1}{2}-\Delta,k))}\right\}+2\sum_{k=0}^n \frac{\Bin(n,\frac{1}{2}-\Delta,k)^2}{\Bin(n,\frac{1}{2}-\Delta,k))}\\
    = \; & 2\min\left\{\frac{1}{\rho},\left(\frac{1+12\Delta^2}{1-4\Delta^2}\right)^n\right\}+2
\end{align*}

For $\Delta$ bounded below by any universal positive constant, this last expression is $O(\min\{\frac{1}{\rho},e^{O(\Delta^2 n)}\})$. Since the components of the minimum are 1) equal for $n=O(\frac{1}{\Delta^2}\log\frac{1}{\rho})$ and 2) constant and convex in $n$ respectively, we can bound the minimum by a linear function that goes through this intersection point: for $n\geq\frac{1}{\Delta^2}$ the minimum is bounded by $O(n\frac{\Delta^2}{\rho\log\frac{1}{\rho}})$. Multiplying by $\epsilon^2$ gets a bound on the mutual information, and dividing by $n$ gets a bound on mutual information per sample of $O(\frac{\eps^2\Delta^2}{\rho\log\frac{1}{\rho}})$.

For the remaining regime of $n \le \frac{1}{\Delta^2}$:

\[\sum_{k=0}^n \frac{(\Bin(n,\frac{1}{2}+\Delta,k)-\Bin(n,\frac{1}{2}-\Delta,k))^2}{\rho\Bin(n,\frac{1}{2}+\Delta,k)+\Bin(n,\frac{1}{2}-\Delta,k)}\leq\sum_{k=0}^n \frac{(\Bin(n,\frac{1}{2}+\Delta,k)-\Bin(n,\frac{1}{2}-\Delta,k))^2}{\Bin(n,\frac{1}{2}-\Delta,k)}
=\left(\frac{1+12\Delta^2}{1-4\Delta^2}\right)^n-1\]

This last expression is $O(\Delta^2 n)$ for $n\leq\frac{1}{\Delta^2}$, and thus we can bound the mutual information per sample by $O(\eps^2\Delta^2)$ here.

Combining the two bounds, we conclude the mutual information per sample is at most $O(\frac{\eps^2\Delta^2}{\rho\log\frac{1}{\rho}})$ for all $n$, and thus its inverse, $O(\frac{\rho\log\frac{1}{\rho}}{\eps^2\Delta^2})$ lower-bounds the number of non-adaptive samples needed for our task.

\section{Proof of Proposition~\ref{prop:BoundedLower}}
\label{app:lower}

This appendix proves Proposition~\ref{prop:BoundedLower}, which upper bounds the squared Hellinger distance per sample for any single-coin algorithm of (without loss of generality) a particular form stated in the proposition.
We state the proposition again for the reader's convenience, as well as Lemma~\ref{lem:HellingerForm} that is introduced in Section~\ref{sect:SingleCoinLower}, which simplifies the expression of the squared Hellinger distance by sacrificing a constant factor.

\medskip\noindent{\bf Proposition~\ref{prop:BoundedLower}.}
\emph{
Consider an arbitrary stopping rule $\{\gamma_{n,k}\}$ that 1) is non-zero only for $n$ that are powers of 2, and 2) $\gamma_{10^{-8}/\Delta^2,k} = 1$ for all $k$, that is the random walk always stops if it reaches $10^{-8}/\Delta^2$ coin flips.
	Suppose that given a coin, after a random walk on the Pascal triangle according to the stopping rule, the position $(n,k)$ that the walk ended at is always revealed, and furthermore, if $n = 10^{-8}/\Delta^2$, then the bias of the coin is also revealed.
	Let $H^2$ be the squared Hellinger distance between a single run of the above process where 1) a coin with bias $\frac{1}{2} + \Delta$ is used with probability $\rho$ and a coin with bias $\frac{1}{2} - \Delta$ is used otherwise versus 2) a coin with bias $\frac{1}{2} + \Delta$ is used with probability $\rho+\eps$ and a coin with bias $\frac{1}{2} - \Delta$ is used otherwise.
	Furthermore, let $\Exp_{\rho}[n]$ and $\Exp_{\rho+\frac{\eps}{2}}[n]$ be the expected number of coin flips during a run of the algorithm where we use a $\frac{1}{2}+\Delta$ coin with probability $\rho$ and $\rho+\frac{\eps}{2}$ respectively, and a $\frac{1}{2}-\Delta$ coin otherwise.
	If all of $\rho$, $\eps$, $\Delta$ and $\eps/\rho$ are smaller than some universal absolute constant, then
	$$ \max\left[\frac{H^2}{\Exp_{\rho}[n]}, \frac{H^2}{\Exp_{\rho+\frac{\eps}{2}}[n]}\right] = O\left(\frac{\eps^2\Delta^2}{\rho}\right) $$
}

As mentioned in Section~\ref{sect:SingleCoinLower}, the proofs for bounding $\frac{H^2}{\Exp_{\rho}[n]}$ and $\frac{H^2}{\Exp_{\rho+\frac{\eps}{2}}[n]}$ are essentially identical, so here we present the proof only for the latter.

\medskip\noindent{\bf Lemma~\ref{lem:HellingerForm}.}\emph{
Consider the two probability distributions in Proposition~\ref{prop:BoundedLower} over locations $(n,k)$ in the Pascal triangle of depth $10^{-8}/\Delta^2$ and bias $p \in \{\frac{1}{2}\pm\Delta\}$, generated by the given stopping rule $\{\gamma_{n,k}\}$ in the two cases of 1) a coin with bias $\frac{1}{2} + \Delta$ is used with probability $\rho$ and a coin with bias $\frac{1}{2} - \Delta$ is used otherwise versus 2) a coin with bias $\frac{1}{2} + \Delta$ is used with probability $\rho+\eps$ and a coin with bias $\frac{1}{2} - \Delta$ is used otherwise.
If $\eps/\rho$ is smaller than some universal constant, then the squared Hellinger distance between these two distributions can be written as
\begin{align*}
	\Theta(\eps^2) \Biggl[&\sum_{n < \frac{10^{-8}}{\Delta^2}, k \in [0..n]} \alpha_{n,k} ((\rho+\frac{\eps}{2}) h^+_{n,k} + (1-\rho-\frac{\eps}{2}) h^-_{n,k}) \frac{(h^+_{n,k}-h^-_{n,k})^2}{(\rho h^+_{n,k} + (1-\rho) h^-_{n,k})^2}\\
	+& \sum_{n = \frac{10^{-8}}{\Delta^2}, k \in [0..n]} \alpha_{n,k} ((\rho+\frac{\eps}{2}) h^+_{n,k} + (1-\rho-\frac{\eps}{2}) h^-_{n,k}) \frac{\frac{h^+_{n,k}}{\rho} + h^-_{n,k}}{\rho h^+_{n,k} + (1-\rho) h^-_{n,k}} \Biggr]
\end{align*}}

We perform separate analyses on three regions of the Pascal triangle:
1) the last row $n = \frac{10^{-8}}{\Delta^2}$, 2) a ``high discrepancy region" where $h^+_{n,k}/h^-_{n,k} \ge 1/\rho^{0.1}$ which is towards the right of the triangle, potentially contributing large amounts to the squared Hellinger distance and 3) a ``central" region that is the rest of the triangle.
We shall show that each region contributes small squared Hellinger distance per sample, and thus their sum bounds the total squared Hellinger distance per sample, completing the proof of Proposition~\ref{prop:BoundedLower}.

We present the three analyses in the order of central region (Section~\ref{sec:non-scary}), high discrepancy region (Section~\ref{sect:scary}) and the last row (Section~\ref{sect:last}).

\subsection{``Central" Region}\label{sec:non-scary}

For the purposes of this section, define $b_{n,k,\rho+\frac{\eps}{2}}$ to equal $ ((\rho+\frac{\eps}{2}) h^+_{n,k} + (1-\rho-\frac{\eps}{2}) h^-_{n,k})$, so that $\alpha_{n,k} b_{n,k,\rho+\frac{\eps}{2}}$ is the probability of reaching and stopping at location $(n,k)$ under a $\rho+\frac{\epsilon}{2}$ mixture of the two coin types.
Further, let $R_{n,k,\rho}$ be defined to equal $\frac{(h^+_{n,k}-h^-_{n,k})^2}{(\rho h^+_{n,k} + (1-\rho) h^-_{n,k})^2}$, which is the contribution of location $(n,k)$ to the squared Hellinger distance \emph{per unit of probability mass that stops there}.

By Lemma~\ref{lem:HellingerForm}, the contribution to the squared Hellinger distance from the central region of the triangle is bounded by the sum, over this region, of $\epsilon^2 \alpha_{n,k} b_{n,k,\rho+\frac{\eps}{2}} R_{n,k,\rho}$.

\begin{proposition}
    \label{prop:notscary}
    
    For an arbitrary stopping rule, the contribution of the central region to the squared Hellinger distance, divided by the (total) expected sample complexity $\Exp_{\rho+\frac{\eps}{2}}[n]$ of the walk using a $\rho+\frac{\eps}{2}$ mixture of $\frac{1}{2}\pm\Delta$ coins, is at most $O(\eps^2\Delta^2/\rho)$. Explicitly, with notation for $b$ and $R$ defined in the previous paragraphs, we have
    \[\epsilon^2\sum_{n < \frac{10^{-8}}{\Delta^2}, k \text{ s.t. } \frac{h^+_{n,k}}{h^-_{n,k}} < \frac{1}{\rho^{0.1}}} \alpha_{n,k}b_{n,k,\rho+\frac{\eps}{2}} {R}_{n,k,\rho}=O\left(\frac{\epsilon^2\Delta^2}{\rho}\right){\Exp}_{\rho+\frac{\eps}{2}}[n]\]
\end{proposition}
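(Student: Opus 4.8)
Here is how I would prove Proposition~\ref{prop:notscary}. The strategy pairs a pointwise ``quadratic'' upper bound on $R_{n,k,\rho}$ valid throughout the central region with a random-walk estimate showing that the triangular walk's expected squared horizontal displacement at termination is within a constant factor of its expected length. The first step is to reformulate the claim probabilistically: since $\alpha_{n,k}b_{n,k,\rho+\frac{\eps}{2}}$ is exactly the probability that the walk, run with a $(\rho+\frac{\eps}{2})$-mixture of $\frac12\pm\Delta$ coins, terminates at $(n,k)$, and since $\Exp_{\rho+\frac{\eps}{2}}[n]=\sum_{n,k} n\,\alpha_{n,k}b_{n,k,\rho+\frac{\eps}{2}}$, writing $(N,K)$ for the terminal state the goal is $\Exp_{\rho+\frac{\eps}{2}}[\,R_{N,K,\rho}\,\1(\text{terminate central})\,]=O(\Delta^2/\rho)\cdot\Exp_{\rho+\frac{\eps}{2}}[N]$.

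Next comes the ``well-behaved quadratic'' bound. Set $\lambda=\frac{1/2+\Delta}{1/2-\Delta}$, so that $h^+_{n,k}/h^-_{n,k}=\lambda^{2k-n}$. Since $\rho h^+_{n,k}+(1-\rho)h^-_{n,k}\ge \frac12 h^-_{n,k}$, we have $R_{n,k,\rho}\le 4(\lambda^{2k-n}-1)^2$. In the central region $\lambda^{2k-n}<\rho^{-0.1}$, so applying the elementary inequalities $e^x-1\le xe^x$ (for $x\ge 0$) and $1-e^{-x}\le x$ with $x=|2k-n|\ln\lambda$, together with $\ln\lambda=O(\Delta)$ for $\Delta$ below a small constant, yields $R_{n,k,\rho}=O(\Delta^2(2k-n)^2\rho^{-0.2})=O(\Delta^2(2k-n)^2/\rho)$ uniformly over the central region. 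This is precisely where the threshold $\rho^{-0.1}$ defining the central region enters, and it is the reason the high-discrepancy region (where this quadratic bound fails) must be treated separately.

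It then remains to show $\Exp_{\rho+\frac{\eps}{2}}[(2K-N)^2]=O(\Exp_{\rho+\frac{\eps}{2}}[N])$. Conditioning on whether the drawn coin is positive or negative reduces this to the same quantity for a simple $\pm1$ walk with per-step mean $\pm 2\Delta$ and unit per-step variance, stopped at the bounded time $N\le \nmax=10^{-8}/\Delta^2$. For a positive coin, the process $\Phi_j=v_j^2-4\Delta\, j\, v_j$ (with $j$ the flip count and $v_j=2k-n$) has conditional per-step drift $1-8\Delta^2(j+1)\le 1$, as a short computation using $\Exp[\Delta v_j]=2\Delta$ and $\Exp[(\Delta v_j)^2]=1$ shows; optional stopping (legitimate since $N$ is bounded) together with $\Phi_0=0$ gives $\Exp_{+}[v_N^2]\le \Exp_{+}[N]+4\Delta\,\Exp_{+}[N|v_N|]$. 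Since $|v_N|\le N\le \nmax$, Cauchy--Schwarz bounds $\Exp_{+}[N|v_N|]\le \sqrt{\nmax\,\Exp_{+}[N]}\cdot\sqrt{\Exp_{+}[v_N^2]}$, and solving the resulting quadratic inequality for $\sqrt{\Exp_{+}[v_N^2]}$ while using $8\Delta^2\nmax=8\cdot 10^{-8}$ yields $\Exp_{+}[v_N^2]\le 3\Exp_{+}[N]$. The mirror-image argument with $\Phi_j=v_j^2+4\Delta\, j\, v_j$ handles a negative coin, and averaging over the mixture gives $\Exp_{\rho+\frac{\eps}{2}}[(2K-N)^2]\le 3\Exp_{\rho+\frac{\eps}{2}}[N]$. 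Assembling the pieces via Lemma~\ref{lem:HellingerForm}, the central contribution is at most $\eps^2\cdot O(\Delta^2/\rho)\sum_{\text{central}}\alpha_{n,k}b_{n,k,\rho+\frac{\eps}{2}}(2k-n)^2\le \eps^2\cdot O(\Delta^2/\rho)\cdot\Exp_{\rho+\frac{\eps}{2}}[(2K-N)^2]=O(\eps^2\Delta^2/\rho)\cdot\Exp_{\rho+\frac{\eps}{2}}[n]$, as required; the bound for $\Exp_{\rho}[n]$ in the denominator follows identically.

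I expect the main obstacle to be the displacement bound $\Exp[(2K-N)^2]=O(\Exp[N])$ holding uniformly over all stopping rules: the crude estimate $|v_N|\le N$ combined with the $\pm 2\Delta$ drift only controls $\Exp[v_N^2]$ in terms of $\Exp[N^2]$, which can be as large as $\nmax\Exp[N]$ and so smuggles in a spurious factor of $1/\Delta$. The remedy --- the drift-corrected potential $v_j^2\mp 4\Delta\, j\, v_j$ together with the self-referential Cauchy--Schwarz bootstrap, critically exploiting the hard cap $\nmax=\Theta(1/\Delta^2)$ so that $\Delta^2\nmax$ is a negligible constant --- is the technical crux for this region, whereas the pointwise quadratic bound on $R_{n,k,\rho}$ is comparatively routine once the role of the $\rho^{-0.1}$ cutoff is recognized.
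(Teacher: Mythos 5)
Your proof is correct, and while it rests on the same core insight as the paper's --- that in the central region the Hellinger weight $R_{n,k,\rho}$ is controlled by a quadratic in the displacement, and that a quadratic potential grows by $O(1)$ per step of the walk --- the mechanism of the per-step accounting is genuinely different. The paper replaces $R$ by the capped function $\widehat{R}_{n,k,\rho}=2(\min(h^+_{n,k}/h^-_{n,k},\rho^{-0.1})-1)^2$ and runs an induction over rows (Lemma~\ref{lem:nonScaryExcessHellinger}): the transition-probability-weighted average of $\widehat{R}$ at the two children of a node exceeds its value at the node by at most $O(\Delta^2/\rho^{0.2})$, because for the exponential likelihood ratio $r=h^+/h^-$ the linear terms cancel exactly under the biased transition weights and only an $O(\Delta^2)r^2$ second-order excess survives; summing this excess over all probability mass and all steps charges it directly to the expected sample complexity. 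You instead push the $\rho^{-0.2}$ into a pointwise bound $R_{n,k,\rho}=O(\Delta^2(2k-n)^2/\rho)$ and reduce to the standalone random-walk fact $\Exp[(2K-N)^2]=O(\Exp[N])$, which you prove by optional stopping on the drift-corrected potential $v_j^2\mp 4\Delta j v_j$ together with the Cauchy--Schwarz bootstrap exploiting $\Delta^2\nmax=10^{-8}$ (your computation of the drift $1-8\Delta^2(j+1)$ and the resolution of the quadratic inequality both check out). Your route is more modular --- the displacement bound is a clean fact about stopped biased walks, independent of the Hellinger structure and of the power-of-2 restriction --- at the price of the somewhat delicate self-referential step needed to tame the $4\Delta\Exp[N|v_N|]$ cross term; the paper's route avoids that drift bookkeeping entirely because the exponential potential absorbs it, but is tied to the specific form of $\widehat{R}$. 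Both arguments correctly identify the $\rho^{-0.1}$ cutoff as the reason the quadratic bound is confined to the central region.
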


\begin{proof}
 We upper bound this quantity here by instead 1) replacing $R_{n,k,\rho}$ by a similar quantity $\widehat{R}_{n,k,\rho}$ that is an upper bound on $R$ in the central region, and 2) summing over the entire triangle instead of just the central region.
Let $\widehat{R}_{n,k,\rho}=2\left(\min\left(\frac{h^+_{n,k}}{h^-_{n,k}}, \frac{1}{\rho^{0.1}}\right) - 1\right)^2$. This bounds $R$ in the region where $h^+_{n,k}/h^-_{n,k}\leq 1/\rho^{0.1}$:
in this regime, $\widehat{R} = \frac{2(h^+_{n,k} - h^-_{n,k})^2}{(h^-_{n,k})^2}$.
The numerator of $R$ is at most $\frac{1}{2}$ of the numerator of $\widehat{R}$, and the denominator of $R$ is at least $\frac{1}{2}$ of the denominator of $\widehat{R}$.

Thus we instead prove the related fact that \begin{equation}\label{eq:non-scary-bound}\epsilon^2\sum_{n \leq \frac{10^{-8}}{\Delta^2},\,k\in[0..n]} \alpha_{n,k}b_{n,k,\rho+\frac{\eps}{2}} \widehat{R}_{n,k,\rho}=O\left(\frac{\epsilon^2\Delta^2}{\rho}\right){\Exp}_{\rho+\frac{\eps}{2}}[n]\end{equation}

We prove this by induction on a row $i$, where we define $A^i_{n,k}$ to be the stopping probabilities (corresponding to the product $\alpha_{n,k}b_{n,k,\rho+\frac{\eps}{2}}$) for the variant of the given stopping rule where we \emph{force} the rule to stop at row $i$ if it reaches this row; analogously define ${\Exp}^i_{\rho+\frac{\eps}{2}}[n]$ to be the expected number of samples taken by this rule. We consider how both the left hand side and ${\Exp}^i_{\rho+\frac{\eps}{2}}[n]$ change as we increase $i$ by 1, and show that the ratio of their change is $O\left(\frac{\epsilon^2\Delta^2}{\rho}\right)$.

See Lemma~\ref{lem:nonScaryExcessHellinger} for a proof of this fact.
The proof of the lemma rely on the concrete definitions of $b_{n,k,\rho+\frac{\eps}{2}}$ and $R_{n,k,\rho}$, and so both the lemma statement and the proof write out the expressions for purposes of calculations.

As a proof sketch of the ground covered by Lemma~\ref{lem:nonScaryExcessHellinger}: if for some location $(i,k)$ some amount of probability mass $m$ continues down to row $i+1$ instead of stopping here, then the expected number of samples increases by exactly $m$. Meanwhile, this probability mass $m$ will end up split between locations $(i+1,k)$ and $(i+1,k+1)$, where for a coin of bias $p$ (that will be $\frac{1}{2}\pm\Delta$), we will have $m (1-p)$ mass going left and $m p$ mass going right, contributing to $A^{i+1}_{i+1,k}$ and $A^{i+1}_{i+1,k+1}$ entries respectively. The change in the left hand side of Equation~\ref{eq:non-scary-bound} induced by sending mass $m$ down to level $i+1$ is thus expressed as a linear combination of 3 evaluations of the function $\widehat{R}_{n,k,\rho}$. Since $\widehat{R}_{n,k,\rho}$ is essentially a quadratic function of the ratio $\frac{h^+_{n,k}}{h^-_{n,k}}$, this linear combination evaluates to the difference between a quadratic evaluated at 1 point, versus the weighted average of the quadratic at 2 surrounding points, and is bounded by $m\cdot O(\frac{\Delta^2}{\rho^{0.2}})$ essentially because of the second derivative of the quadratic in the central region.

\end{proof}

\begin{lemma}
\label{lem:nonScaryExcessHellinger}
For any $(n,k)$,
\begin{align*}
    &\; \eps^2\eta_{n,k} \left[ ((\rho+\frac{\eps}{2}) h^+_{n+1,k+1} + (1-\rho-\frac{\eps}{2}) h^-_{n+1, k+1}) \times 2\left(\min\left(\frac{h^+_{n+1,k+1}}{h^-_{n+1,k+1}}, \frac{1}{\rho^{0.1}}\right) - 1\right)^2\right.\\
    &\; \; \;\left. + \; \; \; ((\rho+\frac{\eps}{2}) h^+_{n+1,k} + (1-\rho-\frac{\eps}{2}) h^-_{ n+1, k}) \times 2\left(\min\left(\frac{h^+_{n+1,k}}{h^-_{n+1,k}}, \frac{1}{\rho^{0.1}}\right) - 1\right)^2\right]\\
    \le &\; \eps^2\eta_{n,k}((\rho+\frac{\eps}{2}) h^+_{n,k} + (1-\rho-\frac{\eps}{2}) h^-_{n,k}) \left[2\left(\min\left(\frac{h^+_{n,k}}{h^-_{n,k}},\frac{1}{\rho^{0.1}}\right) - 1\right)^2 + O\left(\frac{\Delta^2}{\rho^{0.2}}\right)\right]
\end{align*}
\end{lemma}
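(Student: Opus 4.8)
The plan is to prove Lemma~\ref{lem:nonScaryExcessHellinger} by reducing the inequality, after dividing through the common factor $\eps^2\eta_{n,k}$, to a statement about a single scalar quantity: the ratio $r = h^+_{n,k}/h^-_{n,k}$. First I would observe that the Pascal-triangle recurrences give clean multiplicative relations for how this ratio changes as we descend one row. Since $h^+_{n+1,k+1} = (\tfrac12+\Delta) h^+_{n,k}$ and $h^-_{n+1,k+1} = (\tfrac12-\Delta) h^-_{n,k}$, we get $h^+_{n+1,k+1}/h^-_{n+1,k+1} = r\cdot\frac{1/2+\Delta}{1/2-\Delta} = r(1+\Theta(\Delta))$; similarly $h^+_{n+1,k}/h^-_{n+1,k} = r\cdot\frac{1/2-\Delta}{1/2+\Delta} = r(1-\Theta(\Delta))$. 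So the left side is a convex-combination-like expression of the function $g(x) = 2(\min(x,\rho^{-0.1})-1)^2$ evaluated at two points $r(1\pm\Theta(\Delta))$, weighted by $b_{n+1,k+1,\rho+\eps/2}$ and $b_{n+1,k,\rho+\eps/2}$ respectively, and we must compare this to $g(r)$ times $b_{n,k,\rho+\eps/2}$, plus the slack term $O(\Delta^2/\rho^{0.2})$.

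The key steps, in order, would be: (1) Expand the $b$-coefficients. Note $b_{n,k,\rho+\eps/2} = (\rho+\eps/2)h^+_{n,k} + (1-\rho-\eps/2)h^-_{n,k} = h^-_{n,k}\big((\rho+\eps/2)r + 1-\rho-\eps/2\big)$, and similarly for the two children, where the children's $h^-$ values are $h^-_{n,k}$ times $(\tfrac12-\Delta)$ and $(\tfrac12+\Delta)$ respectively, and their ratios are $r(1\pm\Theta(\Delta))$ as above. Crucially, $h^-_{n+1,k+1} + h^-_{n+1,k} = h^-_{n,k}$ exactly (since $(\tfrac12-\Delta)+(\tfrac12+\Delta)=1$), so the total $h^-$-mass is conserved down a row. (2) Divide the whole target inequality by $h^-_{n,k}$, turning everything into a finite statement purely in the variables $r$, $\Delta$, $\rho$, and $\eps$ (with $\eps$ negligible since $\eps/\rho$ is small). (3) Reduce to showing that a weighted average of $g$ at the two child ratios, with weights proportional to the child $b$-masses, is at most $g(r)$ plus the slack. (4) Case-split on $r$: when $r$ is comfortably below the clipping threshold $\rho^{-0.1}$ (say $r(1+\Theta(\Delta)) < \rho^{-0.1}$), $g$ is locally the smooth quadratic $2(x-1)^2$, so a Taylor/second-derivative argument bounds the discrepancy by (a constant times) $g''$ times the squared spread $(\Theta(\Delta r))^2$, and using $r < \rho^{-0.1}$ bounds this by $O(\Delta^2 \rho^{-0.2})$; when $r$ is at or near the threshold, $g$ is clipped to a constant ($2(\rho^{-0.1}-1)^2 = \Theta(\rho^{-0.2})$) and the clipping only decreases the left side relative to the unclipped quadratic, so the bound still holds (the slack $O(\Delta^2/\rho^{0.2})$ absorbs the boundary effects). (5) Account for the asymmetry of the $b$-weights: because the right child carries slightly more mass when $r>1$, there is a first-order term, but this is exactly the term that matches the change in $g(r)$'s weighting and, combined with mass conservation, contributes only to the second-order slack; I would verify this by writing $b$-weighted-average of $g$ at children minus $b_{n,k}\cdot g(r)$ and grouping the $O(\Delta)$ terms so they cancel against the derivative of $g$, leaving $O(\Delta^2)$-order remainder.

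The main obstacle I anticipate is step (4)–(5): controlling the interaction between the clipping in $g$ (the $\min$ with $\rho^{-0.1}$) and the first-order asymmetry of the descent. Away from the clip it is a routine second-derivative estimate, but near $x = \rho^{-0.1}$ the function $g$ has a kink, and one must argue carefully that descending a row cannot ``exploit'' the kink to make the left side exceed $g(r)$ by more than the stated slack. The cleanest way is probably to note that $g$ is convex on $[0,\rho^{-0.1}]$ and constant afterward, hence globally convex, so by Jensen the $b$-weighted average of $g$ at the two children is at least $g$ of the $b$-weighted average of the child ratios — wait, that is the wrong direction; instead I would use that $g$ is $1$-Lipschitz-smooth-like on the relevant scale and directly bound $g(r(1+\delta)) $ and $ g(r(1-\delta))$ pointwise by $g(r) + O(|g'(r)|\, r\delta) + O(r^2\delta^2)$ with $|g'(r)|\,r \le |g'(r)|\rho^{-0.1} = O(\rho^{-0.2})$ when unclipped (so the first-order term is $O(\Delta\rho^{-0.2})$, too large on its own — this is why the weighted cancellation in step (5) is essential, not optional). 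So the real crux is verifying that the first-order-in-$\Delta$ terms genuinely cancel between the two children once the $b$-weights are taken into account, leaving a clean $O(\Delta^2/\rho^{0.2})$; I would do this algebraically by factoring out $h^-_{n,k}$, writing the child ratios as $r_\pm = r\frac{1/2\pm\Delta}{1/2\mp\Delta}$ and the child $h^-$-masses as $\frac12\mp\Delta$ (times $h^-_{n,k}$), and checking that the linear combination $(\tfrac12-\Delta)(\ldots)g(r_+) + (\tfrac12+\Delta)(\ldots)g(r_-) - (\ldots)g(r)$ has vanishing $\Delta^1$ coefficient, which it must by the mass-conservation and the fact that the unclipped $g$ composed with the multiplicative step is smooth.
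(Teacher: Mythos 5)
Your proposal is correct and takes essentially the same route as the paper's proof: both reduce to the scalar ratio $r=h^+_{n,k}/h^-_{n,k}$, use the exact multiplicative descent $r_\pm = r\cdot\frac{1/2\pm\Delta}{1/2\mp\Delta}$ together with conservation of the $h^{\pm}$-mass down a row, dispose of the clipped case separately, and then verify by direct expansion that the first-order-in-$\Delta$ terms cancel, leaving an $O(\Delta^2 r^2)=O(\Delta^2/\rho^{0.2})$ remainder. The only cosmetic difference is that the paper splits the inequality into separate $h^+$- and $h^-$-weighted halves before expanding, whereas you keep the combined $b$-weights; the algebra is the same either way.
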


\begin{proof}
	It suffices to show that the left hand side of the inequality is upper bounded by the right hand side, substituting in both options for the minimum.
	For the $1/\rho^{0.1}$ case, since both summands on the left hand side are upper bounded by the $1/\rho^{0.1}$ case of their expressions, the inequality follows trivially and in fact without the excess term of $O(\Delta^2/\rho^{0.2})$.
	
	We now prove the other case, for which it is sufficient to show that
	\begin{align*}
    &\; \eps^2\eta_{n,k} \left[ ((\rho+\frac{\eps}{2}) h^+_{n+1,k+1} + (1-\rho-\frac{\eps}{2}) h^-_{n+1, k+1}) \times 2\left(\frac{h^+_{n+1,k+1}}{h^-_{n+1,k+1}} - 1\right)^2\right.\\
    &\; \; \;\left. + \; \; \; ((\rho+\frac{\eps}{2}) h^+_{n+1,k} + (1-\rho-\frac{\eps}{2}) h^-_{ n+1, k}) \times 2\left(\frac{h^+_{n+1,k}}{h^-_{n+1,k}} - 1\right)^2\right]\\
    \le &\; \eps^2\eta_{n,k}((\rho+\frac{\eps}{2}) h^+_{n,k} + (1-\rho-\frac{\eps}{2}) h^-_{n,k}) \left[2\left(\frac{h^+_{n,k}}{h^-_{n,k}} - 1\right)^2 + O\left(\frac{\Delta^2}{\rho^{0.2}}\right)\right]
	\end{align*}
	when $h^+_{n,k}/h^-_{n,k} \le 1/\rho^{0.1}$.
	
	In turn, we can break this inequality into a conjunction of two inequalities, that
	$$ h^+_{n+1,k+1} \left(\frac{h^+_{n+1,k+1}}{h^-_{n+1,k+1}} - 1\right)^2 + h^+_{n+1,k} \left(\frac{h^+_{n+1,k}}{h^-_{n+1,k}} - 1\right)^2 \le h^+_{n,k} \left[\left(\frac{h^+_{n,k}}{h^-_{n,k}} - 1\right)^2 + O\left(\frac{\Delta^2}{\rho^{0.2}}\right)\right] $$
	and $$ h^-_{n+1,k+1} \left(\frac{h^+_{n+1,k+1}}{h^-_{n+1,k+1}} - 1\right)^2 + h^-_{n+1,k} \left(\frac{h^+_{n+1,k}}{h^-_{n+1,k}} - 1\right)^2 \le h^-_{n,k} \left[\left(\frac{h^+_{n,k}}{h^-_{n,k}} - 1\right)^2 + O\left(\frac{\Delta^2}{\rho^{0.2}}\right)\right] $$
	again assuming that $h^+_{n,k}/h^-_{n,k} \le 1/\rho^{0.1}$.
	
	For the first inequality, observe that $$ \frac{h^+_{n+1,k+1}}{h^-_{n+1,k+1}} = \frac{\frac{1}{2}+\Delta}{\frac{1}{2}-\Delta}\frac{h^+_{n,k}}{h^-_{n,k}} \quad \text{and} \quad \frac{h^+_{n+1,k}}{h^-_{n+1,k}} = \frac{\frac{1}{2}-\Delta}{\frac{1}{2}+\Delta}\frac{h^+_{n,k}}{h^-_{n,k}} $$
	and also $ h^+_{n+1,k+1} = h^+_{n,k} (\frac{1}{2}+\Delta) $ and $h^+_{n+1,k} = h^+_{n,k} (\frac{1}{2}-\Delta) $.
	We therefore factor out and drop the $h^+_{n,k}$ on both sides, simplify, and reduce to showing that
	$$ \left(\frac{1}{2}+\Delta\right) \left(\frac{\frac{1}{2}+\Delta}{\frac{1}{2}-\Delta}\frac{h^+_{n,k}}{h^-_{n,k}} - 1\right)^2 + \left(\frac{1}{2}-\Delta\right) \left(\frac{\frac{1}{2}-\Delta}{\frac{1}{2}+\Delta}\frac{h^+_{n,k}}{h^-_{n,k}} - 1\right)^2 \le \left(\frac{h^+_{n,k}}{h^-_{n,k}} - 1\right)^2 + O\left(\frac{\Delta^2}{\rho^{0.2}}\right) $$
	The left hand side is
	\begin{align*}
	&\; \left(\frac{1}{2}+\Delta\right) \left(\frac{\frac{1}{2}+\Delta}{\frac{1}{2}-\Delta}\frac{h^+_{n,k}}{h^-_{n,k}} - 1\right)^2 + \left(\frac{1}{2}-\Delta\right) \left(\frac{\frac{1}{2}-\Delta}{\frac{1}{2}+\Delta}\frac{h^+_{n,k}}{h^-_{n,k}} - 1\right)^2\\
	= &\; \left(\frac{h^+_{n,k}}{h^-_{n,k}}\right)^2\left(\frac{(\frac{1}{2}+\Delta)^3}{(\frac{1}{2}-\Delta)^2} + \frac{(\frac{1}{2}-\Delta)^3}{(\frac{1}{2}+\Delta)^2} \right) - 2\frac{h^+_{n,k}}{h^-_{n,k}}\left(\frac{(\frac{1}{2}+\Delta)^2}{\frac{1}{2}-\Delta} + \frac{(\frac{1}{2}-\Delta)^2}{\frac{1}{2}+\Delta} \right) + 1\\
	= &\; \left(\frac{h^+_{n,k}}{h^-_{n,k}}\right)^2(1 + O(\Delta^2)) - 2\frac{h^+_{n,k}}{h^-_{n,k}}\left(\frac{(\frac{1}{2}+\Delta)^2}{\frac{1}{2}-\Delta} + \frac{(\frac{1}{2}-\Delta)^2}{\frac{1}{2}+\Delta} \right) + 1\\
	\le &\; \left(\frac{h^+_{n,k}}{h^-_{n,k}}\right)^2(1 + O(\Delta^2)) - 2\frac{h^+_{n,k}}{h^-_{n,k}} + 1\\
	= &\; \left(\frac{h^+_{n,k}}{h^-_{n,k}} - 1\right)^2 + O(\Delta^2)\left(\frac{h^+_{n,k}}{h^-_{n,k}}\right)^2\\
	\le &\; \left(\frac{h^+_{n,k}}{h^-_{n,k}} - 1\right)^2 + O\left(\frac{\Delta^2}{\rho^{0.2}}\right)
	\end{align*}
	where the last inequality holds again because we have $h^+_{n,k}/h^-_{n,k} \le 1/\rho^{0.1}$ by our case analysis.
	
	For the second inequality, via similar reasoning as above, we only need to show that
	$$ \left(\frac{1}{2}-\Delta\right) \left(\frac{\frac{1}{2}+\Delta}{\frac{1}{2}-\Delta}\frac{h^+_{n,k}}{h^-_{n,k}} - 1\right)^2 + \left(\frac{1}{2}+\Delta\right) \left(\frac{\frac{1}{2}-\Delta}{\frac{1}{2}+\Delta}\frac{h^+_{n,k}}{h^-_{n,k}} - 1\right)^2 \le \left(\frac{h^+_{n,k}}{h^-_{n,k}} - 1\right)^2 + O\left(\frac{\Delta^2}{\rho^{0.2}}\right) $$
	The left hand side is
	\begin{align*}
	&\; \left(\frac{1}{2}-\Delta\right) \left(\frac{\frac{1}{2}+\Delta}{\frac{1}{2}-\Delta}\frac{h^+_{n,k}}{h^-_{n,k}} - 1\right)^2 + \left(\frac{1}{2}+\Delta\right) \left(\frac{\frac{1}{2}-\Delta}{\frac{1}{2}+\Delta}\frac{h^+_{n,k}}{h^-_{n,k}} - 1\right)^2\\
	= &\; \left(\frac{h^+_{n,k}}{h^-_{n,k}}\right)^2\left(\frac{(\frac{1}{2}+\Delta)^2}{\frac{1}{2}-\Delta} + \frac{(\frac{1}{2}-\Delta)^2}{\frac{1}{2}+\Delta} \right) - 2\frac{h^+_{n,k}}{h^-_{n,k}}\left(\frac{1}{2}+\Delta + \frac{1}{2} - \Delta\right)+ 1\\
	\le &\; \left(\frac{h^+_{n,k}}{h^-_{n,k}}\right)^2(1 + O(\Delta^2)) - 2\frac{h^+_{n,k}}{h^-_{n,k}} + 1\\
	= &\; \left(\frac{h^+_{n,k}}{h^-_{n,k}} - 1\right)^2 + O(\Delta^2)\left(\frac{h^+_{n,k}}{h^-_{n,k}}\right)^2\\
	\le &\; \left(\frac{h^+_{n,k}}{h^-_{n,k}} - 1\right)^2 + O\left(\frac{\Delta^2}{\rho^{0.2}}\right)
	\end{align*}
	with reasoning as in the previous inequality, thus completing the proof of the lemma.
\end{proof}

\subsection{``High Discrepancy" Region}
\label{sect:scary}

\begin{proposition}
\label{prop:scary}
	Consider an arbitrary stopping rule $\{\gamma_{n,k}\}$ that 1) is non-zero only for $n$ that are powers of 2, and 2) $\gamma_{10^{-8}/\Delta^2,k} = 1$ for all $k$, that is the random walk always stops after $10^{-8}/\Delta^2$ coin flips.
	Let $$ H_\text{\scary}^2 = \Theta(\eps^2) \sum_{n < \frac{10^{-8}}{\Delta^2}, k \text{ s.t. } \frac{h^+_{n,k}}{h^-_{n,k}} \ge \frac{1}{\rho^{0.1}}} \alpha_{n,k} \left((\rho+\frac{\eps}{2}) h^+_{n,k} + (1-\rho-\frac{\eps}{2}) h^-_{n,k}\right) \frac{(h^+_{n,k}-h^-_{n,k})^2}{(\rho h^+_{n,k} + (1-\rho) h^-_{n,k})^2} $$
	be the contribution to the squared Hellinger distance by the ``high discrepancy" region.
	Furthermore, again let $\Exp_{\rho+\frac{\eps}{2}}[n]$ be the expected number of coin flips on this random walk, where we use a $\frac{1}{2}+\Delta$ coin with probability $\rho+\frac{\eps}{2}$ (instead of $\rho$ or $\rho+\eps$), and a $\frac{1}{2}-\Delta$ coin otherwise.
	If all of $\rho$, $\eps$, $\Delta$ and $\eps/\rho$ are smaller than some universal absolute constant, then
	$$ \frac{H_\text{\scary}^2}{\Exp_{\rho+\frac{\eps}{2}}[n]} = O\left(\frac{\eps^2\Delta^2}{\rho}\right) $$
\end{proposition}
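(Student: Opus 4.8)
Throughout, write $\tau := \big\lceil\, 0.1\log(1/\rho)\big/\log\tfrac{1/2+\Delta}{1/2-\Delta}\,\big\rceil$, and recall $\log\tfrac{1/2+\Delta}{1/2-\Delta}=\Theta(\Delta)$, so $\tau=\Theta\!\big(\tfrac{\log(1/\rho)}{\Delta}\big)$. A location $(n,k)$ lies in the high-discrepancy region exactly when $2k-n\ge\tau$; since $k\le n$ this forces $n\ge\tau$, so if $\tau>10^{-8}/\Delta^2$ the high-discrepancy part of the depth-$10^{-8}/\Delta^2$ triangle is empty and $H_\text{\scary}^2=0$. Otherwise $\Delta=O(1/\log(1/\rho))$, which forces $\tau=\Omega(\log^2(1/\rho))$; this quantitative consequence of non-emptiness will be essential below, so I would record it up front.

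\emph{Step 1: simplify the summand.} The first thing I would do is massage the expression for $H_\text{\scary}^2$ coming from Lemma~\ref{lem:HellingerForm}. Writing $u_{n,k}:=h^-_{n,k}/h^+_{n,k}\le\rho^{0.1}$ in this region, a short computation (using $\eps<\rho$) shows $\big((\rho+\tfrac\eps2)h^+_{n,k}+(1-\rho-\tfrac\eps2)h^-_{n,k}\big)\cdot\frac{(h^+_{n,k}-h^-_{n,k})^2}{(\rho h^+_{n,k}+(1-\rho)h^-_{n,k})^2}=\Theta\!\big(h^+_{n,k}/(\rho+u_{n,k})\big)=O(h^+_{n,k}/\rho)$. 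Hence, letting $P^+:=\sum_{(n,k)\ \text{high-disc}}\alpha_{n,k}h^+_{n,k}$ be the probability that the triangular walk with the given stopping rule, run on a $\tfrac12+\Delta$ coin, terminates at a high-discrepancy location, Lemma~\ref{lem:HellingerForm} yields $H_\text{\scary}^2\le O(\eps^2/\rho)\,P^+$.

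\emph{Step 2: the region is deep in the tail, and no stopping rule can change that.} Next I would bound $P^+$. Because $\gamma_{n,k}$ is nonzero only at powers of $2$, the walk's terminal row is a power of $2$; and because $\alpha_{n,k}\le\beta_{n,k}\le\binom nk$ (stopping can only decrease encounter probabilities), the probability that the walk is \emph{ever} at a high-discrepancy position at a given power-of-two row $n=2^i$ is at most $\Pr[\Bin(n,\tfrac12+\Delta)\ge\tfrac{n+\tau}2]$. Since $n\le10^{-8}/\Delta^2$ the drift $\Delta n$ is $\ll\tau$, so by Hoeffding this is at most $e^{-\Omega(\tau^2/n)}\le e^{-\Omega(\tau^2\Delta^2)}=e^{-\Omega(\log^2(1/\rho))}$. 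Summing over the $O(\log(1/\Delta))$ powers of $2$ in $[\tau,10^{-8}/\Delta^2]$ gives $P^+=O(\log(1/\Delta))\,e^{-\Omega(\log^2(1/\rho))}$, which for $\rho$ below a universal constant is smaller than any fixed power of $\rho$. The conceptual point here, and the reason the powers-of-two assumption is needed, is that it prevents an adversarial stopping rule from skewing the walk rightward row-by-row: between consecutive powers of $2$ the walk takes a fresh block of flips whose distribution is untouched by any past (or future) stopping decision, so the $\binom nk$-bound on encounter probabilities survives.

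\emph{Step 3 (the main obstacle): charge the Hellinger contribution against expected samples.} What remains is to show $H_\text{\scary}^2\le O(\eps^2\Delta^2/\rho)\,\Exp_{\rho+\eps/2}[n]$; since $\Exp_{\rho+\eps/2}[n]\ge\rho\,\Exp_+[n]$, it suffices to show $P^+\le O(\Delta^2)\,\Exp_+[n]$. This is the delicate part, because the naive estimate $\Exp_+[n]\ge\tau P^+$ (every high-discrepancy run costs $\ge\tau$ flips) is far too weak. Instead I would argue blockwise: mass that ends at row $2^i$ in the high-discrepancy region must reach row $2^i$, and those runs each cost at least $2^i$ flips, while — using again that the block carrying the walk from row $2^{i}$ to $2^{i+1}$ is drawn fresh — the probability of \emph{entering} the high-discrepancy region during that block is $e^{-\Omega(\tau^2/2^i)}$; the key elementary inequality to verify is $e^{-\Omega(\tau^2/x)}\le O(\Delta^2 x)$ for all $x\in[\tau,\,10^{-8}/\Delta^2]$, which holds precisely because non-emptiness forces $\tau=\Omega(\log^2(1/\rho))$, making the exponential beat the polynomial target even at the cheapest rows $x\approx\tau$, after which one checks the inequality is monotone-preserved across the interval. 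Amortizing each unit of high-discrepancy Hellinger mass at row $2^i$ against the $\Theta(2^i)$ flips the walk pays to traverse the block before it — and summing the resulting per-block bounds of the form $O(\eps^2\Delta^2/\rho)$ — then gives $H_\text{\scary}^2=O(\eps^2\Delta^2/\rho)\,\Exp_{\rho+\eps/2}[n]$. (The identical argument with $\Exp_\rho[n]$ in place of $\Exp_{\rho+\eps/2}[n]$ is what is needed for the other half of Corollary~\ref{cor:BoundedLower}.)
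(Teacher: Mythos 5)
Your Steps 1 and 2 are sound and match the paper's setup: in the high-discrepancy region the per-location Hellinger weight is indeed $O(h^+_{n,k}/\rho)$, and the observation $\alpha_{n,k}\le\beta_{n,k}\le\binom{n}{k}$ correctly shows the region is exponentially improbable under either coin. One slip in the reduction: from $H_\text{\scary}^2\le O(\eps^2/\rho)P^+$, reaching the target $H_\text{\scary}^2\le O(\eps^2\Delta^2/\rho)\Exp_{\rho+\frac{\eps}{2}}[n]$ requires $P^+\le O(\rho\Delta^2)\Exp_+[n]$, not $P^+\le O(\Delta^2)\Exp_+[n]$ (the inequality $\Exp_{\rho+\frac{\eps}{2}}[n]\ge\rho\Exp_+[n]$ only converts the former into what you need); you lose a factor of $\rho$, though your tail bounds are super-polynomially small in $\rho$ and could absorb it.

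The genuine gap is in Step 3, in the claim that ``the probability of entering the high-discrepancy region during that block is $e^{-\Omega(\tau^2/2^i)}$.'' That is the probability for a block started from a near-mean position, but what your charging scheme needs is the probability conditioned on the walk surviving to the start of the block, and the stopping rule controls that conditional law. Consider a rule that runs freely to row $2^{i}$ and then kills everything except positions within $O(\sqrt{2^{i}})$ to the left of the high-discrepancy threshold: the surviving mass is $r\approx e^{-\Theta(\tau^2/2^{i})}$, and its next fresh block of $2^i$ flips crosses into the high-discrepancy region at row $2^{i+1}$ with \emph{constant} probability, not $e^{-\Omega(\tau^2/2^i)}$. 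Your single-block amortization then only yields $m_{i+1}/\Exp_+[n]\le 1/2^{i}$, far weaker than the needed $O(\rho\Delta^2)$ when $2^i \ll 1/\Delta^2$. The proposition still holds in this example because the mass discarded at row $2^i$ (or discarded gradually over earlier power-of-two rows) already paid for its flips, but capturing that requires recursing over \emph{all} earlier levels rather than looking back one block. This multi-level backward induction is exactly the content of the paper's Lemmas~\ref{lem:ScaryTailBoostPos}, \ref{lem:ScaryTailBoostNeg}, \ref{lem:ProbBoost} and~\ref{lem:SampleBoost}: if mass $c$ reaches the deep tail at row $2^{J+1}$, then at some earlier row $2^j$ a tail of probability at least $c\cdot e^{\Omega(\tau^2/2^J)}\,2^{J-j}$ must have been reached, giving $\Exp[n]\ge 2^{J-1}c\,e^{\Omega(\tau^2/2^J)}$ --- and that exponential gain over the naive $2^J c$ is precisely what your argument is missing.
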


The key observation for this section is that the ``high discrepancy" region is in fact at least $\Omega(\log \frac{1}{\rho})$ standard deviations away from where a random walk on the triangle (without a stopping rule) would concentrate; and thus it is very unlikely for the random walk to enter the region.
However, the existence of a stopping rule could potentially skew the distribution of the random walk on each row towards the ``high discrepancy" side of the triangle, while saving on sample complexity by stopping early whenever the walk enters the other side of the triangle.
In this section, we essentially show that this cannot happen.

The analysis in this section relies on our assumption that the stopping rule only stops at rows that are powers of 2 (unlike the analysis of the previous section). Intuitively, if the random walk ends up very far to the right, then there must be a single region of rows $[2^i..2^{i+1}]$ where, without any stopping rule on intermediate rows to guide it, the walk still somehow makes unlikely progress to the right. More explicitly, if the distribution of reaching-and-not-stopping-at row $2^{i+1}$ is skewed significantly far to the right of the distribution of reaching-and-not-stopping-at row $2^{i}$ (despite the intervening process being strictly a binomially distributed random walk), then the only way this could have occurred is if an overwhelming fraction of the probability mass reaching row $2^{i+1}$ stops there. Namely, if probability mass $m$ emerges below row $2^{i+1}$ and skewed far to the right, the potential Hellinger distance gains this induces will be more than counterbalanced by the huge addition to sample complexity induced by the overwhelming (relative to $m$) probability of stopping at row $2^{i+1}$.

We utilize the following fact, essentially a consequence of a Binomial distribution being upper bounded by a corresponding Gaussian.

\begin{fact}
Let $\Bin(n,p,k)$ denote the probability that a Binomial distribution with $n$ trials and bias $p$ has value $k$.
If $\Delta$ is sufficiently small, then there exists some absolute constant $C$ such that for all $n \ge 1$, and for both $\frac{1}{2}+\Delta$ and $\frac{1}{2}-\Delta$ substituted in the expression ``$\frac{1}{2}\pm\Delta$" below,
$$\sum_{k\in[0..n]} e^{\frac{(k-(\frac{1}{2}\pm \Delta)n)^2}{n}}\Bin(n,\frac{1}{2}\pm \Delta,k)\leq C $$
\end{fact}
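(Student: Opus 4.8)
The strategy is to read the left-hand side as an exponential-of-square moment of a centered Binomial and bound it by a sub-Gaussian tail estimate. Fix a bias $p\in\{\tfrac12+\Delta,\tfrac12-\Delta\}$, write $\mu=pn$, and let $K\sim\Bin(n,p)$, so that the displayed sum equals $\Exp\big[e^{(K-\mu)^2/n}\big]$. Since $(K-\mu)^2/n\ge 0$, the nonnegative variable $Y=e^{(K-\mu)^2/n}$ satisfies $Y\ge 1$ pointwise, and the layer-cake formula gives
\[
\Exp[Y]=\int_0^\infty\Pr[Y\ge s]\,\d s=1+\int_1^\infty\Pr\!\big[(K-\mu)^2\ge n\ln s\big]\,\d s .
\]

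Next I would invoke Hoeffding's inequality for a sum of $n$ independent $[0,1]$-valued random variables, $\Pr[\,|K-\mu|\ge t\,]\le 2e^{-2t^2/n}$, which holds for every $t\ge 0$ and every $p$ (so in fact no smallness assumption on $\Delta$ is needed on this route). Substituting $t=\sqrt{n\ln s}$ yields $\Pr[(K-\mu)^2\ge n\ln s]\le 2e^{-2\ln s}=2s^{-2}$, and plugging this into the display above gives $\Exp[Y]\le 1+\int_1^\infty 2s^{-2}\,\d s=3$. Thus the fact holds with the absolute constant $C=3$, uniformly over $n\ge 1$ and over both choices of $p=\tfrac12\pm\Delta$.

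There is no real obstacle here; the one point worth flagging is the interplay of constants in the exponents. Hoeffding supplies a damping factor $e^{-2t^2/n}$, whereas the fact only needs to absorb $e^{+t^2/n}$, so the surplus $e^{-t^2/n}$ is exactly what turns the tail integral into the convergent $\int s^{-2}$; a weaker concentration bound with a smaller constant could make this integral diverge, so recording Hoeffding's sharp constant is essential. If one instead wants a proof in the ``Binomial dominated by a Gaussian'' spirit of the surrounding text, one can bound the pmf directly via Stirling, $\Bin(n,p,k)\le \tfrac{C'}{\sqrt n}e^{-nD(k/n\|p)}$ away from the endpoints (the $O(1)$ endpoint terms being exponentially small and trivially dominating their $e^{(k-\mu)^2/n}$ weights once $p$ is bounded away from $0$ and $1$), combine with Pinsker's inequality $D(k/n\|p)\ge 2(k/n-p)^2$ to get $\Bin(n,p,k)\le \tfrac{C'}{\sqrt n}e^{-2(k-\mu)^2/n}$, and finish by comparing $\sum_k e^{-(k-\mu)^2/n}$ with $\int e^{-x^2/n}\,\d x=\sqrt{\pi n}$ to obtain an $O(1)$ bound. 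I would present the Hoeffding argument, as it is shorter and avoids the endpoint bookkeeping.
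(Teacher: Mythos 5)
Your proof is correct. The paper itself offers no proof of this fact---it is asserted with the one-line justification that it is ``essentially a consequence of a Binomial distribution being upper bounded by a corresponding Gaussian,'' which corresponds to the second, Stirling-plus-Pinsker route you sketch and decline to pursue. Your primary argument (layer cake plus Hoeffding) is a genuinely different and cleaner path: writing the sum as $\Exp\bigl[e^{(K-\mu)^2/n}\bigr]=1+\int_1^\infty\Pr[(K-\mu)^2\ge n\ln s]\,\d s$ and feeding in the two-sided Hoeffding bound $2e^{-2t^2/n}$ gives $2s^{-2}$ inside the integral and hence $C=3$. You are right that the factor $2$ in Hoeffding's exponent versus the factor $1$ in the weight is exactly what makes the tail integral converge, and this is the only place where sharpness matters. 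Your route buys an explicit absolute constant, uniformity over all $n\ge 1$ and all $p\in[0,1]$ (so the hypothesis that $\Delta$ be small is not needed, as you observe), and it sidesteps the endpoint bookkeeping that a pointwise pmf bound would require; the paper's implicit Gaussian-domination route would instead yield a pointwise comparison of the pmf with $e^{-2(k-\mu)^2/n}/\sqrt{n}$, which is what Fact~\ref{fact:Conv} repackages as a convolution bound, but for the stated summed inequality your argument is fully sufficient.
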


The sum of the pointwise products of the Binomial pmf and the inverse Gaussian can instead be re-expressed as the evaluation of a convolution between corresponding functions evaluated at a single point. We express this straightforward corollary below, and use it crucially in this section and the next.

\begin{fact}
\label{fact:Conv}
Consider the sequences $f^+_{n,k}(m) = e^{\frac{(k-(\frac{1}{2}+\Delta)n-m)^2}{n}}$ for $m \in \mathbb{Z}$, and $f^-_{n,k}(m) = e^{\frac{(k-(\frac{1}{2}-\Delta)n-m)^2}{n}}$ for $m \in \mathbb{Z}$.
Let $\Bin(n,p)$ be the pmf of the Binomial distribution with $n$ trials and bias $p$.
If $\Delta$ is sufficiently small, then there exists some absolute constant $C$ such that for all $n \ge 1$ and all $k$,
$$ (f^+_{n,k} \ast \Bin(n, \frac{1}{2}+\Delta))(k) \le C $$
and
$$ (f^-_{n,k} \ast \Bin(n, \frac{1}{2}-\Delta))(k) \le C  $$
\end{fact}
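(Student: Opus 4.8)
The plan is to observe that Fact~\ref{fact:Conv} is simply the preceding (unnamed) Fact re-expressed in convolution notation, so its proof is a one-line reindexing; I then sketch how the preceding Fact itself is established, since the excerpt states it without proof. First I would unfold the definition of discrete convolution: $(f^+_{n,k}\ast\Bin(n,\tfrac12+\Delta))(k)=\sum_{m\in\mathbb{Z}}f^+_{n,k}(m)\,\Bin(n,\tfrac12+\Delta,k-m)$. Substituting $j=k-m$ and using $f^+_{n,k}(k-j)=\exp\big((k-(\tfrac12+\Delta)n-(k-j))^2/n\big)=\exp\big((j-(\tfrac12+\Delta)n)^2/n\big)$, this sum becomes $\sum_{j\in\mathbb{Z}}e^{(j-(\frac12+\Delta)n)^2/n}\Bin(n,\tfrac12+\Delta,j)$, and since $\Bin(n,\cdot,j)=0$ for $j\notin[0..n]$ it equals exactly $\sum_{j\in[0..n]}e^{(j-(\frac12+\Delta)n)^2/n}\Bin(n,\tfrac12+\Delta,j)\le C$ by the preceding Fact. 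The $f^-$ statement is identical with $\tfrac12-\Delta$ in place of $\tfrac12+\Delta$. That is the entire argument for Fact~\ref{fact:Conv}; it is pure bookkeeping, with no obstacle.

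For the preceding Fact, write $\mu=\tfrac12\pm\Delta$ and $\sigma^2=\mu(1-\mu)=\tfrac14-\Delta^2$. The one ingredient needed is a standard local-limit pointwise upper bound on the binomial pmf: for $\mu$ in a fixed neighbourhood of $\tfrac12$ there are absolute constants $C_0$ and $c_0>1$ with $\Bin(n,\mu,j)\le\frac{C_0}{\sqrt n}\,e^{-c_0(j-\mu n)^2/n}$ for every $n\ge1$ and every $j$ (one may take $c_0$ slightly below $\frac{1}{2\sigma^2}=\frac{2}{1-4\Delta^2}$, which exceeds $1$ because $\sigma^2<\tfrac12$; this is precisely where the hypothesis ``$\Delta$ sufficiently small'' is used, to keep $\mu$ bounded away from the endpoints so that $C_0,c_0$ are genuinely absolute). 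Multiplying by $e^{(j-\mu n)^2/n}$ gives a summand bounded by $\frac{C_0}{\sqrt n}\exp\!\big(-(c_0-1)(j-\mu n)^2/n\big)$ with $c_0-1>0$, so, since this is unimodal in $j$, $\sum_j e^{(j-\mu n)^2/n}\Bin(n,\mu,j)\le\frac{C_0}{\sqrt n}\big(1+\int_{\mathbb{R}}e^{-(c_0-1)x^2/n}\,dx\big)=\frac{C_0}{\sqrt n}\big(1+\sqrt{\pi n/(c_0-1)}\big)=O(1)$, uniformly in $n\ge1$. This yields the claimed constant $C$.

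The only mildly delicate point in the whole argument is the last one: obtaining the $\frac{1}{\sqrt n}$ prefactor in the binomial estimate \emph{simultaneously} with a Gaussian exponent whose coefficient sits comfortably above $1$. A crude Chernoff/entropy bound such as $\Bin(n,\mu,j)\le e^{-2(j-\mu n)^2/n}$ (Pinsker) gives the right exponent coefficient but no $\frac{1}{\sqrt n}$, which would make the sum grow like $\sqrt n$ rather than stay $O(1)$. The fix is to use a genuine local limit estimate---e.g.\ Stirling applied to $\binom nj$, with the boundary terms $j\in\{0,n\}$ handled separately since $\Bin(n,\mu,0)$ and $\Bin(n,\mu,n)$ are exponentially small for $\mu$ near $\tfrac12$---which supplies the $\frac{1}{\sqrt n}$ while keeping the exponent coefficient $\approx\frac{1}{2\sigma^2}\approx 2>1$. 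Everything else---the reindexing, combining exponents, and the comparison of the resulting sum to a Gaussian integral---is routine.
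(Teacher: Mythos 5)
Your proposal is correct and takes the same route the paper intends: Fact~\ref{fact:Conv} is obtained from the preceding (unnamed) Fact purely by the reindexing $j=k-m$ in the convolution, which is exactly your first paragraph, and the paper indeed offers nothing more than this for Fact~\ref{fact:Conv} itself. Your remaining two paragraphs supply a proof of the preceding Fact, which the paper asserts without proof ("essentially a consequence of a Binomial distribution being upper bounded by a corresponding Gaussian"); your local-limit argument --- a pointwise bound $\Bin(n,\mu,j)\le \frac{C_0}{\sqrt n}e^{-c_0(j-\mu n)^2/n}$ with $c_0$ just under $\frac{1}{2\mu(1-\mu)}\approx 2>1$, followed by comparison with a Gaussian integral --- is sound, and you correctly identify that the $\frac{1}{\sqrt n}$ prefactor (unavailable from a bare Chernoff bound) is the one point requiring care.
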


To start lower bounding the expected sample complexity of the random walk, we start with the following two lemmas stating that if there is probability $c$ of reaching a right tail on a particular power-of-2 row, then there must be a tail on the previous power-of-2 row that the walk has high probability reaching.
These are formalized as Lemma~\ref{lem:ScaryTailBoostPos} and~\ref{lem:ScaryTailBoostNeg} for $\frac{1}{2}+\Delta$ coins and $\frac{1}{2}-\Delta$ coins respectively.
The crux of the arguments are (weighted) averaging arguments based on Fact~\ref{fact:Conv}.

\begin{lemma}
\label{lem:ScaryTailBoostPos}
Consider an arbitrary stopping rule $\{\gamma_{n,k}\}$ that is non-zero only for $n$ that are powers of 2.
For a coin with bias $\frac{1}{2}+\Delta$, suppose at row $2^j$ there is some position $k \in [(\frac{1}{2}+\Delta)2^j..2^j]$ such that the total probability mass of the random walk reaching positions $\ge k$ at row $2^j$ is at least $c$.
Then, there must be some position $k' \in [0..2^{j-1}]$ at row $2^{j-1}$ such that the probability of reaching positions $\ge k'$ at that row is at least $\frac{c}{C} \cdot f^+_{2^{j-1},k}(k') = \frac{c}{C} \cdot e^{\frac{(k-(\frac{1}{2}+\Delta)2^{j-1}-k')^2}{2^{j-1}}}$, where the constant $C$ and the function $f^+_{n,k}$ are defined in Fact~\ref{fact:Conv}.
\end{lemma}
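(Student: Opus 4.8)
The plan is to use Fact~\ref{fact:Conv} as a ``smoothing/averaging'' inequality to transfer a tail bound at row $2^j$ back to a tail bound at row $2^{j-1}$. First I would set up the right random variables: let $\mu_{2^{j-1}}$ denote the distribution over positions at row $2^{j-1}$ of the probability mass that \emph{reaches and does not stop} at that row (i.e.\ the $\eta$-mass, normalized or not), and similarly for row $2^j$. Because the stopping rule is nonzero only on powers of $2$, between row $2^{j-1}$ and row $2^j$ the walk performs a \emph{pure} binomially-distributed random walk with no intermediate stopping: the mass at position $k'$ in row $2^{j-1}$ that survives to row $2^j$ is spread to position $k'+\ell$ with probability exactly $\Bin(2^{j-1}, \tfrac12+\Delta, \ell)$. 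Hence the mass reaching position $\ge k$ at row $2^j$ is at most $\sum_{k'} (\text{mass at } k' \text{ in row } 2^{j-1}) \cdot \Pr[\Bin(2^{j-1},\tfrac12+\Delta) \ge k - k']$, where I drop the ``does not stop at row $2^{j-1}$'' condition upward (it only decreases mass) and also drop the possibility of stopping strictly between the rows (there is none).

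The key step is then a weighted-averaging / pigeonhole argument. Suppose toward a contradiction that for \emph{every} $k'\in[0..2^{j-1}]$ the probability of reaching positions $\ge k'$ at row $2^{j-1}$ is strictly less than $\frac{c}{C} f^+_{2^{j-1},k}(k')$. I would like to conclude that the mass at $\ge k$ in row $2^j$ is strictly less than $c$, contradicting the hypothesis. To do this, express the mass at $\ge k$ in row $2^j$ using the tail-mass function $T(k') := \Pr[\text{row } 2^{j-1} \text{ position} \ge k']$ via an Abel summation / layer-cake identity, so that the bound becomes a sum of the increments of $T$ against $\Pr[\Bin \ge k-k']$; equivalently, it is at most $\sum_{k'} (\text{pmf at }k') \cdot \Pr[\Bin(2^{j-1},\tfrac12+\Delta)\ge k-k']$, and I bound the pmf at $k'$ crudely by $T(k')< \frac{c}{C}f^+_{2^{j-1},k}(k')$ on the relevant range (or, more carefully, bound the whole tail sum by integrating the tail-bound hypothesis against the binomial). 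Plugging in the assumed bound $T(k') < \frac{c}{C} e^{(k-(\frac12+\Delta)2^{j-1}-k')^2/2^{j-1}}$ turns the estimate into $\frac{c}{C}\sum_{k'} e^{(k-(\frac12+\Delta)2^{j-1}-k')^2/2^{j-1}} \,\Bin(2^{j-1},\tfrac12+\Delta, k-k')$, which is exactly $\frac{c}{C}\,(f^+_{2^{j-1},k}\ast \Bin(2^{j-1},\tfrac12+\Delta))(k) \le \frac{c}{C}\cdot C = c$ by Fact~\ref{fact:Conv}. This contradicts the assumption that the mass at $\ge k$ is at least $c$, completing the argument.

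The main obstacle I anticipate is not the convolution bound itself (that is handed to us by Fact~\ref{fact:Conv}) but the careful bookkeeping in the averaging step: one must be precise about whether we are talking about pmf values or tail (cumulative) masses at row $2^{j-1}$, and must make sure the ``one bad row'' we extract is quantified correctly ($\exists k'$ versus $\forall k'$). The cleanest route is the contrapositive as above — assume the desired $k'$ does \emph{not} exist, i.e.\ the tail bound fails everywhere, then derive that the row-$2^j$ tail mass is $< c$ — because then the convolution inequality of Fact~\ref{fact:Conv} applies verbatim with the inverse-Gaussian weights. A secondary subtlety is that positions $\ge k$ at row $2^j$ can only be reached from positions $k' \le k$ at row $2^{j-1}$ (steps are $+1$ or $0$ in the $k$-coordinate per flip, or $\pm1$ in walk coordinates), so the sum over $k'$ is effectively truncated, which only helps; and one should note $k-k' \le 2^{j-1}$ automatically on the supported range so the binomial pmf is well-defined. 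Finally, I would remark that the symmetric statement for $\tfrac12-\Delta$ coins (Lemma~\ref{lem:ScaryTailBoostNeg}) follows by the identical argument with $f^-$ and $\Bin(2^{j-1},\tfrac12-\Delta)$ in place of their $+$ counterparts.
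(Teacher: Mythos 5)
Your proposal is correct and takes essentially the same approach as the paper: both argue by contraposition, use that between consecutive power-of-2 rows the walk is a pure binomial convolution so that the row-$2^j$ tail at $k$ equals $\sum_{k'} T(k')\,\Bin(2^{j-1},\tfrac12+\Delta,k-k')$ (the paper phrases this as $(D\ast I)\ast \Bin$ via associativity of convolution, which is your Abel-summation identity), and then invoke Fact~\ref{fact:Conv}. Two small cautions: of your two suggested routes only the ``integrate the tail bound against the binomial pmf'' one yields the expression that Fact~\ref{fact:Conv} controls --- the ``crude'' bound of the pmf by $T(k')$ combined with $\Pr[\Bin\ge k-k']$ gives $\sum_{k'}T(k')\Pr[\Bin\ge k-k']$, which is not $(f^+_{2^{j-1},k}\ast\Bin)(k)$ --- and one must also extend the hypothesized pointwise bound $T(k')<\tfrac{c}{C}f^+_{2^{j-1},k}(k')$ to coordinates $k'\le 0$ and $k'>2^{j-1}$ (where $T$ is constant, respectively zero, and $f^+_{2^{j-1},k}$ only grows) before summing, as the paper does explicitly.
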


\begin{proof}
Let us denote by $D^{\downarrow}_{n}$ the vector (over $k \in [0..n]$) of probabilities that the random walk using a coin of bias $\frac{1}{2}+\Delta$ reaches but does not stop at the location $(n,k)$.
Similarly, let us denote by $D_n$ the vector (over $k' \in [0..n]$) of probabilities that the random walk using a $\frac{1}{2}+\Delta$ coin reaches the location $(n,k)$ (and can either stop at or leave the location).

Consider the vector $I$ that is 1 for all coordinates $\le 0$, and 0 otherwise.
Then for any vector $v$, $(v\ast I)(k) = \sum_{i \le k} v(k)$, using ``$\ast$" to denote convolution.

Assume for the sake of contradiction that the statement is false, namely that for all $k' \in [0..2^{j-1}]$, $(D_{2^{j-1}}\ast I)(k') < \frac{c}{C} \cdot f^+_{2^{j-1},k}(k')$.
Then, since $D^{\downarrow}_{2^{j-1}} \le D_{2^{j-1}}$ pointwise, we have for all $k' \in [0..2^{j-1}]$, $(D^\downarrow_{2^{j-1}}\ast I)(k') < \frac{c}{C} \cdot f^+_{2^{j-1},k}(k')$.
Observe that $D^\downarrow_{2^{j-1}}\ast I$ is constant for all coordinates $\le 0$, and that $f^+_{2^{j-1},k}$ is a decreasing function in the same region if $k \in [(\frac{1}{2}+\Delta)2^j..2^j]$ (as in the lemma assumption), and therefore $D^\downarrow_{2^{j-1}}\ast I < f^+_{2^{j-1},k}$ also for that region since the inequality holds at coordinate $0$.
As for coordinates $> 2^j$, $D^\downarrow_{2^{j-1}}\ast I$ is 0, whilst $f^+_{2^{j-1},k}$ is strictly positive.
It follows that the inequality also holds for coordinates $> 2^j$, and thus it holds everywhere.

From this, using the commutativity of convolution, we have
\begin{align*}
    D_{2^{j}} \ast I &= \left(D^\downarrow_{2^{j-1}} \ast \Bin(2^{j-1},\frac{1}{2}+\Delta)\right) \ast I\\
    &= (D^\downarrow_{2^{j-1}}\ast I) \ast \Bin(2^{j-1},\frac{1}{2}+\Delta)\\
    &< f^+_{2^{j-1},k} \ast \Bin(2^{j-1},\frac{1}{2}+\Delta)
\end{align*}
which holds pointwise, in particular at coordinate $k$.
However, $(D_{2^j} \ast I)(k) = c$ by assumption, but $f^+_{2^{j-1},k} \ast \Bin(2^{j-1},\frac{1}{2}+\Delta)(k) \le c$ by Fact~\ref{fact:Conv}, which is a contradiction.
\end{proof}

\begin{lemma}
\label{lem:ScaryTailBoostNeg}
Consider an arbitrary stopping rule $\{\gamma_{n,k}\}$ that is non-zero only for $n$ that are powers of 2.
For a coin with bias $\frac{1}{2}-\Delta$, suppose at row $2^j$ there is some position $k \in [(\frac{1}{2}-\Delta)2^j..2^j]$ such that the total probability mass of the random walk reaching positions $\ge k$ at row $2^j$ is at least $c$.
Then, there must be some position $k' \in [0..2^{j-1}]$ at row $2^{j-1}$ such that the probability of reaching positions $\ge k'$ at that row is at least $\frac{c}{C} \cdot f^-_{2^{j-1},k}(k') = \frac{c}{C} \cdot e^{\frac{(k-(\frac{1}{2}-\Delta)2^{j-1}-k')^2}{2^{j-1}}}$, where the constant $C$ and the function  $f^-_{n,k}$ are defined in Fact~\ref{fact:Conv}.
\end{lemma}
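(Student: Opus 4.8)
The plan is to run the proof of Lemma~\ref{lem:ScaryTailBoostPos} essentially verbatim, with $\frac{1}{2}-\Delta$ in place of $\frac{1}{2}+\Delta$ and $f^-_{n,k}$ in place of $f^+_{n,k}$ throughout, while re-checking the two sign/monotonicity conditions that made that argument go through. Concretely, I would first set up the same objects for a coin of bias $\frac{1}{2}-\Delta$: let $D^{\downarrow}_{2^{j-1}}$ be the vector (indexed by $k'\in[0..2^{j-1}]$) of probabilities that the walk reaches but does not stop at $(2^{j-1},k')$, and $D_{2^j}$ the ``reaches'' vector at row $2^j$. Since the stopping rule is supported only on powers of $2$, the walk strictly between rows $2^{j-1}$ and $2^j$ is an unguided $\Bin(2^{j-1},\frac{1}{2}-\Delta)$ random walk, so $D_{2^j} = D^{\downarrow}_{2^{j-1}}\ast \Bin(2^{j-1},\frac{1}{2}-\Delta)$. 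Let $I$ be the indicator of non-positive coordinates, so that $v\ast I$ computes right-tail sums of $v$.

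Next I would argue by contradiction: suppose that for every $k'\in[0..2^{j-1}]$ the probability of reaching positions $\ge k'$ at row $2^{j-1}$ is strictly less than $\frac{c}{C}f^-_{2^{j-1},k}(k')$. Since $D^{\downarrow}_{2^{j-1}}\le D_{2^{j-1}}$ pointwise, this gives $(D^{\downarrow}_{2^{j-1}}\ast I)(k')<\frac{c}{C}f^-_{2^{j-1},k}(k')$ for $k'\in[0..2^{j-1}]$, and I would then extend this strict pointwise inequality to all integer coordinates, exactly as in the positive case: for $k'\le 0$ the left side $D^{\downarrow}_{2^{j-1}}\ast I$ is constant (equal to the total mass on row $2^{j-1}$, since $D^{\downarrow}_{2^{j-1}}$ is supported on $[0..2^{j-1}]$), while $f^-_{2^{j-1},k}$ is \emph{decreasing} there; for $k'>2^j$ the left side is $0$ while $f^-_{2^{j-1},k}>0$. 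The one substantive verification is that $m\mapsto e^{(k-(\frac{1}{2}-\Delta)2^{j-1}-m)^2/2^{j-1}}$ is indeed decreasing for $m\le 0$, which needs $k-(\frac{1}{2}-\Delta)2^{j-1}\ge 0$; this follows from the hypothesis $k\ge(\frac{1}{2}-\Delta)2^j = 2(\frac{1}{2}-\Delta)2^{j-1}\ge(\frac{1}{2}-\Delta)2^{j-1}\ge 0$.

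Finally I would convolve both sides of $D^{\downarrow}_{2^{j-1}}\ast I < \frac{c}{C}f^-_{2^{j-1},k}$ with the nonnegative kernel $\Bin(2^{j-1},\frac{1}{2}-\Delta)$, which preserves the inequality pointwise, use commutativity and associativity of convolution to rewrite the left side as $(D^{\downarrow}_{2^{j-1}}\ast\Bin(2^{j-1},\frac{1}{2}-\Delta))\ast I = D_{2^j}\ast I$, and evaluate at coordinate $k$. By the hypothesis of the lemma, $(D_{2^j}\ast I)(k)\ge c$, whereas the right side at $k$ equals $\frac{c}{C}\big(f^-_{2^{j-1},k}\ast\Bin(2^{j-1},\frac{1}{2}-\Delta)\big)(k)\le\frac{c}{C}\cdot C = c$ by Fact~\ref{fact:Conv} (the negative-coin case), a contradiction. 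The only place the argument deviates from a literal copy of the proof of Lemma~\ref{lem:ScaryTailBoostPos} is the monotonicity check on $f^-$ over the non-positive coordinates described above — and that is also the one step I expect to be the ``main obstacle,'' though it is mild: it is exactly where the hypothesis $k\in[(\frac{1}{2}-\Delta)2^j..2^j]$ is used, and everything else transfers unchanged.
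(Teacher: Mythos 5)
Your proposal is correct and is exactly the argument the paper intends: its proof of this lemma is literally the one-line remark that it is ``completely analogous'' to Lemma~\ref{lem:ScaryTailBoostPos}, and you have carried out that analogy, correctly identifying and verifying the single sign-sensitive step (that $f^-_{2^{j-1},k}$ is decreasing on the non-positive coordinates because $k\ge(\frac{1}{2}-\Delta)2^j\ge(\frac{1}{2}-\Delta)2^{j-1}$).
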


\begin{proof}
The proof is completely analogous to that of Lemma~\ref{lem:ScaryTailBoostPos}.
\end{proof}

In order to conclude the sample complexity lower bound corresponding to a particular row, we need the following lemma saying that, if we repeatedly apply Lemma~\ref{lem:ScaryTailBoostPos} (or Lemma~\ref{lem:ScaryTailBoostNeg}), then some row $2^j$ will have a large probability of stopping at that row, which will contribute a large amount to the overall sample complexity. Further, when $2^j$ is smaller (corresponding to fewer samples taken before stopping), the probability bound induced by the following lemma will be correspondingly higher, so that the product of the row and its stopping probability (i.e., a lower bound on total sample complexity) will be high for the $j$ produced by the lemma.

\begin{lemma}
\label{lem:ProbBoost}
Consider an arbitrary sequence of numbers $\{g_j\}_{j \in [0..J]}$ such that $\sum_j g_j = K$.
Let $r_j$ be chosen arbitrarily such that $r_j \ge \frac{1}{C} e^{g_j^2/2^j}$, where $C$ is the constant in Fact~\ref{fact:Conv} and let $\pi_j = \prod_{i = j}^J r_i$.
Furthermore suppose that $K^2 \ge 100 \log(2C) \cdot 2^J$. Then there exists $j \in [0..J]$ such that $\pi_j 2^{j-J} \ge e^{0.01K^2/2^J}$.
\end{lemma}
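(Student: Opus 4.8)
The plan is to argue by contradiction: suppose that for every $j \in [0..J]$ we have $\pi_j 2^{j-J} < e^{0.01 K^2/2^J}$, and derive a violation of the constraint $\sum_j g_j = K$. First I would take logarithms throughout. Writing $L_j = \log \pi_j = \sum_{i=j}^J \log r_i$, the assumed bound reads $L_j < 0.01 K^2/2^J + (J-j)\log 2$ for all $j$. On the other hand, since $r_i \ge \frac{1}{C} e^{g_i^2/2^i}$, we have $\log r_i \ge g_i^2/2^i - \log C$, and so
\[
L_j \;=\; \sum_{i=j}^J \log r_i \;\ge\; \sum_{i=j}^J \frac{g_i^2}{2^i} \;-\; (J-j+1)\log C.
\]
Taking $j = 0$ gives $\sum_{i=0}^J g_i^2/2^i \le L_0 + (J+1)\log C < 0.01 K^2/2^J + (J+1)(\log 2 + \log C)$. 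Using the hypothesis $K^2 \ge 100\log(2C)\cdot 2^J$, the term $(J+1)(\log 2 + \log C)$ is dominated (after checking the elementary inequality $(J+1)\log(2C) \le 0.01 K^2/2^J$ for $J \ge 0$, which follows since $2^J \ge J+1$), so we obtain a clean upper bound of the form $\sum_{i=0}^J g_i^2/2^i \le c_0 K^2/2^J$ for a small absolute constant $c_0$ (roughly $0.02$).

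The second half of the argument is a Cauchy–Schwarz step to show this upper bound contradicts $\sum_j g_j = K$. By Cauchy–Schwarz with weights $2^i$,
\[
K^2 \;=\; \Bigl(\sum_{i=0}^J g_i\Bigr)^2 \;=\; \Bigl(\sum_{i=0}^J \frac{g_i}{2^{i/2}}\cdot 2^{i/2}\Bigr)^2 \;\le\; \Bigl(\sum_{i=0}^J \frac{g_i^2}{2^i}\Bigr)\Bigl(\sum_{i=0}^J 2^i\Bigr) \;\le\; \Bigl(\sum_{i=0}^J \frac{g_i^2}{2^i}\Bigr)\cdot 2^{J+1}.
\]
Hence $\sum_{i=0}^J g_i^2/2^i \ge K^2/2^{J+1}$. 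Combining with the upper bound $c_0 K^2/2^J$ from the previous paragraph yields $K^2/2^{J+1} \le c_0 K^2/2^J$, i.e.\ $1/2 \le c_0$, which is false for $c_0 \approx 0.02$. This contradiction establishes the lemma.

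The one place needing a little care — and the main (minor) obstacle — is bookkeeping the constants: I need the constant $0.01$ in the conclusion to be small enough relative to $1/2$ even after absorbing both the $(J-j)\log 2$ slack in the assumed bound and the $(J+1)\log C$ error from the $r_i \ge \frac{1}{C}e^{g_i^2/2^i}$ inequality, using only the given hypothesis $K^2 \ge 100\log(2C)\cdot 2^J$. The numerology works out with room to spare: the combined error is at most $\tfrac{1}{100}K^2/2^J \cdot (\text{a factor of order }1)$, well below $1/2 - 0.01$, so the chain closes. I would present the constant-chasing compactly rather than optimizing, since the statement only asserts existence of such a $j$ with these particular absolute constants.
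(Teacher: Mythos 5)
There is a genuine gap, and it is not the ``constant-chasing'' you flag as the only delicate point. The step that fails is the claimed elementary inequality $(J+1)\log(2C)\le 0.01K^2/2^J$. The hypothesis $K^2\ge 100\log(2C)\cdot 2^J$ gives $\log(2C)\le 0.01K^2/2^J$ \emph{per level}, so summing the $1/C$ losses over all $J+1$ levels only yields $(J+1)\log(2C)\le 0.01(J+1)K^2/2^J$; the bound $2^J\ge J+1$ gets you to $0.01K^2$, not to $0.01K^2/2^J$. Your Cauchy--Schwarz step correctly gives $\sum_i g_i^2/2^i\ge K^2/2^{J+1}=0.5\,K^2/2^J$, but this beats the target only by an absolute factor of $50$, while the accumulated error $0.01(J+1)K^2/2^J$ grows linearly in $J$. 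For $J\ge 49$ the chain $K^2/2^{J+1}\le 0.01K^2/2^J+0.01(J+1)K^2/2^J$ is simply true, so no contradiction arises. In the application $J$ is $\Theta(\log\frac{1}{\Delta^2})$ and unbounded, so this is not a removable slack.

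The deeper issue is that your argument only ever uses the $j=0$ instance of the assumed negation, i.e.\ it implicitly tries to prove that $j=0$ always witnesses the conclusion, and that is false. Take $g_i=K2^i/(2^{J+1}-1)$ (the Cauchy--Schwarz extremal case), $r_i=\frac{1}{C}e^{g_i^2/2^i}$ exactly, and $K^2=100\log(2C)2^J$ with $J$ large: then $\log(\pi_0 2^{-J})\approx 0.5K^2/2^J-(J+1)\log(2C)-J\log 2<0$, so $j=0$ fails, while $j=J$ succeeds because $g_J\approx K/2$ makes the single term $g_J^2/2^J\approx 0.25K^2/2^J$ large with only one $\log C$ loss. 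This is exactly why the paper chooses $j$ adaptively: by an averaging argument there is some $j$ with $g_j\ge\frac{K}{5}(0.8)^{J-j}$, hence $g_j^2/2^j\ge\frac{K^2}{25}\frac{(1.28)^{J-j}}{2^J}$, and this single term grows \emph{geometrically} in $J-j$ while the error $(J-j+1)\log(2C)$ grows only linearly, which is what closes the inequality uniformly in $J$. To repair your proof you would need to use the negation for a well-chosen $j$ (or all $j$ simultaneously), not just $j=0$; some mechanism that trades off the depth of the product against the number of $\log C$ losses is unavoidable.
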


\begin{proof}
Taking logarithms and rearranging, we see that it suffices to show the existence of $j$ such that $(j-J-1)\log (2C) + \sum_{i = j}^{J} \frac{g_j^2}{2^j} \ge 0.01 \frac{K^2}{2^J}$.

The sequence $\frac{K}{5} \cdot 0.8^{J-j}$ for $j \in [0..J]$ sums up to less than $K$.
Since $\sum_j g_j = K$, there must exist a $j$ such that $g_j \ge \frac{K}{5} 0.8^{J-j}$.
Therefore, $\frac{g_j^2}{2^j} \ge \frac{K^2}{25} \frac{0.64^{J-j}}{2^j} = \frac{K^2}{25} \frac{1.28^{J-j}}{2^J}$.

It suffices to show that $\frac{K^2}{25} \frac{1.28^{J-j}}{2^J} \ge 0.01 \frac{K^2}{2^J} + (J-j+1)\log(2C)$.
It is easy to check that a sufficient condition is $K^2/2^J \ge 100\log(2C)$, as assumed in the lemma statement; thus we conclude the above inequality for all $j \in [0..J]$.
\end{proof}

Now we use Lemmas~\ref{lem:ScaryTailBoostPos},~\ref{lem:ScaryTailBoostNeg} and~\ref{lem:ProbBoost} to prove the sample complexity lower bound corresponding to a particular row (Lemma~\ref{lem:SampleBoost}).
Afterwards we shall combine these bounds across all possible power-of-2 rows to prove Proposition~\ref{prop:scary}.

\begin{lemma}
\label{lem:SampleBoost}
Consider an arbitrary stopping rule $\{\gamma_{n,k}\}$ that is non-zero only for $n$ that are powers of 2.
For a mixture coin that has bias $\frac{1}{2}+\Delta$ with probability $\rho+\frac{\eps}{2}$ and bias $\frac{1}{2}-\Delta$ otherwise, suppose at row $2^{J+1}$ there is some position $k \in [(\frac{1}{2}+\Delta)2^{J+1}..2^{J+1}]$ such that the probability mass of the random walk reaching positions $\ge k$ at row $2^{J+1}$ is $c$.
If $k \ge (\frac{1}{2}+\Delta)2^{J+1} + \sqrt{100\log(2C)}2^{\frac{J}{2}} + 1$, then the expected sample complexity of a single random walk using the above mixture coin is at least $2^{J-1} \cdot c \cdot e^{0.01(k-(\frac{1}{2}+\Delta)2^{J+1})^2/2^J}$.
\end{lemma}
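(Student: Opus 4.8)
The plan is to reduce the claim about the mixture walk to a claim about each pure coin type, and then prove the pure-coin claim by iterating the tail-boosting lemmas (Lemmas~\ref{lem:ScaryTailBoostPos} and~\ref{lem:ScaryTailBoostNeg}) down the powers-of-$2$ rows and feeding the resulting ``excess drift'' sequence into Lemma~\ref{lem:ProbBoost}.

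\textbf{Reduction to pure coins.} The mixture walk first draws the coin's bias ($\tfrac12+\Delta$ with probability $\rho+\tfrac\eps2$, else $\tfrac12-\Delta$) and then performs the walk with that bias held fixed, so conditioning on the bias gives $c=(\rho+\tfrac\eps2)c^+ +(1-\rho-\tfrac\eps2)c^-$, where $c^+$ and $c^-$ are the probabilities, for a pure $\tfrac12+\Delta$ coin and a pure $\tfrac12-\Delta$ coin respectively, of reaching position $\ge k$ at row $2^{J+1}$; likewise $\Exp_{\rho+\frac\eps2}[n]=(\rho+\tfrac\eps2)\Exp_+[n]+(1-\rho-\tfrac\eps2)\Exp_-[n]$, where $\Exp_\pm[n]$ is the expected number of flips of a single run on a pure $\tfrac12\pm\Delta$ coin. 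I will show that for each $p\in\{\tfrac12\pm\Delta\}$ the pure-$p$ walk obeys $\Exp_p[n]\ge 2^{J-1}c^{(p)}e^{0.01(k-p\cdot 2^{J+1})^2/2^J}$, writing $c^{(p)}$ for $c^+$ when $p=\tfrac12+\Delta$ and $c^-$ when $p=\tfrac12-\Delta$; the hypothesis this needs, $k\ge p\cdot 2^{J+1}+\sqrt{100\log(2C)}\,2^{J/2}+1$, holds for both $p$ because it is weakest at $p=\tfrac12+\Delta$, which is exactly the hypothesis of the lemma. Since $k-p\cdot 2^{J+1}\ge k-(\tfrac12+\Delta)2^{J+1}\ge 0$ for both $p$, each exponent dominates $0.01(k-(\tfrac12+\Delta)2^{J+1})^2/2^J$, so taking the $(\rho+\tfrac\eps2,\,1-\rho-\tfrac\eps2)$-weighted average of the two pure bounds yields exactly $\Exp_{\rho+\frac\eps2}[n]\ge 2^{J-1}c\,e^{0.01(k-(\frac12+\Delta)2^{J+1})^2/2^J}$, which is the lemma.

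\textbf{The pure-coin bound.} Fix $p=\tfrac12+\Delta$ (the other case is identical using Lemma~\ref{lem:ScaryTailBoostNeg}). Start at row $2^{J+1}$ with position $k_{J+1}:=k$ and tail mass $c^+$; the first application of Lemma~\ref{lem:ScaryTailBoostPos} is legal since $k\ge p\cdot 2^{J+1}$. Repeatedly apply the lemma: as long as the current position $k_i$ at row $2^i$ satisfies the precondition $k_i\ge p\cdot 2^i$, we obtain a position $k_{i-1}\in[0,2^{i-1}]$ at row $2^{i-1}$ whose tail mass is at least $\tfrac1C e^{g_{i-1}^2/2^{i-1}}$ times the tail mass at $k_i$, where $g_{i-1}:=k_i-p\cdot 2^{i-1}-k_{i-1}$ is the excess rightward drift over the last $2^{i-1}$ flips. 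Let the iteration stop at row $2^{j_0}$ (so $j_0=0$, or else $k_{j_0}<p\cdot 2^{j_0}$). Set $r_i:=\tfrac1C e^{g_i^2/2^i}$ for $i\in[j_0..J]$ and $r_i:=\tfrac1C$ (i.e.\ $g_i:=0$) for $i<j_0$, and $\pi_j:=\prod_{i=j}^J r_i$. The chain of applications shows that for every $j\in[j_0..J]$ the walk reaches position $\ge k_j$ at row $2^j$ with probability at least $c^+\pi_j$; since the stopping rule fires only at powers of $2$, reaching row $2^j$ forces $n\ge 2^j$, so $\Exp_p[n]\ge 2^j\cdot c^+\pi_j$ for each such $j$. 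A telescoping sum gives $K:=\sum_{i=0}^J g_i=\sum_{i=j_0}^J g_i=(k-p\cdot 2^{J+1})-(k_{j_0}-p\cdot 2^{j_0})$, which is $\ge (k-p\cdot 2^{J+1})-1$ when $j_0=0$ (as $k_0\in\{0,1\}$) and strictly exceeds $k-p\cdot 2^{J+1}$ when $j_0>0$; either way $K\ge (k-p\cdot 2^{J+1})-1\ge\sqrt{100\log(2C)}\,2^{J/2}$, so $K^2\ge 100\log(2C)2^J$ and Lemma~\ref{lem:ProbBoost} applies, returning $j^*\in[0..J]$ with $\pi_{j^*}2^{j^*-J}\ge e^{0.01K^2/2^J}$. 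If $j^*<j_0$ we may replace it by $j_0$, since moving up one padded level multiplies $\pi_j 2^{j-J}$ by $2C\ge 1$; thus we may assume $j^*\ge j_0$ and use $\Exp_p[n]\ge 2^{j^*}c^+\pi_{j^*}=c^+\,(2^{j^*}\pi_{j^*})\ge c^+\,2^J e^{0.01K^2/2^J}$. Finally, as $k-p\cdot 2^{J+1}\le 2^{J+1}$ we have $K^2\ge (k-p\cdot 2^{J+1})^2-2^{J+2}$, so $e^{0.01K^2/2^J}\ge e^{-0.04}e^{0.01(k-p\cdot 2^{J+1})^2/2^J}\ge\tfrac12 e^{0.01(k-p\cdot 2^{J+1})^2/2^J}$, giving $\Exp_p[n]\ge 2^{J-1}c^+e^{0.01(k-p\cdot 2^{J+1})^2/2^J}$, the claimed pure-coin bound.

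\textbf{Main obstacle.} The delicate point is that Lemma~\ref{lem:ScaryTailBoostPos} only guarantees its returned position lies in $[0,2^i]$, not to the right of the mean, so the iteration may halt before reaching row $1$. The argument turns this into an advantage---an early halt with $k_{j_0}<p\cdot 2^{j_0}$ only makes the telescoped excess drift $K$ larger, and padding the lower levels with zero drift is costless---but one must verify carefully that the padding does not weaken the conclusion of Lemma~\ref{lem:ProbBoost} (handled by the $(2C)^{j_0-j^*}\ge 1$ observation) and that ``the walk reaches row $2^{j^*}$'' converts cleanly into $n\ge 2^{j^*}$, which is exactly where the restriction to power-of-$2$ stopping rows is used. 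The remaining work is the routine telescoping/arithmetic checks sketched above.
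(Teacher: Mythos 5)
Your proof is correct and follows essentially the same route as the paper: iterate Lemmas~\ref{lem:ScaryTailBoostPos}/\ref{lem:ScaryTailBoostNeg} down the power-of-two rows, telescope the excess drifts into $K$, and feed the resulting $r_i$'s into Lemma~\ref{lem:ProbBoost}. The only (harmless) differences are bookkeeping: you prove the pure-coin bound for both biases and take the $(\rho+\frac{\eps}{2})$-weighted average, whereas the paper uses a dichotomy (one of the two pure tail masses must carry at least half of $c$) and recovers the factor $2^{J-1}$ from that halving; and you explicitly absorb the $K\ge k-(\frac{1}{2}+\Delta)2^{J+1}-1$ slack via $e^{-0.04}\ge\frac{1}{2}$, a point the paper's write-up glosses over.
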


We point out that the restriction on $k$ (that it lies at least a constant number of standard deviations to the right of its mean) includes the entire high discrepancy region, as analyzed in this section, and further includes all of the larger yet analogous region for the analysis of the last row in the next section.

\begin{proof}[Proof of Lemma~\ref{lem:SampleBoost}]
The probability of the random walk reaching positions $\ge k$ at row $2^{J+1}$ using a mixture coin is the sum of $\rho+\frac{\eps}{2}$ times such probability of the random walk using a $\frac{1}{2} + \Delta$ coin and $1-\rho-\frac{\eps}{2}$ times such probability of the random walk using a $\frac{1}{2}-\Delta$ coin.
Since the total probability of this walk reaching positions $\ge k$ equals $c$, at least half this probability must come from one of the two coin types. Explicitly, at least one of the following two statements has to be true: 1) the probability that the random walk using a coin with $\frac{1}{2}+\Delta$ bias reaches positions $\ge k$ at row $2^{J+1}$ is at least $c/(2\rho + \eps)$, or 2) the same probability but using a $\frac{1}{2} - \Delta$ coin instead is at least $c/(2 - 2\rho - \eps)$.

For case 1, we repeatedly apply Lemma~\ref{lem:ScaryTailBoostPos} to generate a sequence of $\{k_j\}$ from $j = J$ backwards (and $k_{J+1} = k$), until $k_{j^*} < (\frac{1}{2}+\Delta)2^{j^*}$ or $j^* = 0$.
By induction, the probability of reaching positions $\ge k_j$ at row $2^j$ is at least $\frac{c}{2\rho + \eps}\cdot\prod_{i = j}^J \frac{1}{C} e^{\frac{(k_{i+1} - (\frac{1}{2}+\Delta)2^i) - k_i)^2}{2^i}}$.
We would now apply Lemma~\ref{lem:ProbBoost} with $g_i = k_{i+1} - k_i - (\frac{1}{2}+\Delta)2^i$ for $i \ge j^*$, and $g_i = 0$ for $i < j^*$, noting that $K$ in that lemma that we get is $K = \sum_{i=j^*}^J k_{i+1} - k_i - (\frac{1}{2}+\Delta)2^i \ge k_{J+1} - k_{j^*} - \sum_{i=j^*}^J (\frac{1}{2}+\Delta)2^i > k_{J+1} (= k) - (\frac{1}{2}+\Delta)2^{J+1} - 1$ since $k_{j^*} < (\frac{1}{2}+\Delta)2^{j^*}$ if $j^* > 0$ and $k_0 \le 1$ when $j^* = 0$.
Since we assumed in the lemma statement that $k \ge (\frac{1}{2}+\Delta)2^{J+1} + \sqrt{100\log(2C)}2^{\frac{J}{2}} + 1$, we have $K^2/2^J \ge 100\log(2C)$.

Therefore, as a result of applying Lemma~\ref{lem:ProbBoost}, we know that there exists $j$ such that $$2^{j-J} \prod_{i=j}^J \frac{1}{C} e^{\frac{(k_{i+1} - (\frac{1}{2}+\Delta)2^i) - k_i)^2}{2^i}} \ge e^{0.01(k-(\frac{1}{2}+\Delta)2^{J+1})^2/2^J}$$

Thus in case 1, we multiply the left hand side by $c/(2\rho+\eps) 2^J$ to give a lower bound on the expected sample complexity of the random walk, using a $\frac{1}{2}+\Delta$ coin.
We thus use the above inequality to conclude a lower bound of $2^J \frac{c}{2\rho + \eps} e^{0.01(k-(\frac{1}{2}+\Delta)2^{J+1})^2/2^J}$ for the expected sample complexity conditioned on a $\frac{1}{2}+\Delta$ coin.
Since the mixture coin has probability $\rho+\frac{\eps}{2}$ of being a $\frac{1}{2}+\Delta$ coin, the lemma statement follows.

The proof for case 2 is completely analogous, using Lemma~\ref{lem:ScaryTailBoostNeg} instead of Lemma~\ref{lem:ScaryTailBoostPos}, and noting that $k-(\frac{1}{2}-\Delta)2^{J+1} \ge k-(\frac{1}{2}+\Delta)2^{J+1} \ge 0$.
\end{proof}

Equipped with Lemma~\ref{lem:SampleBoost}, we prove Proposition~\ref{prop:scary}.

\begin{proof}[Proof of Proposition~\ref{prop:scary}]
The general strategy is to show using Lemma~\ref{lem:SampleBoost} that, for each row (from 1 to $10^{-8}/\Delta^2$), if there is some probability $c_J$ for the random walk to reaching the high discrepancy region, then: 1) the total expected sample complexity must be large, and 2) by Lemma~\ref{lem:HellingerForm}, if there is probability $c_J$ of reaching the high discrepancy region at row $2^J$, then the contribution to the squared Hellinger distance by the high discrepancy region at row $2^J$ is upper bounded by $\Theta(c_J \eps^2/\rho^2)$.
Thus the squared Hellinger distance per sample complexity for the high discrepancy region of each row is small, and our bounds are in fact strong enough for us to simply take a union bound over the rows and lose by no more than a constant factor.
We now formalize the above argument.

Consider the rows $2^{J+1}$ for $J \in [-1..(\log_2 \frac{10^{-8}}{\Delta^2}) - 1]$.
Recall that the high discrepancy region consists of coordinates $k \in [0..2^{J+1}]$ such that $h^+_{2^{J+1},k}/h^-_{2^{J+1},k} \ge 1/\rho^{0.1}$.
Observe that $$ \frac{h^+_{2^{J+1},k}}{h^-_{2^{J+1},k}} = \left(\frac{1+2\Delta}{1-2\Delta}\right)^{2k-2^{J+1}} $$
and therefore the high discrepancy region consists of $k$ such that $2k-2^{J+1} \ge \frac{.1\log\frac{1}{\rho}}{\log\frac{1+2\Delta}{1-2\Delta}}$, implying that
$$ k \ge 2^{J} + \frac{.1\log\frac{1}{\rho}}{\log\frac{1+2\Delta}{1-2\Delta}} \ge \frac{1}{2}2^{J+1} + \frac{.099\log\frac{1}{\rho}}{4\Delta} $$
Furthermore, since $J \le (\log_2 \frac{10^{-8}}{\Delta^2}) - 1$, we have $2^{J} \le \frac{0.01}{2\Delta^2}$, which for sufficiently small $\rho$ and $\Delta$ (both smaller than some absolute constant, with no requirements on how they depend on each other) means that $\frac{.099\log\frac{1}{\rho}}{4\Delta} \ge \Delta 2^{J+1} + \sqrt{100\log(2C)}2^{\frac{J}{2}} + 1$.
Thus the coordinates $k$ in the high discrepancy region always satisfy the precondition of Lemma~\ref{lem:SampleBoost}.

Now note that for sufficiently small $\rho$ (smaller than some absolute constant),
$$ \left(k - \left(\frac{1}{2}+\Delta\right)2^{J+1}\right)^2 \ge \left(\frac{.098\log\frac{1}{\rho}}{4\Delta}\right)^2 \ge \frac{10^{-8}(\log\frac{1}{\rho})^2}{\Delta^2} $$
Therefore, if the probability of the random walk using a random coin reaches the high discrepancy region at row $2^{J+1}$ is $c_{J+1}$, then by Lemma~\ref{lem:SampleBoost}, the total expected sample complexity of the random walk must be at least $2^{J-1}\cdot c_{J+1} \cdot e^{0.01\frac{10^{-8}(\log\frac{1}{\rho})^2}{\Delta^2 \cdot 2^J}}$.

We can now upper bound the ratio between the high discrepancy region contribution to the squared Hellinger distance and the total expected sample complexity of the random walk by
\begin{align*}
    &\; \frac{\sum_{J \in [-1..(\log_2 \frac{10^{-8}}{\Delta^2}) - 1]} \Theta\left(\frac{\eps^2}{\rho^2}\right)c_{J+1}}{\Exp_{\rho+\frac{\eps}{2}} [n]}\\
    = &\; \Theta\left(\frac{\eps^2}{\rho^2}\right)\sum_{J \in [-1..(\log_2 \frac{10^{-8}}{\Delta^2}) - 1]} \frac{c_{J+1}}{\Exp_{\rho+\frac{\eps}{2}}[n]}\\
    \le &\; \Theta\left(\frac{\eps^2}{\rho^2}\right) \sum_{J \in [-1..(\log_2 \frac{10^{-8}}{\Delta^2}) - 1]} \frac{c_{J+1}}{2^{J-1}\cdot c_{J+1} \cdot e^{0.01\frac{10^{-8}(\log\frac{1}{\rho})^2}{\Delta^2 \cdot 2^J}}}\\
    = &\; \Theta\left(\frac{\eps^2}{\rho^2}\right) \sum_{J \in [-1..(\log_2 \frac{10^{-8}}{\Delta^2}) - 1]} \frac{\rho^{0.02 \cdot \log\frac{1}{\rho} \cdot \frac{10^{-8}}{2\Delta^2 \cdot 2^J}}}{2^{J-1}}\\
    \le &\; \Theta\left(\frac{\eps^2}{\rho^2}\right) \sum_{J \in [-1..(\log_2 \frac{10^{-8}}{\Delta^2}) - 1]} \frac{2^{-\frac{10^{-8}}{2\cdot \Delta^2 \cdot 2^J}}\rho^{0.02 \cdot \log\frac{1}{\rho}}}{2^{J-1}} \quad \text{ since for sufficiently small $\rho$, we have $\rho^{0.02\cdot\log\frac{1}{\rho}} < \frac{1}{2}$}\\
    = &\; \Theta\left(\frac{\eps^2\Delta^2\rho^{0.02\log\frac{1}{\rho}}}{\rho^{2}}\right) \quad \text{ as the sum is bounded by $O(\rho^{0.02\cdot\log\frac{1}{\rho}}\Delta^2)$}\\
    = &\; O\left(\frac{\eps^2\Delta^2}{\rho}\right)
\end{align*}

\end{proof}

\subsection{The Last Row}
\label{sect:last}

We lastly analyze the squared Hellinger distance contribution from the last row of the triangle.

\begin{proposition}
\label{prop:Wizard}
	Consider an arbitrary stopping rule $\{\gamma_{n,k}\}$ that 1) is non-zero only for $n$ that are powers of 2, and 2) $\gamma_{10^{-8}/\Delta^2,k} = 1$ for all $k$, that is the random walk always stops after $10^{-8}/\Delta^2$ coin flips.
	Let $$ H_\text{last}^2 = \Theta(\eps^2) \sum_{n = \frac{10^{-8}}{\Delta^2}, k \in [0..n]} \alpha_{n,k} \left((\rho+\frac{\eps}{2}) h^+_{n,k} + (1-\rho-\frac{\eps}{2}) h^-_{n,k}\right) \frac{\frac{h^+_{n,k}}{\rho} + h^-_{n,k}}{\rho h^+_{n,k} + (1-\rho) h^-_{n,k}} $$
	be the contribution of the squared Hellinger distance from the last row of the triangle, namely row $10^{-8}/\Delta^2$.
	Furthermore, again let $\Exp_{\rho+\frac{\eps}{2}}[n]$ be the expected number of coin flips on this random walk, where we use a $\frac{1}{2}+\Delta$ coin with probability $\rho+\frac{\eps}{2}$ (instead of $\rho$ or $\rho+\eps$), and a $\frac{1}{2}-\Delta$ coin otherwise.
	If all of $\rho$, $\eps$, $\Delta$ and $\eps/\rho$ are smaller than some universal absolute constant, then
	$$ \frac{H_\text{last}^2}{\Exp_{\rho+\frac{\eps}{2}}[n]} = O\left(\frac{\eps^2\Delta^2}{\rho}\right) $$
\end{proposition}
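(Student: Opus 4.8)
The plan is to bound the last-row contribution $H_\text{last}^2$ against $\Exp_{\rho+\frac{\eps}{2}}[n]$ using only two structural facts: reaching the last row $n_0:=10^{-8}/\Delta^2$ costs $n_0$ flips, and (by Lemma~\ref{lem:SampleBoost}) reaching far into the right tail of row $n_0$ costs even more. First I would record the uniform estimate $b_{n,k}:=(\rho+\tfrac{\eps}{2})h^+_{n,k}+(1-\rho-\tfrac{\eps}{2})h^-_{n,k}\le 2(\rho h^+_{n,k}+(1-\rho)h^-_{n,k})$, valid since $\eps<2\rho$ and $\rho<\tfrac12$; hence every summand of $H_\text{last}^2$ obeys $b_{n,k}\cdot\frac{h^+_{n,k}/\rho+h^-_{n,k}}{\rho h^+_{n,k}+(1-\rho)h^-_{n,k}}\le 2\bigl(h^+_{n,k}/\rho+h^-_{n,k}\bigr)$. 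I would also note that since the walk always stops at row $n_0$ (so $\gamma_{n_0,k}=1$, i.e.\ $\alpha_{n_0,k}=\beta_{n_0,k}$), the quantity $\alpha_{n_0,k}h^\pm_{n_0,k}$ is exactly the probability that a $\tfrac12\pm\Delta$ coin's walk is at $(n_0,k)$.

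Next I would split the columns $k$ of the last row by the discrepancy $t_k:=h^+_{n_0,k}/h^-_{n_0,k}=\bigl(\tfrac{1+2\Delta}{1-2\Delta}\bigr)^{2k-n_0}$ into a \emph{central} region $\{t_k<C_1\}$, where $C_1=2^{\ell_0}$ for a suitably large absolute constant $\ell_0$, and dyadic \emph{tail} regions $R_\ell:=\{k: t_k\in[2^\ell,2^{\ell+1})\}$ for $\ell\ge\ell_0$. On the central region $\frac{h^+/\rho+h^-}{\rho h^++(1-\rho)h^-}=O(C_1/\rho)=O(1/\rho)$, so its contribution is at most $\Theta(\eps^2/\rho)\sum_{k\text{ central}}\alpha_{n_0,k}b_{n_0,k}\le\Theta(\eps^2/\rho)\,c_0$, where $c_0$ is the probability the $(\rho+\tfrac\eps2)$-mixture walk reaches row $n_0$; since $\Exp_{\rho+\frac\eps2}[n]\ge c_0 n_0$, this part of $H_\text{last}^2/\Exp_{\rho+\frac\eps2}[n]$ is $O(\eps^2/(\rho n_0))=O(\eps^2\Delta^2/\rho)$. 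For a tail region $R_\ell$, using $h^+_{n_0,k}<2^{\ell+1}h^-_{n_0,k}$ in the estimate above gives $b_{n_0,k}\cdot(\text{ratio})<2^{\ell+3}h^-_{n_0,k}/\rho$, hence the $R_\ell$ contribution is at most $\Theta(\eps^2 2^\ell/\rho)\,c^-_\ell$, where $c^-_\ell$ is the probability a $\tfrac12-\Delta$ coin's walk reaches row $n_0$ at a column $\ge k_\ell:=\min R_\ell\approx \tfrac{n_0}{2}+\tfrac{\ell\ln 2}{8\Delta}$ (and the contribution is trivially $0$ when $c^-_\ell=0$).

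Then I would invoke Lemma~\ref{lem:SampleBoost} with $k=k_\ell$: for $\ell_0$ a large enough absolute constant, $k_\ell$ exceeds $(\tfrac12+\Delta)n_0$ by at least a constant number of standard deviations, so the lemma's hypothesis holds and it yields $\Exp_{\rho+\frac\eps2}[n]\ge\tfrac{n_0}{8}\,c_\ell\,E_\ell$ with $E_\ell=e^{0.01(k_\ell-(\frac12+\Delta)n_0)^2/(n_0/2)}$ and $c_\ell$ the mixture's probability of reaching $\ge k_\ell$ on row $n_0$. Crucially $c_\ell\ge(1-\rho-\tfrac\eps2)c^-_\ell\ge\tfrac12 c^-_\ell$, so the (possibly large) factor $c^-_\ell$ cancels; and since $n_0=10^{-8}/\Delta^2$, a short computation gives $(k_\ell-(\tfrac12+\Delta)n_0)^2/(n_0/2)\ge c_3\ell^2$ for an absolute constant $c_3>0$, so $E_\ell\ge e^{c_3\ell^2}$. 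Therefore the $R_\ell$ contribution to $H_\text{last}^2/\Exp_{\rho+\frac\eps2}[n]$ is $O(\eps^2\Delta^2/\rho)\cdot 2^\ell e^{-c_3\ell^2}$, and summing this geometric-times-Gaussian series over $\ell\ge\ell_0$ gives a constant. Adding the central and tail bounds completes the proof; the bound for $H^2/\Exp_\rho[n]$ follows by the identical argument with the $\rho$-mixture in place of the $(\rho+\tfrac\eps2)$-mixture, using the version of Lemma~\ref{lem:SampleBoost} for the $\rho$-mixture (whose proof is the same, the mixture weight being inessential).

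I expect the main obstacle to be getting the $\rho$-dependence exactly right in the tail regions. A positive coin can reach a tail column $k_\ell$ with probability far larger than $\rho$, so bounding the sample complexity through the positive coins alone would only give $\Exp\gtrsim\rho\,c^+_\ell n_0$ and lose a factor $1/\rho$. The resolution is the pairing above: bound the Hellinger mass by the \emph{negative}-coin quantity $\Theta(\eps^2 2^\ell/\rho)c^-_\ell$ (obtained by substituting $h^+=t_k h^-$), and bound the sample complexity by the \emph{mixture's} cost $\Exp_{\rho+\frac\eps2}[n]\gtrsim n_0 c^-_\ell E_\ell$ --- the negative coins, essentially indistinguishable from the positive ones over the first $n_0\ll 1/\Delta^2$ flips, must reach each tail comparably often, so this is legitimate and loses nothing. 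The other delicate point, inherited from Lemma~\ref{lem:SampleBoost} and its "stop only at powers of $2$" hypothesis, is that this sample-complexity lower bound is available at every dyadic level.
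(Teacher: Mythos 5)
Your proof is correct and follows the same basic strategy as the paper's: split the last row into a central region where the Hellinger weight is $O(1/\rho)$ per unit of stopping mass (charged against the $n_0=10^{-8}/\Delta^2$ flips needed just to reach the last row) and a high-discrepancy right tail handled via Lemma~\ref{lem:SampleBoost}, with the same key pairing of the negative-coin tail mass in the Hellinger bound against the mixture's sample-complexity lower bound. The one genuine difference is in how the tail is aggregated: the paper fixes the total tail probability $c_{\mathrm{disc}}$, uses a weighted averaging argument to exhibit a \emph{single} dyadic bucket $i$ carrying at least a $\frac{1}{5}(0.8)^i$ fraction of it, and compares the whole tail's Hellinger contribution against the sample complexity forced by that one bucket; you instead bound \emph{each} bucket $R_\ell$'s contribution separately by $O(\eps^2 2^\ell/\rho)\,c^-_\ell$, cancel $c^-_\ell$ against the bucket's own sample-complexity bound $\Omega(n_0\, c^-_\ell\, e^{c_3\ell^2})$, and sum the resulting series $\sum_\ell 2^\ell e^{-c_3\ell^2}=O(1)$. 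Your bucket-by-bucket accounting is, if anything, slightly cleaner, since it never needs to bound the total tail Hellinger mass in terms of the one bucket selected by averaging; both routes rest on the same two ingredients (the $\Theta(1)$ threshold on $h^+/h^-$ and Lemma~\ref{lem:SampleBoost}), so the proofs are of essentially equal strength and length.
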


The squared Hellinger distance contribution from the last row has a different form from the rest of the triangle, and can be large even outside the previously ``high discrepancy" region.
While the term $(\frac{h^+_{n,k}}{\rho} + h^-_{n,k})/(\rho h^+_{n,k} + (1-\rho) h^-_{n,k})$ is still upper bounded by $1/\rho^2$ everywhere, it may be as large as $\Theta(1/\rho)$ even in when $h^+_{n,k}/h^-_{n,k} = \Theta(1)$.
The intuition for this section is again that despite having a stopping rule that may have subtle effects on the distribution, it is impossible to skew the distribution of the random walk so much that it appears mostly in the ``high discrepancy" side of the triangle.
We shall use Lemma~\ref{lem:SampleBoost} again along with a case analysis and a weighted averaging argument to show sample complexity lower bounds, which lets us upper bound the squared Hellinger distance contribution per expected sample, as required.

\begin{proof}
	We separate the last row again into a ``high discrepancy" region and a ``central" region, but with a different criterion: whether $$ \frac{\frac{h^+_{n,k}}{\rho} + h^-_{n,k}}{\rho h^+_{n,k} + (1-\rho) h^-_{n,k}} \ge \frac{C}{\rho} $$
	where $C$ is the constant specified in Fact~\ref{fact:Conv}.
	The criterion can be equivalently stated as whether $h^+_{n,k}/h^-_{n,k} \ge r$ for some $r = \Theta(1)$.
	
	For the ``central" region, suppose there is probability $c_{n,k}$ of reaching position $(n,k)$ in that region for the random walk that uses a $\rho+\frac{\eps}{2}$ mixture random coin.
	Consider an alternate form of the squared Hellinger distance contribution that is within a constant factor of that presented in the proposition statement, assuming that $\eps/\rho$ is small:
	$$ \Theta(\eps^2) \sum_{n = \frac{10^{-8}}{\Delta^2}, k \in [0..n]} \alpha_{n,k} \left(\frac{h^+_{n,k}}{\rho} + h^-_{n,k}\right) $$
	This approximation holds since $\rho h^+_{n,k} + (1-\rho) h^-_{n,k}$ and $(\rho+\frac{\eps}{2}) h^+_{n,k} + (1-\rho-\frac{\eps}{2}) h^-_{n,k}$ are within constant factors of each other.
	Note that $c_{n,k} = \alpha_{n,k} ((\rho+\frac{\eps}{2})h^+_{n,k} + (1-\rho-\frac{\eps}{2})h^-_{n,k})$, and so when $h^+_{n,k}/h^-_{n,k} \le r = \Theta(1)$, we have both $\alpha_{n,k} h^+_{n,k} = O(c)$ and $\alpha_{n,k} h^-_{n,k} = O(c)$.
	Thus the squared Hellinger contribution of this location is upper bounded by $O(c\eps^2/\rho)$, yet the total sample complexity is lower bounded by $\Omega(cn) = \Omega(c/\Delta^2)$, giving a fraction that is $O(\eps^2\Delta^2/\rho)$.
	
	For the ``central" region, recall that it consists of the locations where
	\begin{equation}
	\label{eq:WizardTerm}
	    \frac{\frac{h^+_{n,k}}{\rho} + h^-_{n,k}}{\rho h^+_{n,k} + (1-\rho) h^-_{n,k}}
	\end{equation}
	ranges from $\frac{C}{\rho}$ to $\frac{1}{\rho^2}$.
	We separate this region into $O(\log\frac{1}{\rho})$ buckets delimited by consecutive powers of 2.
	
	Suppose there is probability $c_\text{\scary}$ of the $\rho+\frac{\eps}{2}$ mixture coin random walk entering the ``high discrepancy" region in the last row.
	Note that the geometric sequence $1, 0.8, 0.64, \ldots$ converges to 5, and therefore the sequence $\{\frac{c_\text{\scary}}{5} 0.8^i\}_i$ sums to $c_\text{\scary}$.
	If we took only the first $O(\log\frac{1}{\rho})$ many terms, they sum to strictly less than $c_\text{\scary}$.
	By a standard averaging argument, there must exist a bucket such that the probability of reaching locations in that bucket $i$ is greater than $\frac{c_\text{\scary}}{5} 0.8^i$.
	Note that the values (Equation~\ref{eq:WizardTerm}) inside bucket $i$ range from $C \cdot 2^i/\rho$ to $2 \cdot C \cdot 2^i/\rho$.
	It is possible to calculate that the locations $k$ within bucket $i$ satisfy $k \ge \frac{n}{2} + \frac{1}{16}\frac{i \log C}{\Delta}$, where $n$ again is $10^{-8}/\Delta^2$.
	For a sufficiently small $\Delta$, this lower bound in location is at least $(\frac{1}{2}+\Delta)n + \sqrt{100\log(2C)\cdot n}+1$, and therefore we can apply Lemma~\ref{lem:SampleBoost}.
	
	Furthermore, the locations are also at least $(i \cdot 10^{-2} \log C)/\Delta$ away from $(\frac{1}{2}+\Delta)n$, and so Lemma~\ref{lem:SampleBoost} guarantees a sample complexity of at least $\Theta(1/\Delta^2) \times \frac{c_\text{\scary}}{5} 0.8^i \times e^{0.01((i \cdot 10^{-2} \log C)/\Delta)^2 \cdot \Delta^2 \times 10^8} \ge \Theta(1/\Delta^2) \times \frac{c_\text{\scary}}{5} 0.8^i \times e^{100(i \log C)^2} \ge \Omega(2^i c_\text{\scary}/\Delta^2)$, where the last inequality is true because the large exponential term has a logarithm that is quadratic in $i$ and with a base that is a lot greater than $1/0.8$.
	
	The squared Hellinger distance contribution from the ``high discrepancy" region in the last row is upper bounded by $O(c_\text{\scary}\eps^2 2^i/\rho)$, and we have shown a sample complexity lower bound of $\Omega(2^i c_\text{\scary}/\Delta^2)$.
	We therefore conclude a fraction of $O(\eps^2\Delta^2/\rho)$ for the squared Hellinger distance per expected sample, for contributions from the ``high discrepancy" region in the last row.
	
	Summarizing, both the ``high discrepancy" and ``central" region contribute no more than $O(\eps^2\Delta^2/\rho)$ times $\Exp_{\rho+\frac{\eps}{2}}[n]$ to $H^2_\text{last}$.
	Therefore, the proposition follows from summing the two contributions.
\end{proof}

\section{Remaining Proofs/Calculations of Results}
\label{app:calc}

\medskip\noindent{\bf Lemma~\ref{lem:HellingerForm}.}\emph{
Consider the two probability distributions in Proposition~\ref{prop:BoundedLower} over locations $(n,k)$ in the Pascal triangle of depth $10^{-8}/\Delta^2$ and bias $p \in \{\frac{1}{2}\pm\Delta\}$, generated by the given stopping rule $\{\gamma_{n,k}\}$ in the two cases of 1) a coin with bias $\frac{1}{2} + \Delta$ is used with probability $\rho$ and a coin with bias $\frac{1}{2} - \Delta$ is used otherwise versus 2) a coin with bias $\frac{1}{2} + \Delta$ is used with probability $\rho+\eps$ and a coin with bias $\frac{1}{2} - \Delta$ is used otherwise.
If $\eps/\rho$ is smaller than some universal constant, then the squared Hellinger distance between these two distributions can be written as
\begin{align*}
	\Theta(\eps^2) \Biggl[&\sum_{n < \frac{10^{-8}}{\Delta^2}, k \in [0..n]} \alpha_{n,k} ((\rho+\frac{\eps}{2}) h^+_{n,k} + (1-\rho-\frac{\eps}{2}) h^-_{n,k}) \frac{(h^+_{n,k}-h^-_{n,k})^2}{(\rho h^+_{n,k} + (1-\rho) h^-_{n,k})^2}\\
	& \sum_{n = \frac{10^{-8}}{\Delta^2}, k \in [0..n]} \alpha_{n,k} ((\rho+\frac{\eps}{2}) h^+_{n,k} + (1-\rho-\frac{\eps}{2}) h^-_{n,k}) \frac{\frac{h^+_{n,k}}{\rho} + h^-_{n,k}}{\rho h^+_{n,k} + (1-\rho) h^-_{n,k}} \Biggr]
\end{align*}
}

\begin{proof}
	For $n < 10^{-8}/\Delta^2$, the probability of that the location $(n,k)$ is revealed, for the two distributions we consider in Proposition~\ref{prop:BoundedLower}, are $\alpha_{n,k} (\rho h^+_{n,k} + (1-\rho) h^-_{n,k})$ and $\alpha_{n,k} ((\rho+\eps) h^+_{n,k} + (1-\rho-\eps) h^-_{n,k})$ respectively.
	Thus, the contribution by these locations to the squared Hellinger distance is proportional to:
	\begin{align*}
		&\; \sum_{n,k} (\sqrt{\alpha_{n,k}(\rho h^+_{n,k} + (1-\rho)h^-_{n,k})} - \sqrt{\alpha_{n,k}((\rho+\eps) h^+_{n,k} + (1-\rho-\eps)h^-_{n,k})})^2\\
		= &\; \sum_{n,k} \alpha_{n,k} (\rho h^+_{n,k} + (1-\rho)h^-_{n,k}) \left(1 - \sqrt{\frac{(\rho+\eps) h^+_{n,k} + (1-\rho-\eps)h^-_{n,k}}{\rho h^+_{n,k} + (1-\rho)h^-_{n,k}}}\right)^2\\
		= &\; \sum_{n,k} \alpha_{n,k} (\rho h^+_{n,k} + (1-\rho)h^-_{n,k}) \left(1 - \sqrt{1 + \eps \frac{h^+_{n,k} - h^-_{n,k}}{\rho h^+_{n,k} + (1-\rho)h^-_{n,k}}}\right)^2\\
	= &\; \sum_{n,k} \alpha_{n,k} (\rho h^+_{n,k} + (1-\rho)h^-_{n,k}) \left(1 - \left(1 + \frac{1}{2} \eps \frac{h^+_{n,k} - h^-_{n,k}}{\rho h^+_{n,k} + (1-\rho)h^-_{n,k}} + \Theta\left(\left(\eps\frac{h^+_{n,k} - h^-_{n,k}}{\rho h^+_{n,k} + (1-\rho)h^-_{n,k}}\right)^2\right)\right)\right)^2\\
	= &\; \sum_{n,k} \alpha_{n,k} (\rho h^+_{n,k} + (1-\rho)h^-_{n,k}) \left(\frac{1}{2} \eps \frac{h^+_{n,k} - h^-_{n,k}}{\rho h^+_{n,k} + (1-\rho)h^-_{n,k}} + \Theta\left(\left(\eps\frac{h^+_{n,k} - h^-_{n,k}}{\rho h^+_{n,k} + (1-\rho)h^-_{n,k}}\right)^2\right)\right)^2
	\end{align*}
	Note that the multiplier to $\eps$ is upper bounded by $1/\rho$, and therefore if $\eps/\rho$ is sufficiently small, we have the last line being equal to
	\begin{align*}
	&\; \sum_{n,k} \alpha_{n,k} (\rho h^+_{n,k} + (1-\rho)h^-_{n,k}) \left( \Theta\left(\eps \frac{h^+_{n,k} - h^-_{n,k}}{\rho h^+_{n,k} + (1-\rho)h^-_{n,k}}\right)\right)^2\\	
	= &\; \sum_{n,k} \alpha_{n,k} (\rho h^+_{n,k} + (1-\rho)h^-_{n,k}) \;\Theta\left(\eps^2 \frac{(h^+_{n,k} - h^-_{n,k})^2}{(\rho h^+_{n,k} + (1-\rho)h^-_{n,k})^2}\right)\\
	= &\;\Theta(\eps^2)\sum_{n,k} \alpha_{n,k} (\rho h^{+}_{n,k} + (1-\rho)h^{-}_{n,k}) \frac{(h^{+}_{n,k} - h^{-}_{n,k})^2}{(\rho h^{+}_{n,k} + (1-\rho)h^{-}_{n,k})^2}
	\end{align*}
	Finally, note that is $\eps/\rho$ is a small constant, then $\rho$ and $1-\rho$ are respectively within a small constant factor of $\rho + \frac{\eps}{2}$ and $1-\rho-\frac{\eps}{2}$, meaning that $(\rho h^{+}_{n,k} + (1-\rho)h^{-}_{n,k})$ is within a constant factor of $((\rho+\frac{\eps}{2}) h^+_{n,k} + (1-\rho-\frac{\eps}{2}) h^-_{n,k})$.
	
	For $n = 10^{-8}/\Delta^2$, the probability that $((n,k), \frac{1}{2}+\Delta)$ is revealed is
	$\alpha_{n,k} \, \rho \, h^+_{n,k}$ and $\alpha_{n,k} (\rho+\eps)h^+_{n,k}$ for the two scenarios respectively.
	A similar calculation above gives a squared Hellinger distance contribution of
	$$ \Theta(\eps^2) \, \alpha_{n,k} \left(\rho+\frac{\eps}{2}\right) \frac{h^+_{n,k}}{\rho} $$
	As for the contribution from the revealing of $((n,k), \frac{1}{2}-\Delta$, the respective probabilities are
	$\alpha_{n,k} \, (1-\rho) \, h^-_{n,k}$ and $\alpha_{n,k} (1-\rho-\eps)h^-_{n,k}$, and similar calculations give a squared Hellinger distance contribution of $$ \Theta(\eps^2) \, \alpha_{n,k} \, h^-_{n,k}$$ which with algebraic manipulation and approximations as in the $n < 10^{-8}/\Delta^2$ case completes the proof of the lemma.
\end{proof}

\medskip\noindent{\bf Lemma~\ref{Lem:Vvalue}.} \emph{For the quadratic program in Figure~\ref{Fig:QP}, the optimal assignments to $\{\tilde{v}_{n,k}\}$ are
	$$ \tilde{v}_{n,k} = \frac{\frac{h^{+}_{n,k}-h^{-}_{n,k}}{\rho h^{+}_{n,k} + (1-\rho)h^{-}_{n,k}}}{\sum_{m,j} \alpha_{m,j}\frac{(h^{+}_{m,j} - h^{-}_{m,j})^2}{\rho h^{+}_{m,j} + (1-\rho)h^{-}_{m,j}}}$$
	(and we choose $v_{n,k} = \tilde{v}_{n,k} + \rho$), giving an objective value of
	$$ \frac{1}{\sum_{n,k} \alpha_{n,k}\frac{(h^{+}_{n,k} - h^{-}_{n,k})^2}{\rho h^{+}_{n,k} + (1-\rho)h^{-}_{n,k}}}$$}
	
\begin{proof}
	We use the method of Lagrangian multiplier to find the optimal assignment to $\{\tilde{v}_{n,k}\}$.
	
	The Lagrangian of the program is
	$$ L = \sum_{m,j} \alpha_{m,j}(\rho h^{+}_{m,j} + (1-\rho)h^{-}_{m,j}) \tilde{v}_{m,j}^2 + \lambda \left(\left( \sum_{m,j} \alpha_{m,j} (h^{+}_{m,j} - h^{-}_{m,j}) \tilde{v}_{m,j}\right) - 1\right)$$
	where $\lambda$ is the Lagrange multiplier.
	
	We need to find assignments to $\{\tilde{v}_{n,k}\}$ and $\lambda$ such that $\Grad_{\{\tilde{v}_{n,k}\}, \lambda} L = 0$.
	Computing the partial derivatives gives the following system of equations:
	\begin{align}
	\label{Eq:VValue1}
	2\alpha_{n,k}(\rho h^{+}_{n,k} + (1-\rho)h^{-}_{n,k})\tilde{v}_{n,k}	 + \lambda \alpha_{n,k}(h^{+}_{n,k} - h^{-}_{n,k}) &= 0 \text{ for all $n,k$}\\
	\label{Eq:VValue2}
	\sum_{m,j} \alpha_{m,j}(h^{+}_{m,j}-h^{-}_{m,j})\tilde{v}_{m,j d} = 1
	\end{align}
	Rearranging Equation~\ref{Eq:VValue1} gives
	\begin{equation}
	\label{Eq:VValue3}
		\tilde{v}_{n,k} = \frac{-\lambda (h^{+}_{n,k} - h^{-}_{n,k})}{2(\rho h^{+}_{n,k} + (1-\rho)h^{-}_{n,k})}
	\end{equation}
	and substituting this into Equation~\ref{Eq:VValue2} gives
	$$ -\lambda \sum_{m,j} \frac{\alpha_{m,j}(h^{+}_{m,j}-h^{-}_{m,j})^2}{2(\rho h^{+}_{m,j} + (1-\rho)h^{-}_{m,j})} = 1 $$
	which lets us solve for $\lambda$
	$$ \lambda = -1/\sum_{m,j} \frac{\alpha_{m,j}(h^{+}_{m,j}-h^{-}_{m,j})^2}{2(\rho h^{+}_{m,j} + (1-\rho)h^{-}_{m,j})} $$
	which when substituted back into Equation~\ref{Eq:VValue3} gives
	$$ \tilde{v}_{n,k} = \frac{\frac{h^{+}_{n,k} - h^{-}_{n,k}}{\rho h^{+}_{n,k} + (1-\rho)h^{-}_{n,k}}}{\sum_{m,j} \frac{\alpha_{m,j}(h^{+}_{m,j}-h^{-}_{m,j})^2}{\rho h^{+}_{m,j} + (1-\rho)h^{-}_{m,j}}} $$
	as desired.
	
	The optimal value of the program can be calculated by substituting the assignment to the objective function.
\end{proof}

\medskip\noindent{\bf Lemma~\ref{lem:HellingerApprox}.}
	\emph{Consider an arbitrary stopping rule $\{\gamma_{n,k}\}$ giving coefficients $\{\alpha_{n,k}\}$.
The squared Hellinger distance between 1) a random coin drawn in case A inducing a distribution on the Pascal triangle given the stopping rule and 2) a random coin drawn in case B instead, is
$$ \Theta(\eps^2)\sum_{n,k} \frac{(h^{+}_{n,k} - h^{-}_{n,k})^2}{\rho h^{+}_{n,k} + (1-\rho)h^{-}_{n,k}} \, \alpha_{n,k} $$
assuming that $\eps /\rho$ is smaller than some universal constant.}

\begin{proof}
	In scenario 1, the distribution induced by a random coin on the Pascal triangle is
	$$ \alpha_{n,k}(\rho h^+_{n,k} + (1-\rho)h^-_{n,k}) $$
	and similarly for scenario 2,
	$$ \alpha_{n,k}((\rho+\eps) h^+_{n,k} + (1-\rho-\eps)h^-_{n,k}) $$
	The squared Hellinger distance is therefore proportional to
	\begin{align*}
		&\; \sum_{n,k} (\sqrt{\alpha_{n,k}(\rho h^+_{n,k} + (1-\rho)h^-_{n,k})} - \sqrt{\alpha_{n,k}((\rho+\eps) h^+_{n,k} + (1-\rho-\eps)h^-_{n,k})})^2\\
		= &\; \sum_{n,k} \alpha_{n,k} (\rho h^+_{n,k} + (1-\rho)h^-_{n,k}) \left(1 - \sqrt{\frac{(\rho+\eps) h^+_{n,k} + (1-\rho-\eps)h^-_{n,k}}{\rho h^+_{n,k} + (1-\rho)h^-_{n,k}}}\right)^2\\
		= &\; \sum_{n,k} \alpha_{n,k} (\rho h^+_{n,k} + (1-\rho)h^-_{n,k}) \left(1 - \sqrt{1 + \eps \frac{h^+_{n,k} - h^-_{n,k}}{\rho h^+_{n,k} + (1-\rho)h^-_{n,k}}}\right)^2\\
	= &\; \sum_{n,k} \alpha_{n,k} (\rho h^+_{n,k} + (1-\rho)h^-_{n,k}) \left(1 - \left(1 + \frac{1}{2} \eps \frac{h^+_{n,k} - h^-_{n,k}}{\rho h^+_{n,k} + (1-\rho)h^-_{n,k}} + \Theta\left(\left(\eps\frac{h^+_{n,k} - h^-_{n,k}}{\rho h^+_{n,k} + (1-\rho)h^-_{n,k}}\right)^2\right)\right)\right)^2\\
	= &\; \sum_{n,k} \alpha_{n,k} (\rho h^+_{n,k} + (1-\rho)h^-_{n,k}) \left(\frac{1}{2} \eps \frac{h^+_{n,k} - h^-_{n,k}}{\rho h^+_{n,k} + (1-\rho)h^-_{n,k}} + \Theta\left(\left(\eps\frac{h^+_{n,k} - h^-_{n,k}}{\rho h^+_{n,k} + (1-\rho)h^-_{n,k}}\right)^2\right)\right)^2
	\end{align*}
	Note that the multiplier to $\eps$ is upper bounded by $1/\rho$, and therefore if $\eps/\rho$ is sufficiently small, we have the last line being equal to
	\begin{align*}
	&\; \sum_{n,k} \alpha_{n,k} (\rho h^+_{n,k} + (1-\rho)h^-_{n,k}) \left( \Theta\left(\eps \frac{h^+_{n,k} - h^-_{n,k}}{\rho h^+_{n,k} + (1-\rho)h^-_{n,k}}\right)\right)^2\\	
	= &\; \sum_{n,k} \alpha_{n,k} (\rho h^+_{n,k} + (1-\rho)h^-_{n,k}) \;\Theta\left(\eps^2 \frac{(h^+_{n,k} - h^-_{n,k})^2}{(\rho h^+_{n,k} + (1-\rho)h^-_{n,k})^2}\right)\\
	= &\; \Theta(\eps^2)\sum_{n,k} \frac{(h^{+}_{n,k} - h^{-}_{n,k})^2}{\rho h^{+}_{n,k} + (1-\rho)h^{-}_{n,k}} \, \alpha_{n,k}
	\end{align*}
\end{proof}

\end{document}